\documentclass[11pt,letterpaper]{article}

\usepackage[margin=1in]{geometry}
\usepackage[T1]{fontenc}
\usepackage{times}
\usepackage[authoryear,round]{natbib}
\usepackage{xcolor}
\usepackage[english]{babel}
\usepackage{amsmath,amssymb,amsthm,mathtools}
\usepackage{mathrsfs}
\usepackage{booktabs}
\usepackage{enumitem}
\usepackage{graphicx}
\usepackage{placeins}
\usepackage{microtype}
\usepackage{xurl}
\usepackage[colorlinks=true,allcolors=blue]{hyperref}
\usepackage{needspace}
\hypersetup{
  pdftitle={Universality and Generalization of Causal Transformers Across Context Lengths},
  pdfauthor={Takashi Furuya; Maarten V. de Hoop; Gabriel Peyr\'e}
}

\newtheorem{theorem}{Theorem}[section]
\newtheorem{proposition}[theorem]{Proposition}
\newtheorem{lemma}[theorem]{Lemma}
\newtheorem{corollary}[theorem]{Corollary}
\theoremstyle{definition}
\newtheorem{definition}[theorem]{Definition}

\newtheorem{example}[theorem]{Example}
\theoremstyle{remark}
\newtheorem{remark}[theorem]{Remark}

\newcommand{\R}{\mathbb{R}}
\newcommand{\N}{\mathbb{N}}
\newcommand{\cP}{\mathcal{P}}
\newcommand{\cM}{\mathcal{M}}
\newcommand{\cA}{\mathcal{A}}
\newcommand{\dd}{\mathop{}\!\mathrm{d}}
\newcommand{\norm}[1]{\left\lVert #1\right\rVert}
\newcommand{\abs}[1]{\left\lvert #1\right\rvert}
\newcommand{\ip}[2]{\left\langle #1,#2\right\rangle}
\newcommand{\eqdef}{\coloneqq}
\newcommand{\MAtt}{\operatorname{MAtt}}
\newcommand{\MLP}{\operatorname{MLP}}
\newcommand{\Unif}{\operatorname{Unif}}
\newcommand{\In}{\mathcal{I}}
\newcommand{\Cn}{\mathcal{C}}

\title{Universality and Generalization of\\
Causal Transformers Across Context Lengths}

\author{
  Takashi Furuya\\
  {\small Doshisha University, RIKEN AIP}\\
  {\small\texttt{tfuruya@mail.doshisha.ac.jp}}
  \and
  Maarten V. de Hoop\\
  {\small Rice University}\\
  {\small\texttt{mdehoop@rice.edu}}
  \and
  Gabriel Peyr\'e\\
  {\small CNRS, ENS, PSL Universit\'e}\\
  {\small\texttt{gabriel.peyre@ens.fr}}
}
\date{}

\begin{document}

\maketitle

\begin{abstract}
Long contexts are central to modern transformer systems, but most expressivity results choose a different network for each fixed sequence length. We study whether one masked transformer can approximate causal token-to-token maps uniformly over sequences of arbitrary length sampling a fixed normalized horizon. To relate sampling resolutions, we model tokens by \(\alpha\)-H\"older sequences or, more generally, a common modulus of continuity. Our notion of continuity across resolutions characterizes the causal families admitting uniform approximation on these compact input classes by a single transformer with length-independent parameters. The result extends to the infinite-length mean-field limit, where tokens form continuous curves and masked attention becomes a causal time integral. For bounded regression with target maps satisfying a \(\beta\)-smooth stability condition defined using regular test functions, quantitative approximation yields a generalization bound: exact empirical risk minimization over suitably sized bounded-weight transformers gives root mean-square prediction error \(O((\log\log N/\log N)^{\beta/(d+2)})\) from \(N\) iid labeled sequences. The bound holds at fixed confidence on the same sampling distribution, with \(d\) the token dimension and no maximum-length factor. Finally, experiments on physical time series support the H\"older-regular token model at observed scales, with dataset-dependent fitted exponents, whereas text input embeddings provide a contrasting case. Native and dense sampling, shuffled controls, and refinement checks delimit this empirical regularity regime.
\end{abstract}

\section{Introduction}
\label{sec:introduction}

Long-context transformers apply one learned rule across sequence lengths, yet
most universality results fix the length before choosing the network.  We show
that common input regularity, together with asymptotic continuity of the target
across resolutions, enables uniform causal approximation at every resolution,
connects the discrete model to its continuous-time limit, and yields approximation
and statistical learning rates under additional target regularity. We also clarify the role of
positional information and examine the input geometry empirically.

\paragraph{Transformers and long-context architectures.}
The original transformer combines multi-head attention, pointwise feed-forward
layers, and positional encodings~\citep{vaswani2017attention}.  Long-context
variants use recurrence, sparse or efficient attention, and position
extrapolation~\citep{dai2019transformerxl,beltagy2020longformer,
dao2022flashattention,ding2024longrope}, while RoPE, ALiBi, and positional
interpolation target generalization beyond the training
window~\citep{su2024roformer,press2022train,chen2023position}.  Short-to-long
generalization has also been studied under an idealized learning
rule~\citep{huang2025length}.  We ask a different expressivity question: can one
parameter set approximate a causal rule uniformly over every resolution?
Here, ``all resolutions'' means increasingly fine samples of a fixed normalized
horizon under a common regularity model, not unrestricted growth of a symbolic
string.  In infinite-token limits,
unmasked attention naturally acts on empirical token
distributions~\citep{vuckovic2020attention}; causal masking instead requires token--position
information to preserve chronology~\citep{castin2024smooth,
furuya2025transformers}.  We compactify all grids jointly and recover temporal
attention as the refinement limit of the same finite architecture.  This
token-number limit is distinct from a continuous-depth
limit~\citep{karagodin2024causal}.

\paragraph{Neural-network and transformer universality.}
Classical results establish universality of feed-forward networks on compact
finite-dimensional sets and of neural networks between function
spaces~\citep{cybenko1989approximation,leshno1993nonpolynomial,chen1995operators}.
Neural operators extend this viewpoint to discretization-invariant maps,
without generally enforcing causality~\citep{kovachki2023neuraloperator}.
Transformer universality is known for fixed-length equivariant or
position-aware maps and for sparse or shallow
variants~\citep{yun2020transformers,yun2020sparse,kajitsuka2024one}; related results
treat fixed-window causal models, shift-equivariant infinite sequences, and
structured next-token targets~\citep{yang2023parrot,takakura2023infinite,
sander2025nexttoken}.  Quantitative results cover fixed-length smooth targets
and bidirectional sliding-window maps under distributional
error~\citep{jiang2024rate,takakura2023infinite}.  Our result instead controls
one causal architecture uniformly over all resolutions and its continuous-time
limit.  For \(\beta>1\), our \(\beta\)-smooth target condition is stability
in a smooth-test metric on token--position laws, rather than generic
Fr\'echet \(C^\beta\) regularity. Logarithmic statistical rates also arise in functional
regression~\citep{meister2016optimal}; neural-operator analyses connect
approximation complexity to empirical-risk
guarantees~\citep{liu2024deep,kovachki2024data}. Our additional step controls
the capacity of bounded-weight causal transformers uniformly in context length.
Positional features remove permutation obstructions in
fixed-length universality~\citep{yun2020transformers}, although some relative
schemes remain limited~\citep{luo2022powerful}.  In our all-resolution setting,
normalized position also records chronology and grid resolution.
Appendix~\ref{app:position-needed} proves a sharper obstruction: even when
total length is known and position enters arbitrary normalized attention
scores, the normalized clock cannot be recovered if position is absent from
values, residual features, and pointwise maps.

\paragraph{Universality for causal sequence operators.}
Before transformers, recurrent networks and causal convolutions encoded
non-anticipativity~\citep{hochreiter1997lstm,hanson2019tcn}.
Their approximation theory either fixes the sequence length or
horizon~\citep{funahashi1993dynamical,song2023minimal}, or obtains time-uniform
guarantees through fading memory~\citep{grigoryeva2018echo,gonon2021fading}.
Rates are available for temporal convolutions and random
reservoirs~\citep{hanson2019tcn,gonon2023random}, while recent causal
neural-operator and temporal-integral constructions treat continuous paths on
prescribed spans~\citep{acciaio2024geometric,galimberti2026causal,
cuchiero2026global}.  Our normalized-horizon setting may retain the entire
prefix: compactness comes from regularity across sampling resolutions, and one
standard masked-attention architecture provides uniform approximation and,
under additional target regularity, parameter rates.

\paragraph{Contributions and organization.}
Section~\ref{sec:framework} introduces an all-resolution causal setting in
which token grids share an arbitrary admissible modulus and finite targets
need only converge asymptotically, rather than obey exact projective
identities, to a continuous path operator.  Our main result,
Theorem~\ref{thm:discrete-universality}, shows that one shallow masked
transformer---one attention block and one shared pointwise ReLU
network---approximates every target in such a family uniformly over sequence
length, input, and prediction index; its parameters may depend on the target
and accuracy, but not on length. The converse characterizes their uniform
closure (Corollary~\ref{cor:uniform-closure-characterization}). The completed-space proof gives, with
the same parameters, temporal-attention universality for the continuous path
limit (Theorem~\ref{thm:path-universality-main}). For normalized
\(\beta\)-smooth teachers, Theorem~\ref{thm:generalization-main} gives
root mean-square prediction error
\(O((\log\log N/\log N)^{\beta/(d+2)})\) at fixed confidence from \(N\)
iid variable-length sequences, using clipped exact empirical risk
minimization over suitably sized bounded-weight transformers. This
same-distribution bound relies on quantitative uniform approximation
(Appendix~\ref{app:quantitative-rates}) and a capacity estimate independent
of context length (Appendix~\ref{app:proof-generalization}). Finally,
Section~\ref{sec:numerics} finds dataset-dependent finite-scale increment
decay in physical signals and continuous pretrained content features, while
text input embeddings provide a contrasting case. Native and
dense patch sampling, shuffled controls, and refinement diagnostics delimit
this empirical support for the H\"older input model; the detailed proofs and
protocol are collected in the appendices.
The companion computational toolbox, figure-reproduction scripts, and
illustrative Jupyter notebooks are hosted at
\url{https://github.com/gpeyre/transformers-universality}.

\section{All-resolution causal universality}
\label{sec:framework}

We ask whether one causal transformer can approximate a family of prediction rules uniformly across all context lengths. A shared architecture, a common input modulus, and asymptotic cross-resolution compatibility make this possible. We first establish discrete and continuous-time universality, then derive a learning guarantee under additional target regularity.

Fix dimensions \(d,d'\geq1\) and \([n]\eqdef\{1,\ldots,n\}\). Write \(z=(z_i)_{i=1}^n\) for discrete inputs and \(x=(x(t))_{t\in[0,1]}\) for continuous inputs. We use \(\norm{\cdot}\) for Euclidean norms, \(\abs{\cdot}\) for absolute values, and \(\norm{\cdot}_\infty\) for uniform or maximum norms.

\subsection{Discrete causal transformer}
\label{sec:architecture-v2}

The architecture is defined once on the space of all nonempty finite
sequences. Its dimensions, weights, and biases are independent of sequence
length, so the same formulas apply for every \(n\geq1\).

\paragraph{A common domain across lengths.} For each \(p\geq1\), the disjoint union
\begin{equation}
\label{eq:finite-sequence-space-v2}
\mathsf{Seq}_p
\eqdef
\bigsqcup_{n\geq1}(\R^p)^n
\end{equation}
collects all nonempty finite sequences of \(p\)-dimensional vectors.

\paragraph{Residual masked multi-head attention.} Hidden states \(u_i\in\R^p\) are internal representations of input tokens \(z_i\in\R^d\). Each head summarizes the current prefix; the residual connection adds \(u_i\). For \(u\in\mathsf{Seq}_p\) of length \(n\) and \(i\in[n]\), define
\begin{equation}
\label{eq:finite-attention-v2}
\MAtt_\theta(u)_i
\eqdef u_i+
\sum_{r=1}^{H}W_r
\frac{
\sum_{j=1}^{i}
e^{\ip{Q_ru_i}{K_ru_j}}V_ru_j
}{
\sum_{j=1}^{i}e^{\ip{Q_ru_i}{K_ru_j}}
}.
\end{equation}
Equation~\eqref{eq:finite-attention-v2} therefore defines a single operator \(\MAtt_\theta:\mathsf{Seq}_p\to\mathsf{Seq}_p\). On each component, the input length determines the summation range, while the expression and parameters remain unchanged.
% We use the same convention for pointwise maps and finite transformers;
For head \(r\), if the query--key dimension is \(q_r\) and the value
dimension is \(v_r\), the matrices in \eqref{eq:finite-attention-v2} have
dimensions
\begin{equation}
\label{eq:attention-matrix-dimensions-v2}
 Q_r,K_r\in\R^{q_r\times p},
 \qquad
 V_r\in\R^{v_r\times p},
 \qquad
 W_r\in\R^{p\times v_r}.
\end{equation}
Here \(\theta\eqdef((Q_r,K_r,V_r,W_r))_{r=1}^H\). Thus the inner product in the score is taken in \(\R^{q_r}\), each normalized head value lies in \(\R^{v_r}\), and the output again lies in \(\R^p\).

We use \(\infty\) to distinguish continuous-time realizations. Along any grid refinement whose hidden-state interpolants converge uniformly, masked softmax averages converge to causal temporal integrals. For a continuous hidden path \(u:[0,1]\to\R^p\), the limiting attention block is
\begin{equation} \label{eq:limattblk}
\MAtt_\theta^\infty(u)(t)
\eqdef u(t)+\sum_{r=1}^H W_r
\begin{cases}
\displaystyle
\frac{\int_0^t e^{\ip{Q_ru(t)}{K_ru(s)}}V_ru(s)\,\dd s}
{\int_0^t e^{\ip{Q_ru(t)}{K_ru(s)}}\,\dd s},&t>0,\\[2ex]
V_ru(0),&t=0.
\end{cases}
\end{equation}

\paragraph{Shared pointwise networks.} A shared tokenwise affine layer applies one affine map at every token and every length:
\[
 G_{A,b}:\mathsf{Seq}_p\to\mathsf{Seq}_q,
 \qquad
 (G_{A,b}(u))_i\eqdef Au_i+b,
 \qquad
 A\in\R^{q\times p},\quad b\in\R^q.
\]
Its continuous-time realization applies the same affine map at every time:
\(
G_{A,b,\infty}(u)(t)\eqdef Au(t)+b
\).
A shared ReLU network applies the same finite-dimensional map to every token. For input dimension \(p\), output dimension \(q\), and hidden width \(M\geq1\), a one-hidden-layer pointwise ReLU network is the map
\begin{equation}
\label{eq:pointwise-mlp-v2}
 \MLP_\eta(u)
 \eqdef
 A_2\rho(A_1u+b_1)+b_2,
 \qquad u\in\R^p,
\end{equation}
where \(\rho\) is the ReLU activation function acting coordinatewise and
\begin{equation}
\label{eq:pointwise-mlp-dimensions-v2}
 A_1\in\R^{M\times p},\quad b_1\in\R^M,
 \qquad
 A_2\in\R^{q\times M},\quad b_2\in\R^q;
 \qquad
 \eta\eqdef(A_1,b_1,A_2,b_2).
\end{equation}
We use the same notation for its pointwise action on sequences:
\(\MLP_\eta:\mathsf{Seq}_p\to\mathsf{Seq}_q\), with
\((\MLP_\eta(u))_i\eqdef\MLP_\eta(u_i)\).

A finite causal transformer is any finite composition of these masked-attention blocks, shared affine layers, and pointwise ReLU networks with compatible widths. Its finite parameter list is shared across lengths.

\paragraph{The shallow universal architecture.} On regular input classes and continuously extendable target families introduced below, one masked-attention block followed by one pointwise MLP suffices for qualitative universality when width may grow with the target and accuracy. Given an integer \(H\geq1\), we set \(p_H\eqdef d+H+2\) and define the fixed tokenwise lift, \(\mathcal E_H:\mathsf{Seq}_{d+1}\to \mathsf{Seq}_{p_H}\), for \(y=(y_i)_{i=1}^n\in(\R^{d+1})^n\) by
\begin{equation}
\label{eq:theorem-lift-v2}
 \mathcal E_H(y)
 \eqdef
 \bigl((y_i,1,0_H)\bigr)_{i=1}^n
 \in(\R^{p_H})^n.
\end{equation}
The attention input is \(u_i\eqdef\mathcal E_H(y)_i\). The lift contains no learned parameters and acts identically at every token.

All \(H\) attention heads are taken to be scalar ($q_r=v_r=1$), so
\begin{equation} \label{eq:theorem-head-dimensions-v2}
 Q_r,K_r,V_r\in\R^{1\times p_H},
 \qquad W_r\in\R^{p_H\times1},
 \qquad r\in[H].
\end{equation}
At each token,
% the first \(d+1\) residual coordinates retain the token and normalized position, the next is a constant coordinate, and
the final \(H\) coordinates are ``work'' coordinates. In the construction, the \(W_r\)'s have disjoint ranges, so head \(r\) stores one scalar probe in work coordinate \(r\). For an MLP hidden width \(M\geq1\) and output dimension \(q=d'\) in \eqref{eq:pointwise-mlp-dimensions-v2}, we define
\begin{equation} \label{eq:shallow-transformer-v2}
 T_\Theta
 \eqdef
 \MLP_\eta\circ\MAtt_\theta\circ \mathcal E_H,
 \qquad
 \Theta\eqdef(\theta,\eta).
\end{equation}
This defines \(T_\Theta:\mathsf{Seq}_{d+1}\to\mathsf{Seq}_{d'}\) by the same composition at every length. In particular, \(A_1\in\R^{M\times p_H}\) and \(A_2\in\R^{d'\times M}\). The integers \(H,M\), all matrices, and all biases are fixed across sequence lengths, but may depend on the target and accuracy.

\paragraph{Normalized position in the hidden state.} We append normalized position \(i/n\) to each input token, making it available through the residual connection. For \(z\in(\R^d)^n\), define the positional lift
\begin{equation} \label{eq:canonical-lift-v2}
 \phi_n(z)_i\eqdef(z_i,i/n)\in\R^{d+1}.
\end{equation}
% The lift is parameter-free, and all architectural widths and learned parameters remain independent of \(n\).
The transformer receives \(y\eqdef\phi_n(z)\). No separate length channel is needed: one uniform-attention head computes the prefix mean position, and the residual position gives
\(2i^{-1}\sum_{j=1}^i(j/n)-i/n=1/n\).
Thus the resolution is accessible even though the current position alone does not identify it. Appendix~\ref{app:position-needed} shows why position confined to normalized attention scores cannot replace this residual information.

\subsection{Discrete causal targets with a continuous path extension}
\label{sec:targets-v2}

Uniform universality across arbitrary context lengths rests on two joint hypotheses on the context and the target: a common modulus places all token grids in one compact all-resolution family, and the discrete causal targets converge to a single continuous path operator under grid refinement.

\paragraph{Regular token sequences of arbitrary length.} A common concave modulus controls every resolution and is preserved exactly by piecewise-affine interpolation.

\begin{definition}[Admissible common modulus]
\label{def:admissible-modulus}
An \emph{admissible common modulus} is a continuous, nondecreasing, concave function \(\omega:[0,1]\to[0,\infty)\) satisfying \(\omega(0)=0\).
\end{definition}

In particular, \(\omega(r)\to0\) as \(r\downarrow0\). The zero modulus is
allowed and gives constant token paths, while \(\omega_{\alpha,L}(r)\eqdef Lr^\alpha\) is admissible for every \(0<\alpha\leq1\) and \(L>0\).

Let \(\Omega\eqdef\overline{B_{\R^d}(0,1)}\), and fix an admissible common modulus \(\omega\). For each length \(n\), we define
\begin{equation}
\label{eq:Xn-v2}
X_n^\omega
\eqdef
\left\{
z=(z_1,\ldots,z_n)\in\Omega^n:
\norm{z_j-z_i}\leq
\omega\!\left(\frac{\abs{j-i}}{n}\right),
\quad i,j\in[n]
\right\}.
\end{equation}
The estimate is imposed on every pair, not only on adjacent tokens.  This
distinction is essential for a general modulus: adjacent control gives only
\(\norm{z_j-z_i}\leq\abs{j-i}\omega(1/n)\), which need not imply the required
\(\omega(\abs{j-i}/n)\) estimate.  For the linear modulus
\(\omega(r)=Lr\), the all-pairs and adjacent conditions are equivalent.
This models regular input vectors, not arbitrary sequences of token embeddings.

\paragraph{Continuous path completion.} Sampling and interpolation identify the discrete token grids with approximations of a single compact class of continuous paths. Define
\begin{equation} \label{eq:Xinfty-v2}
 X_\infty^\omega
 \eqdef
 \left\{
 x\in C([0,1];\Omega):
 \norm{x(t)-x(s)}\leq\omega(\abs{t-s}),
 \quad s,t\in[0,1]
 \right\}.
\end{equation}
For a path \(x:[0,1]\to\R^d\), define its samples by \(\mathcal S_nx\eqdef(x(j/n))_{j=1}^n\). For a sequence \(z\in(\R^d)^n\), let the interpolant \(\In_nz\) be constant and equal to \(z_1\) on \([0,1/n]\), affine between consecutive grid values, and satisfy
\begin{equation} \label{eq:interpolation-v2}
 (\In_nz)(i/n)=z_i,
 \qquad i\in[n].
\end{equation}
The same notation applies to output interpolation. Lemma~\ref{lem:modulus-reconstruction} proves, for every \(x\in X_\infty^\omega\),
\begin{equation} \label{eq:modulus-reconstruction-v2}
 \In_nX_n^\omega\subset X_\infty^\omega,
 \qquad
 \mathcal S_n(X_\infty^\omega)=X_n^\omega,
 \qquad
 \norm{\In_n\mathcal S_nx-x}_\infty\leq\omega(1/n).
\end{equation}

\paragraph{Continuously extendable causal targets.} The finite target maps need not agree exactly across context lengths; we require only that they be causal at each length and converge uniformly to one continuous path operator.

\begin{definition}[Continuously extendable discrete causal family]
\label{def:compatible-family}
A discrete family \((F_n^\star)_{n\geq1}\) is a \emph{continuously extendable causal family} if each map \(F_n^\star:X_n^\omega\to(\R^{d'})^n\) is continuous and prefix-causal, meaning that, for all \(z,z'\in X_n^\omega\) and \(i\in[n]\), \(z_{1:i}=z'_{1:i}\) implies \(F_n^\star(z)_i=F_n^\star(z')_i\), and there exists a continuous map \(F_\infty^\star:X_\infty^\omega\to C([0,1];\R^{d'})\) such that
\begin{equation}
\label{eq:continuous-extension-v3}
\sup_{z\in X_n^\omega}
\norm{\In_n(F_n^\star(z))-F_\infty^\star(\In_nz)}_\infty
\longrightarrow0
\qquad(n\to\infty).
\end{equation}
Such an \(F_\infty^\star\) is a \emph{continuous path extension} of the discrete family.
\end{definition}

Controlled dynamical systems provide a concrete example.

\begin{example}[Controlled ODEs]
\label{ex:control-ode-family-main}
Fix \(y_0\in\R^{d'}\) and a continuous vector field
\(b:\R^{d'}\times\Omega\to\R^{d'}\), globally Lipschitz in its state
argument uniformly over controls. For \(x\in X_\infty^\omega\), let \(y_x\) solve
\[
 \dot y_x(t)=b(y_x(t),x(t)),\quad y_x(0)=y_0,
 \qquad
 F_\infty^\star(x)\eqdef y_x,\quad
 F_n^\star(z)_i\eqdef y_{\In_nz}(i/n).
\]
Thus the discrete map samples the exact state trajectory driven by the
reconstructed tokens. This family satisfies
Definition~\ref{def:compatible-family}: the extension error in
\eqref{eq:continuous-extension-v3} is at most \(M/n\), with \(M\) depending
only on \(b,y_0\). Appendix~\ref{app:control-odes} gives the construction,
uniform estimates, and proof.
\end{example}

Although causality is imposed only at finite resolutions, it passes to the extension: \(x|_{[0,t]}=x'|_{[0,t]}\) implies \(F_\infty^\star(x)(t)=F_\infty^\star(x')(t)\). Lemma~\ref{lem:target-compatibility} proves this fact and also shows that the extension is unique.

\paragraph{Compatibility with transformers across resolutions.} For every fixed parameter list \(\Theta\), the family induced by the transformer in \eqref{eq:shallow-transformer-v2} and the normalized positional lifts,
\(
 \mathcal T_{\Theta,n}(z)
 :=
 T_\Theta(\phi_n(z)),
 \ n\geq1,
\)
is continuously extendable and causal in the sense of Definition~\ref{def:compatible-family}. Corollary~\ref{cor:transformer-compatible-family} proves this for every fixed finite causal transformer.

\paragraph{Cross-resolution compatibility is asymptotic.} No separate output modulus is required: the compact image of the continuous extension supplies the needed equicontinuity. We write
\(
\mathbf F^\star\eqdef((F_n^\star)_{n\geq1},F_\infty^\star)
\)
for the completed family. Lemma~\ref{lem:target-compatibility} shows that \eqref{eq:continuous-extension-v3} is equivalent to
\(\In_{n_k}(F_{n_k}^\star(z^k))\to F_\infty^\star(x)\) uniformly whenever
\(n_k\to\infty\) and \(\In_{n_k}z^k\to x\) uniformly. It allows arbitrary continuous causal changes at finitely many lengths and imposes no exact identity either with \(F_\infty^\star\) or between two finite lengths. Such projective consistency would be unnatural here: index \(i\) represents time \(i/n\), so even the clock \(F_n^\star(z)_i\eqdef i/n\) violates a same-index identity across lengths.

\subsection{Universality theorems}
\label{sec:main-theorem-v2}

We establish universality for the shared discrete architecture and its continuous-time realization. Section~\ref{sec:rates-generalization} then turns target regularity into a statistical learning bound.

\paragraph{Universality for discrete tokens.} The first result is stated entirely at finite resolutions. Continuous paths appear only through the extension hypothesis; neither temporal attention nor the measure representation used in the proof enters the conclusion.

\begin{theorem}[All-resolution universality for discrete tokens]
\label{thm:discrete-universality}
Fix an admissible common modulus \(\omega\). Let \((F_n^\star)_{n\geq1}\) be a continuously extendable causal family on \((X_n^\omega)_{n\geq1}\) in the sense of Definition~\ref{def:compatible-family}. For every \(\varepsilon>0\), there exist integers \(H,M\geq1\) and a parameter list \(\Theta=(\theta,\eta)\), independent of \(n\), with residual width \(p_H=d+H+2\), scalar-head dimensions \(Q_r,K_r,V_r\in\R^{1\times p_H}\), \(W_r\in\R^{p_H\times1}\), and readout dimensions \(A_1\in\R^{M\times p_H}\), \(b_1\in\R^M\), \(A_2\in\R^{d'\times M}\), \(b_2\in\R^{d'}\), such that the transformer \(T_\Theta\) defined by \eqref{eq:theorem-lift-v2}-\eqref{eq:shallow-transformer-v2} satisfies
\begin{equation} \label{eq:discrete-universality-bound-v3}
 \sup_{n\geq1}\ \sup_{z\in X_n^\omega}\ \max_{i\in[n]}
\norm{T_\Theta(\phi_n(z))_i-F_n^\star(z)_i}
 <\varepsilon.
\end{equation}
Every attention matrix \(Q_r,K_r,V_r,W_r\) can be chosen with Euclidean-induced operator (spectral) norm at most one.
\end{theorem}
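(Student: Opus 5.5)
We reduce the problem to a single compact space on which the target is continuous, and then apply a Stone--Weierstrass-type density argument for features produced by scalar attention heads.

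\textbf{Compactification.} Set \(\mathcal K\eqdef\{(x,t,h): x\in X_\infty^\omega,\ t\in[0,1],\ h\in[0,1]\}\), where \(h\) plays the role of the resolution \(1/n\) and \(h=0\) is the continuum. By Arzel\`a--Ascoli, \(X_\infty^\omega\) is compact, so \(\mathcal K\) is compact. Each discrete datum \((n,z,i)\) is embedded as the point \((\In_n z,\ i/n,\ 1/n)\); by \eqref{eq:modulus-reconstruction-v2} this point lies in \(\mathcal K\).

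Define the target function \(f\) on the closure \(\mathcal K_0\) of these embedded points together with \(\{h=0\}\):
\begin{itemize}
\item at embedded points, \(f(x,i/n,1/n)=F_n^\star(z)_i\);
\item on \(\{h=0\}\), \(f(x,t,0)=F_\infty^\star(x)(t)\).
\end{itemize}
We claim \(f\) is continuous on \(\mathcal K_0\). Away from \(h=0\), only finitely many lengths occur near any point, and each \(F_n^\star\) is continuous. Near \(h=0\), continuity is exactly the sequential characterization in Lemma~\ref{lem:target-compatibility}, combined with equicontinuity of \(F_\infty^\star\) on its compact image, which controls the shift from \(i/n\) to \(t\). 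By Tietze, \(f\) extends to a continuous function on all of \(\mathcal K\).

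\textbf{Attention features as continuous functions on \(\mathcal K\).} A scalar head reads the token and position, \(u_j=(z_j,j/n,1,0)\). Its score is \(\langle Q u_i,K u_j\rangle=a(y_i)\,b(y_j)\), a product of affine functions, and its value is \(c(y_j)\), also affine. The head therefore outputs
\[
\frac{\tfrac1n\sum_{j\le i} e^{a(y_i)b(y_j)}c(y_j)}{\tfrac1n\sum_{j\le i}e^{a(y_i)b(y_j)}}.
\]
Write this as \(\Phi(x,t,h)\). Each sum is a Riemann sum of a continuous integrand over \([0,t]\) with step \(h\), and it equals the corresponding integral of \(\In_n z\) up to \(O(\omega(h)+h)\). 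Hence \(\Phi\) extends continuously to \(\mathcal K\), with the limit at \(h=0\) given by \eqref{eq:limattblk}.

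The denominator requires care near \(t=0\). We handle this by writing both sums as averages over \(j\le i\) and working with the prefix-empirical measure \(\mu_{x,t,h}\) of the pairs \((x(s),s)\). The map \((x,t,h)\mapsto\mu\) is weakly continuous, including at \(t=0\), where \(\mu\) collapses to \(\delta_{(x(0),0)}\). Every head is then the ratio \(\int e^{ab}c\,d\mu/\int e^{ab}\,d\mu\) of integrals of continuous functions against this measure. The residual coordinates supply \((x(t),t)\), and one uniform head supplies \(h\) through \(2\,\mathrm{mean}-t=h\). Unit spectral norms are arranged by rescaling \(a,b,c\) into the MLP weights and by writing \(e^{\lambda ab}\) with the scale \(\lambda\) absorbed into \(K_r\).

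\textbf{Density: the main obstacle.} We must show that continuous functions of finitely many such head outputs, together with \((x(t),t,h)\), approximate \(f\) uniformly on \(\mathcal K\). By Stone--Weierstrass applied to the algebra generated by finitely many features, it suffices that the features separate points of \(\mathcal K\). Since \(f\) depends on \(x\) only through the prefix \(x|_{[0,t]}\) (causality, Lemma~\ref{lem:target-compatibility}), we may quotient by prefix equivalence. On the quotient, the triple \((t,h,\mu_{x,t,h})\) determines \(x|_{[0,t]}\) whenever \(t>0\), because the law of \((x(s),s)\) on \([0,t]\) with \(s\) uniform recovers the graph of the path.

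To separate measures with heads, we use the limit \(\lambda\to0\) of normalized scores. The quantity \(\big(\mathrm{head}_\lambda-\int c\,d\mu\big)/\lambda\) recovers \(\int (ab)c\,d\mu\) up to centering, and iterating this yields polynomial moments. We expect it is simpler to use softmax with large scores to localize: \(e^{\langle Qu_i,Ku_j\rangle}\) with query constant and key \(-\lambda\|(z_j,s_j)-w\|^2\) is not available as a rank-one bilinear form. For that reason we instead rely on the result that the family \(\{\int e^{\langle \xi,\cdot\rangle}c\,d\mu/\int e^{\langle\xi,\cdot\rangle}d\mu\}\), taken over \(\xi\) and affine \(c\), separates compactly supported probability measures. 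Two measures with equal tilted means for all \(\xi\) have equal log-Laplace gradients, hence equal Laplace transforms, hence are equal.

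Choosing the query to be constant (so that \(a\equiv1\) via the constant coordinate), each \(\xi\) is realized by a key \(K_r\). By compactness, finitely many \(\xi\) and values suffice to separate points up to \(\varepsilon\): we pass to an \(\varepsilon\)-net and use uniform continuity of \(f\). Finally, a one-hidden-layer ReLU MLP on the resulting \(d+H+2\) features approximates the continuous readout uniformly on the compact feature image, with accuracy \(\varepsilon\). Restricting to the embedded discrete points gives \eqref{eq:discrete-universality-bound-v3}. The delicate step is making this separation uniform in \(h\), which is why \(h\) is included as a feature.
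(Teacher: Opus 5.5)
Your proposal is correct and takes essentially the paper's route. You compactify all resolutions together with the path boundary, obtain continuity of the joint target from Lemma~\ref{lem:target-compatibility}, pass through causality to token--position prefix laws, separate those laws with constant-query heads computing log-Laplace derivatives, and finish with compactness or Stone--Weierstrass and a shallow ReLU readout. Your gradient components \(\partial_{\xi_k}\log M_\mu(\xi)\) with \(\norm{\xi}\leq1\) play the role of the paper's directional probes \(\mathcal L_{a,c}\).

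The Tietze extension to all of \(\mathcal K\) should be deleted. Prefix laws and head outputs are defined only on the closed image \(\mathcal K_0\), which is exactly the image of the paper's \(\mathfrak X\), and an extended \(f\) is not prefix-invariant off \(\mathcal K_0\). Run the density argument on the compact set \(\mathcal K_0\). Apply it to the full probe algebra, from which finitely many probes then suffice, rather than to a finite family assumed to separate points.
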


\begin{proof}[Sketch of proof]
The common modulus compactifies finite evaluation states by adding
continuous-path limits. Joint token--position prefix laws retain exactly
the causal information; causality and compatibility therefore make the
target a continuous function of these laws. Scalar attention probes are
log-Laplace derivatives and separate distinct laws. Stone--Weierstrass approximates the
target by a polynomial of finitely many probes: one attention block
computes them in parallel, and a shared ReLU readout approximates the
polynomial. Compactness makes the error uniform over all lengths.
Appendix~\ref{app:detailed-proof-guide} gives the detailed reading guide;
Proposition~\ref{prop:prefix-law} and
Lemmas~\ref{lem:laplace-probes}--\ref{lem:parallel-realization} supply the
main ingredients.
\end{proof}

The converse also holds: Corollary~\ref{cor:uniform-closure-characterization} proves that continuous extendability is necessary for uniform all-resolution approximation, even by arbitrary finite-depth transformers. Thus shallow and finite-depth architectures have the same uniform closure. Each approximating network is fixed across lengths; width, readout weights, and, in the larger class, depth may vary with accuracy. The construction keeps attention scores uniformly bounded, so increasingly sharp attention is unnecessary. However, position cannot be confined to scores: Appendix~\ref{app:position-needed} proves an error of at least \(1/2\) on the normalized clock in this case, even if total length is known. These are expressivity guarantees, not guarantees of learning or length-independent computation.

\medskip

\begin{remark}[Width versus depth]
\label{rem:width-versus-depth}
The parallel construction uses \(H\) scalar heads and residual width \(p_H=d+H+2\) to store \(H\) probes. Alternatively, the Stone--Weierstrass polynomial can be evaluated sequentially, reusing coordinates for the current probe, running monomial product, and accumulated sum. Lemma~\ref{lem:serialized-realization}, following \citet{furuya2025transformers}, gives residual width \(d+1+3d'\) and at most \(d'\) scalar heads per block. Depth and pointwise hidden widths may then depend on the target and accuracy, whereas residual width and head count remain fixed; none depends on sequence length. Appendix~\ref{app:quantitative-rates} analyzes only the shallow parallel construction.
\end{remark}

% The shallow construction evaluates \(H\) probes in parallel with \(H\) scalar heads and work coordinates, hence \(p_H=d+H+2\). These widths may depend on the target and accuracy, but not on \(n\). Lemma~\ref{lem:serialized-realization}, following \citet{furuya2025transformers}, instead reuses registers across depth with residual width \(d+1+3d'\) and at most \(d'\) scalar heads per block; depth and internal widths may then grow. Appendix~\ref{app:quantitative-rates} analyzes only the shallow parallel construction.

\paragraph{Continuous-time limit and universality for paths.}
Replacing masked sums by the temporal integrals in \eqref{eq:limattblk}
defines the same architecture on paths. Set
\(\phi_\infty(x)(t)\eqdef(x(t),t)\) and
\(T_{\Theta,\infty}\eqdef\MLP_\eta\circ\MAtt_\theta^\infty\circ\mathcal E_H\),
where \((\mathcal E_Hy)(t)\eqdef(y(t),1,0_H)\) and the readout acts pointwise.

\begin{theorem}[Continuous-time causal universality]
\label{thm:path-universality-main}
Fix an admissible common modulus \(\omega\). Let
\(F_\infty^\star:X_\infty^\omega\to C([0,1];\R^{d'})\) be continuous
in the uniform norm and causal:
\(x|_{[0,t]}=x'|_{[0,t]}\) implies
\(F_\infty^\star(x)(t)=F_\infty^\star(x')(t)\).
For every \(\varepsilon>0\), there exist integers \(H,M\geq1\) and parameters
\(\Theta\), with the dimensions and attention-matrix norm bounds of
Theorem~\ref{thm:discrete-universality}, such that
\begin{equation}
\label{eq:path-universality-main}
 \sup_{x\in X_\infty^\omega}
 \norm{T_{\Theta,\infty}(\phi_\infty(x))-F_\infty^\star(x)}_\infty
 <\varepsilon.
\end{equation}
If \(F_\infty^\star\) extends a discrete family as in
Definition~\ref{def:compatible-family}, the same parameters can also satisfy
\eqref{eq:discrete-universality-bound-v3}.
\end{theorem}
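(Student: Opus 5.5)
The plan is to run the same compactification-plus-Stone--Weierstrass argument as in the sketch of Theorem~\ref{thm:discrete-universality}, working directly on the continuous evaluation states.

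\textbf{Compact state space.} Define the set of evaluation states \(\mathcal K_\infty\eqdef\{(x,t):x\in X_\infty^\omega,\,t\in[0,1]\}\). By Arzel\`a--Ascoli, \(X_\infty^\omega\) is compact in the uniform norm, so \(\mathcal K_\infty\) is compact. To each state associate the joint token--position prefix law
\[
\mu_{x,t}\eqdef\frac1t\int_0^t\delta_{(x(s),s)}\,\dd s\quad(t>0),\qquad \mu_{x,0}\eqdef\delta_{(x(0),0)},
\]
together with the current token \((x(t),t)\). I expect Proposition~\ref{prop:prefix-law} to give two facts. First, \((x,t)\mapsto(\mu_{x,t},x(t),t)\) is continuous into the weak topology; at \(t=0\) this uses the modulus \(\omega\). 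Second, it is injective modulo prefix equivalence: the law determines \(x|_{[0,t]}\), because the position marginal is uniform on \([0,t]\) and the conditional law at each time \(s\) is \(\delta_{x(s)}\). By causality and continuity of \(F_\infty^\star\), the map \((x,t)\mapsto F_\infty^\star(x)(t)\) is continuous on \(\mathcal K_\infty\): \(F_\infty^\star(X_\infty^\omega)\) is compact, hence equicontinuous in time. It also factors through the compact image \(\mathcal L\) of this injective continuous map. Since a continuous bijection from a compact space is a homeomorphism, the target becomes a continuous function \(g\) on \(\mathcal L\).

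\textbf{Probes and Stone--Weierstrass.} A scalar head with \(Q_r,K_r,V_r\) of norm at most one computes
\[
\frac{\int_0^t e^{a\,\ip{w}{(x(s),s)}}\,\ip{v}{(x(s),s)}\,\dd s}{\int_0^t e^{a\,\ip{w}{(x(s),s)}}\,\dd s},
\]
where the query uses the constant coordinate \(1\) to produce the scalar \(a\). This is the derivative of the log-Laplace transform of \(\mu_{x,t}\). By Lemma~\ref{lem:laplace-probes}, these probes, together with the residual coordinates \((x(t),t)\), separate points of \(\mathcal L\); this holds because the laws are compactly supported and the log-Laplace transform determines them. The constant function is available, so Stone--Weierstrass gives a polynomial \(P\) in finitely many probes \(\pi_1,\dots,\pi_H\) and residual coordinates with \(\sup_{\mathcal L}\norm{P-g}<\varepsilon/2\).

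\textbf{Realization.} Lemma~\ref{lem:parallel-realization} places the \(H\) heads in parallel, with disjoint \(W_r\) writing into the work coordinates. The value at \(t=0\) in \eqref{eq:limattblk}, namely \(V_ru(0)\), coincides with the probe of \(\mu_{x,0}\), so the definition matches. All probe values lie in a compact box, on which a one-hidden-layer ReLU MLP approximates \(P\) to within \(\varepsilon/2\). This proves \eqref{eq:path-universality-main}.

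\textbf{Discrete statement.} For the last claim, embed the discrete states as empirical laws \(\frac1i\sum_{j\le i}\delta_{(z_j,j/n)}\) in the same compactified space, as in the proof of Theorem~\ref{thm:discrete-universality}. The combined space of finite and continuous states is compact. By Lemma~\ref{lem:target-compatibility}, the completed target is continuous on it. One Stone--Weierstrass approximation on the union then yields a single \(\Theta\) that satisfies both bounds.

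\textbf{Main obstacle.} The hardest step is the continuity and injectivity of the prefix-law map, in particular near \(t=0\). There the normalization \(1/t\) degenerates. I would handle it with the modulus estimate \(W_1(\mu_{x,t},\delta_{(x(0),0)})\le\omega(t)+t\).
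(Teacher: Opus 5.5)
Your proposal is correct and follows the paper's argument. The shared steps are prefix laws on a compact evaluation space, Laplace-probe separation, Stone--Weierstrass, parallel scalar heads with a ReLU readout, and a single approximation on the joint finite/boundary space for the discrete claim. There is one real difference and one wording point. For the path-only bound you work directly on \(X_\infty^\omega\times[0,1]\), whereas the paper first completes \(F_\infty^\star\) by the canonical restrictions \(F_n^\dagger(z)_i=F_\infty^\star(\In_nz)(i/n)\) and argues once on the full compactification; both routes are valid. Also, your prefix-law map is injective only after quotienting by prefix equivalence, so the homeomorphism step (compact domain, Hausdorff codomain) should be applied to that compact quotient, as in Proposition~\ref{prop:prefix-law}.
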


Thus a continuous causal operator \(F_\infty^\star\) can be approximated without
specifying any discrete target family. Temporal attention is also the
refinement limit: for fixed \(\Theta\), if \(n_k\to\infty\) and
\(\In_{n_k}z^k\to x\) uniformly with \(z^k\in X_{n_k}^\omega\), then
\(\In_{n_k}T_\Theta(\phi_{n_k}(z^k))\to
T_{\Theta,\infty}(\phi_\infty(x))\) uniformly.
Appendix~\ref{app:continuous-time-universality} details the realization and
proves this convergence (Proposition~\ref{prop:temporal-closure}); the common
universality proof is in Appendix~\ref{app:proof-joint}.

\subsection{Quantitative rates and generalization bound}
\label{sec:rates-generalization}

To obtain approximation and learning rates uniform over context lengths,
we impose \(\beta\)-smooth regularity on the common discrete--path target
(Appendix~\ref{app:learning-scope}).

\paragraph{Quantitative rates for \(\beta\)-smooth targets.}
We impose regularity after factoring the targets through measures of causal
histories (Appendix~\ref{sec:quantitative-regularity}).
For \(E\eqdef\Omega\times[0,1]\), let \(\cM_\omega\subset\cP(E)\)
be the weak closure of the prefix laws
\(\mu_{n,z,i}\eqdef i^{-1}\sum_{j\leq i}\delta_{(z_j,j/n)}\), for
\(z\in X_n^\omega\). The endpoint map
\(\mathfrak e(\mu_{n,z,i})\eqdef(z_i,i/n)\) extends continuously to
\(\cM_\omega\). Continuously extendable families factor as
\(F_n^\star(z)_i=f^\star(\mu_{n,z,i})\), with a unique continuous
\(f^\star:\cM_\omega\to\R^{d'}\)
(Proposition~\ref{prop:prefix-law}).
For \(\beta\geq1\), the law distance below is a H\"older integral probability
metric, closely related to smooth Wasserstein
metrics~\citep{block2022intrinsic,gaunt2023bounding}.

\begin{definition}[\(\beta\)-smooth regularity class]
\label{def:attention-teacher-class}
For \(\beta>0\), put \(\beta_-\eqdef\min\{\beta,1\}\),
\(\beta_+\eqdef\max\{\beta,1\}\), and
\(\delta_e(\mu,\nu)\eqdef\norm{\mathfrak e(\mu)-\mathfrak e(\nu)}_\infty\).
Define on \(\cM_\omega\), with smoothness index \(\beta\),
\begin{equation}
\label{eq:teacher-regularity-main}
 W_{1,\beta}(\mu,\nu)
 \eqdef\Bigl[\sup_{\norm{\varphi}_{C^{\beta_+}(E)}\leq1}
 \int_E\varphi\,\dd(\mu-\nu)\Bigr]^{\beta_-},
 \qquad
 \Delta_\beta\eqdef W_{1,\beta}+\delta_e^{\beta_-}.
\end{equation}
A continuously extendable family \(\mathbf F^\star\) is
\emph{\(\beta\)-smooth} if \(f^\star\) is Lipschitz on
\((\cM_\omega,\Delta_\beta)\). Write
\(R_{\beta;\omega}(\mathbf F^\star)\) for its optimal Lipschitz constant.
The normalized class \(\mathcal F^\beta(\omega)\) is the set of such families
\(\mathbf F^\star\) with \(R_{\beta;\omega}(\mathbf F^\star)\leq1\)
and \(\norm{f^\star}_\infty\leq1\) on \(\cM_\omega\).
As defined in Appendix~\ref{sec:quantitative-regularity},
\(C^\sigma(E)\) uses the full isotropic restriction norm from
\([-1,1]^d\times[0,1]\), with \(C^m=C^{m-1,1}\) at integer orders.
At \(\beta=1\), \(W_{1,1}\) is equivalent to Wasserstein-1 (equal with
Lipschitz-seminorm normalization; Lemma~\ref{lem:unified-history-metric}).
\end{definition}

\begin{example}[Regular controlled integrators]
\label{ex:regular-control-integrator}
Define \(F_\infty^\star(x)\eqdef y_x\) as in
Example~\ref{ex:control-ode-family-main} with \(b(y,u)=g(u)\), where
\(g\in C^\beta(\Omega;\R^{d'})\). The targets
\(F_n^\star(z)_i\eqdef y_0+n^{-1}\sum_{j=1}^i g(z_j)\) use the constant rate
\(g(z_j)\) on each interval \(((j-1)/n,j/n]\). These right-endpoint Riemann sums
extend to the same continuous map. Their history factor
\(f^\star(\mu)=y_0+t\int g(u)\,\dd\mu(u,s)\), with
\(t\eqdef\mathfrak e(\mu)_{d+1}\), satisfies
\(R_{\beta;\omega}(\mathbf F^\star)\leq C_{d,d',\beta}\norm{g}_{C^\beta}\),
with an \(\omega\)-independent upper bound.
Unlike Example~\ref{ex:control-ode-family-main}, these maps do not integrate
the piecewise-affine control exactly; Appendix~\ref{app:regular-control-integrator}
details the distinction and proves the bound.
\end{example}

Our regularity condition is H\"older stability for \(0<\beta\leq1\), and
stability through smooth token--time averages with Lipschitz endpoint
dependence for \(\beta>1\), not generic Fr\'echet smoothness
(Remark~\ref{rem:quantitative-statistic-examples}).
Theorem~\ref{thm:quantitative-rates} (Appendix~\ref{app:quantitative-rates})
gives uniform discrete--path error
\(O(R_{\beta;\omega}(\log\log p/\log p)^{\beta/(d+2)})\) with \(p\)
parameters. For H\"older inputs, finite-description lower bounds are also
logarithmically slow (Appendix~\ref{sec:rate-optimality}); matching
approximation and generalization lower bounds remain open.

\paragraph{Generalization bound.}
We fit the target family by squared-loss empirical risk minimization (ERM).
For \(H\) scalar heads and an \(M\)-unit readout, counting all matrix and
bias entries, including zeros, gives
\(\operatorname{par}(T_\Theta)\eqdef 4Hp_H+(p_H+d'+1)M+d'\).
The lift is parameter-free.
For a deterministic integer budget \(p\), let
\(\mathfrak T_p\) contain all shallow transformers
\eqref{eq:shallow-transformer-v2}, over integer \(H,M\geq1\), with
\begin{equation}
\label{eq:learning-class-main}
 \operatorname{par}(T_\Theta)\leq p,
 \qquad \norm{\theta}_\infty\leq1,
 \qquad \norm{\eta}_\infty\leq p+1,
\end{equation}
where the parameter norms bound every matrix entry and bias in their respective
blocks. For training sequences \((n_k,Z^k,Y^k)_{k=1}^N\), with
\(Z^k\in X_{n_k}^\omega\) and \(Y^k\in(\R^{d'})^{n_k}\), use the
Frobenius norm over tokens and define
\begin{equation}
\label{eq:learning-erm-main}
 \widehat{\mathcal L}_N(T_\Theta)
 \eqdef\frac1N\sum_{k=1}^N
 \frac{\norm{T_\Theta(\phi_{n_k}(Z^k))-Y^k}_F^{2}}{n_k},
 \quad
 \widehat{\mathcal L}_N(T_{\widehat\Theta_N})
 \leq\inf_{T_\Theta\in\mathfrak T_p}\widehat{\mathcal L}_N(T_\Theta)+\tau_{\mathrm{opt}}.
\end{equation}
Here \(\widehat\Theta_N\) measurably selects a network in \(\mathfrak T_p\)
from the data, within \(\tau_{\mathrm{opt}}\geq0\) of the minimum empirical
loss (zero for exact ERM). We do not bound this optimization gap for SGD
(Appendix~\ref{app:proof-generalization}). Define the predicted family
\(\widehat F\eqdef(\widehat F_n)_{n\geq1}\) by
\(\widehat F_n(z)_i\eqdef\operatorname{clip}(T_{\widehat\Theta_N}(\phi_n(z))_i)\), where
\(\operatorname{clip}(v)\eqdef v/\max\{1,\norm{v}\}\) projects onto the output
unit ball, giving a budget-independent loss bound for concentration with
bounded labels. Conditional on the training data, the prediction
risk on an independent test sequence from the same law is
\begin{equation}
\label{eq:learning-risk-main}
 \mathcal R_{\mathrm{pred}}(\widehat F)
 \eqdef\mathbb E_{(n,Z)}
 \left[\frac1n\sum_{i=1}^n
 \norm{\widehat F_n(Z)_i-F_n^\star(Z)_i}^{2}\right].
\end{equation}

\begin{theorem}[Generalization from variable-length sequences]
\label{thm:generalization-main}
Fix an admissible common modulus \(\omega\), \(\beta>0\), and a teacher
\(\mathbf F^\star\in\mathcal F^\beta(\omega)\).
Let \((n_k,Z^k,Y^k)_{k=1}^N\) be iid copies of
\((n,Z,Y)\), with \(Z\in X_n^\omega\), \(\norm{Y_i}\leq1\), and
\(\mathbb E[Y_i\mid n,Z_{1:i}]=F_n^\star(Z)_i\), for each \(i\) almost surely
on \(\{n\geq i\}\).
There are constants \(C\geq1\) and \(p_0\geq16\), depending only on
\((d,d',\beta)\), such that
for every deterministic integer \(p\geq p_0\), \(N\geq2\), and \(0<\delta<1\),
with probability at least \(1-\delta\), every estimator satisfying
\eqref{eq:learning-erm-main} obeys
\begin{equation}
\label{eq:generalization-bound-main}
 \mathcal R_{\mathrm{pred}}(\widehat F)
 \leq C\left[
 \left(\frac{\log\log p}{\log p}\right)^{2\beta/(d+2)}
 +\frac{p\log(CpN)+\log(2/\delta)}{N}\right]
 +2\tau_{\mathrm{opt}}.
\end{equation}
The constants are independent of the input modulus, the sampling distribution,
and the context lengths; no upper bound or moment assumption on \(n\) is needed.
\end{theorem}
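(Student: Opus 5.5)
The proof follows the standard oracle-inequality route for bounded regression: an approximation term from Theorem~\ref{thm:quantitative-rates} plus an estimation term from a covering-number bound for the clipped class \(\mathfrak T_p\), with neither term depending on the context length.

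\emph{Reduction to per-sequence losses.} Write \(\ell_T(n,Z,Y)\eqdef n^{-1}\sum_{i\le n}\norm{\operatorname{clip}(T(\phi_n(Z))_i)-Y_i}^2\). This loss lies in \([0,4]\) for every \(T\) and every \(n\).

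Since \(\mathbb E[Y_i\mid n,Z_{1:i}]=F_n^\star(Z)_i\) and both the prediction and the teacher at index \(i\) are functions of \((n,Z_{1:i})\), the cross term vanishes after conditioning. Hence
\[
\mathbb E\,\ell_T-\mathbb E\,\ell_{F^\star}=\mathcal R_{\mathrm{pred}}(T).
\]
This exact identity, averaged over \(i\) with weight \(1/n\), holds with no moment assumption on \(n\). Clipping never increases the empirical loss, because \(\norm{Y_i}\le1\) and \(\operatorname{clip}\) is the projection onto the unit ball. Therefore ERM for the unclipped loss \eqref{eq:learning-erm-main} is also a \(\tau_{\mathrm{opt}}\)-ERM for the clipped loss, up to the comparison with a clipped competitor. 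This is fine because the competitor's empirical loss can only decrease under clipping.

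\emph{Approximation term.} By Theorem~\ref{thm:quantitative-rates} applied to \(\mathbf F^\star\in\mathcal F^\beta(\omega)\) with \(R\le1\), there is \(T^\circ\in\mathfrak T_p\) with
\[
\sup_{n,z,i}\norm{T^\circ(\phi_n(z))_i-F_n^\star(z)_i}\le C(\log\log p/\log p)^{\beta/(d+2)}.
\]
I must check that the construction there respects the weight bounds \(\norm{\theta}_\infty\le1\) and \(\norm{\eta}_\infty\le p+1\). Theorem~\ref{thm:discrete-universality} already gives unit spectral norms for the attention matrices, and I take the quantitative appendix to keep the readout weights polynomially bounded in \(p\). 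Squaring the approximation bound gives the first term of \eqref{eq:generalization-bound-main}.

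\emph{Estimation term.} I will apply a Bernstein-type (fast-rate) uniform deviation bound for the excess losses \(\ell_T-\ell_{F^\star}\). These have variance at most \(C\,\mathbb E(\ell_T-\ell_{F^\star})\), the usual squared-loss variance condition with bounded outputs. Combined with a sup-norm cover of the clipped class, this gives, with probability \(1-\delta\),
\[
\mathcal R_{\mathrm{pred}}(\widehat F)\le 2\,\mathcal R_{\mathrm{pred}}(T^\circ)+C\,\frac{\log\mathcal N(1/N)+\log(2/\delta)}{N}+2\tau_{\mathrm{opt}}.
\]
Here \(\mathcal N(\epsilon)\) is the \(\epsilon\)-cover number of \(\mathfrak T_p\) in the norm \(\sup_{n,z,i}\), which is finer than any empirical norm.

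\emph{Main obstacle: a length-independent Lipschitz bound in the parameters.} I must show that \(\Theta\mapsto T_\Theta(\phi_n(z))_i\) is Lipschitz in \(\ell_\infty\), with constant \(L\le (Cp)^{C}\) uniformly over all \(n\), \(z\), and \(i\). Then discretizing the parameter box at mesh \(\epsilon/L\), and taking a union over the at most \(p^2\) choices of \((H,M)\), gives \(\log\mathcal N(1/N)\le C p\log(CpN)\).

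The key point is that a masked softmax average is a convex combination of the \(V_ru_j\). Its derivative with respect to the score parameters is a covariance under the attention weights. It is therefore bounded by \(\sup_j\norm{V_ru_j}\cdot\sup_j\norm{\partial\,\text{score}_j}\), which does not depend on \(i\) or \(n\).

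The inputs satisfy \(\norm{u_j}\le\sqrt{d+2}\). With \(\norm{\theta}_\infty\le1\), the operator norms of the attention matrices are at most \(\sqrt{p_H}\), so scores are bounded by \(p_H(d+2)\). Each attention output is then bounded and Lipschitz in \(\theta\) with constant \(\operatorname{poly}(p)\), because the softmax weights are never differentiated through \(\exp\) of an unbounded quantity. The exponential factor \(e^{O(p)}\) from the score range would enter only if ratios of weights were used; the covariance form avoids this. Even if it appeared, it would contribute only \(\log L=O(p)\), which is absorbed into \(p\log(CpN)\).

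The ReLU readout is Lipschitz in \(\eta\) with constant \(\operatorname{poly}(p)\) on bounded inputs, since \(\norm{\eta}_\infty\le p+1\). Clipping is \(1\)-Lipschitz. Combining these bounds completes the argument, with every constant depending only on \((d,d',\beta)\).
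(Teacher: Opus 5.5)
Your overall plan matches the paper's: a bounded-weight comparator, a length-independent parameter cover from the stability of normalized softmax averages, and Bernstein's inequality with the variance bound from conditional centering and Jensen over each sequence. The gap is in the step linking the raw training objective \eqref{eq:learning-erm-main} to the clipped class. Since clipping moves outputs toward labels in the unit ball, what you actually get is
\[
 \widehat{\mathcal L}_N(\operatorname{clip}\circ T_{\widehat\Theta_N})
 \leq\widehat{\mathcal L}_N(T_{\widehat\Theta_N})
 \leq\widehat{\mathcal L}_N(T^\circ)+\tau_{\mathrm{opt}},
\]
with the \emph{unclipped} empirical loss of the comparator on the right.

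Because \(\widehat{\mathcal L}_N(\operatorname{clip}\circ T^\circ)\leq\widehat{\mathcal L}_N(T^\circ)\), the monotonicity you invoke points the wrong way. The raw minimizer need not be a \(\tau_{\mathrm{opt}}\)-minimizer of the clipped loss, and a uniform deviation bound over the clipped class does not control \(\widehat{\mathcal L}_N(T^\circ)\). Nor can you close this deterministically. With \(q\eqdef(\log\log p/\log p)^{\beta/(d+2)}\), the comparator may leave the unit ball by \(O(q)\). The per-token gap between its raw and clipped losses is then of order \(q\), not \(q^2\), which would halve the exponent of the approximation term.

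The missing ingredient is one extra single-function Bernstein bound for the raw comparator. Enlarge \(p_0\) so that \(\norm{T^\circ}\leq2\). Its sequence excess loss \(\zeta\) then satisfies \(\abs{\zeta}\leq5\) and \(\mathbb E\zeta^2\leq25\,\mathbb E\bigl[n^{-1}\sum_i\norm{T^\circ(\phi_n(Z))_i-F_n^\star(Z)_i}^2\bigr]\). Hence its empirical excess loss is at most \(\tfrac32\) times that raw risk plus \(O(\log(1/\delta)/N)\), and only the lower half of the uniform bound over the clipped cover is needed. Alternatively, apply the approximation result to \((1-2Cq)f^\star\), so that clipping is inactive on the comparator.

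Two smaller points.
\begin{itemize}
\item The bound \(\norm{\eta}_\infty\leq p+1\) on the comparator does not come from Theorem~\ref{thm:quantitative-rates}. That theorem bounds only the attention entries and counts parameters without controlling readout magnitudes. The bound is the content of Corollary~\ref{cor:quantitative-finite-precision}, which tracks the Klusowski--Barron ridge coefficients through the budget inversion. It uses the normalization \(R_{\beta;\omega}(\mathbf F^\star)\leq1\), \(\norm{f^\star}_\infty\leq1\), which is why the theorem is stated on \(\mathcal F^\beta(\omega)\).
\item Your fallback remark about the cover is incorrect. A Lipschitz constant \(e^{O(p)}\) would enter the log-covering number once per parameter coordinate, giving \(O(p^2)\) rather than \(O(p\log(pN))\). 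So the \(L^1\) stability of normalized exponential weights (Lemma~\ref{lem:gen-softmax-stability}) is essential to the stated bound, not optional.
\end{itemize}
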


\begin{proof}[Sketch of proof]
Corollary~\ref{cor:quantitative-finite-precision} supplies a bounded-weight
comparator with the stated approximation error. Normalized-attention
stability gives the clipped class a uniform cover of logarithmic size
\(O(p\log(p/\epsilon))\), independent of context length
(Lemma~\ref{lem:gen-parameter-cover}). Sequence-averaged excess losses have
variance controlled by prediction risk, so Bernstein concentration gives the
\(N^{-1}\) statistical term. Projection decreases empirical loss; comparing
the clipped raw ERM with the comparator then adds
\(2\tau_{\mathrm{opt}}\). Appendix~\ref{app:proof-generalization} gives the proof.
\end{proof}

The bound separates approximation from estimation. For exact ERM,
\(p\eqdef\lfloor\sqrt N\rfloor\) gives root mean-square rate
\(O((\log\log N/\log N)^{\beta/(d+2)})\) at fixed confidence and large \(N\).
Here \(N\) counts independent sequences, allowing dependent tokens.

\section{Empirical diagnostics of H\"older path geometry}
\label{sec:numerics}

The common-modulus assumption links sequences sampled at different resolutions
of a fixed horizon. We examine its H\"older specialization through the decay
of worst increments in measured signals and pretrained representations.
The experiments reveal a positive but representation-dependent finite-scale
regime; they do not certify one common H\"older ball at all resolutions.

\paragraph{A common scale diagnostic.}
Classical multiscale methods assess roughness through the log--log scaling of
increments or wavelet coefficients~\citep{gneiting2012fractal,abry2015irregularities}.
We use a maximum over positions, rather than an average, to probe the
worst-case increments controlled by our uniform H\"older hypothesis.
For vectors \(z_0,\ldots,z_{N-1}\in\R^d\) on a uniform time grid of spacing
\(\Delta\), define, for \(1\le r<N\) and \(h_r\eqdef r\Delta\),
\begin{equation}
 M_\infty(r)\eqdef
 \max_{0\le k<N-r}d^{-1/2}\norm{z_{k+r}-z_k}_2.
 \label{eq:numeric-uniform-increment}
\end{equation}
We evaluate dyadic lags with \(h_r\le1/4\) to limit finite-window effects,
and fit each window on the finer band:
\begin{equation}
 \log M_\infty(r)=\log C+\widehat\alpha_\infty\log h_r+\varepsilon_r,
 \qquad h_r\le1/32.
 \label{eq:numeric-uniform-fit}
\end{equation}
We use \(\Delta\eqdef1/(N-1)\), or \(\Delta\eqdef s/(N_{\rm raw}-1)\) for patches
extracted every \(s\) raw measurements to preserve physical time.
We report mean slopes and sample SD across windows, require at least four
scales per primary fit, and retain all declared alternative bands.
An \(L\)-H\"older-\(\alpha\) path satisfies
\(M_\infty(r)\le Lh_r^\alpha/\sqrt d\); a fitted slope alone does not
establish this bound. Thus \(C\) is a fitted prefactor and
\(\widehat\alpha_\infty\) a finite-scale diagnostic, not a certified
asymptotic uniform exponent.

\paragraph{Physical signals and continuous content.}
Physical recordings admit fixed-horizon refinement; raw mean slopes span
\(0.041\)--\(0.389\) (Table~\ref{tab:uniform-holder-results}). On six scalar
channels, we apply Chronos-Bolt-tiny's frozen frontend~\citep{chronosbolt2024}
to identical 16-sample patches, with per-window normalization, at strides
1 (dense) and 16 (native). Native complete-patch outputs are a subset of
dense outputs; this tests neither forecasting nor causal processing.
All 29 dense windows have positive slopes, largely lost after shuffling
(Figure~\ref{fig:continuous-holder-main}, Table~\ref{tab:continuous-holder-main}).
Weather and oil temperature retain positive native-stride slopes;
appliances and ETT load are weaker. Raw delay patches often have larger
slopes, so learning alone does not explain the geometry. Fixed Lipschitz
maps preserve H\"older bounds at fixed physical patch offsets, but overlap,
scale selection, jumps, and mixed nested-grid seminorm trends qualify these
findings (Appendices~\ref{app:continuous-holder-protocol}
and~\ref{app:holder-calibration}). SD is descriptive; the ETT channels share one source.

\paragraph{Text as a contrasting input model.}
Across seven text sources, BigBird input embeddings~\citep{zaheer2020bigbird}
with interpolated positions have nearly flat worst-increment curves at
\(N=16{,}384\): mean slopes \(0.011\)--\(0.013\)
(Appendix~\ref{app:text-corpus-screen}). Appending words is not refinement
of a fixed signal, and persistent order-one adjacent jumps would preclude
a common H\"older bound. These finite-scale results do not support that
hypothesis for this pipeline, but prove neither asymptotic failure nor
impossibility for other representations. At native context lengths,
BigBird's mean slope rises from \(0.013\)--\(0.014\) at input to
\(0.028\)--\(0.033\) in intermediate blocks; this modest gain is
non-monotone, also occurs after input shuffling, and does not persist to
the final block. Appendix~\ref{app:text-depth} gives layer-wise curves
and a causal DistilGPT2 comparison. Regularity therefore remains an
assumption on the represented input family, not a conclusion of these fits.

\begin{figure}[!t]
 \centering
 {\scriptsize
 \textcolor[HTML]{626A73}{\rule[.45ex]{1.2em}{.6pt}} Raw signal\quad
 \textcolor[HTML]{A5A8AD}{\rule[.45ex]{1.2em}{.6pt}} Raw patch\quad
 \textcolor[HTML]{087F8C}{\rule[.45ex]{1.2em}{.6pt}} Bolt dense\quad
 \textcolor[HTML]{D77F29}{\rule[.45ex]{1.2em}{.6pt}} Shuffled, dense\par}
 {\scriptsize Text (d):\quad
 \textcolor[HTML]{3569AD}{\rule[.45ex]{1em}{.6pt}} WikiText\quad
 \textcolor[HTML]{E69F00}{\rule[.45ex]{1em}{.6pt}} AG News\quad
 \textcolor[HTML]{8B5BB5}{\rule[.45ex]{1em}{.6pt}} IMDb\quad
 \textcolor[HTML]{737373}{\rule[.45ex]{1em}{.6pt}} \LaTeX\quad
 \textcolor[HTML]{D55E00}{\rule[.45ex]{1em}{.6pt}} Shakespeare\quad
 \textcolor[HTML]{009E73}{\rule[.45ex]{1em}{.6pt}} KJV\quad
 \textcolor[HTML]{CC79A7}{\rule[.45ex]{1em}{.6pt}} Python\par}
 \smallskip
 \begin{minipage}[c]{.025\textwidth}
  \centering\rotatebox{90}{\scriptsize $M_\infty(r)/M_\infty(1)$}
 \end{minipage}\hfill
 \begin{minipage}[c]{.96\textwidth}
 \centering
 \setlength{\tabcolsep}{1pt}
 \begin{tabular}{@{}cccc@{}}
  {\footnotesize (a) Jena weather} & {\footnotesize (b) Beijing PM$_{2.5}$} &
  {\footnotesize (c) Appliances} & {\footnotesize (d) BigBird text}\\
  \includegraphics[width=.242\linewidth]{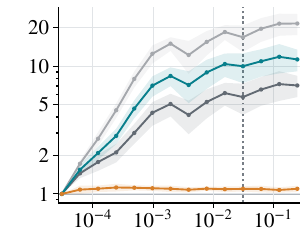} &
  \includegraphics[width=.242\linewidth]{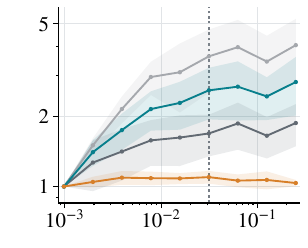} &
  \includegraphics[width=.242\linewidth]{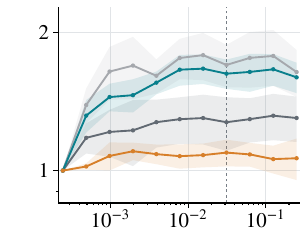} &
  \includegraphics[width=.242\linewidth]{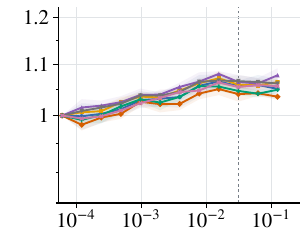}
 \end{tabular}
 \end{minipage}
 \caption{\textbf{Representation-dependent finite-scale regularity.}
 Worst-increment curves against normalized lag $h$ (horizontal axis): physical
 time in (a)--(c), token position in (d). Curves are normalized per window by
 the finest-lag increment before averaging; shading is one sample SD.
 This preserves each window's slope
 and fixes the first point at one with zero SD. Raw-patch and shuffled
 controls probe patch geometry and temporal organization; \LaTeX{} and
 Python are technical text controls. Dashed lines mark the fit cutoff;
 all panels have lower ordinate $0.85$.}
 \label{fig:continuous-holder-main}
\end{figure}

\begin{table}[!htbp]
 \centering
 \caption{\textbf{Pretrained content comparison.}
 Fine-band \(\widehat\alpha_\infty\), mean \(\pm\) sample SD across \(m\)
 windows. Physical rows use Bolt content; WikiText-103 uses BigBird input.
 Shuffling precedes patching. Native dashes denote insufficient scales;
 the text shuffled control is unmeasured.}
 \label{tab:continuous-holder-main}
 \footnotesize
 \setlength{\tabcolsep}{1.1pt}
 \begin{minipage}[t]{.495\textwidth}
 \centering
 \begin{tabular}{@{}lrccc@{}}
 \toprule
 Domain & \(m\) & Input/dense & Native & Shuffled\\
 \midrule
 Jena weather & 8 & $.335{\pm}.037$ & $.167{\pm}.040$ & $.005{\pm}.003$\\
 Beijing PM$_{2.5}$ & 6 & $.257{\pm}.085$ & -- & $.023{\pm}.016$\\
 Appliances & 4 & $.087{\pm}.010$ & $.020{\pm}.017$ & $.015{\pm}.012$\\
 Road traffic & 3 & $.303{\pm}.009$ & -- & $.035{\pm}.011$\\
 \bottomrule
 \end{tabular}
 \end{minipage}\hfill
 \begin{minipage}[t]{.495\textwidth}
 \centering
 \begin{tabular}{@{}lrccc@{}}
 \toprule
 Domain & \(m\) & Input/dense & Native & Shuffled\\
 \midrule
 Oil temp. & 4 & $.253{\pm}.055$ & $.175{\pm}.041$ & $.023{\pm}.004$\\
 ETT load & 4 & $.181{\pm}.034$ & $.043{\pm}.055$ & $.026{\pm}.014$\\
 \midrule
 WikiText-103 & 4 & $.0122{\pm}.0012$ & n/a & --\\
 \bottomrule
 \end{tabular}
 \end{minipage}
\end{table}
\FloatBarrier

\section{Conclusion}
\label{sec:conclusion}

Continuously extendable causal families are exactly those uniformly approximable across all resolutions by a length-independent masked transformer, whose parameters also approximate the continuous path limit. The proof represents finite prefixes and continuous histories by position-tagged laws on one compact space, then separates them with attention probes. The \(\beta\)-smooth target condition yields parameter-error rates and, on normalized target classes, finite-precision bounds. Bounded-weight empirical risk minimization gives logarithmic prediction rates from independent labeled sequences without a maximum-length factor; matching minimax exponents remain open. The positional obstruction confirms that masking alone cannot recover the normalized clock. Empirically, continuous pretrained features show dataset-dependent finite-scale regularity, whereas text input embeddings have nearly flat worst increments. Sampling and jump controls qualify these diagnostics, which do not certify a common all-resolution H\"older bound. Future work includes geometry-adaptive rates, efficient width--depth tradeoffs, and learning guarantees for dependent samples and computationally feasible optimization.

\section*{Acknowledgments}

Takashi Furuya acknowledges funding from JSPS KAKENHI (grants JP24K16949
and 25H01453), JST CREST (JPMJCR24Q5), and JST ASPIRE (JPMJAP2329).
Maarten V.\ de Hoop acknowledges funding from the Department of Energy's
BES program (grant DE-SC0020345), Oxy, and the Simons Foundation's MATH~+~X Program.
Gabriel Peyr\'e acknowledges funding from the European Research Council
(ERC) through project WOLF and
from the French government's France 2030 program through the Agence
Nationale de la Recherche (ANR-23-IACL-0008, PRAIRIE-PSAI).

\bibliographystyle{plainnat}
\bibliography{ref}

\begin{thebibliography}{60}
\providecommand{\natexlab}[1]{#1}
\providecommand{\url}[1]{\texttt{#1}}
\expandafter\ifx\csname urlstyle\endcsname\relax
  \providecommand{\doi}[1]{doi: #1}\else
  \providecommand{\doi}{doi: \begingroup \urlstyle{rm}\Url}\fi

\bibitem[Abry et~al.(2015)Abry, Jaffard, and Wendt]{abry2015irregularities}
Patrice Abry, St{\'e}phane Jaffard, and Herwig Wendt.
\newblock Irregularities and scaling in signal and image processing:
  Multifractal analysis.
\newblock In Michael Frame and Nathan Cohen, editors, \emph{Benoit Mandelbrot:
  A Life in Many Dimensions}, pages 31--116. World Scientific, 2015.
\newblock \doi{10.1142/9789814366076_0003}.
\newblock URL \url{https://arxiv.org/abs/1210.0482}.

\bibitem[Acciaio et~al.(2024)Acciaio, Kratsios, and
  Pammer]{acciaio2024geometric}
Beatrice Acciaio, Anastasis Kratsios, and Gudmund Pammer.
\newblock Designing universal causal deep learning models: The geometric
  (hyper)transformer.
\newblock \emph{Mathematical Finance}, 34\penalty0 (2):\penalty0 671--735,
  2024.
\newblock \doi{10.1111/mafi.12389}.

\bibitem[{Amazon}(2024)]{chronosbolt2024}
{Amazon}.
\newblock {Chronos-Bolt-Tiny}: Pretrained time-series model and input-patch
  embedding.
\newblock Hugging Face model card and checkpoint, 2024.
\newblock URL \url{https://huggingface.co/amazon/chronos-bolt-tiny}.
\newblock Checkpoint revision 93a8129; input embedding implementation in
  chronos-forecasting v2.0.0. Accessed 5 September 2026.

\bibitem[Ansari et~al.(2024)Ansari, Stella, Turkmen, Zhang, Mercado, Shen,
  Shchur, Rangapuram, {Pineda Arango}, Kapoor, Zschiegner, Maddix, Wang,
  Mahoney, Torkkola, Wilson, Bohlke-Schneider, and Wang]{ansari2024chronos}
Abdul~Fatir Ansari, Lorenzo Stella, Caner Turkmen, Xiyuan Zhang, Pedro Mercado,
  Huibin Shen, Oleksandr Shchur, Syama~Sundar Rangapuram, Sebastian {Pineda
  Arango}, Shubham Kapoor, Jasper Zschiegner, Danielle~C. Maddix, Hao Wang,
  Michael~W. Mahoney, Kari Torkkola, Andrew~Gordon Wilson, Michael
  Bohlke-Schneider, and Yuyang Wang.
\newblock Chronos: Learning the language of time series.
\newblock \emph{Transactions on Machine Learning Research}, 2024.
\newblock ISSN 2835-8856.
\newblock URL \url{https://openreview.net/forum?id=gerNCVqqtR}.

\bibitem[Beltagy et~al.(2020)Beltagy, Peters, and Cohan]{beltagy2020longformer}
Iz~Beltagy, Matthew~E. Peters, and Arman Cohan.
\newblock Longformer: The long-document transformer.
\newblock \emph{arXiv preprint arXiv:2004.05150}, 2020.
\newblock URL \url{https://arxiv.org/abs/2004.05150}.

\bibitem[Block et~al.(2022)Block, Jia, Polyanskiy, and
  Rakhlin]{block2022intrinsic}
Adam Block, Zeyu Jia, Yury Polyanskiy, and Alexander Rakhlin.
\newblock Intrinsic dimension estimation using {Wasserstein} distance.
\newblock \emph{Journal of Machine Learning Research}, 23\penalty0
  (313):\penalty0 1--37, 2022.
\newblock URL \url{https://jmlr.org/papers/v23/21-1483.html}.

\bibitem[Candanedo(2017)]{candanedo2017appliances}
Luis Candanedo.
\newblock Appliances energy prediction.
\newblock UCI Machine Learning Repository, 2017.
\newblock URL \url{https://doi.org/10.24432/C5VC8G}.
\newblock Dataset.

\bibitem[Castin et~al.(2024)Castin, Ablin, and Peyr{\'e}]{castin2024smooth}
Val{\'e}rie Castin, Pierre Ablin, and Gabriel Peyr{\'e}.
\newblock How smooth is attention?
\newblock In \emph{Proceedings of the 41st International Conference on Machine
  Learning}, volume 235 of \emph{Proceedings of Machine Learning Research},
  pages 5817--5840, 2024.
\newblock URL \url{https://proceedings.mlr.press/v235/castin24a.html}.

\bibitem[Chen et~al.(2023)Chen, Wong, Chen, and Tian]{chen2023position}
Shouyuan Chen, Sherman Wong, Liangjian Chen, and Yuandong Tian.
\newblock Extending context window of large language models via positional
  interpolation.
\newblock \emph{arXiv preprint arXiv:2306.15595}, 2023.
\newblock URL \url{https://arxiv.org/abs/2306.15595}.

\bibitem[Chen(2015)]{chen2015beijing}
Song Chen.
\newblock {Beijing PM2.5}.
\newblock UCI Machine Learning Repository, 2015.
\newblock URL \url{https://doi.org/10.24432/C5JS49}.
\newblock Dataset.

\bibitem[Chen and Chen(1995)]{chen1995operators}
Tianping Chen and Hong Chen.
\newblock Universal approximation to nonlinear operators by neural networks
  with arbitrary activation functions and its application to dynamical systems.
\newblock \emph{IEEE Transactions on Neural Networks}, 6\penalty0 (4):\penalty0
  911--917, 1995.
\newblock \doi{10.1109/72.392253}.

\bibitem[Cuchiero et~al.(2026)Cuchiero, Schmocker, and
  Teichmann]{cuchiero2026global}
Christa Cuchiero, Philipp Schmocker, and Josef Teichmann.
\newblock Global universal approximation of functional input maps on weighted
  spaces.
\newblock \emph{Constructive Approximation}, 63\penalty0 (2):\penalty0
  537--612, 2026.
\newblock \doi{10.1007/s00365-025-09726-3}.

\bibitem[Cybenko(1989)]{cybenko1989approximation}
George Cybenko.
\newblock Approximation by superpositions of a sigmoidal function.
\newblock \emph{Mathematics of Control, Signals and Systems}, 2\penalty0
  (4):\penalty0 303--314, 1989.
\newblock \doi{10.1007/BF02551274}.

\bibitem[Dai et~al.(2019)Dai, Yang, Yang, Carbonell, Le, and
  Salakhutdinov]{dai2019transformerxl}
Zihang Dai, Zhilin Yang, Yiming Yang, Jaime Carbonell, Quoc~V. Le, and Ruslan
  Salakhutdinov.
\newblock Transformer-{XL}: Attentive language models beyond a fixed-length
  context.
\newblock In \emph{Proceedings of the 57th Annual Meeting of the Association
  for Computational Linguistics}, pages 2978--2988, 2019.
\newblock \doi{10.18653/v1/P19-1285}.

\bibitem[Dao et~al.(2022)Dao, Fu, Ermon, Rudra, and
  R{\'e}]{dao2022flashattention}
Tri Dao, Daniel~Y. Fu, Stefano Ermon, Atri Rudra, and Christopher R{\'e}.
\newblock {FlashAttention}: Fast and memory-efficient exact attention with
  {IO}-awareness.
\newblock In \emph{Advances in Neural Information Processing Systems},
  volume~35, pages 16344--16359, 2022.
\newblock \doi{10.52202/068431-1189}.

\bibitem[DeVore and Lorentz(1993)]{devore1993constructive}
Ronald~A. DeVore and George~G. Lorentz.
\newblock \emph{Constructive Approximation}, volume 303 of \emph{Grundlehren
  der mathematischen Wissenschaften}.
\newblock Springer Berlin, Heidelberg, 1993.
\newblock \doi{10.1007/978-3-662-02888-9}.

\bibitem[Ding et~al.(2024)Ding, Zhang, Zhang, Xu, Shang, Xu, Yang, and
  Yang]{ding2024longrope}
Yiran Ding, Li~Lyna Zhang, Chengruidong Zhang, Yuanyuan Xu, Ning Shang, Jiahang
  Xu, Fan Yang, and Mao Yang.
\newblock {LongRoPE}: Extending {LLM} context window beyond 2 million tokens.
\newblock In \emph{Proceedings of the 41st International Conference on Machine
  Learning}, volume 235 of \emph{Proceedings of Machine Learning Research},
  pages 11091--11104, 2024.
\newblock URL \url{https://proceedings.mlr.press/v235/ding24i.html}.

\bibitem[Funahashi and Nakamura(1993)]{funahashi1993dynamical}
Ken-ichi Funahashi and Yuichi Nakamura.
\newblock Approximation of dynamical systems by continuous time recurrent
  neural networks.
\newblock \emph{Neural Networks}, 6\penalty0 (6):\penalty0 801--806, 1993.
\newblock \doi{10.1016/S0893-6080(05)80125-X}.

\bibitem[Furuya et~al.(2025)Furuya, de~Hoop, and
  Peyr{\'e}]{furuya2025transformers}
Takashi Furuya, Maarten~V. de~Hoop, and Gabriel Peyr{\'e}.
\newblock Transformers are universal in-context learners.
\newblock In \emph{International Conference on Learning Representations}, 2025.
\newblock URL
  \url{https://proceedings.iclr.cc/paper_files/paper/2025/hash/c9028f7874df04843e7bf435ee4cd3c3-Abstract-Conference.html}.

\bibitem[Galimberti et~al.(2026)Galimberti, Kratsios, and
  Livieri]{galimberti2026causal}
Luca Galimberti, Anastasis Kratsios, and Giulia Livieri.
\newblock Designing universal causal deep learning models: The case of
  infinite-dimensional dynamical systems from stochastic analysis.
\newblock \emph{Constructive Approximation}, 2026.
\newblock \doi{10.1007/s00365-026-09745-8}.
\newblock Online first.

\bibitem[Gaunt and Li(2023)]{gaunt2023bounding}
Robert~E. Gaunt and Siqi Li.
\newblock Bounding {Kolmogorov} distances through {Wasserstein} and related
  integral probability metrics.
\newblock \emph{Journal of Mathematical Analysis and Applications},
  522\penalty0 (1):\penalty0 126985, 2023.
\newblock \doi{10.1016/j.jmaa.2022.126985}.

\bibitem[Gneiting et~al.(2012)Gneiting, {\v S}ev{\v c}{\'i}kov{\'a}, and
  Percival]{gneiting2012fractal}
Tilmann Gneiting, Hana {\v S}ev{\v c}{\'i}kov{\'a}, and Donald~B. Percival.
\newblock Estimators of fractal dimension: Assessing the roughness of time
  series and spatial data.
\newblock \emph{Statistical Science}, 27\penalty0 (2):\penalty0 247--277, 2012.
\newblock \doi{10.1214/11-STS370}.
\newblock URL \url{https://arxiv.org/abs/1101.1444}.

\bibitem[Gonon and Ortega(2021)]{gonon2021fading}
Lukas Gonon and Juan-Pablo Ortega.
\newblock Fading memory echo state networks are universal.
\newblock \emph{Neural Networks}, 138:\penalty0 10--13, 2021.
\newblock \doi{10.1016/j.neunet.2021.01.025}.

\bibitem[Gonon et~al.(2023)Gonon, Grigoryeva, and Ortega]{gonon2023random}
Lukas Gonon, Lyudmila Grigoryeva, and Juan-Pablo Ortega.
\newblock Approximation bounds for random neural networks and reservoir
  systems.
\newblock \emph{The Annals of Applied Probability}, 33\penalty0 (1):\penalty0
  28--69, 2023.
\newblock \doi{10.1214/22-AAP1806}.

\bibitem[Grigoryeva and Ortega(2018)]{grigoryeva2018echo}
Lyudmila Grigoryeva and Juan-Pablo Ortega.
\newblock Echo state networks are universal.
\newblock \emph{Neural Networks}, 108:\penalty0 495--508, 2018.
\newblock \doi{10.1016/j.neunet.2018.08.025}.

\bibitem[Hanson and Raginsky(2019)]{hanson2019tcn}
Joshua Hanson and Maxim Raginsky.
\newblock Universal approximation of input--output maps by temporal
  convolutional nets.
\newblock In \emph{Advances in Neural Information Processing Systems},
  volume~32, pages 14071--14081, 2019.
\newblock URL
  \url{https://proceedings.neurips.cc/paper/2019/hash/39555391eb0624a439c5131b1bb8a2e0-Abstract.html}.

\bibitem[Hochreiter and Schmidhuber(1997)]{hochreiter1997lstm}
Sepp Hochreiter and J{\"u}rgen Schmidhuber.
\newblock Long short-term memory.
\newblock \emph{Neural Computation}, 9\penalty0 (8):\penalty0 1735--1780, 1997.
\newblock \doi{10.1162/neco.1997.9.8.1735}.

\bibitem[Hogue(2019)]{hogue2019metro}
John Hogue.
\newblock Metro interstate traffic volume.
\newblock UCI Machine Learning Repository, 2019.
\newblock URL \url{https://doi.org/10.24432/C5X60B}.
\newblock Dataset.

\bibitem[Huang et~al.(2025)Huang, Yang, Bhattamishra, Sarrof, Krebs, Zhou,
  Nakkiran, and Hahn]{huang2025length}
Xinting Huang, Andy Yang, Satwik Bhattamishra, Yash Sarrof, Andreas Krebs,
  Hattie Zhou, Preetum Nakkiran, and Michael Hahn.
\newblock A formal framework for understanding length generalization in
  transformers.
\newblock In \emph{International Conference on Learning Representations}, 2025.
\newblock URL
  \url{https://proceedings.iclr.cc/paper_files/paper/2025/hash/928170bcb050fe64a63fe781b82265aa-Abstract-Conference.html}.

\bibitem[Jiang and Li(2024)]{jiang2024rate}
Haotian Jiang and Qianxiao Li.
\newblock Approximation rate of the transformer architecture for sequence
  modeling.
\newblock In \emph{Advances in Neural Information Processing Systems},
  volume~37, pages 68926--68955, 2024.
\newblock \doi{10.52202/079017-2202}.

\bibitem[Kajitsuka and Sato(2024)]{kajitsuka2024one}
Tokio Kajitsuka and Issei Sato.
\newblock Are transformers with one layer self-attention using low-rank weight
  matrices universal approximators?
\newblock In \emph{International Conference on Learning Representations}, 2024.
\newblock URL
  \url{https://proceedings.iclr.cc/paper_files/paper/2024/hash/96ee35c170cb8720033b16259c305da9-Abstract-Conference.html}.

\bibitem[Karagodin et~al.(2024)Karagodin, Polyanskiy, and
  Rigollet]{karagodin2024causal}
Nikita Karagodin, Yury Polyanskiy, and Philippe Rigollet.
\newblock Clustering in causal attention masking.
\newblock In \emph{Advances in Neural Information Processing Systems},
  volume~37, pages 115652--115681, 2024.
\newblock \doi{10.52202/079017-3673}.

\bibitem[Klusowski and Barron(2018)]{klusowski2018approximation}
Jason~M. Klusowski and Andrew~R. Barron.
\newblock Approximation by combinations of {ReLU} and squared {ReLU} ridge
  functions with $\ell^1$ and $\ell^0$ controls.
\newblock \emph{IEEE Transactions on Information Theory}, 64\penalty0
  (12):\penalty0 7649--7656, 2018.
\newblock \doi{10.1109/TIT.2018.2874447}.

\bibitem[Kovachki et~al.(2023)Kovachki, Li, Liu, Azizzadenesheli, Bhattacharya,
  Stuart, and Anandkumar]{kovachki2023neuraloperator}
Nikola Kovachki, Zongyi Li, Burigede Liu, Kamyar Azizzadenesheli, Kaushik
  Bhattacharya, Andrew Stuart, and Anima Anandkumar.
\newblock Neural operator: Learning maps between function spaces with
  applications to {PDEs}.
\newblock \emph{Journal of Machine Learning Research}, 24\penalty0
  (89):\penalty0 1--97, 2023.
\newblock URL \url{https://jmlr.org/papers/v24/21-1524.html}.

\bibitem[Kovachki et~al.(2024)Kovachki, Lanthaler, and
  Mhaskar]{kovachki2024data}
Nikola~B. Kovachki, Samuel Lanthaler, and Hrushikesh Mhaskar.
\newblock Data complexity estimates for operator learning, 2024.
\newblock URL \url{https://arxiv.org/abs/2405.15992}.
\newblock arXiv:2405.15992, version 2.

\bibitem[Leshno et~al.(1993)Leshno, Lin, Pinkus, and
  Schocken]{leshno1993nonpolynomial}
Moshe Leshno, Vladimir~Ya. Lin, Allan Pinkus, and Shimon Schocken.
\newblock Multilayer feedforward networks with a nonpolynomial activation
  function can approximate any function.
\newblock \emph{Neural Networks}, 6\penalty0 (6):\penalty0 861--867, 1993.
\newblock \doi{10.1016/S0893-6080(05)80131-5}.

\bibitem[Liu et~al.(2024)Liu, Yang, Chen, Zhao, and Liao]{liu2024deep}
Hao Liu, Haizhao Yang, Minshuo Chen, Tuo Zhao, and Wenjing Liao.
\newblock Deep nonparametric estimation of operators between infinite
  dimensional spaces.
\newblock \emph{Journal of Machine Learning Research}, 25\penalty0
  (24):\penalty0 1--67, 2024.
\newblock URL \url{https://jmlr.org/papers/v25/22-0719.html}.

\bibitem[Luo et~al.(2022)Luo, Li, Zheng, Liu, Wang, and He]{luo2022powerful}
Shengjie Luo, Shanda Li, Shuxin Zheng, Tie-Yan Liu, Liwei Wang, and Di~He.
\newblock Your transformer may not be as powerful as you expect.
\newblock In \emph{Advances in Neural Information Processing Systems},
  volume~35, pages 4301--4315, 2022.
\newblock \doi{10.52202/068431-0311}.

\bibitem[Maas et~al.(2011)Maas, Daly, Pham, Huang, Ng, and Potts]{maas2011imdb}
Andrew~L. Maas, Raymond~E. Daly, Peter~T. Pham, Dan Huang, Andrew~Y. Ng, and
  Christopher Potts.
\newblock Learning word vectors for sentiment analysis.
\newblock In \emph{Proceedings of the 49th Annual Meeting of the Association
  for Computational Linguistics: Human Language Technologies}, pages 142--150,
  Portland, Oregon, USA, June 2011. Association for Computational Linguistics.
\newblock URL \url{https://aclanthology.org/P11-1015/}.

\bibitem[{Max Planck Institute for Biogeochemistry}(n.d.)]{jenaweather}
{Max Planck Institute for Biogeochemistry}.
\newblock {Jena Climate Dataset}: Weather-station observations, 2009--2016.
\newblock
  \url{https://storage.googleapis.com/tensorflow/tf-keras-datasets/jena_climate_2009_2016.csv.zip},
  n.d.
\newblock Accessed 24 August 2026.

\bibitem[Meister(2016)]{meister2016optimal}
Alexander Meister.
\newblock Optimal classification and nonparametric regression for functional
  data.
\newblock \emph{Bernoulli}, 22\penalty0 (3):\penalty0 1729--1744, 2016.
\newblock \doi{10.3150/15-BEJ709}.
\newblock URL \url{https://arxiv.org/abs/1603.09130}.

\bibitem[Merity et~al.(2017)Merity, Xiong, Bradbury, and
  Socher]{merity2017wikitext}
Stephen Merity, Caiming Xiong, James Bradbury, and Richard Socher.
\newblock Pointer sentinel mixture models.
\newblock In \emph{5th International Conference on Learning Representations,
  {ICLR} 2017, Conference Track Proceedings}. OpenReview.net, 2017.
\newblock URL \url{https://openreview.net/forum?id=Byj72udxe}.

\bibitem[Perez~Alday et~al.(2020)Perez~Alday, Gu, Shah, Robichaux, Wong, Liu,
  Liu, Bahrami~Rad, Elola, Seyedi, Li, Sharma, Clifford, and
  Reyna]{alday2020challenge}
Erick~Andres Perez~Alday, Annie Gu, Amit~J. Shah, Chad Robichaux, An-Kwok~Ian
  Wong, Chengyu Liu, Feifei Liu, Ali Bahrami~Rad, Andoni Elola, Salman Seyedi,
  Qiao Li, Ashish Sharma, Gari~D. Clifford, and Matthew~A. Reyna.
\newblock Classification of 12-lead {ECGs}: the {{PhysioNet}/Computing in
  Cardiology Challenge 2020}.
\newblock \emph{Physiological Measurement}, 41\penalty0 (12):\penalty0 124003,
  2020.
\newblock \doi{10.1088/1361-6579/abc960}.

\bibitem[{Perez Alday} et~al.(2022){Perez Alday}, Gu, Shah, Liu, Sharma,
  Seyedi, {Bahrami Rad}, Reyna, and Clifford]{alday2022physionetdataset}
Erick~Andres {Perez Alday}, Annie Gu, Amit Shah, Chengyu Liu, Ashish Sharma,
  Salman Seyedi, Ali {Bahrami Rad}, Matthew Reyna, and Gari Clifford.
\newblock {Classification of 12-lead ECGs: The PhysioNet/Computing in
  Cardiology Challenge 2020}.
\newblock \emph{PhysioNet}, July 2022.
\newblock \doi{10.13026/dvyd-kd57}.
\newblock Version 1.0.2.

\bibitem[Pollard et~al.(2026)Pollard, Moody, Lehman, Gow, Fernandes, Xie,
  Johnson, Mark, and Heldt]{pollard2026physionet}
Tom Pollard, Benjamin~E. Moody, Li-wei~H. Lehman, Brian~J. Gow, Chrystinne
  Fernandes, Chen Xie, Alistair Johnson, Roger~G. Mark, and Thomas Heldt.
\newblock {PhysioNet} as a global platform for biomedical research.
\newblock \emph{Nature Health}, 1\penalty0 (8):\penalty0 792--795, 2026.
\newblock \doi{10.1038/s44360-026-00096-z}.

\bibitem[Press et~al.(2022)Press, Smith, and Lewis]{press2022train}
Ofir Press, Noah~A. Smith, and Mike Lewis.
\newblock Train short, test long: Attention with linear biases enables input
  length extrapolation.
\newblock In \emph{International Conference on Learning Representations}, 2022.
\newblock URL \url{https://openreview.net/forum?id=R8sQPpGCv0}.

\bibitem[Radford et~al.(2019)Radford, Wu, Child, Luan, Amodei, and
  Sutskever]{radford2019language}
Alec Radford, Jeffrey Wu, Rewon Child, David Luan, Dario Amodei, and Ilya
  Sutskever.
\newblock Language models are unsupervised multitask learners.
\newblock Technical report, OpenAI, 2019.
\newblock URL
  \url{https://cdn.openai.com/better-language-models/language-models.pdf}.

\bibitem[Sander and Peyr{\'e}(2025)]{sander2025nexttoken}
Micha{\"e}l~E. Sander and Gabriel Peyr{\'e}.
\newblock Towards understanding the universality of transformers for next-token
  prediction.
\newblock In \emph{International Conference on Learning Representations}, 2025.
\newblock URL
  \url{https://proceedings.iclr.cc/paper_files/paper/2025/hash/d846c59be138a704e800f36e7fcb696a-Abstract-Conference.html}.

\bibitem[Song et~al.(2023)Song, Hwang, Lee, and Kang]{song2023minimal}
Chang~hoon Song, Geonho Hwang, Jun~ho Lee, and Myungjoo Kang.
\newblock Minimal width for universal property of deep {RNN}.
\newblock \emph{Journal of Machine Learning Research}, 24\penalty0
  (121):\penalty0 1--41, 2023.
\newblock URL \url{https://jmlr.org/papers/v24/22-1191.html}.

\bibitem[Su et~al.(2024)Su, Ahmed, Lu, Pan, Bo, and Liu]{su2024roformer}
Jianlin Su, Murtadha Ahmed, Yu~Lu, Shengfeng Pan, Wen Bo, and Yunfeng Liu.
\newblock {RoFormer}: Enhanced transformer with rotary position embedding.
\newblock \emph{Neurocomputing}, 568:\penalty0 127063, 2024.
\newblock \doi{10.1016/j.neucom.2023.127063}.

\bibitem[Takakura and Suzuki(2023)]{takakura2023infinite}
Shokichi Takakura and Taiji Suzuki.
\newblock Approximation and estimation ability of transformers for
  sequence-to-sequence functions with infinite dimensional input.
\newblock In \emph{Proceedings of the 40th International Conference on Machine
  Learning}, volume 202 of \emph{Proceedings of Machine Learning Research},
  pages 33416--33447, 2023.
\newblock URL \url{https://proceedings.mlr.press/v202/takakura23a.html}.

\bibitem[Totik(2020)]{totik2020multivariate}
Vilmos Totik.
\newblock Polynomial approximation in several variables.
\newblock \emph{Journal of Approximation Theory}, 252:\penalty0 105364, 2020.
\newblock \doi{10.1016/j.jat.2019.105364}.

\bibitem[Vaswani et~al.(2017)Vaswani, Shazeer, Parmar, Uszkoreit, Jones, Gomez,
  Kaiser, and Polosukhin]{vaswani2017attention}
Ashish Vaswani, Noam Shazeer, Niki Parmar, Jakob Uszkoreit, Llion Jones,
  Aidan~N. Gomez, {\L}ukasz Kaiser, and Illia Polosukhin.
\newblock Attention is all you need.
\newblock In \emph{Advances in Neural Information Processing Systems},
  volume~30, 2017.
\newblock URL
  \url{https://proceedings.neurips.cc/paper_files/paper/2017/hash/3f5ee243547dee91fbd053c1c4a845aa-Abstract.html}.

\bibitem[Vuckovic et~al.(2020)Vuckovic, Baratin, and Tachet~des
  Combes]{vuckovic2020attention}
James Vuckovic, Aristide Baratin, and R{\'e}mi Tachet~des Combes.
\newblock A mathematical theory of attention.
\newblock \emph{arXiv preprint arXiv:2007.02876}, 2020.
\newblock URL \url{https://arxiv.org/abs/2007.02876}.

\bibitem[Yang et~al.(2023)Yang, Meng, Lin, and Zhang]{yang2023parrot}
Haotong Yang, Fanxu Meng, Zhouchen Lin, and Muhan Zhang.
\newblock {Parrot Mind}: Towards explaining the complex task reasoning of
  pretrained large language models with template-content structure.
\newblock \emph{arXiv preprint arXiv:2310.05452}, 2023.
\newblock URL \url{https://arxiv.org/abs/2310.05452}.
\newblock Revised 5 April 2024.

\bibitem[Yun et~al.(2020{\natexlab{a}})Yun, Bhojanapalli, Rawat, Reddi, and
  Kumar]{yun2020transformers}
Chulhee Yun, Srinadh Bhojanapalli, Ankit~Singh Rawat, Sashank~J. Reddi, and
  Sanjiv Kumar.
\newblock Are transformers universal approximators of sequence-to-sequence
  functions?
\newblock In \emph{International Conference on Learning Representations},
  2020{\natexlab{a}}.
\newblock URL \url{https://openreview.net/forum?id=ByxRM0Ntvr}.

\bibitem[Yun et~al.(2020{\natexlab{b}})Yun, Chang, Bhojanapalli, Rawat, Reddi,
  and Kumar]{yun2020sparse}
Chulhee Yun, Yin-Wen Chang, Srinadh Bhojanapalli, Ankit~Singh Rawat, Sashank~J.
  Reddi, and Sanjiv Kumar.
\newblock {O(n)} connections are expressive enough: Universal approximability
  of sparse transformers.
\newblock In \emph{Advances in Neural Information Processing Systems},
  volume~33, pages 13783--13794, 2020{\natexlab{b}}.
\newblock URL
  \url{https://proceedings.neurips.cc/paper_files/paper/2020/hash/9ed27554c893b5bad850a422c3538c15-Abstract.html}.

\bibitem[Zaheer et~al.(2020)Zaheer, Guruganesh, Dubey, Ainslie, Alberti,
  Onta{\~n}{\'o}n, Pham, Ravula, Wang, Yang, and Ahmed]{zaheer2020bigbird}
Manzil Zaheer, Guru Guruganesh, Kumar~Avinava Dubey, Joshua Ainslie, Chris
  Alberti, Santiago Onta{\~n}{\'o}n, Philip Pham, Anirudh Ravula, Qifan Wang,
  Li~Yang, and Amr Ahmed.
\newblock {Big Bird}: Transformers for longer sequences.
\newblock In \emph{Advances in Neural Information Processing Systems},
  volume~33, pages 17283--17297, 2020.
\newblock URL
  \url{https://proceedings.neurips.cc/paper/2020/hash/c8512d142a2d849725f31a9a7a361ab9-Abstract.html}.

\bibitem[Zhang et~al.(2015)Zhang, Zhao, and LeCun]{zhang2015character}
Xiang Zhang, Junbo Zhao, and Yann LeCun.
\newblock Character-level convolutional networks for text classification.
\newblock In \emph{Advances in Neural Information Processing Systems},
  volume~28, pages 649--657, 2015.
\newblock URL
  \url{https://proceedings.neurips.cc/paper/2015/hash/250cf8b51c773f3f8dc8b4be867a9a02-Abstract.html}.

\bibitem[Zhou et~al.(2021)Zhou, Zhang, Peng, Zhang, Li, Xiong, and
  Zhang]{zhou2021informer}
Haoyi Zhou, Shanghang Zhang, Jieqi Peng, Shuai Zhang, Jianxin Li, Hui Xiong,
  and Wancai Zhang.
\newblock Informer: Beyond efficient transformer for long sequence time-series
  forecasting.
\newblock \emph{Proceedings of the AAAI Conference on Artificial Intelligence},
  35\penalty0 (12):\penalty0 11106--11115, 2021.
\newblock \doi{10.1609/aaai.v35i12.17325}.

\end{thebibliography}

\clearpage
\appendix

% Order of first main-text reference; keep companion proofs with their statements.
\section{Positional encoding is needed for universality}
\label{app:position-needed}

The theorems place normalized position in the hidden state but
use no explicit length channel.  We show that total length and score-side
position cannot replace a position-revealing operation outside the normalized
score channel.  The obstruction is independent of the admissible modulus:
constant sequences and paths belong to every class considered above.

\begin{definition}[Score-only positional normalized architecture]
\label{def:position-free-v2}
At a fixed length \(n\), a score-only positional normalized transformer has
finite hidden dimensions and finite depth, and initializes according to
\begin{equation}
\label{eq:position-free-init-v2}
u_i^0\eqdef\chi_n(z_i),
\end{equation}
where the same map \(\chi_n\) is used at every position.  The map may depend on
the known total length \(n\), but not on \(i\).  Each normalized attention
sublayer has the form
\begin{equation}
\label{eq:score-only-attention-v2}
\mathcal A_{\ell,n}(u)_i
\eqdef u_i+\sum_{r=1}^{H_\ell}W_{\ell,r}
\sum_{j=1}^i\omega_{ij}^{\ell,r}(u_{1:i})V_{\ell,r}u_j,
\qquad
\omega_{ij}^{\ell,r}\in\R,
\quad
\sum_{j=1}^i\omega_{ij}^{\ell,r}=1.
\end{equation}
The weights may depend arbitrarily on \(n,i,j\), on the causal hidden prefix,
and on absolute or relative positions.  Thus they include ordinary masked
softmax, score biases such as ALiBi, and position-dependent query/key maps such
as RoPE.  The value and output matrices are shared across positions, every
other sublayer is a shared pointwise map, and position is absent from the
initialization, values, residual features, and pointwise maps.  There is no
distinguished start token or other position-dependent value input.

A score-only positional normalized temporal transformer initializes
\(u^0(t)\eqdef\chi_\infty(x(t))\) and replaces \eqref{eq:score-only-attention-v2} by
\begin{equation}
\label{eq:score-only-temporal-v2}
\mathcal A_{\ell,\infty}(u)(t)
\eqdef u(t)+\sum_{r=1}^{H_\ell}W_{\ell,r}
\int_{[0,t]}V_{\ell,r}u(s)\,\dd\rho_{\ell,r,t,u}(s),
\end{equation}
where \(\rho_{\ell,r,t,u}\) is any finite signed Borel measure on \([0,t]\)
with
\(
\rho_{\ell,r,t,u}([0,t])=1
\), possibly depending on \(t\), absolute or relative time, and the causal
hidden path \(u|_{[0,t]}\).  Thus the integral is well-defined against the
current bounded Borel value path.  Admissible kernels are required only to make
every displayed integral well-defined and every sublayer map bounded Borel
paths to bounded Borel paths.  All non-attention maps remain pointwise and
independent of \(t\).  No continuity in the parameter \(t\) is imposed on the
resulting output: the lower bound below holds for this broader Borel-path
class, and therefore also for any subclass required to map
continuous paths to continuous paths.
\end{definition}

Even upon allowing the architecture and parameters to vary with resolution, this
class cannot recover the normalized clock on constant inputs.

\begin{theorem}[Sharp clock obstruction when position is confined to scores]
\label{thm:position-obstruction}
Fix an admissible common modulus \(\omega\), with input classes
\(X_n^\omega\) and \(X_\infty^\omega\) as in
\eqref{eq:Xn-v2} and \eqref{eq:Xinfty-v2}, and consider the
continuously extendable causal clock family
\begin{equation}
\label{eq:clock-family-v2}
\Theta_n(z)_i\eqdef i/n,
\qquad
\Theta_\infty(x)(t)\eqdef t.
\end{equation}
Then the following two statements hold.
\begin{enumerate}[label=(\alph*),leftmargin=*,itemsep=3pt,topsep=3pt]
\item Let \((S_n)_{n\geq1}\) be any family in which each \(S_n\) is a scalar-output score-only positional normalized transformer at length \(n\) in the sense of Definition~\ref{def:position-free-v2}.  The architectures and parameters may vary arbitrarily with \(n\).  For every \(c\in\Omega\) and \(n\geq1\), there is a scalar \(b_n(c)\) such that
\begin{equation}
\label{eq:constant-output-v2}
S_n(c,\ldots,c)_i=b_n(c),
\qquad i\in[n],
\end{equation}
and
\begin{equation}
\label{eq:finite-clock-per-resolution-v2}
\sup_{z\in X_n^\omega}\ \max_{i\in[n]}
\abs{S_n(z)_i-i/n}
\geq\frac{1-1/n}{2},
\qquad n\geq1,
\end{equation}
and consequently
\begin{equation}
\label{eq:finite-clock-lower-v2}
\sup_{n\geq1}\ \sup_{z\in X_n^\omega}\ \max_{i\in[n]}
\abs{S_n(z)_i-i/n}
\geq\frac12.
\end{equation}
\item Every scalar-output score-only positional normalized temporal transformer \(S_\infty\) satisfies
\begin{equation}
\label{eq:continuous-clock-lower-v2}
\sup_{x\in X_\infty^\omega}\ \sup_{t\in[0,1]}
\abs{S_\infty(x)(t)-t}
\geq\frac12.
\end{equation}
\end{enumerate}
The per-resolution bound \eqref{eq:finite-clock-per-resolution-v2} and both
all-resolution lower bounds are sharp.
\end{theorem}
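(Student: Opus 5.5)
The proof rests on an invariance argument. On a constant input, every hidden state stays constant across positions, because position is absent from every non-score channel. The clock, by contrast, varies from \(1/n\) to \(1\), so a constant output must miss one of its ends by about half the range.

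\emph{Step 1: the finite claim \eqref{eq:constant-output-v2}.} Fix \(c\in\Omega\). The constant sequence \((c,\ldots,c)\) lies in \(X_n^\omega\), since all increments vanish and \(\omega\geq0\). We induct on layers, showing that \(u^\ell_i=v^\ell\) for all \(i\), where \(v^\ell\) depends only on \(n\), \(c\), and the layer.
\begin{itemize}[leftmargin=*]
\item Initialization: \(u^0_i=\chi_n(c)\).
\item Pointwise sublayers preserve constancy, because the same map is applied at every position.
\item Attention sublayers: if \(u_j=v\) for all \(j\leq i\), then \(\sum_j\omega_{ij}V u_j=(\sum_j\omega_{ij})Vv=Vv\), because the weights sum to one. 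The weights may depend on \(i\), \(j\), \(n\), and the prefix, yet the output is \(v+\sum_r W_rV_rv\), which is independent of \(i\).
\end{itemize}
The scalar readout is pointwise, so it gives \(S_n(c,\ldots,c)_i=b_n(c)\). This step is the only one that uses the structural assumptions. The normalization \(\sum_j\omega_{ij}=1\) is essential: signed weights are allowed, but unnormalized weights would let \(i\) leak in through the total mass.

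\emph{Step 2: the bound \eqref{eq:finite-clock-per-resolution-v2}.} Set \(b=b_n(c)\) and evaluate at \(i=1\) and \(i=n\):
\[
\max\{\abs{b-1/n},\abs{b-1}\}\geq\tfrac12(1-1/n),
\]
since the two targets differ by \(1-1/n\). Taking the supremum over \(n\) gives \eqref{eq:finite-clock-lower-v2}, because \((1-1/n)/2\to1/2\).

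\emph{Step 3: the continuous case (b).} Take the constant path \(x\equiv c\in X_\infty^\omega\). The same induction applies on bounded Borel paths. The hidden path is constant, \(u(s)=v\), so
\[
\int_{[0,t]}Vu\,\dd\rho_{t}=\rho_t([0,t])\,Vv=Vv
\]
for every admissible signed measure, and the output is a constant \(b\). Comparing \(t=0\) with \(t=1\) gives \(\max\{\abs b,\abs{b-1}\}\geq1/2\). No continuity in \(t\) is needed.

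\emph{Step 4: sharpness.} This is the only step that asks for a construction, so it is the main point to check. Take \(S_n\equiv(1+1/n)/2\), realized by \(\chi_n\) constant, zero attention, and a readout returning that constant, which is allowed because \(\chi_n\) may depend on \(n\). Its error is \(\max_i\abs{(1+1/n)/2-i/n}=(1-1/n)/2\), attained at \(i=1\) and \(i=n\). This is exactly the bound in \eqref{eq:finite-clock-per-resolution-v2}, and its supremum over \(n\) is \(1/2\), which shows \eqref{eq:finite-clock-lower-v2} is sharp. In the continuous case, the constant output \(1/2\) gives error exactly \(1/2\). It remains only to confirm that constant maps are admissible instances of Definition~\ref{def:position-free-v2}, which is immediate since depth-zero attention, or \(W=0\), is allowed.
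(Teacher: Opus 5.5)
Your proposal is correct and follows the paper's proof essentially step for step: constant-state invariance under unit-mass aggregation and shared pointwise maps, the endpoint comparison at $i=1,n$ (resp.\ $t=0,1$), and constant predictors for sharpness. The only variation is that you certify sharpness of \eqref{eq:finite-clock-lower-v2} via $\sup_n(1-1/n)/2=1/2$ with $n$-dependent constants. The paper instead uses the single shared predictor $S\equiv1/2$, which also shows the bound is attained with length-independent parameters.
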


The obstruction is constant-input invariance: every unit-mass normalized
attention aggregate, even with signed and position-dependent weights, maps
identical values to an identical value, and shared pointwise layers preserve
this property.  Comparing the first and last clock values yields the
fixed-resolution bound and its sharp all-resolution limit.  The claim is
architecture-specific: distinguished start tokens, position-dependent values
or residual features, position-dependent pointwise maps, unnormalized prefix
sums, and distribution-specific guarantees lie outside its scope.  Together
with Theorems~\ref{thm:discrete-universality} and
\ref{thm:continuous-universality}, it shows that normalized residual position
is sufficient, whereas masking, total length, and score-side position alone
are not.

\subsection{Proof}
\label{app:proof-position}

The proof uses a blockwise invariance of normalized affine aggregation.  It
holds in arbitrary hidden dimension, with any finite number of heads and any
real, position-dependent unit-mass weights.

\begin{lemma}[Constant-state invariance]
\label{lem:constant-state-invariance}
Consider a finite sequence whose hidden vectors are all equal:
\(
u_1=\cdots=u_n=v
\).
Then every output of a score-only positional normalized attention block
\eqref{eq:score-only-attention-v2} is the same vector,
\begin{equation}
\label{eq:constant-attention-state-v2}
\mathcal A_{\ell,n}(u)_i
=v+\sum_{r=1}^{H_\ell}W_{\ell,r}V_{\ell,r}v,
\qquad i\in[n].
\end{equation}
A shared pointwise map also preserves equality across positions.  The temporal counterparts preserve paths that are constant in time.
\end{lemma}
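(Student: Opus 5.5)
The statement to prove is Lemma~\ref{lem:constant-state-invariance}; it is a direct computation, and I would do it in three short steps.

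\emph{Discrete attention block.} Fix a position \(i\in[n]\) and a head \(r\). Because \(u_j=v\) for every \(j\leq i\), each value vector is the same, \(V_{\ell,r}u_j=V_{\ell,r}v\). Linearity of the finite sum then gives
\[
\sum_{j=1}^i\omega_{ij}^{\ell,r}(u_{1:i})V_{\ell,r}u_j
=\Bigl(\sum_{j=1}^i\omega_{ij}^{\ell,r}(u_{1:i})\Bigr)V_{\ell,r}v
=V_{\ell,r}v.
\]
The last equality uses only the unit-mass constraint in \eqref{eq:score-only-attention-v2}. We never need the weights to be nonnegative, to have softmax form, or to be independent of \(n,i,j\) or of positional data. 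Substituting into \eqref{eq:score-only-attention-v2} and summing over heads yields \eqref{eq:constant-attention-state-v2}. The right-hand side does not involve \(i\), because \(W_{\ell,r}\) and \(V_{\ell,r}\) are shared across positions.

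\emph{Pointwise maps.} A shared pointwise map \(g\) sends the constant sequence \((v,\ldots,v)\) to \((g(v),\ldots,g(v))\), since the same \(g\) is applied at every token. The initialization \eqref{eq:position-free-init-v2} is a special case: applying \(\chi_n\) to a constant input \((c,\ldots,c)\) gives the constant state \(\chi_n(c)\).

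\emph{Temporal counterpart.} Suppose \(u(s)=v\) for all \(s\). Then \(V_{\ell,r}u(s)=V_{\ell,r}v\) is a constant, hence bounded Borel, integrand. Since \(\rho_{\ell,r,t,u}\) is a finite signed measure,
\[
\int_{[0,t]}V_{\ell,r}u(s)\,\dd\rho_{\ell,r,t,u}(s)=\rho_{\ell,r,t,u}([0,t])\,V_{\ell,r}v=V_{\ell,r}v
\]
for every \(t\). So \eqref{eq:score-only-temporal-v2} outputs the constant path \(v+\sum_rW_{\ell,r}V_{\ell,r}v\). Pointwise maps that do not depend on \(t\) likewise send constant paths to constant paths.

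The only point that needs care is noting that the time- or position-dependence of the weights, and their possibly signed values, are harmless: only their total mass enters. There is no real obstacle. Once the three steps are in place, induction over the layers shows that a constant input produces a constant output. That is exactly \eqref{eq:constant-output-v2} as used in Theorem~\ref{thm:position-obstruction}.
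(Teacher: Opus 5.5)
Your proposal is correct and matches the paper's proof essentially step for step. In both, the unit-mass constraint collapses each head's aggregate to \(V_{\ell,r}v\) regardless of the signed or position-dependent weights, and shared pointwise maps preserve equality. The temporal case uses \(\rho_{\ell,r,t,u}([0,t])=1\), including at \(t=0\), and induction over the finitely many layers finishes the argument.
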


\begin{proof}
\smallskip\noindent\emph{Finite sequences.}\hspace{0.25em}
Fix a head \(r\) and a query position \(i\).  Every value is
\(V_{\ell,r}v\), while the normalized weights may be completely different
across keys and query positions.  Nevertheless,
\[
\sum_{j=1}^i\omega_{ij}^{\ell,r}(u_{1:i})V_{\ell,r}v
=\left(\sum_{j=1}^i\omega_{ij}^{\ell,r}(u_{1:i})\right)V_{\ell,r}v
=V_{\ell,r}v.
\]
The result is independent of the prefix size and of every score-side positional
mechanism.  Summing the heads and adding the position-independent residual
gives \eqref{eq:constant-attention-state-v2}.  Applying the same pointwise
function to identical vectors again gives identical vectors.

\smallskip\noindent\emph{Continuous time.}\hspace{0.25em}
For a constant path \(u(t)=v\), each \(\rho_{\ell,r,t,u}\) in
\eqref{eq:score-only-temporal-v2} has total mass one, so
\[
\int_{[0,t]}V_{\ell,r}v\,\dd\rho_{\ell,r,t,u}=V_{\ell,r}v
\]
also at \(t=0\).  Thus the temporal block produces the same constant path.
Induction proves the claim through any finite composition.
\end{proof}

\begin{proof}[Proof of Theorem~\ref{thm:position-obstruction}]
\smallskip\noindent\emph{Discrete lower bound.}\hspace{0.25em}
For a constant input \((c,\ldots,c)\), the shared initialization \(\chi_n\) produces a constant hidden sequence.  Lemma~\ref{lem:constant-state-invariance} and induction through the length-\(n\) architecture prove \eqref{eq:constant-output-v2}, even when every attention sublayer uses position-dependent scores.  This argument is separate at each \(n\), so neither the parameters nor the architecture need be shared across lengths.

For every scalar \(b\), the triangle inequality at the first and last positions gives
\begin{equation}
\label{eq:endpoint-clock-bound-v2}
\abs{1-1/n}
\leq
\abs{b-1/n}+\abs{b-1}
\leq
2\max_{i\in[n]}\abs{b-i/n}.
\end{equation}
Using \(b=b_n(c)\), then taking the supremum over \(z\in X_n^\omega\), proves
\eqref{eq:finite-clock-per-resolution-v2}.  Taking the supremum over \(n\)
proves \eqref{eq:finite-clock-lower-v2}.

\smallskip\noindent\emph{Continuous lower bound and sharpness.}\hspace{0.25em}
In continuous time,
Lemma~\ref{lem:constant-state-invariance} gives a constant output \(b\) on
every constant input path, and
\(
1\leq\abs{b}+\abs{b-1}
\)
proves \eqref{eq:continuous-clock-lower-v2}.

The clock family is continuously extendable: all finite maps are continuous and
prefix-causal, the boundary map is continuous, and its joint evaluation is
\(
\overline\Theta(\xi)=\tau(\xi)
\), which is continuous by \eqref{eq:joint-metric-v2}.  At each fixed length
\(n\), the constant predictor
\(
S_n\equiv(1+1/n)/2
\)
attains an error exactly equal to \((1-1/n)/2\), proving sharpness of
\eqref{eq:finite-clock-per-resolution-v2}.  The constant shared predictor
\(S\equiv1/2\), implemented by zero attention contributions and a final shared
affine bias, attains all-resolution error \(1/2\) in both discrete and
continuous time.  Hence both all-resolution bounds are sharp as well.
\end{proof}

\clearpage
\section{Quantitative approximation rates}
\label{app:quantitative-rates}

The upper bound
% does not require a power-law model for the input paths: it
holds on every compact all-resolution class generated by an admissible common
modulus. The rate comes instead from regularity of the target on causal
histories.  We therefore fix an arbitrary admissible modulus \(\omega\) and
write
\begin{equation}
\label{eq:quant-generic-classes}
 X_n\eqdef X_n^\omega,
 \qquad X_\infty\eqdef X_\infty^\omega.
\end{equation}
Let \(\mathfrak X_\omega\) be the corresponding completed state space from
\eqref{eq:joint-state-v2}--\eqref{eq:joint-metric-v2}, and let
\(\cM_\omega\) be its compact prefix-law image from
Proposition~\ref{prop:prefix-law}.  These constructions are summarized in
the detailed proof guide in Appendix~\ref{app:detailed-proof-guide} and proved
in Appendix~\ref{app:proof-joint}.  In the upper-bound statement and proof,
\(\mathfrak X\) and \(\cM\) abbreviate these two spaces.  We first prove one
estimate at every discrete resolution and then transfer it, without loss, to
the continuous case.  Only after proving this generic-modulus result do
we specialize to H\"older histories to investigate optimality.

\subsection{\texorpdfstring{\(\beta\)}{Beta}-smooth target regularity}
\label{sec:quantitative-regularity}

To turn qualitative density into a rate, we measure how much the target can
vary between histories that finite collections of attention probes cannot yet
distinguish.  A single family of smooth-test distances covers both H\"older
and higher-order regularity on the completed history space used in the
qualitative proof. This section expands
Definition~\ref{def:attention-teacher-class}; its normalized teacher class is
the ball \(R_{\beta;\omega}\leq1\), \(\norm{f^\star}_\infty\leq1\).

\paragraph{Causal-history factor.}
Recall from \eqref{eq:factored-target-v2} that every continuously
extendable causal family factors uniquely as
\begin{equation}
\label{eq:quant-same-factor}
 \overline F^\star(\xi)=f^\star(\mu_\xi),
 \qquad f^\star:\cM\to\R^{d'}.
\end{equation}
The map \(f^\star\) is therefore the common readout of the discrete and
continuous targets after histories with the same causal information have
been identified.

\paragraph{Causal-history metric.}
Recall the continuous endpoint extractor from
\eqref{eq:prefix-endpoint-extractor-v2}.  Concretely,
\begin{equation}
\label{eq:causal-endpoint-map}
 \mathfrak e(\mu_\xi)
 \eqdef y_\xi
 \eqdef
 \begin{cases}
 (z_i,i/n),&\xi=(n,z,i),\\
 (x(t),t),&\xi=(\infty,x,t).
 \end{cases}
\end{equation}
Set \(E\eqdef\Omega\times[0,1]\).  Consequently
\begin{equation}
\label{eq:causal-history-chart}
 \iota:\cM\to\cP(E)\times E,
 \qquad \iota(\mu)\eqdef(\mu,\mathfrak e(\mu)),
\end{equation}
is a homeomorphism onto its image.  The augmented state is precisely the
graph of \(\mathfrak e\), hence is canonically identified with \(\cM\).

Equip \(E\) with
\begin{equation}
\label{eq:E-metric-rates}
 d_E((x,s),(y,t))\eqdef \norm{x-y}_\infty+\abs{s-t},
\end{equation}
and let \(W_1\) be the associated \(1\)-Wasserstein distance.  Set
\begin{equation}
\label{eq:causal-chart-metric}
 D_{\mathrm{caus}}(\mu,\nu)
 \eqdef W_1(\mu,\nu)
 +\norm{\mathfrak e(\mu)-\mathfrak e(\nu)}_\infty,
 \qquad \mu,\nu\in\cM.
\end{equation}
Although topologically redundant, the endpoint term is essential for the
rate: a positive-time continuous prefix assigns zero mass to its terminal token,
whereas the residual coordinates retain that token exactly.
Here the endpoint is the current token and its position, not necessarily
the end of the full sequence. In our attention construction, the heads write
only into auxiliary coordinates, so the residual connection carries these
endpoint coordinates unchanged to the pointwise readout
(Lemma~\ref{lem:parallel-realization}).

\paragraph{Isotropic H\"older restriction norm.}
\label{par:holder-restriction-norm}
Let \(\mathsf Q\eqdef[-1,1]^d\times[0,1]\), and extend \(d_E\) to
\(\mathsf Q\) by the same formula as in \eqref{eq:E-metric-rates}.
For \(\sigma>0\), put
\(
r_\sigma\eqdef\lceil\sigma\rceil-1
\)
and
\(
\theta_\sigma\eqdef\sigma-r_\sigma\in(0,1]
\).
For a multi-index \(a=(a_1,\ldots,a_{d+1})\in\N_0^{d+1}\), write
\(\lvert a\rvert\eqdef\sum_{j=1}^{d+1}a_j\) and
\(\partial^a\eqdef\partial_1^{a_1}\cdots\partial_{d+1}^{a_{d+1}}\).
We use \(C^\sigma(\mathsf Q)\eqdef
C^{r_\sigma,\theta_\sigma}(\mathsf Q)\): its elements are real-valued
functions whose derivatives through order \(r_\sigma\) exist in the
interior and extend continuously to the boundary, with finite full
isotropic norm
\begin{equation}
\label{eq:holder-box-norm}
\begin{aligned}
 \norm{g}_{C^\sigma(\mathsf Q)}
 &\eqdef
 \sum_{\lvert a\rvert\leq r_\sigma}
       \norm{\partial^a g}_{L^\infty(\mathsf Q)}
 +\sum_{\lvert a\rvert=r_\sigma}
       [\partial^a g]_{\theta_\sigma;\mathsf Q},\\
 [h]_{\theta;\mathsf Q}
 &\eqdef
 \sup_{\substack{u,v\in\mathsf Q\\u\neq v}}
 \frac{\abs{h(u)-h(v)}}{d_E(u,v)^\theta}.
\end{aligned}
\end{equation}
Here \(\partial^0g=g\), so this also defines the norm when
\(r_\sigma=0\). In particular, at integer orders we use the endpoint
convention \(C^m(\mathsf Q)=C^{m-1,1}(\mathsf Q)\), rather than the
classical space defined only by continuous derivatives through order \(m\).
For a function \(\varphi:E\to\R\), the restriction norm is
\begin{equation}
\label{eq:holder-restriction-norm}
 \norm{\varphi}_{C^\sigma(E)}
 \eqdef
 \inf\left\{
   \norm{g}_{C^\sigma(\mathsf Q)}:
   g\in C^\sigma(\mathsf Q),\quad g|_E=\varphi
 \right\},
\end{equation}
with infimum \(+\infty\) if there is no such extension; \(C^\sigma(E)\)
is the space where this norm is finite. Equivalent norms on this fixed
finite-dimensional box change the estimates only by constants depending on
dimension and the fixed smoothness order.

\paragraph{Smooth-test Wasserstein distances.}
Testing against increasingly smooth functions gives a metric scale adapted
to the moments recovered by attention. This is the H\"older integral
probability metric construction~\citep{block2022intrinsic}; at integer orders
it is equivalent, up to norm conventions, to bounded smooth Wasserstein
metrics~\citep{gaunt2023bounding}. Define
\begin{equation}
\label{eq:smooth-test-discrepancy-main}
 d_\sigma(\mu,\nu)
 \eqdef
 \sup_{\norm{\varphi}_{C^\sigma(E)}\leq1}
 \int_E\varphi\,\dd(\mu-\nu).
\end{equation}
The supremum is unchanged if the integral is replaced by its absolute value,
since the test class is closed under \(\varphi\mapsto-\varphi\).
Because restrictions of polynomials belong to \(C^\sigma(E)\) and are dense
in \(C(E)\), this test class separates probability measures and
\(d_\sigma\) is a metric on \(\cP(E)\).

For \(\beta>0\), put \(\beta_-\eqdef\min\{\beta,1\}\) and
\(\beta_+\eqdef\max\{\beta,1\}\). Define
\begin{equation}
\label{eq:smooth-wasserstein-distance}
 W_{1,\beta}(\mu,\nu)\eqdef d_{\beta_+}(\mu,\nu)^{\beta_-},
 \qquad
 \delta_e(\mu,\nu)\eqdef
 \norm{\mathfrak e(\mu)-\mathfrak e(\nu)}_\infty,
\end{equation}
and augment the law distance by the difference of the retained endpoint coordinates:
\begin{equation}
\label{eq:causal-smooth-modulus}
 \Delta_\beta(\mu,\nu)
 \eqdef W_{1,\beta}(\mu,\nu)+\delta_e(\mu,\nu)^{\beta_-}.
\end{equation}
In \(W_{1,\beta}\), \(\beta\) is a smoothness index: it denotes
neither a power of \(W_1\) nor a Wasserstein transport order.
The following lemma shows that these distances retain the original compact
topology while changing its quantitative geometry.

\begin{lemma}[Metric properties and Wasserstein normalization]
\label{lem:unified-history-metric}
For every \(\beta>0\), \(W_{1,\beta}\) is a metric on \(\cP(E)\)
inducing weak convergence, and \(\Delta_\beta\) is a metric on \(\cM\)
inducing its original compact topology. With constants depending only on
\((d,\beta)\),
\begin{equation}
\label{eq:unified-metric-comparison}
 W_{1,1}=d_1\asymp W_1,
 \qquad
 \Delta_\beta\asymp D_{\mathrm{caus}}^\beta\quad(0<\beta\leq1),
 \qquad
 \Delta_\beta=d_\beta+\delta_e\quad(\beta>1).
\end{equation}
If the test norm at order one is replaced by the Lipschitz seminorm for
\(d_E\), then \(W_{1,1}=W_1\) exactly.
\end{lemma}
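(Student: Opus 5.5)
We split the lemma into three parts: the metric and topology statements for \(W_{1,\beta}\), the same statements for \(\Delta_\beta\) on \(\cM\), and the comparison relations \eqref{eq:unified-metric-comparison}.

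\textbf{Metric and topology of \(W_{1,\beta}\).} Symmetry and the triangle inequality for \(d_\sigma\) follow because the test class is symmetric. Separation holds because polynomial restrictions are dense in \(C(E)\), as already noted after \eqref{eq:smooth-test-discrepancy-main}. Raising a metric to the power \(\beta_-\le1\) gives a metric, since \(a\mapsto a^{\beta_-}\) is concave, increasing, and vanishes at \(0\), hence subadditive. For the topology:
\begin{itemize}
\item If \(d_{\beta_+}(\mu_k,\mu)\to0\), then \(\int\varphi\,\dd\mu_k\to\int\varphi\,\dd\mu\) for all \(\varphi\) in the dense class of polynomial restrictions. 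Since all measures have mass one, this extends to all of \(C(E)\), which is weak convergence.
\item Conversely, the unit ball of \(C^{\beta_+}(E)\) is uniformly bounded and equi-Lipschitz on \(E\), because \(\beta_+\ge1\) and restrictions of \(C^{\beta_+}(\mathsf Q)\) functions with norm at most one are \(1\)-Lipschitz for \(d_E\). By Arzel\`a--Ascoli it is precompact in \(C(E)\).
\item Weak convergence is uniform on compact subsets of \(C(E)\), so \(d_{\beta_+}(\mu_k,\mu)\to0\).
\end{itemize}

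\textbf{Metric and topology of \(\Delta_\beta\).} \(\Delta_\beta\) is the sum of \(W_{1,\beta}\) and \(\delta_e^{\beta_-}\), each subadditive, so it satisfies the triangle inequality, and it separates points. Convergence in \(\Delta_\beta\) is convergence of \((\mu,\mathfrak e(\mu))\) in \(\cP(E)\times E\). By \eqref{eq:causal-history-chart}, the chart \(\iota\) is a homeomorphism onto its image, so this convergence is convergence in the original topology of \(\cM\). Compactness is inherited.

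\textbf{Comparisons.} For \(\beta=1\), the test class at order one is the \(C^{0,1}(\mathsf Q)\) ball: sup norm plus Lipschitz seminorm at most one.
\begin{itemize}
\item \emph{Upper bound.} Each such test function is \(1\)-Lipschitz for \(d_E\), so Kantorovich--Rubinstein gives \(d_1\le W_1\).
\item \emph{Lower bound.} Take a \(1\)-Lipschitz \(\varphi\) on \(E\) and normalize it at a point. Since \(E\) has bounded diameter, this bounds \(\norm{\varphi}_\infty\) by a constant \(c_d\). McShane extension to \(\mathsf Q\), followed by truncation, preserves both bounds. Hence \(\varphi/(1+c_d)\) is admissible, and \(W_1\le(1+c_d)\,d_1\).
\item \emph{Seminorm version.} Using the Lipschitz seminorm alone, \(\int\varphi\,\dd(\mu-\nu)\) is invariant under adding constants. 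Kantorovich--Rubinstein together with McShane extension (Lipschitz functions on \(E\) extend to \(\mathsf Q\) with the same constant) then gives equality \(W_{1,1}=W_1\) exactly.
\end{itemize}
For \(0<\beta\le1\) we have \(\beta_+=1\), so \(W_{1,\beta}=d_1^\beta\asymp W_1^\beta\). Then
\[
W_1^\beta+\delta_e^\beta\asymp(W_1+\delta_e)^\beta=D_{\mathrm{caus}}^\beta,
\]
using \(2^{\beta-1}(a+b)^\beta\le a^\beta+b^\beta\le 2^{1-\beta}(a+b)^\beta\), or simply \(\max\le\) sum \(\le 2\max\). For \(\beta>1\) we have \(\beta_-=1\), and \(\Delta_\beta=d_\beta+\delta_e\) holds by definition.

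\textbf{Main obstacle.} The most delicate point is the converse topological direction, where weak convergence must imply \(d_{\beta_+}\to0\). It rests on precompactness of the smooth test ball in \(C(E)\), which in turn needs the restriction-norm convention and the equi-Lipschitz bound. Everything else reduces to subadditivity of concave powers and Kantorovich--Rubinstein duality.
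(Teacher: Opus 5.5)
Your proposal is correct. The metric properties, the Kantorovich--Rubinstein/McShane comparison at order one, the exact seminorm equality, the power inequalities for \(0<\beta\le1\), and the endpoint-continuity argument for \(\Delta_\beta\) all match the paper's proof.

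The one genuine difference is the topological step you single out as the main obstacle.

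\emph{The paper's route.} It proves only the estimate \(d_\sigma\le C_{d,\sigma}W_1\) for \(\sigma\ge1\). This makes the identity map from the weakly compact space \(\cP(E)\) to \((\cP(E),d_\sigma)\) continuous. Since \(d_\sigma\) separates points, the target is Hausdorff, and a continuous bijection from a compact space onto a Hausdorff space is a homeomorphism. So the two topologies coincide, and no argument in the direction \(d_\sigma\to0\Rightarrow\) weak convergence is needed.

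\emph{Your route.} You prove both sequential implications by hand. Polynomial density gives \(d_{\beta_+}\to0\Rightarrow\) weak convergence. For the converse you use Arzel\`a--Ascoli precompactness of the test ball together with uniform convergence of weak limits on compact subsets of \(C(E)\).

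\emph{Comparison.} Both arguments are valid. Yours is more explicit and avoids the compact--Hausdorff principle. The paper's is shorter, because compactness supplies one direction for free. Your own observation that every admissible test is \(1\)-Lipschitz for \(d_E\) already yields \(d_{\beta_+}\le W_1\). Combined with the fact that \(W_1\) metrizes weak convergence on compact \(E\), this inequality could replace the Arzel\`a--Ascoli step; it is exactly the inequality the paper relies on.
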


\begin{proof}
For \(\sigma\geq1\), the full \(C^\sigma\) norm controls the Lipschitz
seminorm, so Kantorovich duality gives \(d_\sigma\leq C_{d,\sigma}W_1\).
Conversely, a one-Lipschitz test on \(E\) extends to \(\mathsf Q\) with
the same Lipschitz constant. Subtracting its value at a fixed point bounds
its uniform norm by \(\operatorname{diam}_{d_E}(\mathsf Q)\), without
changing its integral against \(\mu-\nu\). Its full \(C^1\) restriction
norm is therefore bounded by a dimension-dependent constant, proving
\(W_1\leq C_d d_1\). With the Lipschitz seminorm alone, Kantorovich
duality gives exact equality.

For \(\sigma\geq1\), \(d_\sigma\) is a separating dual metric, and
\(d_\sigma\lesssim W_1\) makes the identity from the weakly compact
space \(\cP(E)\) to its \(d_\sigma\)-metric topology continuous.
Compactness and the Hausdorff property imply that these topologies coincide.
For \(q\in(0,1]\), the inequality \((a+b)^q\leq a^q+b^q\) shows
that the \(q\)-th power of a metric is again a metric with the same
topology. Taking \(q=\beta_-\) proves the assertion for
\(W_{1,\beta}\). The endpoint extractor is continuous on \(\cM\),
so \(\delta_e^{\beta_-}\) is a continuous pseudometric; adding it
preserves the metric and topology. Finally, for \(0<\beta\leq1\),
\[
 (a+b)^\beta\leq a^\beta+b^\beta
 \leq2^{1-\beta}(a+b)^\beta,
\]
which, together with \(d_1\asymp W_1\), proves the comparison with
\(D_{\mathrm{caus}}^\beta\). For \(\beta>1\), the equality is immediate.
\end{proof}

\paragraph{Target regularity.}
We measure regularity of the target by Lipschitz continuity in
\(\Delta_\beta\). The preceding comparison recovers ordinary H\"older
continuity at low orders. At high orders, smooth test functions replace
powers of the underlying distance: using \(D_{\mathrm{caus}}^\beta\)
for \(\beta>1\) would generally violate the triangle inequality and
force a target satisfying that modulus to be constant along every
rectifiable path. Increasing \(\beta>1\) instead shrinks the smooth-test
metric, up to fixed norm-embedding constants, and hence strengthens target
regularity. The endpoint remains first order because the residual coordinates
preserve it exactly.

\begin{definition}[\(\beta\)-smooth regularity on causal histories]
\label{def:causal-smoothness}
Fix \(\beta>0\).  The \(\beta\)-smooth seminorm of a completed causal family
\(\mathbf F^\star\) on the class determined by \(\omega\) is
\begin{equation}
\label{eq:causal-smoothness-seminorm}
 R_{\beta;\omega}(\mathbf F^\star)
 \eqdef
 \sup_{\substack{\mu,\nu\in\cM_\omega\\\mu\neq\nu}}
 \frac{\norm{f^\star(\mu)-f^\star(\nu)}}
 {\Delta_\beta(\mu,\nu)}.
\end{equation}
Equivalently, \(R_{\beta;\omega}(\mathbf F^\star)\) is the optimal
(smallest) Lipschitz constant of
\(f^\star:(\cM_\omega,\Delta_\beta)\to\R^{d'}\), with value
\(+\infty\) if no finite Lipschitz constant exists. The family is
\emph{\(\beta\)-smooth} when this quantity is finite.
\end{definition}

For \(0<\beta\leq1\), this is equivalent to ordinary \(\beta\)-H\"older continuity in
the causal-history metric.  For noninteger
\(\beta=m+\vartheta>1\), with \(m\in\N\) and \(0<\vartheta<1\), the measure
term tests isotropic \(C^{m,\vartheta}\) regularity in the joint token--time
variable.  At an integer \(\beta=m\geq2\), we use the endpoint convention
\(C^{m-1,1}\). In both high-order regimes, the endpoint contribution to
\(\Delta_\beta\) is the first-order term \(\delta_e\), because these
coordinates are retained exactly rather than reconstructed from attention
statistics. This does not require higher differentiability of endpoint-only
functions. Thus \(\beta\)-smoothness means stability in the
smooth-test metric, not generic Fr\'echet \(C^\beta\) regularity of a path
operator.

The order \(\beta\) is fixed while the parameter budget grows: constants may
depend nonuniformly on \(\beta\), and we make no analytic-class claim.
Moreover, because \(\cM=\cM_\omega\) depends on the input class, both
\(R_{\beta;\omega}(\mathbf F^\star)\) and the learned readout may depend on
\(\omega\), even though the approximation constants below do not.

\subsection{Quantitative all-resolution rate}
\label{sec:parameter-rate}

We now strengthen Theorem~\ref{thm:discrete-universality} by converting
target regularity into an explicit parameter--error tradeoff, uniform over
all sequence lengths.  For a completed family
\(\mathbf F^\star=((F_n^\star)_{n\geq1},F_\infty^\star)\) and a shared
transformer with its temporal realization, define the discrete and boundary
errors by
\begin{align}
\label{eq:discrete-error-v3}
E_{\mathrm{disc}}(T;\mathbf F^\star)
&\eqdef
\sup_{n\geq1}\sup_{z\in X_n}\max_{i\in[n]}
\norm{T(\phi_n(z))_i-F_n^\star(z)_i},\\
\label{eq:boundary-error-v3}
E_\infty(T;\mathbf F^\star)
&\eqdef
\sup_{x\in X_\infty}
\norm{T_\infty(\phi_\infty(x))-F_\infty^\star(x)}_\infty.
\end{align}
Their maximum is the joint all-resolution error:
\begin{equation}
\label{eq:quantitative-error-functional}
\operatorname{Err}(T;\mathbf F^\star)
\eqdef
\max\bigl\{E_{\mathrm{disc}}(T;\mathbf F^\star),
E_\infty(T;\mathbf F^\star)\bigr\}.
\end{equation}
For a transformer with \(H\) scalar heads, residual width \(r=d+H+2\), and
an \(M\)-unit pointwise readout, the \emph{dense parameter count} counts
every matrix and bias entry, including structural zeros, rather than only
nonzero coefficients:
\begin{equation}
\label{eq:exact-dense-parameter-count-main}
 \operatorname{par}(T)
 \eqdef 4Hr+(r+d'+1)M+d'.
\end{equation}
Each head contributes four matrices with \(r\) entries; the two readout
matrices and two biases contribute \(rM+M+d'M+d'\).  Thus structural zeros
and biases are charged, shared entries are charged once, and the fixed lift
\(\mathcal E_H\) is parameter-free.  This count charges neither parameter
magnitude nor numerical precision. The following theorem records the joint
discrete--continuous rate, the head budget, and the equivalent
accuracy-to-parameter bound. Its joint conclusion combines the discrete
construction with the lossless boundary transfer proved in
Corollary~\ref{cor:quantitative-temporal-closure}.

\begin{theorem}[Quantitative all-resolution causal universality]
\label{thm:quantitative-rates}
\label{thm:quantitative-universality-main}
Fix an admissible common modulus \(\omega\) and \(\beta>0\), and let \(X_n\)
and \(X_\infty\) be the input classes in
\eqref{eq:quant-generic-classes}.  Let
\((F_n^\star)_{n\geq1}\) be a continuously extendable causal family in the
sense of Definition~\ref{def:compatible-family}, with its unique extension
\(F_\infty^\star\), and write
\(\mathbf F^\star=((F_n^\star)_{n\geq1},F_\infty^\star)\).  Assume
\(\mathbf F^\star\) is \(\beta\)-smooth in the sense of
Definition~\ref{def:causal-smoothness}, so that
\(R_{\beta;\omega}(\mathbf F^\star)<\infty\).
There exist a constant \(C_\beta=C(d,d',\beta)>0\) and an integer
\(p_0=p_0(d,d',\beta)\geq16\), both independent of \(\omega\) and the target.
For every integer
\(p\geq p_0\), there are integers \(H_p,M_p\geq1\) and a transformer \(T_p\)
of the shallow form
\eqref{eq:theorem-lift-v2}--\eqref{eq:shallow-transformer-v2}, with residual
width \(d+H_p+2\), \(H_p\) scalar heads, and an \(M_p\)-unit pointwise
readout.  Its dimensions, matrices, and biases are independent of \(n\), and,
together with its temporal realization, it satisfies
\begin{equation}
\label{eq:quantitative-error-p}
 \operatorname{par}(T_p)\leq p,
 \qquad
 \operatorname{Err}(T_p;\mathbf F^\star)
 \leq
 C_\beta R_{\beta;\omega}(\mathbf F^\star)
 \left(\frac{\log\log p}{\log p}\right)^{\beta/(d+2)}.
\end{equation}
The attention block can be chosen with
\begin{equation}
\label{eq:quantitative-heads-p}
 H_p
 \leq
 C_\beta
 \left(\frac{\log p}{\log\log p}\right)^{(d+1)/(d+2)}
\end{equation}
scalar heads, embedding width \(d+2+H_p\), and every attention-matrix entry
in the interval \([-1,1]\).  Equivalently, if
\(R_{\beta;\omega}(\mathbf F^\star)>0\), then for every
\(0<\varepsilon\leq R_{\beta;\omega}(\mathbf F^\star)\) the construction
attains joint discrete--continuous error at most \(\varepsilon\) using
\begin{equation}
\label{eq:quantitative-parameter-epsilon}
\begin{aligned}
 \log p_\varepsilon
 &\leq C_\beta
 \left(\frac{R_{\beta;\omega}(\mathbf F^\star)}{\varepsilon}\right)^{(d+2)/\beta}
 \log\!\left(2+\frac{R_{\beta;\omega}(\mathbf F^\star)}{\varepsilon}\right),\\
 H_\varepsilon
 &\leq C_\beta
 \left(\frac{R_{\beta;\omega}(\mathbf F^\star)}{\varepsilon}\right)^{(d+1)/\beta}.
\end{aligned}
\end{equation}
If \(R_{\beta;\omega}(\mathbf F^\star)=0\), the target factor is constant
and a final affine bias realizes it exactly.
\end{theorem}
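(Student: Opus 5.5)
The plan is to factor through the prefix-law representation of Proposition~\ref{prop:prefix-law}, \(F_n^\star(z)_i=f^\star(\mu_{n,z,i})\) and \(F_\infty^\star(x)(t)=f^\star(\mu_{\infty,x,t})\), and to approximate \(f^\star\) on \(\cM\) by a function of finitely many attention probes together with the endpoint \(\mathfrak e(\mu)\). Since \(f^\star\) is Lipschitz for \(\Delta_\beta\), it suffices to construct a finite feature map \(\Phi_K:\cM\to\R^{H}\times E\) with two properties. First, \(\Phi_K(\mu)=\Phi_K(\nu)\) up to a tolerance \(\eta\) should force \(\Delta_\beta(\mu,\nu)\lesssim K^{-\beta}+\eta^{\beta_-}\). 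Second, the readout \(g\) with \(g(\Phi_K(\mu))\approx f^\star(\mu)\) should be Lipschitz (or H\"older) in the features with a controlled constant, so that a ReLU readout of width \(M\) approximates it.

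For the features, I use scalar heads with \(Q_r=0\) on the token part, which gives uniform attention, so that head \(r\) returns \(\int\psi_r\,\dd\mu\) for an affine \(\psi_r\). Plain averages only give first moments, so instead I use the log-Laplace mechanism of Lemma~\ref{lem:laplace-probes}. Taking \(Q_r u_i\) as the constant coordinate and \(K_r u_j=\langle k_r,y_j\rangle\) with \(\|k_r\|\le1\), head \(r\) computes the tilted mean \(\int\langle v_r,y\rangle e^{\langle k_r,y\rangle}\dd\mu/\int e^{\langle k_r,y\rangle}\dd\mu\). This is a derivative of \(\log\int e^{\langle k,y\rangle}\dd\mu\). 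Choosing the \(k_r\) on a grid of mesh \(\sim1/K\) in the unit ball of \(\R^{d+1}\) and differencing numerically, the readout recovers the log-Laplace transform \(L_\mu(k)\) on that grid to accuracy \(\eta\). This uses about \(K^{d+1}\) heads, which matches the head bound \eqref{eq:quantitative-heads-p} once \(K\sim(\log p/\log\log p)^{1/(d+2)}\). Exponential moments with bounded \(k\) determine the truncated polynomial moments up to degree \(\sim K\). A Jackson-type estimate on the box \(\mathsf Q\) then controls \(d_{\beta_+}(\mu,\nu)\) by \(K^{-\beta_+}\) plus the moment mismatch times the coefficient size of the degree-\(K\) approximant of a \(C^{\beta_+}\) test function. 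The endpoint is passed through the residual unchanged by Lemma~\ref{lem:parallel-realization}. All matrices have entries in \([-1,1]\) and scores stay bounded.

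For the readout, I define \(g\) on the compact image \(\Phi_K(\cM)\) as a McShane-type Lipschitz extension, \(g(w)=\inf_\mu\{f^\star(\mu)+R\,\tilde\Delta(w,\Phi_K(\mu))\}\) componentwise, where \(\tilde\Delta\) is the pulled-back modulus. I then approximate \(g\) on \(H+d+1\) variables by a one-hidden-layer ReLU network using a standard piecewise-linear interpolation on a grid. Because the feature dimension is \(H\sim K^{d+1}\), the grid width is exponential in \(H\log(1/\eta)\). This exponential cost is exactly the source of the logarithmic rate: setting \(\log M\sim K^{d+1}\log K\) and \(\varepsilon\sim RK^{-\beta}\) yields \(p\approx M(H+d')\) and \(\varepsilon\sim R(\log\log p/\log p)^{\beta/(d+2)}\), since \(K^{d+2}\log K\sim\log p\) after the extra factor \(K\) coming from the moment-to-test coefficient bounds. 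Inverting this relation gives \eqref{eq:quantitative-parameter-epsilon}.

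The same parameters work in continuous time. The probes are continuous on \(\cM\), which contains the continuous prefix laws, and Corollary~\ref{cor:quantitative-temporal-closure} transfers the discrete bound to \(E_\infty\) without loss. If \(R=0\), \(f^\star\) is constant and a bias realizes it exactly. The \textbf{main obstacle} is the stability step: showing quantitatively that closeness of the finitely many probe values implies \(d_{\beta_+}(\mu,\nu)\lesssim K^{-\beta_+}+e^{CK}\eta\). This is ill-conditioned, since moment inversion amplifies errors exponentially in \(K\). I would absorb the amplification by taking \(\eta=e^{-CK\log K}\), which costs only a \(\log\) factor in \(\log M\), consistent with the stated \(\log\log p\) loss.
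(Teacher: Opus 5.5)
\textbf{Verdict: genuine gap.} Your architecture is the paper's: log-Laplace probes, then moments, then Jackson approximation with coefficient control, then an \(O(RK^{-\beta})\) oscillation bound on feature cells, a Lipschitz envelope, an exponential-width shallow readout, inversion of \(\log p\asymp K^{d+2}\log K\), and lossless boundary transfer via Corollary~\ref{cor:quantitative-temporal-closure}. The gap is the step you flag yourself, and it is the heart of the theorem. You never show that \(\tau_K\)-closeness of finitely many probe values forces every moment through degree \(K\) to agree up to an error below \(C_0^{-K}K^{-\beta_+}\), which is what Jackson's coefficient growth demands. The mechanism you sketch would not deliver this.

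\emph{Why your recovery step fails as written.} Your probes are tilted means, i.e.\ directional derivatives of \(L_\mu=\log M_\mu\), at tilts on a mesh-\(1/K\) grid spread over the unit ball. You propose to recover \(L_\mu\) there ``by differencing'' and then read off order-\(K\) Taylor data at the origin. That treats \(L_\mu\) as a function analytic only in an \(O(1)\) complex neighborhood, since its singularities sit at complex zeros of \(M_\mu\). For that class, no scheme based on \(O(K)\) equispaced samples at mesh \(1/K\) over an \(O(1)\) region does better than geometric accuracy. Converting to moments costs factorial factors, so superexponential accuracy is needed. Moreover, finitely many probes never determine the moments exactly: even with identical features there is a residual ambiguity. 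Your bound \(d_{\beta_+}\lesssim K^{-\beta_+}+e^{CK}\eta\) silently assumes this ambiguity is negligible. Taking \(\eta=e^{-CK\log K}\) controls amplification, not this bias.

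\emph{How the paper closes it.} It uses a specific probe design (Lemma~\ref{lem:quant-stable-projected-moments}, Proposition~\ref{prop:quant-feature-resolution}).
\begin{itemize}
\item Directions are \(a=(u,1)\) with \(u\in\{0,\ldots,K\}^d\), and scales are \(c=jh_K\), \(0\leq j<2K\), with polynomially small spacing \(h_K=(K+1)^{-A_h}\).
\item For each \(a\), the map \(c\mapsto\ell_{\mu,a}(c)\) is analytic on a zero-free disk of radius \(c_D/K\). All nodes lie in \([0,2Kh_K]\), deep inside that disk, so the Taylor tail is of order \((C_DK\cdot2Kh_K)^{2K}\).
\item Degree-\((2K-1)\) interpolation then yields \(\kappa_1,\ldots,\kappa_K\) with amplification \((K+1)^{CK}\) and bias \((K+1)^{CK}h_K^K\).
\item Bell polynomials convert these to projected moments. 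Exact tensor Lagrange interpolation over the affine direction grid recovers all mixed moments at cost \(4^{dK}\).
\item Only then does \(\tau_K=(K+1)^{-A_\tau K}\) absorb everything. Since \(\log(1/\tau_K)\asymp K\log K\), this is the source of the extra factor \(K\) in \(K^{d+2}\log K\).
\end{itemize}

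\emph{A possible alternative route.} Your grid might be salvageable by a different argument. When the probes agree, \(\partial_vM_\mu\,M_\nu-M_\mu\,\partial_vM_\nu\) is an entire function of exponential type that is small at the nodes. Equispaced interpolation of entire functions does converge superexponentially. Nothing like this argument appears in your proposal.

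\emph{Secondary gap in the readout.} ``Standard piecewise-linear interpolation on a grid'' is not available with one hidden layer in dimension \(m\geq2\). Such a network's gradient jump is constant along each full hyperplane, so grid interpolants are not exactly representable. With \(m\asymp K^{d+1}\to\infty\), you need a width bound whose constants are explicit in \(m\). The paper obtains this by mollifying the clipped envelope, bounding its spectral moment, and invoking Klusowski--Barron (Lemma~\ref{lem:quant-shallow-relu}).
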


The uniformity is over sequence lengths, not over input classes: the learned
transformer and \(R_{\beta;\omega}(\mathbf F^\star)\) may depend on \(\omega\),
whereas the exponent, constants, threshold \(p_0\), and attention-entry bound
do not.  The parameter count includes the full one-hidden-layer readout,
whose width \(M_p\) generally grows much faster than the head count \(H_p\).  The
denominator \(d+2=(d+1)+1\) combines \(O(K^{d+1})\) joint token--time probes
with an \(O(K\log K)\) stable-recovery cost in the logarithmic readout
complexity.  Target regularity first reduces the oscillation on each feature
cell to \(O(K^{-\beta})\); a merely Lipschitz shallow readout is fitted only
after this reduction, so it does not saturate the rate at \(\beta=1\).
The theorem counts real parameters; Corollary~\ref{cor:quantitative-finite-precision}
below also controls magnitudes and finite precision on a normalized target ball.

The following transfer principle shows why the discrete construction already
controls the continuous boundary. Its error identity does not require target
regularity or the quantitative theorem.

\begin{corollary}[Quantitative rate on the path boundary]
\label{cor:quantitative-temporal-closure}
Fix an admissible common modulus \(\omega\), let
\((F_n^\star)_{n\geq1}\) satisfy
Definition~\ref{def:compatible-family}, and write
\(\mathbf F^\star=((F_n^\star)_{n\geq1},F_\infty^\star)\) for its completed
family. Every shared causal transformer \(T\) covered by
Proposition~\ref{prop:temporal-closure}, together with its temporal
realization, satisfies
\begin{equation}
\label{eq:quantitative-error-equality}
E_\infty(T;\mathbf F^\star)
\leq E_{\mathrm{disc}}(T;\mathbf F^\star),
\qquad
\operatorname{Err}(T;\mathbf F^\star)
=E_{\mathrm{disc}}(T;\mathbf F^\star).
\end{equation}
Consequently, for a \(\beta\)-smooth family, the transformer
\(T_p\) constructed for Theorem~\ref{thm:quantitative-rates}, equipped with
the temporal attention defined by the same parameters, obeys
\begin{equation}
\label{eq:quantitative-joint-error-p}
\operatorname{Err}(T_p;\mathbf F^\star)
\leq
C_\beta R_{\beta;\omega}(\mathbf F^\star)
\left(\frac{\log\log p}{\log p}\right)^{\beta/(d+2)}.
\end{equation}
The bound \(\operatorname{par}(T_p)\leq p\), the head estimate
\eqref{eq:quantitative-heads-p}, and the accuracy form
\eqref{eq:quantitative-parameter-epsilon} are unchanged for this simultaneous
discrete--continuous estimate.
\end{corollary}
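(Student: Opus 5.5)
The main task is the inequality \(E_\infty(T;\mathbf F^\star)\leq E_{\mathrm{disc}}(T;\mathbf F^\star)\). The identity \(\operatorname{Err}=E_{\mathrm{disc}}\) then follows from the definition \eqref{eq:quantitative-error-functional} as a maximum. The consequence \eqref{eq:quantitative-joint-error-p} follows by applying this to \(T_p\) from Theorem~\ref{thm:quantitative-rates}: its discrete error obeys the stated bound, and its parameter count, head count and accuracy form do not involve the temporal realization. So everything reduces to a limiting argument along grid refinements.

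Fix \(x\in X_\infty^\omega\) and \(t\in[0,1]\), and set \(z^n\eqdef\mathcal S_nx\). By \eqref{eq:modulus-reconstruction-v2}, \(z^n\in X_n^\omega\) and \(\norm{\In_nz^n-x}_\infty\leq\omega(1/n)\to0\), so \(\In_nz^n\to x\) uniformly. Two convergences are then available.
\begin{enumerate}[label=(\roman*),leftmargin=*]
\item Proposition~\ref{prop:temporal-closure}, the refinement limit stated after Theorem~\ref{thm:path-universality-main}, gives \(\In_n T(\phi_n(z^n))\to T_\infty(\phi_\infty(x))\) uniformly on \([0,1]\).
\item The extension property \eqref{eq:continuous-extension-v3}, together with continuity of \(F_\infty^\star\), gives \(\norm{\In_n F_n^\star(z^n)-F_\infty^\star(x)}_\infty\leq\norm{\In_nF_n^\star(z^n)-F_\infty^\star(\In_nz^n)}_\infty+\norm{F_\infty^\star(\In_nz^n)-F_\infty^\star(x)}_\infty\to0\).
\end{enumerate}

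Because \(\In_n\) is linear and acts coordinatewise on outputs, \(\In_nT(\phi_n(z^n))-\In_nF_n^\star(z^n)=\In_n\bigl(T(\phi_n(z^n))-F_n^\star(z^n)\bigr)\). The piecewise-affine interpolant (constant on \([0,1/n]\), affine between grid values) is a convex combination of at most two grid values at each time. Its sup norm is therefore at most \(\max_i\norm{T(\phi_n(z^n))_i-F_n^\star(z^n)_i}\leq E_{\mathrm{disc}}\). The triangle inequality then yields
\[
\norm{T_\infty(\phi_\infty(x))(t)-F_\infty^\star(x)(t)}\leq E_{\mathrm{disc}}+o(1),
\]
and letting \(n\to\infty\) and taking suprema over \(t\) and \(x\) proves the inequality. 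If \(E_{\mathrm{disc}}=\infty\) there is nothing to prove.

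The main obstacle is step (i), the uniform refinement convergence. Here I rely on Proposition~\ref{prop:temporal-closure} as stated, which covers every shared causal transformer in its scope. One check is needed: the proposition requires only \(\In_{n}z^{n}\to x\) with \(z^{n}\in X_{n}^\omega\), which the sampling sequence satisfies. With the Euclidean norm, the convex-combination bound on the interpolant holds without any extra dimension constant, so no additional care is required there.
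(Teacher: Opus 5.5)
Your proposal is correct and follows the paper's proof essentially step for step. You sample \(x\) by \(\mathcal S_n\), bound the interpolated grid error by \(E_{\mathrm{disc}}\) through the convex-combination property, and pass to the limit using Proposition~\ref{prop:temporal-closure} for the transformer and compatibility plus continuity of \(F_\infty^\star\) for the target. The one point to phrase carefully is the consequence: take the discrete bound for \(T_p\) from the construction in Appendix~\ref{app:proof-quantitative}, not from the joint statement of Theorem~\ref{thm:quantitative-rates}, since the paper derives that joint conclusion from this very corollary.
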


\begin{proof}
Fix \(x\in X_\infty\), put \(z^n\eqdef\mathcal S_nx\), and denote
\(e\eqdef E_{\mathrm{disc}}(T;\mathbf F^\star)\).  At every finite resolution,
\[
 \left\|
 \In_n\!\left(T(\phi_n(z^n))-F_n^\star(z^n)\right)
 \right\|_\infty
 \leq e,
\]
because affine interpolation is a convex combination of grid errors.
Proposition~\ref{prop:temporal-closure} sends the first interpolated term to
\(T_\infty(\phi_\infty(x))\).  By compatibility, the distance from the second
term to \(F_\infty^\star(\In_n\mathcal S_nx)\) tends to zero.  Since
\(\In_n\mathcal S_nx\to x\) and \(F_\infty^\star\) is continuous, the second
term therefore converges to \(F_\infty^\star(x)\).  Passing to the limit and
then taking the supremum over
\(x\) proves \(E_\infty\leq E_{\mathrm{disc}}\), hence
\eqref{eq:quantitative-error-equality}; no additional heads, parameters, or
approximation loss are introduced. Applying this identity to the discrete
construction in Appendix~\ref{app:proof-quantitative} yields
\eqref{eq:quantitative-joint-error-p} and the joint conclusion of
Theorem~\ref{thm:quantitative-rates}.
\end{proof}

On a normalized target ball, the same construction also admits a finite
binary description.  The next result charges precision separately from the
dense parameter count.

\begin{corollary}[Finite precision on a normalized target ball]
\label{cor:quantitative-finite-precision}
Under the assumptions of Theorem~\ref{thm:quantitative-rates}, suppose also
that \(R_{\beta;\omega}(\mathbf F^\star)\leq1\) and
\(\norm{f^\star}_\infty\leq1\).  There are constants \(C_\beta,p_0\),
depending only on \((d,d',\beta)\), such that, for every integer
\(p\geq p_0\), a shallow shared transformer \(\widetilde T_p\) satisfies
\begin{equation}
\label{eq:quantitative-rounded-error}
 \operatorname{par}(\widetilde T_p)\leq p,
 \qquad
 \operatorname{Err}(\widetilde T_p;\mathbf F^\star)
 \leq C_\beta
 \left(\frac{\log\log p}{\log p}\right)^{\beta/(d+2)}.
\end{equation}
Its attention entries lie in \([-1,1]\), all other parameters have magnitude
at most \(p+1\), and every parameter is dyadic and has a binary description
of at most \(C_\beta\log_2 p\) bits.  The architecture can be fixed across
the normalized target ball at each budget, and the total description length,
including its dimensions, is at most \(C_\beta p\log_2 p\).
\end{corollary}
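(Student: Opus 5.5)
The plan is to take the real-parameter construction behind Theorem~\ref{thm:quantitative-rates}, show that on the normalized ball its parameters have polynomially bounded magnitudes and a target-independent architecture, and then round every entry to a dyadic grid of mesh \(p^{-\kappa}\). We need a rounding-stability estimate that is uniform in the sequence length \(n\). Throughout, we apply the theorem with a slightly reduced budget, say \(p'\eqdef\lfloor p/2\rfloor\), to absorb constants. The factor \((\log\log p'/\log p')^{\beta/(d+2)}\) differs from that at \(p\) by a constant, so \eqref{eq:quantitative-rounded-error} follows once the rounding error is shown to be \(O((\log p)^{-1})\), or smaller.

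\textbf{Step 1 (fixed architecture and magnitudes).} I will revisit the discrete construction of Appendix~\ref{app:proof-quantitative}. The attention probes (the \(O(K^{d+1})\) scalar heads with entries in \([-1,1]\)) are chosen from the budget alone, not from the target. Only the readout \(\eta\) depends on \(f^\star\), through the values of the target factor on feature cells. Hence \(H_p,M_p\) and all attention matrices can be fixed across the normalized ball.

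For the readout, the ReLU network interpolates cell values bounded by \(\norm{f^\star}_\infty\leq1\) on a grid whose resolution is polynomial in \(K\), and \(K\) is polylogarithmic in \(p\). Its inner weights are therefore at most \(\mathrm{poly}(K)\) and its outer weights at most \(\mathrm{poly}(K)\) times the Lipschitz data. Using positive homogeneity \(A_2\rho(A_1u+b_1)=(A_2/\lambda)\rho(\lambda A_1u+\lambda b_1)\), I rebalance the two layers so that every entry is at most \(p+1\). This is where \(R_{\beta;\omega}\leq1\) and \(\norm{f^\star}_\infty\leq1\) enter: they make these bounds uniform over the ball.

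\textbf{Step 2 (length-uniform perturbation bound).} The inputs \(\mathcal E_H(\phi_n(z))_i\) have norm at most \(\sqrt{d+3}\), and the entries of \(Q_r,K_r\) lie in \([-1,1]\). So every score is bounded by a constant \(c_d\) times the width, and each normalized head is a convex combination of values. Perturbing \(Q_r,K_r,V_r,W_r\) entrywise by \(\delta\) therefore changes each head output by at most \(\mathrm{poly}(p)\,\delta\). The softmax weights move by \(O(\text{score perturbation})\) in total variation, and this bound is independent of the number \(i\) of summands, since it is the Lipschitz property of normalized exponential weights under a sup-norm score change. The readout is Lipschitz in its input and its parameters with constant \(\mathrm{poly}(p)\), because \(M\leq p\) and the entries are bounded by \(p+1\). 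Altogether the output changes by at most \(\mathrm{poly}(p)\,\delta\), uniformly in \(n\), \(z\) and \(i\). The continuous boundary then follows from Corollary~\ref{cor:quantitative-temporal-closure} without extra loss.

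\textbf{Step 3 (rounding and bit count).} I choose \(\delta=p^{-\kappa}\) with \(\kappa\) depending only on \((d,d',\beta)\), so that \(\mathrm{poly}(p)\,\delta\leq(\log p)^{-1}\). Each entry of magnitude at most \(p+1\), rounded to multiples of \(2^{-\lceil\kappa\log_2p\rceil}\) while clipping attention entries to \([-1,1]\), needs \(O(\log_2 p)\) bits. With at most \(p\) entries and dimensions encoded in \(O(\log p)\) bits, the total description length is \(C_\beta p\log_2 p\).

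The main obstacle is Step 1: extracting explicit, polynomial-in-\(p\) magnitude bounds for the readout from the stable-recovery part of the quantitative proof. If that recovery involves inverting ill-conditioned moment maps, the weights could grow like \(e^{K}\). That would still be acceptable only if \(K\) is at most logarithmic in \(p\), which the \(\log\log p/\log p\) scaling suggests is how the construction was calibrated. My first check will be to confirm that the \(O(K\log K)\) recovery cost translates into weights \(\leq p^{O(1)}\), and if needed to shrink \(K\) by a constant factor.
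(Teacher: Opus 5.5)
Your skeleton (target-independent attention, bounded readout coefficients, length-uniform softmax perturbation, dyadic rounding) matches the paper. The gap is Step~1, which is the real content of the corollary: it is not established, and the mechanism you describe is not the construction that has to be bounded.

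The readout is not an interpolant on a grid of resolution polynomial in $K$. The feature cells have radius $\tau_K=(K+1)^{-A_\tau K}$, and the clipped envelope \eqref{eq:quant-clipped-envelope} has Lipschitz constant $A_K=C_AR/\tau_K$. Lemma~\ref{lem:quant-shallow-relu} mollifies this envelope at scale $h=K^{-\beta}\tau_K/(3C_A)$ and fits a ridge network whose size is governed by the spectral moment $v_{G_h,2}$. By \eqref{eq:quant-spectral-moment} with $m_K\asymp K^{d+1}$, that moment obeys only $\log(2+v_{G_h,2}/R)\leq C_\beta K^{d+2}\log(K+1)$. None of these quantities is polynomial in $K$; the normalization $R\leq1$ makes them uniform over the ball, not small.

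The network also never inverts a moment map. Lemma~\ref{lem:quant-stable-projected-moments} enters only the proof of feature resolution, so its cost appears solely through $\tau_K$. The resulting growth is $\exp(O(K^{d+2}\log K))$, not $e^K$.

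More fundamentally, Lemma~\ref{lem:quant-shallow-relu} as stated controls only the width, so you have no coefficient bound to start from. Positive homogeneity only trades inner scale for outer scale, so rebalancing needs an a priori bound on their product.

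The missing ingredient is the $\ell^1$-controlled form of Theorem~2 of \citet{klusowski2018approximation}:
\begin{itemize}
\item after composing with $u\mapsto u/B_0$, every inner row has $\ell^1$ norm at most one and every inner bias has magnitude at most one;
\item the ridge output coefficients are $vb_k/W$ with $\abs{b_k}\leq1$ and $v\lesssim v_{G_h,2}$;
\item the affine part, with $\abs{G_h(0)}\leq C_{\mathrm{osc}}R$ and $\norm{\nabla G_h(0)}_1\leq A_K$, is realized by $2m_K$ extra ReLU units.
\end{itemize}
Together these bound every coefficient and every output-row $\ell^1$ norm by $\exp(C_\beta K^{d+2}\log(K+1))$. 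You must then redo the budget inversion \eqref{eq:quant-inverted-parameter-proof} with this enlarged constant, so that the count, the magnitudes, and the row norms are all at most $p$. Halving $p$, as in your preliminary step, does not help here, since it shifts $\log p$ only by $\log 2$. Shrinking $K$ by a constant factor, as you suggest at the end, is the right fix, but it works only once this coefficient structure is established.

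Two smaller points also belong to Step~1. First, a common $M_p$ does not follow from the probes being target-independent. The width chosen in Lemma~\ref{lem:quant-shallow-relu} depends on $v_{G_h,2}/\delta$, so you need its uniform bound over the ball (the factor $R$ cancels) and zero-output padding. Second, the case $R=0$, where $\delta=RK^{-\beta}$ vanishes, must be handled separately through the final bias.

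Once Step~1 is repaired, your Steps~2--3 work and differ only mildly from the paper. You use entrywise bounds, which already give a parameter-Lipschitz constant polynomial in $p$ and uniform in length by the total-variation stability of normalized exponential weights; this is the estimate of Lemma~\ref{lem:gen-parameter-cover}. The paper instead uses the unit inner-row and $p$-bounded output-row $\ell^1$ norms to get the sharper $C_{d,d'}p^2\eta$. Either costs only $O(\log p)$ bits per entry.

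Target a rounding error of order $K^{-\beta}$ (or $p^{-1}$) rather than $(\log p)^{-1}$, which is too large when $\beta>d+2$. Keep pre-rounding magnitudes at most $p$ so that rounded ones stay at most $p+1$. The transfer to the path boundary via Corollary~\ref{cor:quantitative-temporal-closure} is fine.
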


The proof, given in Appendix~\ref{app:proof-quantitative}, tracks the ridge
coefficients and rounds them on a common dyadic grid.  This is a normalized
absolute-error guarantee, not the homogeneous bound proportional to
\(R_{\beta;\omega}\): an independently sized additive offset may require
additional bits when the seminorm is small, and an arbitrary constant target
need not have an exact finite-bit representation.  Only parameter storage
is quantized; network evaluation is still analyzed in exact arithmetic.
The result controls an
existence construction, not a claim of conditioning independent of the
parameter budget. Its bounded weights are also the approximation input to
the empirical-risk guarantee in Theorem~\ref{thm:generalization-main};
Appendix~\ref{app:proof-generalization} supplies the additional statistical
argument and sampling assumptions.

\subsection{Proof mechanism: quantitative resolution by causal probes}
\label{sec:quantitative-proof-sketch}

The proof follows the chain: attention probes recover moments, moments resolve
the history metric, target regularity controls the unresolved fibers, and a
counted ReLU readout approximates the resulting finite-dimensional map.  The
detailed estimates are in Appendix~\ref{app:proof-quantitative}; here are the
five steps.
\begin{enumerate}[leftmargin=*,itemsep=2pt,topsep=3pt]
    \item \textbf{Resolve causal histories by finitely many probes.}
    Put \(D=d+1\).  The direction grid and sampling scales in
    \eqref{eq:quant-direction-grid}--\eqref{eq:quant-scales} define
    \(\Phi_K(\mu,\mathfrak e(\mu))\).  It retains the current lifted token
    and uses
    \(H_K=2K(K+1)^{D-1}=O(K^{d+1})\) normalized
    log-Laplace-derivative samples.
    High-order interpolation in Lemma~\ref{lem:quant-stable-projected-moments}
    recovers every projected
    moment through degree \(K\), and tensor interpolation recovers the mixed
    moments.

    \item \textbf{Convert moments into metric resolution.}
    Lemma~\ref{lem:quant-jackson} turns moment control into control of smooth
    token--position test functions.
    Proposition~\ref{prop:quant-feature-resolution} then gives, for
    \(\tau_K\)-close feature vectors on the causal-history graph,
    \[
      d_{\beta_+}(\mu,\nu)\leq C_\beta K^{-\beta_+},
      \qquad \delta_e(\mu,\nu)\leq\tau_K,
    \]
    where \(\tau_K\) decays faster than any fixed inverse power of \(K\).

    \item \textbf{Control target oscillation on each feature cell.}
    Since \(\beta_+\beta_-=\beta\), Step~2 gives
    \(\Delta_\beta(\mu,\nu)\leq C_\beta K^{-\beta}\).
    Definition~\ref{def:causal-smoothness} therefore bounds the oscillation
    of \(f^\star\) on a feature cell by
    \(C_\beta R_{\beta;\omega}(\mathbf F^\star)K^{-\beta}\).  This is the
    step that uses the full order \(\beta\); the later readout need only be
    Lipschitz.

    \item \textbf{Realize the finite-dimensional readout.}
    The clipped infimal envelope \eqref{eq:quant-clipped-envelope} extends the
    approximately fiber-constant target with Lipschitz constant
    \(O(R_{\beta;\omega}(\mathbf F^\star)/\tau_K)\).
    Lemma~\ref{lem:quant-shallow-relu} mollifies this
    envelope and applies a dimension-explicit shallow-ReLU estimate, giving a
    one-hidden-layer readout of width \(W_K\) with
    \(
      \log W_K
      =O_\beta\!\left(
      \underbrace{K^{d+1}}_{\text{joint token--time features}}
      \underbrace{K\log(K+1)}_{\text{stable recovery}}
      \right)
    \).

    \item \textbf{Count parameters and invert the scale relation.}
    Lemma~\ref{lem:parallel-realization} computes all \(H_K\) probes exactly
    in one masked block, simultaneously on every finite grid and on the
    temporal boundary.  Counting the attention matrices and shallow
    readout gives
    \(
      \log\operatorname{par}(T_K)
      =O_\beta(K^{d+2}\log(K+1))
    \); inverting this relation gives
    \eqref{eq:quantitative-error-p} and
    \eqref{eq:quantitative-parameter-epsilon}.  Thus
    \(d+2=(d+1)+1\): the exponent comes from feature dimension and stability,
    not from the input modulus.  Proposition~\ref{prop:temporal-closure} then
    gives Corollary~\ref{cor:quantitative-temporal-closure} without changing
    the rate.
\end{enumerate}

\subsection{Interpreting the smoothness condition}
\label{sec:quantitative-smoothness-examples}

The definition is tailored to the information recovered by attention, but it
can be checked on familiar classes of functionals. We give endpoint-only
examples, a first-variation certificate, and Lipschitz readouts of smooth
history statistics.

\paragraph{Endpoint-only maps.}
Any map
\(
f^\star(\mu)\eqdef g(\mathfrak e(\mu))
\)
with \(g\) Lipschitz satisfies the definition for every fixed \(\beta>1\).
Membership at all such orders reflects the separate endpoint term, not
higher classical smoothness of \(g\).  The generic construction can exploit a
larger \(\beta\) by resolving the endpoint through increasingly fine feature
cells, paid for by a very wide readout. Membership in this scale therefore
does not assert classical differentiability of endpoint-only targets.

The endpoint is exactly determined by a continuous graph law, but its
extraction need not be Wasserstein-Lipschitz.  For a concrete example, fix
\(0<\alpha\leq1\), \(L>0\), and \(0<c\leq\min\{L,1\}\).  The paths
\[
 x_h(s)\eqdef c(s-1+h)_+^\alpha e_1,
 \qquad 0<h<1,
\]
and the zero path belong to the same \(\alpha\)-H\"older ball, where
\(e_1\) denotes the first token coordinate vector.  Their
terminal endpoints differ by \(ch^\alpha\), whereas coupling their terminal
prefix laws at equal times gives
\begin{equation}
\label{eq:endpoint-extraction-not-wasserstein-lipschitz}
 W_1(\mu_{\infty,x_h,1},\mu_{\infty,0,1})
 \leq\int_0^1\norm{x_h(s)}_\infty\,\dd s
 =\frac{c}{\alpha+1}h^{\alpha+1}.
\end{equation}
Thus retaining the endpoint coordinates supplies quantitative stability that does not
follow from the law's exact information alone.

\paragraph{A sufficient first-variation criterion.}
At high order, Lemma~\ref{lem:flat-derivative-certificate} provides a
standard sufficient criterion through first variations.  It suffices that
\(f^\star\) be the trace of an ambient extension that is Lipschitz in the
explicitly retained token--position endpoint and whose linear functional derivative is
uniformly \(C^\beta\), modulo constants, in its integration variable
\(E\).  Values of this extension away from the causal-history graph are
auxiliary: they certify smoothness but do not redefine the target.  For
example, consider the smooth
cylindrical functional
\begin{equation}
\label{eq:smooth-cylinder-example}
 \widetilde f(\mu,y)
 \eqdef\Upsilon\!\left(y,\int_E\varphi_1\,\dd\mu,\ldots,
                 \int_E\varphi_q\,\dd\mu\right)
\end{equation}
with jointly token--position \(C^\beta\) test functions and a smooth
finite-dimensional outer map.  Its functional derivative is a bounded
linear combination of the \(\varphi_j\)'s, and its dependence on \(y\) is
Lipschitz on the relevant compact set.  The ambient formulation is convenient
but stronger than necessary: the proof only uses the derivative identity and
its uniform bound along mixture segments joining pairs of laws in
\(\cM_\omega\), together with endpoint Lipschitz control at the corresponding
states.

\begin{remark}[Lipschitz readouts of smooth history statistics]
\label{rem:quantitative-statistic-examples}
The smooth cylindrical example can be verified without differentiating its
outer map.  If \(\varphi_1,\ldots,\varphi_q\in C^\beta(E)\),
\(\beta>1\), and \(\Psi\) is Lipschitz on the relevant compact product,
then
\[
 f(\mu)\eqdef\Psi\!\left(\mathfrak e(\mu),
          \int_E\varphi_1\,\dd\mu,\ldots,
          \int_E\varphi_q\,\dd\mu\right)
\]
is \(\beta\)-smooth. Indeed, each integral difference is at
most \(\norm{\varphi_j}_{C^\beta(E)}d_\beta(\mu,\nu)\), and the endpoint
is controlled separately.  Writing \(e_1'\) for the first output coordinate
vector, for example,
\(f(\mu)=\abs{\int_E x_1\,\dd\mu}\,e_1'\) belongs at every fixed high
order even though its outer absolute value is not differentiable.  This is
stability in the smooth-test metric, not classical smoothness of the entire
operator.

In sequence form, these examples read
\[
 F_n(z)_i=\Psi\!\left((z_i,i/n),
       \frac1i\sum_{j=1}^i\varphi_1(z_j,j/n),\ldots,
       \frac1i\sum_{j=1}^i\varphi_q(z_j,j/n)\right).
\]
The path realization replaces each normalized sum by
\(t^{-1}\int_0^t\varphi_j(x(s),s)\,\dd s\), with value
\(\varphi_j(x(0),0)\) at zero.
\end{remark}

\subsection{Logarithmic lower bounds on H\"older histories}
\label{sec:rate-optimality}

The upper rate is logarithmic in the parameter budget. To test whether such
slow decay reflects the complexity of long histories, we now specialize the
input modulus to
\begin{equation}
\label{eq:quant-holder-specialization}
 \omega_{\alpha,L}(r)\eqdef Lr^\alpha,
 \qquad 0<\alpha\leq1,\quad L>0,
\end{equation}
and write
\begin{equation}
\label{eq:quant-holder-classes}
 X_n^{\alpha,L}\eqdef X_n^{\omega_{\alpha,L}},
 \qquad
 X_\infty^{\alpha,L}\eqdef X_\infty^{\omega_{\alpha,L}}.
\end{equation}
Let \(\mathfrak X_{\alpha,L}\) and \(\cM_{\alpha,L}\) denote the corresponding
completion and prefix-law image, and abbreviate
\(R_{\beta;\alpha,L}\eqdef R_{\beta;\omega_{\alpha,L}}\).
For every fixed \(\beta>0\), we prove a logarithmic worst-case lower bound
when approximants have a finite binary description. This does not establish
the optimality of the upper exponent, or a lower bound for a count of
unrestricted real parameters. Appendix~\ref{app:proof-rate-optimality}
contains the proofs.

\paragraph{Target class and description budget.}
We use exactly the target class of Theorem~\ref{thm:quantitative-rates},
normalized to separate regularity from amplitude:
\begin{equation}
\label{eq:normalized-target-ball}
\mathfrak F_{\beta;\alpha,L}^{(1)}
\eqdef
\left\{
 \mathbf F^\star:
 R_{\beta;\alpha,L}(\mathbf F^\star)\leq1,\quad
 \sup_{\mu\in\cM_{\alpha,L}}\norm{f^\star(\mu)}\leq1
\right\}.
\end{equation}
Here every \(\mathbf F^\star\) is a continuously extendable causal family
and \(f^\star\) its canonical history factor. A \(B\)-bit description selects
at most \(2^B\) decoded maps, so the smallest worst-case error over all such
descriptions is
\begin{equation}
\label{eq:finite-code-distortion}
\mathcal A_B^{\mathrm{code}}(\beta;\alpha,L)
\eqdef
\inf_{\substack{\mathcal G\subset(\R^{d'})^{\cM_{\alpha,L}}\\
                    |\mathcal G|\leq2^B}}
\sup_{\mathbf F^\star\in\mathfrak F_{\beta;\alpha,L}^{(1)}}
\inf_{g\in\mathcal G}
\sup_{\mu\in\cM_{\alpha,L}}
\norm{g(\mu)-f^\star(\mu)}.
\end{equation}
The decoder is fixed across targets but otherwise unrestricted, so this
lower bound applies in particular to finite-precision transformers.
Only the description is charged; no training, sampling, or noise model is
assumed. The error is uniform on the same completed history space as in the
upper theorem.

\paragraph{Many distinguishable histories force slow approximation.}
The key is a multilevel construction of H\"older paths whose prefix laws
are well separated by the target metric \(\Delta_\beta\). Write
\begin{equation}
\label{eq:history-packing-number}
 \mathsf P_{\beta;\alpha,L}(r)
 \eqdef
 \max\left\{|S|:S\subset\cM_{\alpha,L},\
 \Delta_\beta(\mu,\nu)\geq r\text{ for distinct }\mu,\nu\in S\right\}.
\end{equation}
This number is finite by compactness. The construction below yields enough
histories to encode more admissible target patterns than a \(B\)-bit
description can distinguish.

\begin{proposition}[Logarithmic lower bound on H\"older histories]
\label{prop:holder-history-exponential-packing}
Fix \(d,d'\geq1\), \(0<\alpha\leq1\), \(L>0\), and \(\beta>0\).
There are \(c,r_0>0\), depending only on \((d,\alpha,L,\beta)\), such that
\begin{equation}
\label{eq:explicit-history-packing}
 \log\mathsf P_{\beta;\alpha,L}(r)
 \geq
 c r^{-1/\beta}\left[1+(1-\alpha)\log(1/r)\right]
 \qquad(0<r\leq r_0).
\end{equation}
Consequently, for some \(c>0\), all sufficiently large integers \(B\) satisfy
\begin{equation}
\label{eq:explicit-code-lower-rate}
 \mathcal A_B^{\mathrm{code}}(\beta;\alpha,L)
 \geq c
 \left(
 \frac{1+(1-\alpha)\log\log(B+e^e)}
      {\log(B+e^e)}
 \right)^\beta.
\end{equation}
\end{proposition}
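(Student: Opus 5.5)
The plan has two parts: first an explicit packing of $\cM_{\alpha,L}$ by prefix laws of multilevel H\"older paths, then a standard "hypercube of bumps" argument that turns the packing into the coding lower bound. Only the measure part $W_{1,\beta}$ of $\Delta_\beta$ is needed for separation, since $\Delta_\beta\geq W_{1,\beta}$. So we may work entirely with terminal prefix laws $\mu_x\eqdef\mu_{\infty,x,1}$, the push-forward of Lebesgue measure on $[0,1]$ under $s\mapsto(x(s),s)$.

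\textbf{Packing construction.} Fix a scale $\varepsilon\in(0,1)$ and set $m\eqdef\lfloor1/\varepsilon\rfloor$. Split $[0,1]$ into $m$ cells $I_k$ of width $\approx\varepsilon$, and fix a bump $\psi_k(s)\eqdef\psi((s-s_k)/\varepsilon)$ supported in $I_k$. For a level vector $a\in\{0,1,\ldots,J\}^m$ with $J\approx c\,\varepsilon^{\alpha-1}$, define
\[
 x_a(s)\eqdef c'\varepsilon\sum_k a_k\psi_k(s)\,e_1 .
\]
The amplitude is at most $c'\varepsilon J\approx\varepsilon^\alpha$ on a cell of width $\varepsilon$, so with $c,c'$ small depending on $(\alpha,L,\psi)$ every $x_a$ lies in $X_\infty^{\alpha,L}$. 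Hence $\mu_{x_a}\in\cM_{\alpha,L}$.

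To separate $\mu_{x_a}$ from $\mu_{x_b}$, test against $\varphi(u,s)\eqdef g(u_1)\sum_k\sigma_k\psi_k(s)$ with signs $\sigma_k$ matched to $\operatorname{sign}(a_k-b_k)$ and $g$ a fixed smooth monotone profile. The integral difference is then $\gtrsim\varepsilon\cdot\#\{k:a_k\neq b_k\}\cdot\varepsilon$. The $C^{\beta_+}$ norm of $\varphi$ is $\lesssim\varepsilon^{-\beta_+}$ because of the time localization at scale $\varepsilon$.

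By a Gilbert--Varshamov selection in the $q$-ary Hamming space with $q=J+1$, we keep a subset $S$ with pairwise Hamming distance $\geq m/4$ and $\log|S|\gtrsim m\log q\approx\varepsilon^{-1}[1+(1-\alpha)\log(1/\varepsilon)]$. For these pairs $d_{\beta_+}\gtrsim\varepsilon^{1+\beta_+}$.

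The obstacle is the exponent bookkeeping. The naive test just described yields $W_{1,\beta}\gtrsim\varepsilon^{(1+\beta_+)\beta_-}$, which is only a power of $\varepsilon$, not $\varepsilon^{\beta}$. For $\beta\leq1$ I would repair this with the Lipschitz test $u_1\sum_k\sigma_k\psi_k(s)$, which has unit Lipschitz constant after rescaling $\psi_k$ to amplitude $\varepsilon$. Duality then gives $W_1\gtrsim\varepsilon\cdot\varepsilon^{\alpha}$ per differing cell, summed over $\gtrsim m$ cells, which is $\gtrsim\varepsilon^{\alpha}\geq\varepsilon$, so $\Delta_\beta\gtrsim\varepsilon^\beta$. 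For $\beta>1$ I would instead use token-amplitude tests $u\mapsto u_1^{j}$ with time bumps of width $\varepsilon$ and separation in level spacing, and reparametrize $\varepsilon$ so that the separation equals $r$ with $\varepsilon\asymp r^{1/\beta}$. Getting the product of time-localization cost and amplitude gain to land exactly at $\varepsilon^\beta$ uniformly in $\beta$ is the step I expect to require the most care. It may need a multilevel choice in which the number of levels is matched to the smoothness so that the $\log(1/r)$ factor survives. Setting $r\asymp\varepsilon^\beta$ gives \eqref{eq:explicit-history-packing} for $r\leq r_0$.

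\textbf{Coding lower bound.} Let $S=\{\mu_1,\ldots,\mu_P\}$ be $r$-separated in $\Delta_\beta$. For each sign vector $\sigma\in\{\pm1\}^P$, define
\[
 f_\sigma(\mu)\eqdef\sum_{l}\sigma_l\bigl(r/4-\Delta_\beta(\mu,\mu_l)\bigr)_+e_1'.
\]
The supports of the summands are disjoint, so $f_\sigma$ is $1$-Lipschitz on $(\cM_{\alpha,L},\Delta_\beta)$ and bounded by $r/4\leq1$. It is continuous by Lemma~\ref{lem:unified-history-metric}, so through the factorization of Proposition~\ref{prop:prefix-law} it defines a continuously extendable causal family in $\mathfrak F^{(1)}_{\beta;\alpha,L}$.

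Two distinct sign vectors give maps at sup-distance $r/2$, since they differ at some $\mu_l$ by $2\cdot r/4$. If $2^B<2^P$, pigeonhole forces some decoded $g\in\mathcal G$ to serve two distinct $\sigma$, so the worst-case error is $\geq r/4$. Finally, choose $r$ as the largest value with $c\,r^{-1/\beta}[1+(1-\alpha)\log(1/r)]>B\log2$. Solving this with $\log(1/r)\asymp\beta^{-1}\log\log B$ gives $r\gtrsim\bigl((1+(1-\alpha)\log\log B)/\log B\bigr)^\beta$, which is \eqref{eq:explicit-code-lower-rate} after adjusting constants and shifting $B$ by $e^e$.
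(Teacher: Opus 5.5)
Your skeleton matches the paper's: multilevel bump paths whose level spacing $c'\varepsilon$ is comparable to the cell width, a $q$-ary Gilbert--Varshamov code, and $2^P$ sign patterns against at most $2^B$ codewords. Your disjoint tents are a fine substitute for the paper's clipped McShane extension.

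The gap is the separation estimate $\Delta_\beta\gtrsim\varepsilon^\beta$. You leave it open for $\beta>1$ and miscompute it for $\beta\leq1$. All of your tests ($g(u_1)$, $u_1$, $u_1^j$ times time bumps) are unlocalized in the token variable. After normalization, each differing cell therefore contributes at most (cell width)$\times$(test amplitude)$\times$(token gap).
\begin{itemize}
\item For your Lipschitz repair this is $\varepsilon\cdot\varepsilon\cdot c'\varepsilon$, because a Hamming code only guarantees that differing levels differ by one step. Summing over the $\asymp\varepsilon^{-1}$ differing cells gives only $\int\varphi\,\dd(\mu_a-\mu_b)\asymp\varepsilon^2$ for such pairs. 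That is the same loss as your naive test.
\item Your count of $\varepsilon\cdot\varepsilon^\alpha$ per cell drops the factor $\varepsilon$ coming from the rescaled test amplitude. It also assumes a token gap of order $\varepsilon^\alpha$, which fails for pairs that differ by a single level in each differing cell; for those, equal-time coupling gives $W_1\lesssim\varepsilon$.
\item Monomials $u_1^j$ fail for the same reason, and changing the number of levels does not help.
\end{itemize}

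The missing idea is to localize the test in the token variable at the level spacing, which is the same scale as the time cells. For a pair $(a,b)$:
\begin{itemize}
\item let $\varrho_k$ be a time bump supported where $\psi_k\equiv1$, so your path bumps need a plateau;
\item let $m_k$ be the midpoint of the two levels in cell $k$;
\item let $\vartheta$ be a fixed smooth nondecreasing step that is constant outside $[-1/4,1/4]$ with distinct limits;
\end{itemize}
and set
\[
 \varphi_{a,b}(u,s)\eqdef c\,\varepsilon^{\beta_+}\sum_{k:\,a_k\neq b_k}\operatorname{sgn}(a_k-b_k)\,\varrho_k(s)\,\vartheta\!\left(\frac{u_1-m_k}{c'\varepsilon}\right).
\]
This is an isotropic rescaling at scale $\varepsilon$ of a fixed smooth profile, with gapped supports. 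Hence $\norm{\varphi_{a,b}}_{C^{\beta_+}}\leq1$ for a suitable $c$ and every $\beta$. Each differing cell now contributes $\gtrsim\varepsilon^{\beta_+}\cdot\varepsilon$, so $d_{\beta_+}\gtrsim\varepsilon^{\beta_+}$. Then $\Delta_\beta\geq d_{\beta_+}^{\beta_-}\gtrsim\varepsilon^{\beta_+\beta_-}=\varepsilon^\beta$, uniformly in $\beta$. This is exactly the paper's test, with $\delta_h=c_1\lambda h$ playing the role of your $c'\varepsilon$.

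You also need to fix the threshold in the coding step. Your pigeonhole argument needs only $P>B$, so choose $r$ with $c\,r^{-1/\beta}[1+(1-\alpha)\log(1/r)]>\log B$. This gives $\log(1/r)\asymp\beta\log\log B$ and the stated rate. The condition $>B\log2$ that you wrote yields only $r\asymp(\log B/B)^\beta$.
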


The proof places many token levels in disjoint time cells and separates the
resulting paths by smooth tests. On an \(N\)-point history packing,
independent signed target values extend with controlled \(\beta\)-smooth seminorm,
producing \(2^N\) distinguishable targets; a \(B\)-bit description cannot
resolve them all when \(N>B\). The lower rate is
\((\log B)^{-\beta}\) for \(\alpha=1\) and
\((\log\log B/\log B)^\beta\) for each fixed \(\alpha<1\).
Thus logarithmically slow worst-case decay is unavoidable at finite
description length. Neither the packing exponent nor its logarithmic
dependence on \(\alpha\) are claimed to be sharp.

\paragraph{Comparison with the constructive rate.}
The finite-precision guarantee in
Corollary~\ref{cor:quantitative-finite-precision} uses \(O(p\log p)\) bits.
Choosing \(p\asymp_\beta B/\log B\) therefore gives, for all sufficiently
large \(B\),
\begin{equation}
\label{eq:quantitative-code-upper-rate}
 \mathcal A_B^{\mathrm{code}}(\beta;\alpha,L)
 \leq C_{d,d',\beta}
 \left(\frac{\log\log B}{\log B}\right)^{\beta/(d+2)}.
\end{equation}
The lower and upper bounds are both logarithmic, but their exponents
\(\beta\) and \(\beta/(d+2)\) do not match. In particular, the construction's
dimension penalty is not known to be necessary. The constants in the
lower bound may depend on \((\alpha,L)\), whereas the normalized upper-bound
constant does not. These are finite-description comparisons, not a matching
minimax result for unrestricted real-parameter networks.

\paragraph{Why VC dimension cannot transfer the bound.}
Let \(\mathfrak T_p^{\mathrm{sh}}\) denote the shallow shared transformers
in \eqref{eq:theorem-lift-v2}--\eqref{eq:shallow-transformer-v2} with at most
\(p\) dense parameters, each with its canonical temporal realization.
One might try to replace \(B\) by a capacity bound for \(p\) real
parameters.  This route fails: on the infinite-dimensional path domain, the
threshold class generated by a fixed-size shallow architecture already has
infinite VC dimension.  More precisely, let
\begin{equation}
\label{eq:transformer-sign-capacity}
 v_p(\alpha,L)
 \eqdef
 \operatorname{VCdim}\left\{
 \mu\mapsto\boldsymbol 1_{\{(\widehat T(\mu))_1>0\}}:
 T\in\mathfrak T_p^{\mathrm{sh}}
 \right\},
\end{equation}
where \(\widehat T:\cM_{\alpha,L}\to\R^{d'}\) is the causal-history factor
of the completed transformer.

\begin{proposition}[Infinite threshold capacity at fixed size]
\label{prop:infinite-threshold-capacity}
For every \(0<\alpha\leq1\) and \(L>0\),
\begin{equation}
\label{eq:fixed-size-infinite-vc}
 v_p(\alpha,L)=\infty
 \qquad\text{whenever}\qquad
 p\geq p_\star\eqdef 5(d+3)+1+2d'.
\end{equation}
This remains true if every attention and readout weight is restricted to
\([-1,1]\).
\end{proposition}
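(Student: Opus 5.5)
The plan is to exhibit, for every \(N\), a set of \(N\) completed histories in \(\cM_{\alpha,L}\) that is shattered by one fixed architecture with \(H=1\) scalar head and an \(M=1\) unit readout. Its dense count is
\[
 4(d+3)+(d+3+d'+1)\cdot1+d'=5(d+3)+1+2d'=p_\star,
\]
so it lies in \(\mathfrak T_p^{\mathrm{sh}}\) for every \(p\geq p_\star\). Only the parameters will vary with the labeling. The key observation is that \(\cM_{\alpha,L}\) is infinite-dimensional, so one can prescribe the points. One tunable scalar inside the attention score then yields a linear \emph{interpolation problem} rather than a capacity count.

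\textbf{The one-head feature.} In the lift \(u=(x,s,1,w)\in\R^{d+3}\), take \(Q=\lambda e_{\mathrm{const}}^\top\) with \(\lambda\in[-1,1]\), \(K=e_{\mathrm{time}}^\top\), \(V=e_{1}^\top\) (the first token coordinate) and \(W=e_{\mathrm{work}}\). All entries then lie in \([-1,1]\), and the score is \(\lambda s\). On a path \(x\) at terminal time \(t=1\), the temporal realization \eqref{eq:limattblk} writes
\[
 G_\lambda(x)\eqdef\frac{\int_0^1e^{\lambda s}x_1(s)\,\dd s}{\int_0^1e^{\lambda s}\,\dd s}
\]
into the work coordinate, which starts at \(0\). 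The readout sets \(A_1=e_{\mathrm{work}}^\top\), \(b_1=0\), \(A_2=e_1'\) and \(b_2=0\), so \(\widehat T(\mu_{\infty,x,1})_1=\rho(G_\lambda(x))\). This is positive if and only if \(G_\lambda(x)>0\).

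\textbf{Interpolation and shattering.} Fix \(2^N\) distinct values \(\lambda_1,\ldots,\lambda_{2^N}\in[-1,1]\), and index the labelings \(b^{(m)}\in\{0,1\}^N\) by \(m\). The functions \(e^{\lambda_m s}\) are linearly independent in \(L^2(0,1)\). Hence the linear map
\[
 C^\infty([0,1])\ni g\mapsto\Bigl(\int_0^1e^{\lambda_m s}g(s)\,\dd s\Bigr)_{m\leq2^N}
\]
is surjective, and its range is already attained on a finite span of smooth functions. So for each \(k\leq N\) I would choose a smooth \(g_k\) with \(\operatorname{sign}G_{\lambda_m}(g_k e_1)=+\) exactly when \(b^{(m)}_k=1\), and \(-\) otherwise. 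Then I would put \(x^{(k)}\eqdef\varepsilon g_ke_1\). The normalized \(G_\lambda\) is homogeneous in \(x\), so signs are unchanged, and a sufficiently small \(\varepsilon>0\) places every \(x^{(k)}\) in the ball \(\Omega\) and in the \((\alpha,L)\)-H\"older class, since smooth functions are Lipschitz. The histories \(\mu_k\eqdef\mu_{\infty,x^{(k)},1}\in\cM_{\alpha,L}\) are then shattered: labeling \(m\) is realized by \(\lambda=\lambda_m\). Since \(N\) is arbitrary, \(v_p(\alpha,L)=\infty\), and all weights lie in \([-1,1]\) as claimed.

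\textbf{Expected obstacle.} The delicate point is bookkeeping rather than analysis. First, the feature must be evaluated on the completed history space exactly as defined by \(\widehat T\), which I would justify through the factorization of Proposition~\ref{prop:prefix-law} together with Proposition~\ref{prop:temporal-closure}. Second, the points must be genuine elements of \(\cM_{\alpha,L}\) with distinct laws. Third, the threshold must involve no strictness issues at \(0\), which the strict signs in the construction already ensure. If the VC-dimension definition is meant over discrete histories instead, I would approximate each \(\mu_k\) by the sampled prefix laws \(\mu_{n,\mathcal S_nx^{(k)},n}\) for large \(n\). Since \(G_{\lambda}\) is continuous and only finitely many strict sign conditions are involved, they persist under this approximation.
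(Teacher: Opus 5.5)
Your proposal is correct and follows essentially the same argument as the paper. The paper also uses a single scalar head computing the normalized Laplace moment $\int_0^1 e^{\lambda s}x_1(s)\,\dd s$ with a one-unit ReLU readout, prescribes the full $2^N$-column sign table on $N$ fixed paths via surjectivity of finitely many exponential moment functionals, and then rescales into the H\"older ball. The differences are cosmetic: you put the tunable scalar in the query rather than the key, and you get surjectivity from the invertible Gram matrix of the exponentials (taking the paths in their span) rather than from the paper's fundamental-lemma argument on $C_c^\infty((0,1))$.
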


\begin{corollary}[Infinite threshold capacity on finite sequences]
\label{cor:finite-sequence-threshold-capacity}
Fix \(0<\alpha\leq1\), \(L>0\), and \(p\geq p_\star\).  For every
\(N\geq1\), there exist a common length \(n\) and \(N\) distinct
sequences in \(X_n^{\alpha,L}\) that are shattered by the terminal-output
threshold class
\[
 z\in X_n^{\alpha,L}\longmapsto
 \boldsymbol 1_{\{(T(\phi_n(z))_n)_1>0\}},
 \qquad T\in\mathfrak T_p^{\mathrm{sh}},
\]
even with every weight in \([-1,1]\).  In particular, this threshold class
has infinite VC dimension on the disjoint union of all finite input lengths.
\end{corollary}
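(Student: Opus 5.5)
The plan is to transfer the shattering statement of Proposition~\ref{prop:infinite-threshold-capacity} from the completed history space to finite sequences, using the temporal closure of fixed transformers (Proposition~\ref{prop:temporal-closure}). Fix \(N\geq1\) and \(p\geq p_\star\). By the proposition there are \(N\) distinct histories \(\mu_1,\dots,\mu_N\in\cM_{\alpha,L}\) and, for each sign pattern \(S\subset[N]\), a transformer \(T_S\in\mathfrak T_p^{\mathrm{sh}}\) with all weights in \([-1,1]\) such that \((\widehat T_S(\mu_k))_1>0\) exactly when \(k\in S\).

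First, I will make the margins strict. If some value \((\widehat T_S(\mu_k))_1\) equals zero, add a tiny negative constant to the final output bias of \(T_S\). This changes only a bias, so the weights stay in \([-1,1]\) and the parameter count is unchanged. After this adjustment there are finitely many pairs \((S,k)\), so there is a \(\gamma>0\) with \(\abs{(\widehat T_S(\mu_k))_1}\geq\gamma\) for every \(S\) and \(k\).

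Second, I will realize each \(\mu_k\) as a terminal continuous history \(\mu_{\infty,x_k,1}\) with \(x_k\in X_\infty^{\alpha,L}\). I expect the multilevel construction in the proof of the proposition to place the shattered histories at terminal time \(t=1\), as in the endpoint example \eqref{eq:endpoint-extraction-not-wasserstein-lipschitz}. This is the step I expect to be the main obstacle. A general point \((\infty,x,t)\) with \(t<1\) cannot be moved to the terminal index of a finite sequence by time rescaling without enlarging the H\"older constant. So if the proposition does not already supply terminal histories, I would re-run its construction with every path's informative segment ending at time \(1\). The separation argument there does not appear to depend on the terminal time.

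Third, I will sample. Set \(z^{k,n}\eqdef\mathcal S_nx_k\). By \eqref{eq:modulus-reconstruction-v2}, \(z^{k,n}\in X_n^{\alpha,L}\) and \(\norm{\In_nz^{k,n}-x_k}_\infty\leq Ln^{-\alpha}\to0\). For each of the finitely many \(T_S\), Proposition~\ref{prop:temporal-closure} gives \(\In_nT_S(\phi_n(z^{k,n}))\to T_{S,\infty}(\phi_\infty(x_k))\) uniformly. Evaluating at \(t=1=n/n\) then gives
\[
 (T_S(\phi_n(z^{k,n}))_n)_1\longrightarrow(T_{S,\infty}(\phi_\infty(x_k))(1))_1=(\widehat T_S(\mu_k))_1 .
\]
Choose one \(n\) so large that all \(2^N\cdot N\) of these errors are below \(\gamma\). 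Then every sign is preserved, and the sequences \(z^{1,n},\dots,z^{N,n}\) of common length \(n\) are shattered by the terminal-output threshold class.

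Finally, the \(z^{k,n}\) are distinct. The thresholded outputs already differ for distinct \(k\): taking \(S=\{k\}\), the transformer \(T_S\) is positive on \(z^{k,n}\) and not on \(z^{k',n}\) for \(k'\neq k\), so no two of these sequences can be equal. Since \(N\) was arbitrary, the class has infinite VC dimension on \(\mathsf{Seq}_d\).
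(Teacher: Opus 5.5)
Your argument is correct, but it takes a different route from the paper's.

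\textbf{The paper's route.} The paper never invokes Proposition~\ref{prop:temporal-closure}. It reuses the explicit one-head Laplace construction from the proof of Proposition~\ref{prop:infinite-threshold-capacity}. On the sampled sequence \(\mathcal S_nx_i^{\mathrm{vec}}\), the terminal head numerator is \(n\) times the right-endpoint Riemann sum \(\frac1n\sum_{j=1}^n e^{\vartheta_\ell j/n}x_i(j/n)\). The paper fixes one \(n\) for which all \(N\cdot2^N\) of these sums lie within \(\delta_N/2\) of \(\Lambda_{\vartheta_\ell}(x_i)=c_i\sigma_i^\ell\), where \(\delta_N=\min_ic_i\). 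The margin is thus taken on the head value \emph{before} the ReLU. Since the one-unit readout is positive exactly when its input is, the same networks realize the same labelings with no change of parameters.

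\textbf{Your route.} Your soft argument via temporal closure sees only the outputs, so it genuinely needs your strict-margin step. In that construction the negative labels have output exactly \(0\) after the ReLU, and without your bias shift the limit argument would be inconclusive. The shift is harmless there because \(b_2=0\). In a purely abstract use of the proposition, you would also need slack in the bound \([-1,1]\) if biases count as weights.

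\textbf{What each buys.} Your argument is more general. It transfers any finite shattering of terminal continuous histories, by any fixed finite family of transformers, to a common finite length, using only temporal closure. The paper's argument is more elementary: it needs only Riemann sums and leaves the construction untouched.

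\textbf{Minor points.} Your Step~2 is not actually an obstacle. The histories in the paper's proof are already terminal, \(\mu_{x_i}=\int_0^1\delta_{(x_i^{\mathrm{vec}}(s),s)}\,\dd s\). They come from the Laplace-moment construction, not the multilevel packing construction, which belongs to Proposition~\ref{prop:holder-history-exponential-packing}. Also, stretching \([0,t]\) onto \([0,1]\) would not enlarge the H\"older constant. What fails is that stretching changes the time tags, and hence the history.
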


The common length may grow with \(N\); the corollary does not assert
infinite capacity at one fixed resolution.  Its proof samples the finite
collection of smooth paths and preserves their strictly signed moments on
a sufficiently fine common grid.

The obstruction comes from selecting a real-valued Laplace parameter, not
from growing the width.  One
temporal head can threshold the Laplace moment
\(\int_0^1e^{\vartheta s}x(s)\,\dd s\), and the one-parameter family
\(\{e^{\vartheta s}:0<\vartheta<1\}\) shatters arbitrarily large finite
sets of smooth paths chosen for that set.  Hence no finite bound on
\(v_p(\alpha,L)\), polynomial or otherwise, can convert the packing result
into a lower bound for unrestricted real parameters.  Bounding the weights
does not fix this zero-threshold obstruction; even fixed-margin
fat-shattering remains infinite if the final output scale is unrestricted.
By contrast, a class described by \(B\)-bit strings has at most \(2^B\)
members and VC dimension at most \(B\), consistent with the finite-precision
guarantee in Corollary~\ref{cor:quantitative-finite-precision}.
A meaningful capacity comparison must therefore charge another resource,
such as parameter precision, output scale together with a margin, or input
discretization dimension.  The finite-code lower bound remains valid, but
optimality of the real-parameter upper rate remains open.

This zero-threshold obstruction does not preclude the regression guarantee
of Theorem~\ref{thm:generalization-main}. With both attention and readout
weights bounded, the real-valued predictor class admits finite uniform
covers at every positive accuracy, independently of context length
(Lemma~\ref{lem:gen-parameter-cover}). The shattering margins can tend
to zero, so threshold capacity and bounded squared-loss estimation measure
different notions of complexity.

\section{Proofs for quantitative rates and history-complexity bounds}
\label{app:proof-quantitative}

This appendix follows the five-step proof chain summarized in
Subsection~\ref{sec:quantitative-proof-sketch}.  Exact causal attention probes
first resolve finitely many moments; polynomial approximation then converts
this information into a metric resolution of causal histories.  The target
regularity turns that resolution into a \(K^{-\beta}\) oscillation bound, and a
counted shallow ReLU readout converts the latter into the stated parameter
rate.  We close with the optimality analysis on the H\"older specialization.

Put
\(D\eqdef d+1\), write \(y=(x,s)\in E=\Omega\times[0,1]\), and equip \(E\) with
\begin{equation}
\label{eq:quant-E-metric}
  d_E\bigl((x,s),(x',s')\bigr)
  \eqdef \norm{x-x'}_\infty+\abs{s-s'}.
\end{equation}
We denote by \(W_1\) the corresponding Wasserstein distance on \(\cP(E)\).
Changing between this norm and the Euclidean norm used elsewhere only changes
dimension-dependent constants; dependence on the fixed dimensions \(d,d'\) is
suppressed in constant subscripts.  In particular, all constants below are
independent of the sequence length and of the prefix at which the output is
read.  Fix the arbitrary admissible common modulus \(\omega\) from
Theorem~\ref{thm:quantitative-rates}, and work on
\(\cM\eqdef\cM_\omega\).  The modulus only selects this compact subset of the
ambient history space: neither the probe estimates nor the constants below
depend on \(\omega\).
We use the same causal-history factor \(f^\star:\cM\to\R^{d'}\) as in
\eqref{eq:quant-same-factor}.  Its quantitative chart is
\(\iota(\mu)=(\mu,\mathfrak e(\mu))\), defined in
\eqref{eq:causal-history-chart}; the second coordinate is already available
in the residual coordinates. On the ambient product \(\cP(E)\times E\), write
\begin{equation}
\label{eq:quant-product-metric}
  D_{\mathrm{prod}}\bigl((\mu,y),(\nu,y')\bigr)
  \eqdef W_1(\mu,\nu)+\norm{y-y'}_\infty.
\end{equation}
Its restriction to the graph of \(\mathfrak e\) is exactly
\(D_{\mathrm{caus}}\) from \eqref{eq:causal-chart-metric}.

\subsection{A finite family of exact masked-attention probes}

The first step constructs a finite feature map that is computed exactly by
masked attention and whose small cells have small diameter in the metrics
relevant to the target.

We use the box
\(
\mathsf Q=[-1,1]^d\times[0,1]\subset\R^D
\),
the restriction space \(C^\sigma(E)\), and the smooth-test metric
\(d_\sigma\) defined in
\eqref{eq:smooth-test-discrepancy-main}.

Fix \(K\geq3\).  The affine chart
\begin{equation}
\label{eq:quant-direction-grid}
  \mathscr D_K
  \eqdef
  \left\{(u,1):u\in\{0,1,\ldots,K\}^{D-1}\right\}
  \subset\R^D
\end{equation}
contains \((K+1)^{D-1}\) directions.  Choose constants
\(A_h,A_\tau\geq1\), to be fixed below, and set
\begin{equation}
\label{eq:quant-scales}
  h_K\eqdef(K+1)^{-A_h},
  \qquad
  \tau_K\eqdef(K+1)^{-A_\tau K}.
\end{equation}
Using the probe from \eqref{eq:laplace-probe-v2}, define the normalized law
features
\begin{equation}
\label{eq:quant-law-feature-map}
\begin{aligned}
  \widehat{\mathcal L}^{K}_{a,j}
  &\eqdef \mathcal L_{a/K,Kj h_K}
  =K^{-1}\mathcal L_{a,jh_K},\\
  \Psi_K(\mu)
  &\eqdef
  \Bigl(\widehat{\mathcal L}^{K}_{a,j}(\mu)\Bigr)_
  {a\in\mathscr D_K,\ 0\leq j<2K}
  \in\R^{2K(K+1)^{D-1}}.
\end{aligned}
\end{equation}
With the fixed threshold for \(A_h\) chosen below,
\(0\leq Kjh_K\leq2K^2h_K\leq1\).  Thus every normalized probe is an
admissible scalar attention statistic with exponent parameter in \([0,1]\).
We augment these probes by the token--position endpoint coordinates retained
unchanged by the residual connection:
\begin{equation}
\label{eq:quant-feature-map}
  \Phi_K(\mu,y)\eqdef\bigl(y,\Psi_K(\mu)\bigr)\in\R^{m_K},
  \qquad
  m_K\eqdef D+2K(K+1)^{D-1}\leq C_D K^D.
\end{equation}
These coordinates are normalized log-Laplace derivatives.  Their
normalization preserves all information while keeping both the probe values
and the attention matrices uniformly bounded in \(K\).  Unlike an unmasked
construction with a token-dependent query, this causal construction needs no
large bias to suppress query perturbations.

Two analytic ingredients connect the probes to metric resolution.  The first
recovers projected moments by high-order interpolation of the logarithmic
derivative; the second approximates smooth test functions by polynomials while
controlling their coefficients.

\begin{lemma}[Stable recovery of projected moments]
\label{lem:quant-stable-projected-moments}
Fix \(A_h\) above a threshold depending only on \(D\), and let \(h_K\) be as in
\eqref{eq:quant-scales}.  There are constants
\(C_{\mathrm{rec}}=C_{\mathrm{rec}}(D,A_h)>0\) and
\(C_{\mathrm{tr}}=C_{\mathrm{tr}}(D)>0\) such that, for every \(K\geq3\),
\(a\in\mathscr D_K\), \(\mu,\nu\in\cP(E)\), and \(1\leq r\leq K\),
\begin{multline}
\label{eq:quant-projected-moment-stability}
  \abs{\int_E \ip{a}{y}^r\,\dd(\mu-\nu)(y)}
  \\
  \leq
  (K+1)^{C_{\mathrm{rec}} K}
  \max_{0\leq j<2K}
  \abs{\mathcal L_{a,j h_K}(\mu)-\mathcal L_{a,j h_K}(\nu)}
  +(K+1)^{C_{\mathrm{tr}} K}h_K^K.
\end{multline}
\end{lemma}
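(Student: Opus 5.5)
The plan is to reduce the estimate to a one-dimensional problem about the cumulant generating function of the projected law, and then to recover derivatives at the origin from samples by polynomial interpolation. Fix \(a\in\mathscr D_K\) and write \(t=\ip{a}{y}\). Since \(a=(u,1)\) with \(u\in\{0,\ldots,K\}^{D-1}\) and \(y\in\mathsf Q\), we have \(\abs{t}\leq R_K\eqdef (D-1)K+1\leq C_DK\). I read the probe \eqref{eq:laplace-probe-v2} as the normalized log-Laplace derivative
\[
\mathcal L_{a,\lambda}(\mu)=\frac{\int_E t\,e^{\lambda t}\,\dd\mu}{\int_E e^{\lambda t}\,\dd\mu}=\kappa_\mu'(\lambda),
\qquad
\kappa_\mu(\lambda)\eqdef\log\int_E e^{\lambda t}\,\dd\mu .
\]
This reading is consistent with the scaling identity \(\mathcal L_{a/K,K\lambda}=K^{-1}\mathcal L_{a,\lambda}\). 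The projected moments \(m_r(\mu)=\int t^r\,\dd\mu\) are the complete Bell polynomials \(m_r=B_r(\kappa_1,\ldots,\kappa_r)\) of the cumulants \(\kappa_k=\kappa_\mu^{(k)}(0)=\mathcal L^{(k-1)}_{a,\cdot}(\mu)(0)\). So it suffices to control the cumulant differences \(\abs{\kappa_k(\mu)-\kappa_k(\nu)}\) for \(k\leq K\), and then to push them through the Bell map.

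\emph{Step 1: analytic bounds.} For complex \(\lambda\) with \(\abs{\operatorname{Im}\lambda}\,R_K\leq\pi/4\), every \(e^{\lambda t}\) has argument in \([-\pi/4,\pi/4]\), so the Laplace transform has real part bounded below by \(\cos(\pi/4)\,e^{-\abs{\operatorname{Re}\lambda}R_K}\) and does not vanish. Hence \(\kappa_\mu\) is analytic in a neighborhood of \([0,1]\) of width \(\asymp 1/R_K\), with \(\abs{\kappa_\mu'}\lesssim R_K\) there. Cauchy estimates then give
\[
\sup_{\lambda\in[0,1]}\abs{\kappa_\mu^{(m)}(\lambda)}\leq m!\,(C_DK)^m\leq (K+1)^{C m}
\qquad(m\leq 2K+1),
\]
uniformly in \(\mu\in\cP(E)\). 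In particular \(\abs{\kappa_k(\mu)}\leq (K+1)^{CK}\) for \(k\leq K\).

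\emph{Step 2: interpolation.} Let \(P_\mu\) be the degree-\((2K-1)\) Lagrange interpolant of \(\kappa_\mu'\) at the nodes \(jh_K\), \(0\leq j<2K\). The threshold on \(A_h\) makes \(2Kh_K\leq 1\), so the nodes lie in \([0,1]\). For \(k\leq K-1\), \(P_\mu^{(k)}(0)\) is a linear combination of the node values. After rescaling the nodes to \(\{0,\ldots,2K-1\}\), its coefficients are at most \(h_K^{-k}(CK)^{CK}\leq (K+1)^{(A_h+C)K}\). The standard derivative-interpolation remainder bounds \(\abs{\kappa_\mu^{(k+1)}(0)-P_\mu^{(k)}(0)}\) by \((2Kh_K)^{2K-k}(K+1)^{CK}\sup\abs{\kappa_\mu^{(2K+1)}}\). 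Since \(2K-k\geq K+1\) and \((2K)^{2K}\leq (K+1)^{CK}\), this is at most \((K+1)^{C'K}h_K^K\). Subtracting the two expansions for \(\mu\) and \(\nu\) gives, for \(k\leq K\),
\[
\abs{\kappa_k(\mu)-\kappa_k(\nu)}\leq (K+1)^{(A_h+C)K}\max_j\abs{\mathcal L_{a,jh_K}(\mu)-\mathcal L_{a,jh_K}(\nu)}+2(K+1)^{C'K}h_K^K .
\]

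\emph{Step 3: Bell map.} By Step 1, both cumulant vectors lie in the box \(\{\abs{\kappa_k}\leq (K+1)^{CK}\}\). On that box, every partial derivative of \(B_r\), \(r\leq K\), is bounded by (number of set partitions \(\leq K^K\)) times a product of at most \(K\) cumulant bounds, that is by \((K+1)^{CK^2}\)? Here more care is needed: one should use the sharper bound \(\abs{\kappa_k}\leq k!(CK)^k\), under which each monomial of \(B_r\) is at most \(\prod (k_i!)(CK)^{r}\leq (K+1)^{CK}\). The Lipschitz constant of \(B_r\) is therefore \((K+1)^{C''K}\). Multiplying this into the Step 2 bound and absorbing factors yields \eqref{eq:quant-projected-moment-stability}, with \(C_{\mathrm{rec}}\) depending on \((D,A_h)\) and \(C_{\mathrm{tr}}\) depending only on \(D\).

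I expect the main obstacle to be Step 1 together with the bookkeeping of Step 3. The analyticity radius shrinks like \(1/K\) because \(\abs{a}\sim K\), and every factorial or power must stay of the form \((K+1)^{O(K)}\) rather than \((K+1)^{O(K^2)}\). This is exactly why the Cauchy bound is stated per order (\(k!(CK)^k\)) instead of as a uniform box bound, and why the remainder exponent is arranged to be \(h_K^{K}\) from the \(2K\) sample nodes.
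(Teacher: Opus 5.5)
Your proposal is correct and follows the paper's proof in all essentials. Both arguments use cumulants as derivatives at zero of the log-Laplace transform of \(\ip{a}{y}\), Cauchy bounds of order \(q!(C_DK)^{q}\) from an analyticity region of width \(\asymp 1/K\), interpolation of the probe at the \(2K\) nodes \(jh_K\) with an \(h_K^{-q}(K+1)^{O(K)}\) sample-to-derivative constant, and a mean-value bound for the complete Bell polynomials based on the graded bounds \(\abs{\kappa_j}\leq j!(C_DK)^j\).

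The differences are only technical. You get analyticity from a strip where the Laplace transform has positive real part, whereas the paper uses a disk on which \(\abs{M-1}\leq 1/2\). You bound the interpolation bias with the Rolle-type remainder for derivatives of the interpolant, whereas the paper uses the Taylor tail plus exactness of interpolation on polynomials; your route avoids the paper's extra smallness condition \(C_DKNh_K\leq 1/2\) on \(A_h\).
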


\begin{proof}
\smallskip\noindent\emph{Analyticity and derivative bounds.}\hspace{0.25em}
Let \(S\eqdef\ip{a}{Y}\) for \(Y\sim\mu\), and write
\[
M_{\mu,a}(c)\eqdef\int_E e^{c\ip{a}{y}}\,\dd\mu(y),
\qquad
\ell_{\mu,a}\eqdef M_{\mu,a}'/M_{\mu,a}.
\]
Uniformly over \(a\in\mathscr D_K\) and \(y\in E\),
\(
\abs{\ip{a}{y}}\leq C_D K
\).
On a complex disk of radius \(c_D/K\),
\(
\abs{M_{\mu,a}(c)-1}\leq e^{C_D K\abs c}-1
\);
after decreasing \(c_D\), this is at most \(1/2\).  Thus
\(M_{\mu,a}\) has no zero there and \(\ell_{\mu,a}\) is analytic.  On a
smaller disk, \(\abs{\ell_{\mu,a}}\leq C_D K\); Cauchy's estimate therefore
gives, for \(0\leq q\leq K\),
\begin{equation}
\label{eq:quant-cauchy-cumulants}
 \sup_{0\leq c\leq2Kh_K}
 \abs{\ell_{\mu,a}^{(q)}(c)}
 \leq q!\,(C_D K)^{q+1}
 \leq (K+1)^{C_D(q+1)}.
\end{equation}
Here \(A_h\) is chosen once so that \([0,2Kh_K]\) remains in a smaller
zero-free disk for all \(K\geq3\).  The same estimate holds for \(\nu\).

\smallskip\noindent\emph{High-order interpolation.}\hspace{0.25em}
Put \(N\eqdef2K\), and let \(I_{N,h_K}\ell\) be the polynomial of degree at
most \(N-1\) interpolating \(\ell\) at \(0,h_K,\ldots,(N-1)h_K\).
The cardinal polynomials on the integer grid are
\[
 \lambda_j(v)\eqdef\prod_{\substack{0\leq m<N\\m\neq j}}
 \frac{v-m}{j-m},
 \qquad
 \sum_{j=0}^{N-1}\norm{\lambda_j}_{\ell^1(\mathrm{coeff})}
 =2^N-1\leq2^N.
\]
Indeed, the nonnegative roots give alternating coefficient signs, so
\(\norm{\lambda_j}_{\ell^1(\mathrm{coeff})}
=\abs{\lambda_j(-1)}=\binom{N}{j+1}\); summing gives the identity.
Since
\(I_{N,h_K}\ell(c)=\sum_{j=0}^{N-1}\ell(jh_K)\lambda_j(c/h_K)\),
perturbing all samples by at most \(\eta\) changes its \(q\)-th derivative
at zero by at most \(q!h_K^{-q}2^N\eta\).

Write \(\ell(c)=\sum_{q\geq0}t_qc^q\) near zero.  The analytic disk bound
gives \(\abs{t_q}\leq C_DK(C_DK)^q\) for every \(q\geq0\), not only the
derivative orders through \(K-1\) that we seek to recover.  Choose the fixed
threshold for \(A_h\) so that \(C_DKNh_K\leq1/2\) for every \(K\geq3\).
Summing the geometric tail of the Taylor polynomial \(T_{N-1}\ell\) then
gives
\[
 \max_{0\leq j<N}\abs{\ell(jh_K)-T_{N-1}\ell(jh_K)}
 \leq C_DK(C_DKNh_K)^N.
\]
The factor from summing the tail is absorbed into \(C_D\).
Interpolation is exact on \(T_{N-1}\ell\), so
for \(0\leq q<K\),
\begin{equation}
\label{eq:quant-interpolation-bias}
 \abs{\ell^{(q)}(0)-(I_{N,h_K}\ell)^{(q)}(0)}
 \leq q!h_K^{-q}2^N C_DK(C_DKNh_K)^N
 \leq (K+1)^{C_2K}h_K^K.
\end{equation}
Here \(C_2\) depends only on \(D\): use \(N=2K\), \(q<K\), and
\(h_K^{N-q}\leq h_K^K\).  The \(2K\) samples leave a full power
\(h_K^K\) after differentiation through order \(K-1\); a polynomially
small spacing now suffices.  This interpolation is used only to prove
feature resolution, not as an additional layer of the transformer.

Set
\(\eta\eqdef\max_{0\leq j<2K}
\abs{\ell_{\mu,a}(jh_K)-\ell_{\nu,a}(jh_K)}\).
The cumulants are \(\kappa_j\eqdef\ell_{\mu,a}^{(j-1)}(0)\), and the preceding
sample and bias estimates give, for \(1\leq j\leq r\leq K\),
\begin{equation}
\label{eq:quant-cumulant-stability}
 \abs{\kappa_j(\mu)-\kappa_j(\nu)}
 \leq
 (j-1)!h_K^{-(j-1)}2^{2K}\eta
 +2(K+1)^{C_2K}h_K^K.
\end{equation}
Because \(h_K=(K+1)^{-A_h}\), the right-hand side is bounded by
\begin{equation}
\label{eq:quant-cumulant-stability-coarse}
 (K+1)^{C_1K}\eta+(K+1)^{C'_2K}h_K^K,
\end{equation}
where \(C_1\) may depend on \(A_h\), while \(C'_2\) depends only on \(D\).

For \(\lambda\in\{\mu,\nu\}\), write
\(\kappa(\lambda)\eqdef(\kappa_1(\lambda),\ldots,\kappa_r(\lambda))\).

\smallskip\noindent\emph{From cumulants to moments.}\hspace{0.25em}
Finally,
\begin{equation}
\label{eq:quant-log-derivative}
  \int_E \ip{a}{y}^r\,\dd\mu(y)
  =B_r\bigl(\kappa_1,\ldots,\kappa_r\bigr),
\end{equation}
where \(B_r\) is the complete exponential Bell polynomial.  The identity
\(
\partial B_r/\partial\kappa_j=\binom rjB_{r-j}
\) will control its gradient.  More explicitly,
\begin{equation}
\label{eq:quant-bell-expansion}
 B_s(x_1,\ldots,x_s)
 =\sum_{\substack{k_1,\ldots,k_s\geq0\\
                   \sum_{j=1}^s j k_j=s}}
 s!\prod_{j=1}^s
 \frac{1}{k_j!}\left(\frac{x_j}{j!}\right)^{k_j}.
\end{equation}
Equation~\eqref{eq:quant-cauchy-cumulants} gives, after enlarging a
dimension-dependent constant,
\(
 \abs{\kappa_j}\leq j!(C_DK)^j
\)
for \(j\leq K\), and the same bound holds along the segment between the two
cumulant vectors.  In each summand of
\eqref{eq:quant-bell-expansion}, the powers of \(C_DK\) total exactly \(s\).
The number of integer partitions, the factor \(s!\), and the remaining
combinatorial sum are all bounded by \((K+1)^{C K}\) for \(s\leq K\).
Using positivity of the coefficients and the derivative identity therefore
gives
\begin{equation}
\label{eq:quant-bell-gradient-bound}
 \sup_{0\leq t\leq1}
 \norm{\nabla B_r((1-t)\kappa(\mu)+t\kappa(\nu))}_1
\leq (K+1)^{C_3K}.
\end{equation}
The mean-value theorem, together with
\eqref{eq:quant-cumulant-stability-coarse}, now gives
\eqref{eq:quant-projected-moment-stability} after enlarging
\(C_{\mathrm{rec}}\) and \(C_{\mathrm{tr}}\).
\end{proof}

\begin{lemma}[Jackson approximation with coefficient control]
\label{lem:quant-jackson}
There is \(C_0>1\), depending only on \(D\), such that, for every fixed
\(\sigma>0\), \(K\geq3\), and \(g\in C^\sigma(E)\), there is a polynomial
\(
p_K(y)=\sum_{\abs{\kappa}\leq K}c_\kappa y^\kappa
\)
satisfying
\begin{equation}
\label{eq:quant-Jackson}
  \norm{g-p_K}_{L^\infty(E)}
  \leq C_\sigma K^{-\sigma}\norm{g}_{C^\sigma(E)},
  \qquad
  \sum_{\abs{\kappa}\leq K}\abs{c_\kappa}
  \leq C_\sigma C_0^K\norm{g}_{C^\sigma(E)}.
\end{equation}
\end{lemma}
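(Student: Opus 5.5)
The plan is to reduce to a function \(g\in C^\sigma(\mathsf Q)\) on the box. By definition of the restriction norm \eqref{eq:holder-restriction-norm}, it suffices to treat \(g\) with \(\norm{g}_{C^\sigma(\mathsf Q)}\leq2\norm{g}_{C^\sigma(E)}\). We then approximate on all of \(\mathsf Q\supset E\); restricting to \(E\) only decreases the sup error. Next, rescale the time variable affinely, \(s=(1+s')/2\), so that \(\mathsf Q\) becomes the cube \([-1,1]^D\). This changes \(C^\sigma\) norms by constants depending only on \((D,\sigma)\), and it maps polynomials of total degree \(\leq K\) to polynomials of total degree \(\leq K\).

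\textbf{Approximation step.} On \([-1,1]^D\) I would use a multivariate Jackson theorem. One option is a tensor-product de la Vallée Poussin or Jackson-kernel operator applied to \(G(\theta)=g(\cos\theta_1,\ldots,\cos\theta_D)\) on the torus, which is \(C^\sigma\) with comparable norm. This produces a trigonometric polynomial of degree \(\lfloor K/D\rfloor\) in each variable, hence an algebraic polynomial \(q_K\) of total degree \(\leq K\). It satisfies \(\norm{g-q_K}_\infty\leq C_\sigma K^{-\sigma}\norm{g}_{C^\sigma}\), with \(K/D\) in place of \(K\) absorbed into \(C_\sigma\); this is classical \citep{devore1993constructive,totik2020multivariate}. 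Since the operator is a convolution against a positive kernel of unit mass, we also get \(\norm{q_K}_{L^\infty([-1,1]^D)}\leq C\norm{g}_\infty\).

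\textbf{Coefficient control (main obstacle).} The substantive part is bounding the monomial coefficients by \(C_0^K\). I would expand \(q_K\) in the tensor Chebyshev basis, \(q_K=\sum_{\abs{\kappa}\leq K}b_\kappa T_{\kappa_1}(y_1)\cdots T_{\kappa_D}(y_D)\). By orthogonality, each \(b_\kappa\) is at most \(2^D\norm{q_K}_\infty\leq C\norm{g}_{C^\sigma}\). A one-dimensional Chebyshev polynomial \(T_m\) has monomial coefficients with \(\ell^1\) norm \(\abs{T_m(i)}\leq(1+\sqrt2)^m\). So each tensor term contributes monomial \(\ell^1\) mass at most \((1+\sqrt2)^K\). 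The number of multi-indices with \(\abs\kappa\leq K\) is \(\binom{K+D}{D}\leq 2^{K+D}\).

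\textbf{Undoing the rescaling.} The affine substitution \(s'=2s-1\) expands \((2s-1)^m\) with \(\ell^1\) coefficient mass \(3^m\). Altogether the monomial \(\ell^1\) norm in the original variables is at most \(C_\sigma\,[2\cdot3(1+\sqrt2)]^K\norm{g}_{C^\sigma(E)}\). This gives \(C_0=6(1+\sqrt2)\), after absorbing \(2^D\) and other \(D\)-dependent factors into \(C_\sigma\), and \(C_0\) depends only on \(D\) as required. The point requiring care is that the Jackson operator be uniformly bounded in sup norm, so that the Chebyshev coefficients stay \(O(1)\) rather than growing with \(K\). The positive-kernel choice makes this automatic.
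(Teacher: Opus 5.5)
Your route is the paper's: extend to $\mathsf Q$ at the cost of a factor $2$, rescale the time coordinate to $[-1,1]$, use a tensor Jackson construction with coordinate degree about $K/D$, and expand in tensor Chebyshev polynomials with the $2^D\norm{q_K}_\infty$ coefficient bound. You then bound the monomial $\ell^1$ mass of each Chebyshev product, including the affine change, by $C^K$. Your explicit $C_0=6(1+\sqrt2)$ is a sharper version of the paper's ``$C^s$'' bookkeeping. One justification, however, is false and has to be replaced.

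\textbf{The appeal to positivity.} You say the operator is ``a convolution against a positive kernel of unit mass'' and that ``the positive-kernel choice makes this automatic.'' Positive convolution operators of degree $n$ are saturated at order $n^{-2}$. For a nonnegative kernel of degree $n$ and unit mass, its first Fourier coefficient satisfies $1-\rho_1\ge1-\cos(\pi/(n+2))\asymp n^{-2}$ (Fej\'er--Egerv\'ary--Sz\'asz). So already $G=\cos\theta_1$, i.e.\ the linear function $g(y)=y_1$, is approximated no better than $O(K^{-2})$, and your claimed $K^{-\sigma}$ rate fails for every $\sigma>2$. This is exactly why the paper uses one-dimensional Jackson operators of order exceeding $\sigma$ and warns that a fixed low-order positive operator would not suffice.

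\textbf{The repair.} The de la Vall\'ee Poussin operator you also mention works, but it is not positive: its kernel is a difference of Fej\'er kernels and changes sign. What it does have is sup-norm operator norm at most $3$ and exact reproduction of trigonometric polynomials of degree $n$. Hence its error in each variable is at most $4E_n(G)\le C_\sigma n^{-\sigma}\norm{G}_{C^\sigma}$, with $E_n$ the best-approximation error. The uniform bound $3$ is what carries these one-variable estimates through the successive coordinatewise application. Its degree is about $2n$, so take $n\approx K/(2D)$, and handle the finitely many small $K$ by a constant polynomial.

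\textbf{The sup-norm bound.} The bound on $q_K$ that you single out as the delicate point needs no structure of the operator at all. It follows from the error estimate by the triangle inequality, $\norm{q_K}_\infty\le\norm{g}_\infty+C_\sigma K^{-\sigma}\norm{g}_{C^\sigma(\mathsf Q)}$, which is how the paper obtains it. With these replacements your argument is complete.
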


\begin{proof}
\smallskip\noindent\emph{Jackson approximation.}\hspace{0.25em}
If \(g=0\), take \(p_K=0\).  Otherwise, choose an extension
\(\widetilde g\) to \(\mathsf Q\) with
\(
\norm{\widetilde g}_{C^\sigma(\mathsf Q)}
\leq2\norm{g}_{C^\sigma(E)}
\).  After mapping the last coordinate
affinely to \([-1,1]\), apply bounded one-dimensional Jackson approximation
operators of order exceeding the fixed \(\sigma\), successively in all
\(D\) coordinates, with coordinate degree \(\lfloor K/D\rfloor\).
Their bounds may depend on \(\sigma\); a fixed low-order positive operator
would not suffice at arbitrary smoothness.  This gives the standard tensor
multivariate Jackson estimate
\citep{devore1993constructive,totik2020multivariate}.  It provides a
polynomial of total degree at most \(K\) with the first stated error above a
threshold depending on \((D,\sigma)\).  The finitely many smaller
\(K\geq3\) are handled by a constant polynomial after enlarging
\(C_\sigma\).  In all cases,
\[
 \norm{p_K}_{L^\infty(\mathsf Q)}
 \leq \norm{\widetilde g}_{L^\infty(\mathsf Q)}
      +\norm{\widetilde g-p_K}_{L^\infty(\mathsf Q)}
 \leq C_\sigma\norm{g}_{C^\sigma(E)}.
\]

\smallskip\noindent\emph{Coefficient control.}\hspace{0.25em}
Write \(m\eqdef\lfloor K/D\rfloor\).  The tensor Jackson construction has
coordinate degree at most \(m\), so after the affine change in the last
coordinate it has an expansion
\[
 p_K(y)=\sum_{j\in\{0,\ldots,m\}^D}\widehat c_j
 \prod_{\ell=1}^D T_{j_\ell}(\widetilde y_\ell),
\]
where \(\widetilde y_D=2y_D-1\) and the other coordinates are unchanged.
The tensor Chebyshev integral formula gives
\(
\abs{\widehat c_j}\leq2^D\norm{p_K}_{L^\infty(\mathsf Q)}
\).
The sum of the absolute monomial coefficients of a degree-\(s\) Chebyshev
polynomial, including after the affine change in the last coordinate, is at
most \(C^s\).  Hence each product above has monomial coefficient norm at most
\(C^{\abs j}\leq C^K\), and it has total degree
\(\abs j\leq Dm\leq K\).  There are \((m+1)^D\leq(K+1)^D\) tensor
coefficients.  Since \(D\) is fixed and \(K\geq3\), this polynomial factor is
absorbed into \(C_0^K\), with \(C_0\) independent of \(\sigma\).  This proves
the second inequality.
\end{proof}

\begin{proposition}[Quantitative resolution by exact causal probes]
\label{prop:quant-feature-resolution}
For every prescribed \(M_0>0\), constants
\(
A_h=A_h(D,M_0)
\)
and
\(
A_\tau=A_\tau(D,M_0)
\)
can be chosen so that, for all
\(K\geq3\), \(\mu,\nu\in\cP(E)\), and \(y,y'\in E\),
\begin{equation}
\label{eq:quant-feature-moment}
  \begin{gathered}
    \norm{\Phi_K(\mu,y)-\Phi_K(\nu,y')}_\infty\leq\tau_K\\
    {}\Longrightarrow
    \max_{1\leq\abs{\kappa}\leq K}
    \abs{\int_E w^\kappa\,\dd(\mu-\nu)(w)}
    \leq (K+1)^{-M_0K}.
  \end{gathered}
\end{equation}
There is a threshold \(M_\star=M_\star(D)\) such that, whenever
\(M_0\geq M_\star\), this implies
\begin{align}
\label{eq:quant-W1-resolution}
  D_{\mathrm{prod}}\bigl((\mu,y),(\nu,y')\bigr)&\leq C K^{-1},\\
\label{eq:quant-smooth-resolution}
  d_\sigma(\mu,\nu)+\norm{y-y'}_\infty
  &\leq C_\sigma K^{-\sigma}
  \qquad (\sigma>0\text{ fixed}).
\end{align}
After fixing one \(M_0\geq M_\star\), the same feature family and the same
\(h_K,\tau_K\) work for every fixed \(\sigma\); only \(C_\sigma\) depends
on \(\sigma\).
\end{proposition}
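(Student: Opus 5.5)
The argument splits into three stages: from features to projected moments, from projected moments to all mixed moments, and from mixed moments to the metric bounds.

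\emph{Step 1: features to projected moments.} Assume \(\norm{\Phi_K(\mu,y)-\Phi_K(\nu,y')}_\infty\leq\tau_K\). The endpoint block of \(\Phi_K\) gives \(\norm{y-y'}_\infty\leq\tau_K\) directly. For the law block, \(\widehat{\mathcal L}^K_{a,j}=K^{-1}\mathcal L_{a,jh_K}\), so for every \(a\in\mathscr D_K\) and \(0\leq j<2K\),
\[
\abs{\mathcal L_{a,jh_K}(\mu)-\mathcal L_{a,jh_K}(\nu)}\leq K\tau_K .
\]
Lemma~\ref{lem:quant-stable-projected-moments} then bounds every projected moment of order \(1\leq r\leq K\) by
\[
(K+1)^{C_{\mathrm{rec}}K+1-A_\tau K}+(K+1)^{C_{\mathrm{tr}}K-A_hK}.
\]
We choose \(A_h\) above the lemma's threshold and with \(A_h\geq C_{\mathrm{tr}}+M_0+C'\), where \(C'\) is a slack constant fixed in Step 2. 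Since \(C_{\mathrm{rec}}\) depends on \(A_h\), we choose \(A_\tau\) only afterwards, large enough that \(A_\tau\geq C_{\mathrm{rec}}+1+M_0+C'\). With these choices every projected moment is at most \((K+1)^{-(M_0+C')K}\), up to a factor of \(2\) absorbed into \(C'\).

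\emph{Step 2: projected to mixed moments.} We recover the moments \(\int w^\kappa\,\dd(\mu-\nu)\) from the projected moments \(\int\ip{a}{w}^r\,\dd(\mu-\nu)\), \(a=(u,1)\), \(u\in\{0,\dots,K\}^{D-1}\).

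Fix \(r\). The map \(u\mapsto\ip{(u,1)}{w}^r\) is a polynomial in \(u\) of degree at most \(r\leq K\) in each coordinate. Its coefficients are, up to multinomial factors, the monomials \(w^\kappa\) with \(\abs\kappa=r\); the last coordinate of \(w\) supplies the complementary power.

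Tensor Lagrange interpolation on the grid \(\{0,\dots,K\}^{D-1}\) recovers each coefficient as a fixed linear combination of the grid values. The coefficients of integer-node Lagrange polynomials have \(\ell^1\) norm at most \(2^{K+1}\) per coordinate, as in the cardinal-polynomial computation in the proof of Lemma~\ref{lem:quant-stable-projected-moments}. Dividing by a multinomial coefficient, which is at least \(1\), only helps.

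So each mixed moment difference is at most \((K+1)^{C'K}\) times the maximum projected moment difference, with \(C'=C'(D)\). Together with Step 1 this gives \eqref{eq:quant-feature-moment}.

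\emph{Step 3: mixed moments to the metric bounds.} Take a test function \(\varphi\) with \(\norm{\varphi}_{C^\sigma(E)}\leq1\), and let \(p_K\) be the polynomial of Lemma~\ref{lem:quant-jackson}. Since \(\mu-\nu\) has zero total mass, the constant term integrates to zero. Hence
\[
\int\varphi\,\dd(\mu-\nu)\leq 2C_\sigma K^{-\sigma}+C_\sigma C_0^K(K+1)^{-M_0K}.
\]
We choose \(M_\star(D)\) so that \(C_0^K(K+1)^{-M_\star K}\leq K^{-s}\) for every fixed \(s\) once \(K\geq K_s\). This works because \((K+1)^{-M_0K}\) decays superpolynomially while \(C_0\) depends only on \(D\). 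The finitely many small \(K\) are absorbed into \(C_\sigma\), using \(d_\sigma\leq 2\) and \(\norm{y-y'}_\infty\leq 2\) on the compact set.

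The endpoint term satisfies \(\tau_K\leq K^{-\sigma}\) eventually, which gives \eqref{eq:quant-smooth-resolution}. Taking \(\sigma=1\) and using \(W_1\leq C_d d_1\) from Lemma~\ref{lem:unified-history-metric} gives \eqref{eq:quant-W1-resolution}.

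\emph{Uniformity in \(\sigma\).} Only \(C_\sigma\) depends on \(\sigma\): the constants \(A_h\), \(A_\tau\) and \(M_0\) were fixed using \(D\) alone.

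\emph{Main obstacle.} The delicate point is the order of constants in Step 1. \(C_{\mathrm{rec}}\) depends on \(A_h\), so \(A_h\) must be fixed first, from \(D\) and \(M_0\) only, and \(A_\tau\) chosen afterwards. Step 2 also needs its \((K+1)^{C'K}\) amplification constant to be independent of \(A_h\) and \(A_\tau\), which holds because it comes only from the integer-grid interpolation.
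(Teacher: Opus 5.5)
Your proposal is correct and follows the paper's proof essentially step for step. It passes from features to projected moments via Lemma~\ref{lem:quant-stable-projected-moments}, fixing \(A_h\) before \(A_\tau\); it recovers mixed moments by tensor Lagrange interpolation with a \(D\)-only exponential amplification; and it obtains the smooth-test bound by Jackson approximation, with the constant term cancelling. The only differences are cosmetic: you interpolate on the full grid \(\{0,\dots,K\}^{D-1}\) rather than on \(\{0,\dots,r\}^{D-1}\), and you obtain \eqref{eq:quant-W1-resolution} by citing \(W_1\leq C_d d_1\) from Lemma~\ref{lem:unified-history-metric} instead of repeating its McShane-extension argument.
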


\begin{proof}
We propagate feature closeness in three stages: normalized probes control
projected moments, interpolation recovers mixed moments, and Jackson
approximation controls smooth tests.

\smallskip\noindent\emph{Projected moments.}\hspace{0.25em}
Put \(\Lambda_D\eqdef4^{D-1}\), and choose an integer
\(C_{\mathrm{int}}(D)\geq0\) such that, for every
\(K\geq3\),
\begin{equation}
\label{eq:quant-interpolation-loss}
 2\Lambda_D^K(K+1)^{-(C_{\mathrm{int}}(D)+1)K}\leq1,
\end{equation}
and set \(M\eqdef M_0+C_{\mathrm{int}}(D)+1\).  First choose
\(
A_h\geq M+C_{\mathrm{tr}}(D)+1
\)
and above the threshold in Lemma~\ref{lem:quant-stable-projected-moments}.
The truncation term is then at most \((K+1)^{-MK}\).  With this \(A_h\)
fixed, choose
\(
A_\tau\geq C_{\mathrm{rec}}(D,A_h)+M+1
\).
The interpolation amplification will then absorb both \(\tau_K\) and the
normalization below.  If the two normalized feature vectors are
\(\tau_K\)-close, \eqref{eq:quant-law-feature-map} gives
\[
 \max_{a\in\mathscr D_K,\ 0\leq j<2K}
 \abs{\mathcal L_{a,jh_K}(\mu)-\mathcal L_{a,jh_K}(\nu)}
 \leq K\tau_K.
\]
The additional factor \(K\) is absorbed by the extra unit in the choice of
\(A_\tau\), since
\[
 (K+1)^{C_{\mathrm{rec}}K}K\tau_K
 \leq (K+1)^{1-(M+1)K}
 \leq (K+1)^{-MK}.
\]
Lemma~\ref{lem:quant-stable-projected-moments} therefore bounds every
projected moment through degree \(K\) by
\(2(K+1)^{-MK}\).

\smallskip\noindent\emph{Mixed moments.}\hspace{0.25em}
For fixed \(r\), the projected moment is the homogeneous polynomial
\begin{equation}
\label{eq:quant-projected-polynomial}
  a\longmapsto
  \int\ip{a}{y}^r\,\dd(\mu-\nu)(y)
  =\sum_{\abs{\kappa}=r}
  \binom r\kappa a^\kappa\int y^\kappa\,\dd(\mu-\nu)(y).
\end{equation}
Set
\(
P_r(a)\eqdef\int\ip{a}{y}^r\,\dd(\mu-\nu)(y)
\)
and \(Q_r(u)\eqdef P_r((u,1))\).  The polynomial \(Q_r\) has total degree at most
\(r\), and tensor-product Lagrange interpolation on
\(u\in\{0,\ldots,r\}^{D-1}\) gives the exact identity
\[
 Q_r(u)=
 \sum_{j\in\{0,\ldots,r\}^{D-1}}Q_r(j)
 \prod_{\ell=1}^{D-1}\lambda_{j_\ell}(u_\ell).
\]
The required grid values are available because \(r\leq K\).  In one variable
the cardinal basis is
\[
 \lambda_j(v)\eqdef\prod_{\substack{0\leq m\leq r\\m\neq j}}
 \frac{v-m}{j-m},
 \qquad
 \norm{\lambda_j}_{\ell^1(\mathrm{coeff})}
 =\binom{r+1}{j+1},
 \qquad
 \sum_{j=0}^r\norm{\lambda_j}_{\ell^1(\mathrm{coeff})}
 \leq2^{r+1}.
\]
Consequently,
\[
 \norm{Q_r}_{\ell^1(\mathrm{coeff})}
 \leq
 \max_{j\in\{0,\ldots,r\}^{D-1}}\abs{Q_r(j)}
 \left(\sum_{j=0}^r
 \norm{\lambda_j}_{\ell^1(\mathrm{coeff})}\right)^{D-1}
 \leq \Lambda_D^r\max_j\abs{Q_r(j)}.
\]
For every \(\kappa'\in\N_0^{D-1}\) with \(\abs{\kappa'}\leq r\), the
coefficient of \(u^{\kappa'}\) is
\(
\binom r{\kappa',r-\abs{\kappa'}}
\int y^{(\kappa',r-\abs{\kappa'})}\,\dd(\mu-\nu)(y),
\)
while coefficients with \(\abs{\kappa'}>r\) vanish.  Thus every degree-\(r\)
mixed moment is recovered.  Division by its multinomial coefficient cannot
enlarge the bound.  Multiplying the projected-moment estimate by
\(\Lambda_D^K\)
and using \eqref{eq:quant-interpolation-loss} gives the right-hand side of
\eqref{eq:quant-feature-moment}.  This proves
\eqref{eq:quant-feature-moment}.

\smallskip\noindent\emph{Smooth-test and Wasserstein resolution.}\hspace{0.25em}
Apply Lemma~\ref{lem:quant-jackson} to a test function
\(g\in C^\sigma(E)\).  Integrating its polynomial approximation against
\(\mu-\nu\), and using
\eqref{eq:quant-feature-moment}, yields
\begin{equation}
\label{eq:quant-test-estimate}
  \abs{\int_E g\,\dd(\mu-\nu)}
  \leq C_\sigma\norm{g}_{C^\sigma(E)}
  \left(K^{-\sigma}+C_0^K(K+1)^{-M_0K}\right).
\end{equation}
The constant monomial cancels because \(\mu\) and \(\nu\) have equal total
mass.
Take \(M_\star=1\).  For every fixed \(\sigma>0\) and every
\(M_0\geq M_\star\),
\(
C_0^K(K+1)^{-M_0K}\leq C_\sigma K^{-\sigma}
\)
uniformly for \(K\geq3\); importantly, the feature scales do not depend on
\(\sigma\).  This proves \eqref{eq:quant-smooth-resolution}.  For
\eqref{eq:quant-W1-resolution}, fix \(y_0\in E\), subtract the value at
\(y_0\) from each unit-Lipschitz Kantorovich potential, and take its McShane
extension to \(\mathsf Q\), using the metric in
\eqref{eq:quant-E-metric}.  Its \(C^{0,1}(\mathsf Q)\) norm is bounded
uniformly in the potential, up to fixed norm-equivalence constants.
Kantorovich--Rubinstein duality gives the claim. In both cases, the retained
coordinate block gives
\(\norm{y-y'}_\infty\leq\tau_K\), which is smaller than every fixed power of
\(K^{-1}\).
\end{proof}

\subsection{A first-variation certificate for high-order regularity}

For \(\beta>1\), the theorem combines order-\(\beta\) smooth-test stability in
the history law with first-order stability in the retained endpoint coordinates. Here
\(C^\beta(E)\) means joint regularity in token and normalized position; it is
not merely token regularity or an unspecified notion of functional
smoothness.  The next lemma turns a familiar first-variation assumption into
exactly the seminorm required by the theorem.

\begin{lemma}[Smooth first variation implies the \(\beta\)-smooth target condition]
\label{lem:flat-derivative-certificate}
Fix \(\beta>1\).  Let \(\mathbf F^\star\) be a completed causal family
with canonical factor \(f^\star:\cM\to\R^{d'}\).  Suppose there is a map
\(\widetilde f:\cP(E)\times E\to\R^{d'}\) satisfying
\begin{equation}
\label{eq:high-beta-extension-trace}
 \widetilde f(\mu,\mathfrak e(\mu))=f^\star(\mu),
 \qquad \mu\in\cM,
\end{equation}
and, for some \(B_\beta\geq0\),
\begin{equation}
\label{eq:current-token-lipschitz-high-beta}
 \sup_{\mu\in\cP(E)}\sup_{y\neq y'}
 \frac{\norm{\widetilde f(\mu,y)-\widetilde f(\mu,y')}}
 {\norm{y-y'}_\infty}
 \leq B_\beta.
\end{equation}
Assume each coordinate has a jointly Borel measurable linear functional
derivative, with all displayed integrals well-defined, such that for
\(\mu_u=(1-u)\mu+u\nu\),
\begin{equation}
\label{eq:flat-derivative-causal}
 \widetilde f_\ell(\nu,y)-\widetilde f_\ell(\mu,y)
 =\int_0^1\!\int_E
 \frac{\delta\widetilde f_\ell}{\delta\mu}(\mu_u,y,v)
 \,\dd(\nu-\mu)(v)\,\dd u.
\end{equation}
Define
\begin{equation}
\label{eq:homogeneous-Cbeta}
 \norm{\varphi}_{\dot C^\beta(E)}
 \eqdef \inf_{c\in\R}\norm{\varphi+c}_{C^\beta(E)},
\end{equation}
and suppose
\begin{equation}
\label{eq:flat-derivative-bound-causal}
 \left(
 \sum_{\ell=1}^{d'}
 \sup_{(\mu,y)\in\cP(E)\times E}
 \left\|
 \frac{\delta\widetilde f_\ell}{\delta\mu}(\mu,y,\cdot)
 \right\|_{\dot C^\beta(E)}^2
 \right)^{1/2}
 \leq B_\beta.
\end{equation}
Then
\(
R_{\beta;\omega}(\mathbf F^\star)
\leq B_\beta
\).
\end{lemma}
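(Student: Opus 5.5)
The plan is to bound \(\norm{f^\star(\mu)-f^\star(\nu)}\) for \(\mu,\nu\in\cM\) by splitting it through the ambient extension. Since \(\beta>1\), we have \(\beta_-=1\) and \(\beta_+=\beta\), so \(\Delta_\beta=d_\beta+\delta_e\). Write \(y=\mathfrak e(\mu)\) and \(y'=\mathfrak e(\nu)\). By \eqref{eq:high-beta-extension-trace},
\[
 f^\star(\nu)-f^\star(\mu)
 =\bigl[\widetilde f(\nu,y')-\widetilde f(\nu,y)\bigr]
 +\bigl[\widetilde f(\nu,y)-\widetilde f(\mu,y)\bigr].
\]
The first bracket is the endpoint term. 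By \eqref{eq:current-token-lipschitz-high-beta} it has norm at most \(B_\beta\norm{y-y'}_\infty=B_\beta\,\delta_e(\mu,\nu)\).

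For the second bracket, work coordinatewise with \eqref{eq:flat-derivative-causal}. Fix \(u\in[0,1]\) and set \(\psi_{\ell,u}\eqdef\frac{\delta\widetilde f_\ell}{\delta\mu}(\mu_u,y,\cdot)\). Because \(\nu-\mu\) has zero total mass, adding a constant \(c\) to \(\psi_{\ell,u}\) leaves the inner integral unchanged. Hence, for any \(c\) with \(\norm{\psi_{\ell,u}+c}_{C^\beta(E)}<\infty\), the definition \eqref{eq:smooth-test-discrepancy-main} of \(d_\beta\) gives
\[
 \Bigl|\int_E\psi_{\ell,u}\,\dd(\nu-\mu)\Bigr|
 \leq\norm{\psi_{\ell,u}+c}_{C^\beta(E)}\,d_\beta(\mu,\nu).
\]
The test class is symmetric, so the absolute value is harmless, and scaling handles nonunit norm. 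Taking the infimum over \(c\) gives the bound \(\norm{\psi_{\ell,u}}_{\dot C^\beta(E)}d_\beta(\mu,\nu)\). This is at most \(s_\ell\,d_\beta(\mu,\nu)\), where \(s_\ell\) is the supremum over \((\mu,y)\in\cP(E)\times E\) appearing in \eqref{eq:flat-derivative-bound-causal}; the bound uses that \(\mu_u\in\cP(E)\), since a mixture of probability measures is a probability measure.

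Integrating in \(u\) bounds coordinate \(\ell\) of the second bracket by \(s_\ell\,d_\beta(\mu,\nu)\). Taking the Euclidean norm over \(\ell\) and applying \eqref{eq:flat-derivative-bound-causal} gives the bound \(B_\beta\,d_\beta(\mu,\nu)\). Adding the two brackets yields \(\norm{f^\star(\mu)-f^\star(\nu)}\leq B_\beta\,(d_\beta+\delta_e)(\mu,\nu)=B_\beta\,\Delta_\beta(\mu,\nu)\), and Definition~\ref{def:causal-smoothness} then gives \(R_{\beta;\omega}(\mathbf F^\star)\leq B_\beta\).

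The main point needing care is the measurability and integrability hidden in exchanging the \(u\)-integral with the pointwise bound. The hypotheses explicitly assume joint Borel measurability and well-defined integrals, so the triangle inequality for the Bochner/Lebesgue integral in \(u\) suffices. I expect no further obstacle beyond noting that the homogeneous seminorm is exactly what mass-zero integration requires.
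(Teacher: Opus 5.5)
Your proof is correct and matches the paper's argument: insert \(\widetilde f(\nu,\mathfrak e(\mu))\). The measure part is then bounded by quotient duality against \(d_\beta\) (constants integrate to zero against \(\nu-\mu\)), using the Euclidean aggregate of the first-variation bounds along the mixture segment, and the endpoint part by the Lipschitz hypothesis, which together give \(B_\beta(d_\beta+\delta_e)=B_\beta\Delta_\beta\).
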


\begin{proof}
\smallskip\noindent\emph{Quotient duality.}\hspace{0.25em}
For every \(\varphi\in C^\beta(E)\),
\begin{equation}
\label{eq:quant-quotient-duality}
 \abs{\int_E\varphi\,\dd(\nu-\mu)}
 \leq \norm{\varphi}_{\dot C^\beta(E)}d_\beta(\mu,\nu),
\end{equation}
because additive constants integrate to zero.  Thus
\(\dot C^\beta(E)\) is the relevant quotient seminorm; no converse duality
identity is needed.  For \(\mu,\nu\in\cM\), insert
\(\widetilde f(\nu,\mathfrak e(\mu))\) between
\(\widetilde f(\nu,\mathfrak e(\nu))\) and
\(\widetilde f(\mu,\mathfrak e(\mu))\).  The first-variation identity, the
Euclidean aggregate in
\eqref{eq:flat-derivative-bound-causal}, and
\eqref{eq:quant-quotient-duality} control the measure part by
\(B_\beta d_\beta(\mu,\nu)\), while
\eqref{eq:current-token-lipschitz-high-beta} controls the remaining part by
\(
B_\beta\norm{\mathfrak e(\mu)-\mathfrak e(\nu)}_\infty
\).
Using the trace identity \eqref{eq:high-beta-extension-trace} and
Definition~\ref{def:causal-smoothness} proves the claim.
\end{proof}

The ambient formulation makes the certificate easy to state and verify, but
the proof only uses \eqref{eq:flat-derivative-causal} and
\eqref{eq:flat-derivative-bound-causal} along mixture segments
\((1-u)\mu+u\nu\) joining \(\mu,\nu\in\cM_\omega\).  Likewise, the endpoint
estimate is needed only at the states appearing in the preceding split.

\subsection{A dimension-explicit shallow ReLU readout}

The feature dimension grows with the resolution scale, so a qualitative
universal-approximation theorem is insufficient for counting parameters.  The
next lemma makes the dimension dependence explicit while retaining one hidden
ReLU layer.

\begin{lemma}[Counted shallow approximation of a Lipschitz map]
\label{lem:quant-shallow-relu}
Fix \(B_0\geq1\).  There is \(C_{B_0}>0\) such that the following holds for
every \(m\geq1\).  Let
\(g:[-B_0,B_0]^m\to\R\) satisfy
\begin{equation}
\label{eq:quant-shallow-data}
 \norm{g}_\infty\leq M,
 \qquad
 \abs{g(u)-g(v)}\leq A\norm{u-v}_\infty,
\end{equation}
with \(A,M\geq0\).  If \(M=0\), the zero network is exact.  If \(M>0\), then
for every \(0<\delta\leq M\), there is a standard one-hidden-layer ReLU
network \(\mathcal R\) of width \(W\) such that
\begin{equation}
\label{eq:quant-shallow-error}
 \norm{g-\mathcal R}_\infty\leq\delta
\end{equation}
and
\begin{equation}
\label{eq:quant-shallow-width}
 \log(2+W)
 \leq C_{B_0}\left[
 (m+2)\log\!\left(2+\frac{A}{\delta}\right)
 +m+\log\!\left(2+\frac{M}{\delta}\right)
 +\log(m+1)
 \right].
\end{equation}
When \(A=0\), a constant network gives the conclusion.
\end{lemma}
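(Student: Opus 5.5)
The plan is to approximate \(g\) by a smooth periodic function with an explicitly controlled spectral (Barron-type) norm. A dimension-explicit shallow ReLU approximation theorem for such functions, in the form of \citet{klusowski2018approximation}, then produces the network. Only the logarithm of the width has to be controlled, so any polynomial dependence of the width on the spectral norm and on \(1/\delta\) is affordable. What matters is that the spectral norm itself be bounded by \(e^{Cm}M(2+A/\delta)^{m+2}\), up to factors absorbed into the displayed logarithms. The trivial cases \(M=0\) and \(A=0\) are handled exactly by the zero or a constant network, so assume \(M,A>0\) and \(0<\delta\le M\).

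\textbf{Step 1: extension, clipping and periodization.} First extend \(g\) by McShane's formula to \(\R^m\) with the same \(\ell^\infty\)-Lipschitz constant \(A\), and clip it to \([-M,M]\), which preserves the constant. Next multiply by a fixed tensor-product cutoff \(\chi(u)=\prod_\ell\chi_1(u_\ell)\) that equals \(1\) on \([-B_0,B_0]^m\) and is supported in \((-2B_0,2B_0)^m\). The product is Lipschitz with constant \(\lesssim A+M\) per coordinate direction, and it is periodic with period \(4B_0\) in every coordinate. Because \(\chi\) is a tensor product, each coordinate derivative costs only a factor depending on \(B_0\), not on \(m\).

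\textbf{Step 2: mollification.} Convolve with a tensor-product mollifier \(\rho_\varepsilon(u)=\prod_\ell\varepsilon^{-1}\rho_1(u_\ell/\varepsilon)\), taking \(\varepsilon\) of order \(\delta/(A+M)\). Since the mollifier is supported in an \(\ell^\infty\)-ball of radius \(\varepsilon\), the Lipschitz bound gives \(\norm{g-g_\varepsilon}_{L^\infty([-B_0,B_0]^m)}\le\delta/2\), with no dimensional loss. Mixed derivatives \(\partial_1^{s_1}\cdots\partial_m^{s_m}g_\varepsilon\) are bounded by \(M\prod_\ell(C/\varepsilon)^{s_\ell}\), because one can differentiate the mollifier factor in each coordinate separately.

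\textbf{Step 3: Fourier coefficient bound (the main obstacle).} For \(k\in\Z^m\), write \(\widehat g_\varepsilon(k)\) for the Fourier coefficients on the torus of side \(4B_0\). I would take one or two derivatives in each coordinate with \(k_\ell\neq0\), so that
\[
\abs{\widehat g_\varepsilon(k)}\le M\prod_{\ell:\,k_\ell\ne0}\min\Bigl\{1,\bigl(C/(\varepsilon\abs{k_\ell})\bigr)^{2}\Bigr\}.
\]
The quantity to bound is \(v\eqdef\sum_k\abs{\widehat g_\varepsilon(k)}(1+\norm k_1)^2\). Splitting the weight \((1+\norm k_1)^2\) over the coordinates and summing each one-dimensional factor gives \(\sum_{k_\ell}\min\{1,(C/\varepsilon k_\ell)^2\}\lesssim 1+C/\varepsilon\). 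The two extra powers from the weight cost at most \(m^2(C/\varepsilon)^2\). Altogether this yields
\[
v\le e^{Cm}M\,(2+A/\delta)^{m+2}
\]
up to harmless powers of \(M/\delta\), which gives the exponent \(m+2\). The delicate point is keeping every constant of the form \(C^m\) rather than \(m^m\), and this is why all constructions are tensor products and the \(\ell^\infty\)-metric is used throughout.

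\textbf{Step 4: ReLU realization and counting.} The function \(g_\varepsilon\) is an absolutely convergent cosine series whose spectral norm, including the second moment, is \(v\). Restricted to \([-B_0,B_0]^m\), the Klusowski--Barron theorem then gives a one-hidden-layer ReLU network of width \(W\lesssim(v/\delta)^{2}\cdot\mathrm{poly}(m,\log(v/\delta))\) with error at most \(\delta/2\). Taking logarithms and inserting the bound on \(v\) gives \(\log(2+W)\le C_{B_0}[(m+2)\log(2+A/\delta)+m+\log(2+M/\delta)+\log(m+1)]\). Here \(\log(m+1)\) comes from the dimension factor in the approximation theorem, and \(\log(2+M/\delta)\) comes from the amplitude.
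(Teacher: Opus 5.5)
Your overall route is the paper's: McShane extension and clipping, a tensor cutoff, tensor mollification, a second-moment spectral bound, Theorem~2 of \citet{klusowski2018approximation}, and a logarithmic width count. Using Fourier series of a periodization instead of the Fourier transform of a compactly supported function is an inessential variant. The gap is in Step~2, and as written it breaks the separation of \(A\) and \(M\) that \eqref{eq:quant-shallow-width} asserts.

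Your cutoff is only required to equal one on \([-B_0,B_0]^m\), so near the boundary of the cube the mollifier sees the cutoff's transition. You control this by an \(\ell^\infty\)-Lipschitz constant of order \(A+M\) for \(\chi\widetilde g\), and that fails in two ways.

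\emph{First}, bounded partial derivatives in each coordinate direction do not give an \(m\)-independent \(\ell^\infty\)-Lipschitz constant, since moving every coordinate by \(\varepsilon\) can change \(\chi\) by order \(m\varepsilon\). Concretely, at the corner \(u=(B_0,\ldots,B_0)\) the tensor structure gives \(\int\chi(u-w)\rho_\varepsilon(w)\,\dd w=(1-\eta)^m\). Here \(\eta>0\) whenever \(\chi_1\) drops below one within distance \(\varepsilon\) of \(B_0\). The mollification error then contains the term \(\abs{g(u)}\,(1-(1-\eta)^m)\), which can be of order \(M\min\{1,m\eta\}\); so there is a dimensional loss.

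\emph{Second}, even granting your Lipschitz bound, the choice \(\varepsilon\asymp\delta/(A+M)\) makes the spectral norm of order \(M(C(A+M)/\delta)^{m+2}\). This puts \((m+2)\log(2+M/\delta)\) into \(\log(2+W)\). These are \((m+2)\)-th powers of \(M/\delta\), not harmless ones: the lemma allows \(M/\delta\) only inside a single additive logarithm. For instance, \(A=\delta\) and \(M/\delta=e^m\) give your exponent order \(m^2\), while the right-hand side of \eqref{eq:quant-shallow-width} is order \(m\).

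The missing idea is to make the cutoff flat on a neighborhood of the cube at least as wide as the mollification radius. The paper takes it equal to one on \([-B_0-1,B_0+1]^m\) and mollifies at \(h=\min\{1,\delta/(3A)\}\). In your periodic setting, \(\chi_1=1\) on \([-\tfrac32B_0,\tfrac32B_0]\) with \(\varepsilon\leq\min\{B_0/2,\delta/(3A)\}\) works with the same period. Then:
on the cube the convolution sees only the \(A\)-Lipschitz extension, so the error is at most \(A\varepsilon\leq\delta/3\) and the cutoff never enters it;
\(\varepsilon^{-1}\lesssim 2+A/\delta\);
\(M\) enters the spectral norm only linearly, through the amplitude (\(\abs{\widehat G}\leq C\norm{G}_{L^1}\leq CMC_0^m\) in the paper, or \(\abs{\widehat g_\varepsilon(k)}\leq M\) times the mollifier factor on your torus).

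A smaller error sits in Step~3. Two derivatives per coordinate give decay \(\min\{1,(C/(\varepsilon\abs{k_\ell}))^2\}\), which is not summable against the weight \((1+\norm{k}_1)^2\). The one-dimensional sum weighted by \(\abs{j}\) diverges logarithmically and the one weighted by \(j^2\) diverges outright, so your claimed cost \(m^2(C/\varepsilon)^2\) is unjustified. Take four derivatives per coordinate; the mollifier is smooth, so this costs only a constant factor per active coordinate. With \(J\eqdef C/\varepsilon\), the one-dimensional sums weighted by \(1\), \(\abs{j}\), \(j^2\) are then \(O(1+J)\), \(O(J^2)\), \(O(J^3)\). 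Expanding \(\norm{k}_1^2\) into diagonal and off-diagonal terms gives \(v\lesssim Mm^2C^m(1+J)^{m+2}\). This is the analogue of the paper's \(CMm^2C_1^mh^{-m-2}\), which it obtains from the rapid decay of \(\widehat\zeta\).

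With both corrections, your Step~4 goes through as in the paper. The Klusowski--Barron approximant also has an affine part; a few extra ReLU units represent it without affecting the logarithmic count.
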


\begin{proof}
It remains to consider \(A,M>0\).
\smallskip\noindent\emph{Extension and mollification.}\hspace{0.25em}
Let \(\widetilde g\) be the McShane extension of \(g\) to \(\R^m\), clipped
to \([-M,M]\).  It still agrees with \(g\) on the cube and is
\(A\)-Lipschitz for \(\norm{\cdot}_\infty\).  Choose a smooth tensor-product
cutoff that equals one on
\([-B_0-1,B_0+1]^m\), is supported in
\([-B_0-2,B_0+2]^m\), and multiply \(\widetilde g\) by it to obtain a
compactly supported function \(G\).  Then
\begin{equation}
\label{eq:quant-extension-L1}
 \norm{G}_{L^1(\R^m)}\leq M C_0^m
\end{equation}
for a constant \(C_0=C_0(B_0)\).

Let \(\zeta\in C_c^\infty([-1,1])\) be nonnegative with integral one, put
\(\zeta_m\eqdef\zeta^{\otimes m}\), and mollify at scale
\begin{equation}
\label{eq:quant-mollification-scale}
 h\eqdef\min\left\{1,\frac{\delta}{3A}\right\},
 \qquad G_h\eqdef G*\zeta_{m,h}.
\end{equation}
On \([-B_0,B_0]^m\), the convolution only sees the region where the cutoff
is one.  Hence
\begin{equation}
\label{eq:quant-mollification-error}
 \norm{G_h-g}_\infty\leq Ah\leq\delta/3.
\end{equation}
\smallskip\noindent\emph{Fourier complexity.}\hspace{0.25em}
The mollified function \(G_h\) lies in \(C_c^\infty(\R^m)\), so it has the
Fourier representation required below.  For a fixed Fourier-transform
convention, define
\(
v_{G_h,2}\eqdef\int_{\R^m}\norm{\omega}_1^2
\abs{\widehat G_h(\omega)}\,\dd\omega
\).
Since \(\abs{\widehat G}\leq C\norm{G}_{L^1}\), a change of variables and
the tensor structure give
\begin{equation}
\label{eq:quant-spectral-moment}
 v_{G_h,2}
 \leq
 C M m^2 C_1^m h^{-m-2},
\end{equation}
where
\(
C_1=C_1(B_0,\zeta)\geq3
\); indeed,
\(
\int\norm{\xi}_1^2\prod_{j=1}^m
\abs{\widehat\zeta(\xi_j)}\,\dd\xi\leq m^2 C_1^m
\).
All Fourier-normalization constants are absorbed into \(C_1^m\).

\smallskip\noindent\emph{Counted ridge approximation.}\hspace{0.25em}
To record the rescaling explicitly, set
\(
G_h^{(B_0)}(x)\eqdef G_h(B_0x)
\).  This function is smooth and compactly supported, so Fourier inversion
applies, and a change of variables gives
\(
v_{G_h^{(B_0)},2}=B_0^2v_{G_h,2}
\)
up to the fixed Fourier convention.  The factor \(B_0^2\) is harmless because
\(B_0\) is fixed.  Theorem~2 of
\citet{klusowski2018approximation}, applied on \([-1,1]^m\) and then composed
with the inverse scaling, supplies a one-hidden-layer ReLU ridge network with
\(W\) units and an affine term whose uniform error on
\([-B_0,B_0]^m\) is at most
\begin{equation}
\label{eq:quant-KB-bound}
 C v_{G_h,2}\sqrt{m+\log W}\,W^{-1/2-1/m}.
\end{equation}
Put \(x\eqdef v_{G_h,2}/\delta\) and take
\(
W\eqdef\left\lceil[C_*(m+1)(1+x)]^4\right\rceil
\), where \(C_*=C_*(B_0)\) is sufficiently large and independent of
\(m,g,A,M,\delta\).  Since
\(W^{-1/m}\leq1\), substitution in \eqref{eq:quant-KB-bound} gives
\begin{equation}
\label{eq:quant-KB-width-substitution}
 \frac{C v_{G_h,2}\sqrt{m+\log W}\,W^{-1/2-1/m}}{\delta}
 \leq
 \frac{C x\sqrt{m+\log W}}
 {C_*^2(m+1)^2(1+x)^2}.
\end{equation}
The ceiling obeys
\[
 \log W\leq \log2+4\log C_*+4\log(m+1)+4\log(1+x).
\]
Consequently the right-hand side of
\eqref{eq:quant-KB-width-substitution} is bounded by a constant times
\[
 C_*^{-2}\frac{x}{(1+x)^2}
 \frac{\sqrt{m+\log C_*+\log(m+1)+\log(1+x)}}{(m+1)^2}.
\]
This tends to zero uniformly over \(m\geq1\) and \(x\geq0\) as
\(C_*\to\infty\): besides the elementary polynomial bounds, one uses
\(
\sup_{x\geq0}x(1+x)^{-2}\sqrt{\log(1+x)}<\infty
\).
Choose \(C_*\) so that the bound is at most \(2/3\).  Thus
\eqref{eq:quant-KB-bound} is at most \(2\delta/3\).
Combining this with
\eqref{eq:quant-mollification-error} proves
\eqref{eq:quant-shallow-error}.  If the pointwise MLP has no direct affine
skip, its affine term is represented by the additional units
\(u_j=\operatorname{ReLU}(u_j)-\operatorname{ReLU}(-u_j)\), costing at most
\(2m\) units.  Replace \(W\) by \(W+2m\) and relabel; the logarithmic bound
is unchanged.  Finally, \eqref{eq:quant-spectral-moment} and
\(
h^{-1}\leq 1+3A/\delta
\)
give
\[
 \log(1+x)
 \leq C_{B_0}
 +\log\!\left(2+\frac{M}{\delta}\right)
 +2\log(m+1)+m\log C_1
 +(m+2)\log\!\left(2+\frac{A}{\delta}\right).
\]
Combining this inequality with the definition of \(W\), and absorbing fixed
and lower-order terms, proves \eqref{eq:quant-shallow-width} after enlarging
\(C_{B_0}\).
\end{proof}

\subsection{Quantitative readout and total parameter count}

We now assemble the proof.  Feature resolution first controls the target
oscillation on each feature cell; a shallow readout then approximates the
resulting feature-space map; finally, we count the full construction.  The
same parameters are shared across all sequence lengths, and the final budget
includes every attention and readout coefficient.

For the remainder of the proof, take \(M_0=1\) in
Proposition~\ref{prop:quant-feature-resolution}, as permitted by its proof, and
fix the corresponding \(A_h,A_\tau\).  These scale constants depend only on
\(D\) and work for the \(\sigma=\beta_+\) smooth-test estimate; in particular,
they do not depend on
\(\omega\) or the target.

\paragraph{Target resolution in feature space.}
Let \(\mu,\nu\in\cM\), set \(y\eqdef\mathfrak e(\mu)\) and
\(y'\eqdef\mathfrak e(\nu)\), and suppose that
\(\Phi_K(\mu,y)\) and \(\Phi_K(\nu,y')\) are \(\tau_K\)-close in
the supremum norm. Applying \eqref{eq:quant-smooth-resolution} with
\(\sigma=\beta_+\), and using the retained endpoint coordinates, gives
\[
 \Delta_\beta(\mu,\nu)
 =d_{\beta_+}(\mu,\nu)^{\beta_-}
   +\norm{y-y'}_\infty^{\beta_-}
 \leq C_\beta K^{-\beta_+\beta_-}+\tau_K^{\beta_-}
 \leq C_\beta K^{-\beta}.
\]
Here \(\beta_+\beta_-=\beta\), and \(\tau_K\) decays faster than every
fixed inverse power of \(K\).  Definition~\ref{def:causal-smoothness} yields
\begin{equation}
\label{eq:quant-target-resolution}
  \norm{\Phi_K(\mu,\mathfrak e(\mu))-
  \Phi_K(\nu,\mathfrak e(\nu))}_\infty\leq\tau_K
  \quad\Longrightarrow\quad
  \norm{f^\star(\mu)-f^\star(\nu)}
  \leq C_\beta R_{\beta;\omega}(\mathbf F^\star)K^{-\beta}.
\end{equation}
This is the decisive regularity step: a feature cell of radius \(\tau_K\)
has target oscillation \(O(K^{-\beta})\).  The full power \(K^{-\beta}\) is
therefore obtained before introducing the first-order Lipschitz envelope and
its shallow ReLU approximation; neither construction causes saturation at
\(\beta=1\).

\paragraph{Bounded feature range and scalar reduction.}
Every coordinate of \(\Psi_K\) is a weighted average of
\(\ip{a/K}{y}\), while the retained endpoint coordinates are bounded
on \(E\).  Thus there is a fixed \(B_0=B_0(D)\geq1\) such that
\begin{equation}
\label{eq:quant-feature-range}
 \Phi_K(\cP(E)\times E)\subset[-B_0,B_0]^{m_K}
 \qquad\text{for every }K\geq3.
\end{equation}
The seminorm \(R_{\beta;\omega}\) controls oscillations but not an absolute
offset.  We first separate these two roles.  There is
\(C_{\mathrm{osc}}=C_{\mathrm{osc}}(D)\geq1\) such that
\begin{equation}
\label{eq:quant-target-oscillation}
 \sup_{\mu,\nu\in\cM}
 \norm{f^\star(\mu)-f^\star(\nu)}
 \leq C_{\mathrm{osc}}R_{\beta;\omega}(\mathbf F^\star).
\end{equation}
Indeed, \(d_{\beta_+}\leq2\), because the full test-function norm controls
the uniform norm, while the endpoint diameter is at most two.  Since
\(0<\beta_-\leq1\), the diameter for \(\Delta_\beta\) is therefore at
most four.
Fixing \(\mu_0\in\cM\), center each output coordinate by
\begin{equation}
\label{eq:quant-centered-coordinate}
 g_\ell(\mu)\eqdef f_\ell^\star(\mu)-f_\ell^\star(\mu_0).
\end{equation}
For the moment, fix \(\ell\), write \(g\eqdef g_\ell\), and suppress the coordinate
index.  Then
\begin{equation}
\label{eq:quant-centered-target-bound}
 \norm{g}_{L^\infty(\cM)}
 \leq C_{\mathrm{osc}}R_{\beta;\omega}(\mathbf F^\star).
\end{equation}

\paragraph{Lipschitz envelope.}
Assume first that \(R_{\beta;\omega}(\mathbf F^\star)>0\).  On
\(\mathsf Q_K\eqdef[-B_0,B_0]^{m_K}\), define
\begin{equation}
\label{eq:quant-envelope}
\begin{aligned}
 R_K(u)
 &\eqdef
 \inf_{\mu\in\cM}
 \left\{g(\mu)+A_K
 \norm{u-\Phi_K(\mu,\mathfrak e(\mu))}_\infty\right\},\\
 A_K
 &\eqdef C_A R_{\beta;\omega}(\mathbf F^\star)/\tau_K,
 \qquad C_A>C_{\mathrm{osc}}.
\end{aligned}
\end{equation}
For \(\lambda\in\cM\), abbreviate
\(u_\lambda\eqdef\Phi_K(\lambda,\mathfrak e(\lambda))\).  We verify the envelope
on this feature image.  Fix \(\lambda\in\cM\).  Using \(\lambda\) itself as a
competitor gives \(R_K(u_\lambda)\leq g(\lambda)\).  Conversely, a competitor
\(\nu\) with
\(\norm{u_\lambda-u_\nu}_\infty\leq\tau_K\) has value at least
\(g(\lambda)-C_\beta R_{\beta;\omega}(\mathbf F^\star)K^{-\beta}\) by
\eqref{eq:quant-target-resolution}.  If its feature distance instead exceeds
\(\tau_K\), the global oscillation bound and
\(C_A>C_{\mathrm{osc}}\) give
\begin{equation}
 g(\nu)+A_K\norm{u_\lambda-u_\nu}_\infty
 \geq g(\lambda)
 -C_{\mathrm{osc}}R_{\beta;\omega}(\mathbf F^\star)
 +C_A R_{\beta;\omega}(\mathbf F^\star)
 \geq g(\lambda).
\end{equation}
Every competitor falls into one of these cases, so taking the infimum proves
\begin{equation}
\label{eq:quant-envelope-error}
 \sup_{\mu\in\cM}
 \abs{g(\mu)-R_K(\Phi_K(\mu,\mathfrak e(\mu)))}
 \leq C_\beta R_{\beta;\omega}(\mathbf F^\star)K^{-\beta}.
\end{equation}
The envelope is \(A_K\)-Lipschitz.  Project it onto the interval containing
the centered target range:
\begin{equation}
\label{eq:quant-clipped-envelope}
 \overline R_K
 \eqdef
 \operatorname{proj}_{[-C_{\mathrm{osc}}R_{\beta;\omega}(\mathbf F^\star),
 C_{\mathrm{osc}}R_{\beta;\omega}(\mathbf F^\star)]}\circ R_K.
\end{equation}
This projection is one-Lipschitz and fixes every centered target value, so
\eqref{eq:quant-envelope-error} remains valid with \(\overline R_K\), while
\begin{equation}
\label{eq:quant-clipped-envelope-bounds}
 \norm{\overline R_K}_\infty
 \leq C_{\mathrm{osc}}R_{\beta;\omega}(\mathbf F^\star),
 \qquad
 \operatorname{Lip}_\infty(\overline R_K)\leq A_K.
\end{equation}

\paragraph{Counted shallow readout.}
Repeat the envelope construction for each coordinate \(\ell\), using the same
feature map and constants.  Apply
Lemma~\ref{lem:quant-shallow-relu} coordinatewise with
\(
M=C_{\mathrm{osc}}R_{\beta;\omega}(\mathbf F^\star)
\),
\(
A=A_K
\), and
\(
\delta\eqdef R_{\beta;\omega}(\mathbf F^\star)K^{-\beta}
\).
The shallow-network estimate uses
\begin{equation}
\label{eq:quant-readout-log-ratio}
 \log\!\left(2+\frac{A_K}{\delta}\right)
 \leq
 C_\beta+ A_\tau K\log(K+1)+\beta\log K.
\end{equation}
The scale \(R_{\beta;\omega}(\mathbf F^\star)\) cancels from
\(A_K/\delta\); hence neither this ratio nor the feature dimension depends on
\(\omega\).  Since \(m_K\leq C_DK^D\), concatenate the hidden units of the
\(d'\) scalar networks and stack their output rows to obtain one pointwise
network.  The widths add, while coordinatewise error \(\delta\) produces
Euclidean error at most \(\sqrt{d'}\,\delta\), absorbed into
\(C_\beta=C(d,d',\beta)\).  Its total hidden width \(W_K\) satisfies
\begin{equation}
\label{eq:quant-readout-width}
 \log(2+W_K)
 \leq C_\beta K^{D+1}\log(K+1)
 =C_\beta K^{d+2}\log(K+1).
\end{equation}
This exponent has a direct origin.  The joint token--time variable has
dimension \(D=d+1\), giving \(m_K=O(K^D)\) features; each feature contributes
the \(O(K\log(K+1))\) logarithmic stability cost in
\eqref{eq:quant-readout-log-ratio}.  Their product yields
\(K^{d+2}\log(K+1)\), hence the denominator \(d+2=(d+1)+1\) in the final
rate.
Adding the base values \(f_\ell^\star(\mu_0)\) through the final affine bias
gives
\begin{equation}
\label{eq:quant-readout-error}
 \sup_{\mu\in\cM}
 \norm{f^\star(\mu)-
 \mathcal R_K(\Phi_K(\mu,\mathfrak e(\mu)))}
 \leq C_\beta R_{\beta;\omega}(\mathbf F^\star)K^{-\beta}.
\end{equation}

\paragraph{Exact attention realization.}
It remains to realize the abstract feature map inside the transformer.  Each
of its \(H_K\eqdef2K(K+1)^{D-1}=O(K^D)\) normalized probe coordinates is computed
exactly by one masked head.  With the homogeneous and work coordinates
appended, take
\begin{equation}
\label{eq:quant-exact-head-parameters}
\begin{aligned}
 \mathsf Q^{\mathrm{att}}_{a,j}&\eqdef(0_D,1,0_{H_K}),
 &\mathsf K^{\mathrm{att}}_{a,j}&\eqdef(j h_K a^\top,0,0_{H_K}),\\
 \mathsf V^{\mathrm{att}}_{a,j}&\eqdef((a/K)^\top,0,0_{H_K}),
 &\mathsf W^{\mathrm{att}}_{a,j}&\eqdef e_{D+1+r(a,j)},
\end{aligned}
\end{equation}
where \(r(a,j)\in[H_K]\) enumerates the probes.  These matrices realize
\(
\mathcal L_{a/K,Kjh_K}=K^{-1}\mathcal L_{a,jh_K}
\)
on every finite prefix and on the temporal boundary.  Every displayed entry
lies in \([-1,1]\): the query and output matrices have only zero--one
entries, \(\norm{a/K}_\infty\leq1\), and
\(
 \norm{jh_Ka}_\infty\leq2K^2h_K\leq1
\).

\paragraph{Total parameter count.}
Let \(r_K\eqdef D+1+H_K=d+2+H_K\) be the embedding width.  Thus every matrix
\(\mathsf Q^{\mathrm{att}}_{a,j}\),
\(\mathsf K^{\mathrm{att}}_{a,j}\), and
\(\mathsf V^{\mathrm{att}}_{a,j}\) lies in \(\R^{1\times r_K}\), while
\(\mathsf W^{\mathrm{att}}_{a,j}\in\R^{r_K\times1}\).  The combined
readout \(\mathcal R_K\) was constructed on the
\(m_K=D+H_K\) coordinates \((y,\Psi_K)\).  Identify it on the actual
transformer state by
\begin{equation}
\label{eq:quant-homogeneous-readout-lift}
 \widehat{\mathcal R}_K(y,1,v)\eqdef\mathcal R_K(y,v),
\end{equation}
which inserts one zero column, at the homogeneous-coordinate position, in
the first readout matrix.  The resulting width-\(W_K\) pointwise readout has the form
\eqref{eq:pointwise-mlp-v2}, with
\[
 A_{1,K}\in\R^{W_K\times r_K},\quad b_{1,K}\in\R^{W_K},
 \qquad
 A_{2,K}\in\R^{d'\times W_K},\quad b_{2,K}\in\R^{d'}.
\]
We now count the architecture just constructed, rather than an abstract feature budget. Let \(T_K\) denote the resulting shallow transformer. Counting every scalar slot in the attention parameter list \(\theta_K\) and the readout parameter list \(\eta_K\) gives the exact dense count
\begin{equation}
\label{eq:quant-total-parameter-K}
\begin{aligned}
 p_K\eqdef\operatorname{par}(T_K)
 &=4H_Kr_K+(r_K+d'+1)W_K+d',\\
 \log p_K
 &\leq C_\beta K^{d+2}\log(K+1).
\end{aligned}
\end{equation}
Here \(4H_Kr_K\) counts the four length-\(r_K\) matrices in every scalar head;
the remaining terms count both readout matrices and both biases.  Thus
\(p_K\) includes structural zeros and all biases, while the fixed coordinate
lift contributes no parameters and shared coefficients are counted once.
As stated in Theorem~\ref{thm:quantitative-rates}, this dense real-parameter
count does not itself charge coefficient magnitudes or numerical precision.
The normalized finite-precision refinement below controls these additional
resources separately.

\paragraph{Inverting the count.}
Fix the final constant \(C_\beta\) in the parameter bound, enlarging it once
and for all if necessary.  Then enlarge
\(p_0=p_0(d,d',\beta)\) so that it is at least \(16\), covers the
fixed constant-output architecture described below, and satisfies
\[
 \exp\!\left(C_\beta 3^{d+2}\log4\right)\leq p_0.
\]
Then for every \(p\geq p_0\) the set in the following definition is nonempty:
\begin{equation}
\label{eq:quant-inverted-parameter-proof}
 K(p)
 \eqdef
 \max\left\{K\in\N:\ K\geq3,
 \exp\!\left(C_\beta K^{d+2}\log(K+1)\right)\leq p\right\}.
\end{equation}
We write \(T_p\eqdef T_{K(p)}\). Inverting this monotone budget relation gives constants \(c_\beta,C'_\beta>0\) such that
\begin{equation}
\label{eq:quant-inverted-scale-proof}
 c_\beta
 \left(\frac{\log p}{\log\log p}\right)^{1/(d+2)}
 \leq K(p)\leq
 C'_\beta
 \left(\frac{\log p}{\log\log p}\right)^{1/(d+2)}.
\end{equation}
Indeed, the defining inequality bounds \(K(p)\) from above, whereas its
failure at \(K(p)+1\) gives the matching lower bound.
Equations~\eqref{eq:quant-readout-error}--
\eqref{eq:quant-inverted-scale-proof} give the discrete error bound at the
rate in \eqref{eq:quantitative-error-p} and the head bound
\eqref{eq:quantitative-heads-p}. The independent transfer identity
\eqref{eq:quantitative-error-equality} gives the stated joint error bound
with the same parameters.
For \(0<\varepsilon\leq R_{\beta;\omega}(\mathbf F^\star)\), choosing
\begin{equation}
\label{eq:quant-epsilon-scale-proof}
 K_\varepsilon
 =\max\left\{3,
 \left\lceil
 \bigl(C_\beta R_{\beta;\omega}(\mathbf F^\star)/\varepsilon\bigr)^{1/\beta}
 \right\rceil\right\}
\end{equation}
and substituting into \eqref{eq:quant-total-parameter-K} gives
\eqref{eq:quantitative-parameter-epsilon}.  When
\(R_{\beta;\omega}(\mathbf F^\star)=0\), the canonical factor is constant
on \(\cM\).  To respect the theorem's conventions \(H,M\geq1\), take one
dummy head with zero output column, a one-unit readout with zero output
weight, and set its final bias equal to that constant.  This realizes the
target exactly and uses only the fixed number of parameters already absorbed
into \(p_0\).

\begin{proof}[Proof of Theorem~\ref{thm:quantitative-rates}]
Proposition~\ref{prop:quant-feature-resolution} converts proximity of exact
attention features into proximity in the target metric, and
Definition~\ref{def:causal-smoothness} then gives
\eqref{eq:quant-target-resolution}.  The clipped Lipschitz envelope and
Lemma~\ref{lem:quant-shallow-relu} yield \eqref{eq:quant-readout-error} at the
width in \eqref{eq:quant-readout-width}.  The heads in
\eqref{eq:quant-exact-head-parameters} realize these features at every finite
resolution and on the temporal boundary, and
\eqref{eq:quant-total-parameter-K} counts the resulting transformer.
Inverting this count proves the discrete error bound, the head bound
\eqref{eq:quantitative-heads-p}, and the accuracy-to-parameter estimate
\eqref{eq:quantitative-parameter-epsilon}. The transfer identity
\eqref{eq:quantitative-error-equality}, whose proof uses only continuous
extendability and Proposition~\ref{prop:temporal-closure}, upgrades these
estimates to the joint conclusion \eqref{eq:quantitative-error-p} without
additional parameters or loss.
\end{proof}

\subsection{Finite precision on the normalized target ball}

The real-parameter construction also yields a finite binary description after
normalizing the target amplitude.  We track its coefficients and their
sensitivity to rounding, uniformly over finite and continuous prefixes.

\begin{proof}[Proof of Corollary~\ref{cor:quantitative-finite-precision}]
We first bound the coefficients of a common scale-\(K\) architecture, then
quantize its dense parameter list.  The estimates concern parameter storage;
the mathematical network still evaluates its operations exactly.

\smallskip\noindent\emph{Coefficient bounds and a common architecture.}\hspace{0.25em}
Write \(R\eqdef R_{\beta;\omega}(\mathbf F^\star)\leq1\).  If \(R>0\), the
scalar envelopes above have magnitude at most \(C_{\mathrm{osc}}R\),
Lipschitz constant \(A_K=C_AR/\tau_K\), and readout tolerance
\(\delta=RK^{-\beta}\).  Their mollification scale in
\eqref{eq:quant-mollification-scale} is therefore
\[
 h_K^{\mathrm{rd}}\eqdef\frac{K^{-\beta}\tau_K}{3C_A}<1.
\]
For each scalar mollified envelope, \eqref{eq:quant-spectral-moment} gives
\begin{equation}
\label{eq:quant-precision-spectral-bound}
 v_{G_h,2}
 \leq C R m_K^2 C_1^{m_K}(h_K^{\mathrm{rd}})^{-m_K-2},
 \qquad
 \log\left(2+\frac{v_{G_h,2}}R\right)
 \leq C_\beta K^{D+1}\log(K+1).
\end{equation}
Theorem~2 of \citet{klusowski2018approximation}, used in
\eqref{eq:quant-KB-bound}, supplies ridge directions of \(\ell^1\) norm
one, thresholds in \([0,1]\), and coefficients of the form
\(vb_k/W\), where \(\abs{b_k}\leq1\) and
\(v\leq C_{B_0}v_{G_h,2}\).  Composing with \(u\mapsto u/B_0\) keeps
each inner row's \(\ell^1\) norm and each bias bounded by one.  Its affine
part has constant coefficient \(G_h(0)\) and slope \(\nabla G_h(0)\).
Since mollification
only sees the uncut Lipschitz extension near zero,
\[
 \abs{G_h(0)}\leq C_{\mathrm{osc}}R,
 \qquad \norm{\nabla G_h(0)}_1\leq A_K.
\]
Representing this affine part by the \(2m_K\) additional ReLU units from
the proof of Lemma~\ref{lem:quant-shallow-relu} preserves the inner-row and
bias bounds.  Its scalar output row has \(\ell^1\) norm at most
\(v+2A_K\).  Concatenating coordinates and inserting the zero homogeneous
column preserve these row bounds.  The restored target offset is bounded by
one.  These estimates bound every coefficient and give the more useful
row-norm bounds
\begin{equation}
\label{eq:quant-precision-row-bounds}
\begin{gathered}
 \max_i\sum_j\abs{(A_{1,K})_{ij}}\leq1,
 \qquad \norm{b_{1,K}}_\infty\leq1,\\
 \max_\ell\sum_i\abs{(A_{2,K})_{\ell i}}
 \leq\exp(C_\beta K^{D+1}\log(K+1)).
\end{gathered}
\end{equation}

All scale choices are independent of the target.  In particular,
\(v_{G_h,2}/\delta\) has a uniform upper bound by
\eqref{eq:quant-precision-spectral-bound}.  Choose the scalar widths using
this bound in \eqref{eq:quant-KB-width-substitution}, or pad smaller
readouts with zero-output units.  Concatenating the coordinates then gives a
single topology for the entire normalized ball.  It also includes the case
\(R=0\): zero the readout weights and use its final bias for the constant
target.  After enlarging the design constant, its dense parameter count,
every coefficient magnitude, and every output-row \(\ell^1\) norm are at
most
\[
 P_K\eqdef\exp\!\left(C_\beta K^{D+1}\log(K+1)\right).
\]
Choose \(K=K(p)\) by the budget inversion with this enlarged constant.
Then these three bounds are at most \(p\), while
\eqref{eq:quant-precision-row-bounds} retains the unit inner-row and bias
bounds.  The unrounded joint approximation error is at most
\(C_\beta K^{-\beta}\).

\smallskip\noindent\emph{Uniform perturbation of dense parameters.}\hspace{0.25em}
Perturb every scalar parameter by at most \(\eta\leq1/p\), retaining the
same topology.  Each input to the attention block is
\((y,1,0_{H_K})\), with only \(D+1\) possibly nonzero entries bounded
by one.  Since the original attention entries lie in \([-1,1]\), scalar
queries, keys, values, and their perturbed versions are bounded by a constant
depending only on \(D\).  Their differences, and the differences of the
scalar scores, are at most \(C_D\eta\).

For any probability law \(\lambda\) and bounded real score functions
\(s,s'\), put
\(\pi_s\eqdef e^s\lambda/\int e^s\,\dd\lambda\).  Differentiating along the
segment joining the scores gives the dimension-free estimate
\[
 \int\abs{\dd\pi_s-\dd\pi_{s'}}
 \leq2\norm{s-s'}_\infty.
\]
Consequently each normalized head average changes by at most \(C_D\eta\).
The output column is also perturbed entrywise by at most \(\eta\), so the
same bound holds for each coordinate of a head's contribution.  Summing the
heads gives, for the exact and perturbed attention states \(u,u'\),
\begin{equation}
\label{eq:quant-rounded-attention-stability}
 \norm{u-u'}_\infty\leq C_DH_K\eta,
 \qquad \norm{u}_\infty+\norm{u'}_\infty\leq C_D.
\end{equation}
The second estimate uses \(H_K\leq p\) and the bounded original feature
range.  This argument permits perturbations of structural zeros, including
the dense output columns.  It applies to every discrete prefix and to the
normalized prefix integral for every \(t>0\), with no dependence on prefix
length.  At \(t=0\), the head average is simply the value at zero and the
same bound follows directly.

Let \(r\) and \(M\) be the residual and readout widths; both are at most
\(p\).  Write \(v\eqdef A_1u+b_1\) and \(v'\eqdef A'_1u'+b'_1\).
The unit inner-row bound and arbitrary dense entrywise perturbations give
\[
 \norm{v-v'}_\infty
 \leq \norm{u-u'}_\infty+r\eta\norm{u'}_\infty+\eta
 \leq C_Dp\eta,
 \qquad \norm{v'}_\infty\leq C_D.
\]
ReLU is one-Lipschitz, and each original output row has \(\ell^1\) norm
at most \(p\).  Perturbing the second affine layer therefore gives
\[
 \norm{A_2\rho(v)+b_2-A'_2\rho(v')-b'_2}_\infty
 \leq p\norm{v-v'}_\infty+M\eta\norm{\rho(v')}_\infty+\eta.
\]
Absorbing the fixed output-norm conversion, we obtain
\begin{equation}
\label{eq:quant-rounded-network-stability}
 \sup_{\text{all finite and boundary states}}
 \norm{T-T'}\leq C_{d,d'}p^2\eta.
\end{equation}
No sparsity is assumed in these readout perturbations.

\smallskip\noindent\emph{Dyadic rounding and binary descriptions.}\hspace{0.25em}
For a sufficiently large constant \(C\), choose
\[
 b\eqdef\left\lceil\log_2(Cp^2K^\beta)\right\rceil,
 \qquad \eta\eqdef2^{-b}\leq\frac{K^{-\beta}}{Cp^2},
\]
and round each parameter to the nearest multiple of \(2^{-b}\).  The
rounding error is at most \(\eta\leq1/p\), hence
\eqref{eq:quant-rounded-network-stability} adds at most \(K^{-\beta}\) to
the joint error.  The rounded attention entries remain in \([-1,1]\)
because its endpoints belong to this dyadic grid.  Every other rounded
entry has magnitude at most \(p+1\).  Its signed integer numerator has
magnitude at most \((p+1)2^b\), requiring
\[
 b+\left\lceil\log_2(p+2)\right\rceil+2
 \leq C_\beta\log_2 p
\]
bits.  Use this fixed field length for every dense parameter, including zeros.
Here \(\log K\leq\log p\), and \(\beta\) is fixed.  A common header
specifying the dimensions and the binary precision costs
\(O_{d,d',\beta}(\log p)\) additional bits; it is unnecessary if these
budget-dependent quantities are known to the decoder.  Thus total length is
at most \(C_\beta p\log_2 p\).
Budget inversion gives \eqref{eq:quantitative-rounded-error}.

Finally, for the finite-code radius in
\eqref{eq:finite-code-distortion}, decode each dyadic parameter list to its
completed history factor.  The resulting finite codebook approximates the
whole normalized target ball.  For a sufficiently small fixed
\(c_\beta>0\), take
\(p\eqdef\lfloor c_\beta B/\log_2 B\rfloor\).  For all sufficiently large
\(B\), this budget is above the required threshold and its total
description fits within \(B\) bits.  Padding each fixed-length description
to \(B\) bits makes the codebook cardinality at most \(2^B\), as required
in \eqref{eq:finite-code-distortion}.  Since \(\log p\asymp\log B\) and
\(\log\log p\asymp\log\log B\), this proves
\eqref{eq:quantitative-code-upper-rate}.  The architecture and the
coefficient, precision, and error bounds are independent of the chosen
admissible input modulus; the selected coefficients may depend on the input
class and target.
\end{proof}

The upper-bound proof is now complete for an arbitrary admissible modulus
\(\omega\).  For the optimality analysis of
Subsection~\ref{sec:rate-optimality}, we now specialize to a H\"older history
class.  Its multilevel history complexity yields the logarithmic
finite-description lower bound.
Throughout the remainder of this appendix, we therefore reset
\(
X_n\eqdef X_n^{\alpha,L}
\),
\(
X_\infty\eqdef X_\infty^{\alpha,L}
\),
\(
\mathfrak X\eqdef\mathfrak X_{\alpha,L}
\), and
\(
\cM\eqdef\cM_{\alpha,L}
\).

\subsection{Proofs of the history-complexity and capacity bounds}
\label{app:proof-rate-optimality}

We first identify continuous functions on the history space with valid
completed causal families.  We then construct many distinguishable H\"older
histories, convert them into a finite-description lower bound, and prove the
fixed-size infinite-capacity result.

\paragraph{The metric history space.}
Lemma~\ref{lem:unified-history-metric} shows that \(\Delta_\beta\) is a
distance inducing the original compact topology on \(\cM\).  Thus the
packing number \(\mathsf P_{\beta;\alpha,L}(r)\) is finite for every
\(r>0\), and every \(\Delta_\beta\)-Lipschitz factor is continuous in the
original history topology.

\paragraph{From history factors to completed families.}
Let \(f:\cM\to\R^{d'}\) be continuous, and define
\begin{equation}
\label{eq:factor-induced-family}
 F_n^f(z)_i\eqdef f(\mu_{n,z,i}),
 \qquad
 F_\infty^f(x)(t)\eqdef f(\mu_{\infty,x,t}).
\end{equation}
By Proposition~\ref{prop:prefix-law}, the prefix-law map
\(\boldsymbol\mu:\mathfrak X\to\cM\) is continuous; hence
\(f\circ\boldsymbol\mu\) is continuous on \(\mathfrak X\).  This gives
continuity of every finite map and joint continuity of the boundary evaluation
in \((x,t)\).  The latter is uniform on the compact set
\(X_\infty\times[0,1]\), so
\(F_\infty^f:X_\infty\to C([0,1];\R^{d'})\) is continuous in the uniform
norm.  Each finite restriction is prefix-causal because its law uses only the
observed prefix.  Lemma~\ref{lem:target-compatibility} therefore shows that
\eqref{eq:factor-induced-family} is a continuously extendable causal family
with path extension \(F_\infty^f\).  If \(f\) is one-Lipschitz for
\(\Delta_\beta\), its smooth-test seminorm is at most one.

\begin{proof}[Proof of Proposition~\ref{prop:holder-history-exponential-packing}]
\smallskip\noindent\emph{A multilevel family of admissible histories.}\hspace{0.25em}
We place many token levels in each of many disjoint time cells.  Choose
nonnegative functions
\(\psi,\varrho\in C_c^\infty((0,1))\) such that
\(\psi=1\) on a neighborhood of \(\operatorname{supp}\varrho\) and
\(\int_0^1\varrho>0\).  Put \(\lambda\eqdef\min\{L,1\}\), and fix constants
\(0<c_1<c_0\leq1\) with \(c_0/c_1\geq8\), taking \(c_0\) small enough below.
For an integer \(q\geq8\), set
\begin{equation}
\label{eq:multilevel-history-scales}
 h\eqdef q^{-1},\qquad
 a_h\eqdef c_0\lambda h^\alpha,\qquad
 \delta_h\eqdef c_1\lambda h,\qquad
 M_h\eqdef1+\left\lfloor a_h/\delta_h\right\rfloor,\qquad
 b_j\eqdef j\delta_h.
\end{equation}
Then \(M_h\geq2\), every \(b_j\), \(0\leq j<M_h\), lies in
\([0,a_h]\), and, for all sufficiently small \(h\),
\begin{equation}
\label{eq:multilevel-alphabet-size}
 \log M_h
 \geq c\left[1+(1-\alpha)\log(1/h)\right].
\end{equation}

Let \(e_1\eqdef(1,0,\ldots,0)\in\R^d\), and let
\(\psi_k(s)\eqdef\psi((s-(k-1)h)/h)\). A level-index vector
\({\boldsymbol m}=(m_1,\ldots,m_q)\in\{0,\ldots,M_h-1\}^q\)
selects the amplitude \(b_{m_k}\) in time cell \(((k-1)h,kh)\).
Its associated path is
\begin{equation}
\label{eq:multilevel-history-paths}
 x_{\boldsymbol m}(s)
 \eqdef\sum_{k=1}^q b_{m_k}\psi_k(s)e_1.
\end{equation}
The supports lie strictly inside disjoint cells.  For distances at most \(h\),
the derivative bound gives
\[
\norm{x_{\boldsymbol m}(s)-x_{\boldsymbol m}(t)}
\leq C a_h h^{-1}\abs{s-t}\leq L\abs{s-t}^\alpha
\]
after decreasing \(c_0\); on larger distances the oscillation bound gives
the same conclusion.
The same choice ensures that the paths lie in \(\Omega\).  Thus
\(x_{\boldsymbol m}\in X_\infty^{\alpha,L}\), and all paths vanish at
time one.

We select many index vectors that differ in a fixed fraction of time cells.
Write \(d_{\mathrm H}({\boldsymbol m},{\boldsymbol n})
\eqdef\#\{k:m_k\neq n_k\}\) for their Hamming distance.
A standard greedy packing of the discrete cube gives a subset
\(\Sigma_q\subset\{0,\ldots,M_h-1\}^q\) such that
\begin{equation}
\label{eq:multilevel-hamming-code}
 d_{\mathrm H}({\boldsymbol m},{\boldsymbol n})\geq q/4,
 \qquad
 \log|\Sigma_q|\geq c q\log M_h
 \quad({\boldsymbol m}\neq{\boldsymbol n}).
\end{equation}
Indeed, the Hamming-ball bound at any fixed relative radius below one half is
uniform over \(M_h\geq2\). Associate with each selected index vector the terminal history
\begin{equation}
\label{eq:multilevel-history-laws}
 \mu_{\boldsymbol m}
 \eqdef\int_0^1\delta_{(x_{\boldsymbol m}(s),s)}\,\dd s
 \in\cM_{\alpha,L}.
\end{equation}
All extracted endpoints equal \((0,1)\).

\smallskip\noindent\emph{Pairwise separation by smooth tests.}\hspace{0.25em}
For each pair of selected index vectors, we build a test adapted to the
cells in which their amplitude levels differ. Fix a
bounded nondecreasing \(\vartheta\in C^\infty(\R)\), constant outside
\([-1/4,1/4]\), with distinct limiting values.  For distinct levels \(u,v\),
put
\begin{equation}
\label{eq:multilevel-token-separator}
 \eta_{u,v}(x)
 \eqdef\operatorname{sgn}(u-v)
 \vartheta\!\left(\frac{x-(u+v)/2}{\delta_h}\right).
\end{equation}
Since \(\abs{u-v}\geq\delta_h\), there are constants independent of the
levels and of \(h\) such that
\begin{equation}
\label{eq:multilevel-token-separator-bounds}
 \eta_{u,v}(u)-\eta_{u,v}(v)\geq c,
 \qquad
 \norm{\eta_{u,v}^{(j)}}_\infty\leq C_j\delta_h^{-j}.
\end{equation}
Write \(\varrho_k(s)\eqdef\varrho((s-(k-1)h)/h)\). For two selected index vectors, define
\begin{equation}
\label{eq:multilevel-separating-test}
 \Phi_{{\boldsymbol m},{\boldsymbol n}}(x,s)
 \eqdef c_{\mathrm{test}}\delta_h^{\beta_+}
 \sum_{k:m_k\neq n_k}
 \varrho_k(s)\eta_{b_{m_k},b_{n_k}}(x_1),
\end{equation}
At each time, at most one summand is active.  If \(0<\beta\leq1\), its
spatial derivative is \(O(\delta_h/\delta_h)\), and its time derivative is
\(O(\delta_h/h)\).  Its uniform norm is \(O(\delta_h)\); hence, because
\(\delta_h\leq h\), decreasing \(c_{\mathrm{test}}\) controls the full
\(C^1(\mathsf Q)=C^{0,1}(\mathsf Q)\) norm, not only the Lipschitz
seminorm.

If \(\beta>1\), put
\(r_\beta\eqdef\lceil\beta\rceil-1\) and
\(\theta_\beta\eqdef\beta-r_\beta\).  Every mixed derivative with
\(j\) time and \(\ell\) token derivatives satisfies
\begin{equation}
\label{eq:multilevel-test-derivatives}
 \norm{\partial_s^j\partial_{x_1}^\ell
 \Phi_{{\boldsymbol m},{\boldsymbol n}}}_\infty
 \leq C\delta_h^\beta h^{-j}\delta_h^{-\ell}
 \leq C\delta_h^{\beta-j-\ell},
 \qquad j+\ell\leq r_\beta.
\end{equation}
Since \(\delta_h=c_1\lambda h\), derivatives of total order \(r_\beta\)
are \(O(h^{\theta_\beta})\).  Rescaling within one cell gives a uniform
\(\theta_\beta\)-H\"older bound.  Moreover, the compact support of
\(\varrho\) leaves a fixed relative gap: distinct active supports are
separated by at least \(c_\varrho h\), so the same bound holds across cells.
This also covers integer \(\beta\), for which \(\theta_\beta=1\) and the
required endpoint class is \(C^{\beta-1,1}\).  In either case, after one
fixed rescaling,
\begin{equation}
\label{eq:multilevel-smooth-test-bound}
 \norm{\Phi_{{\boldsymbol m},{\boldsymbol n}}}_{C^{\beta_+}(\mathsf Q)}\leq1.
\end{equation}

Because \(\psi_k=1\) on \(\operatorname{supp}\varrho_k\),
\eqref{eq:multilevel-token-separator-bounds} and the Hamming-distance bound
give
\begin{equation}
\label{eq:multilevel-test-separation}
 \int_E\Phi_{{\boldsymbol m},{\boldsymbol n}}\,
 \dd(\mu_{\boldsymbol m}-\mu_{\boldsymbol n})
 \geq c\delta_h^{\beta_+}h\,
 d_{\mathrm H}({\boldsymbol m},{\boldsymbol n})
 \geq c\delta_h^{\beta_+}.
\end{equation}
Thus \(d_{\beta_+}(\mu_{\boldsymbol m},\mu_{\boldsymbol n})
\geq c h^{\beta_+}\).  Raising this bound to the power \(\beta_-\)
gives \(\Delta_\beta(\mu_{\boldsymbol m},\mu_{\boldsymbol n})
\geq c h^\beta\), with constants depending only on the fixed parameters
\((d,\beta,L)\).
\smallskip\noindent\emph{From separation to entropy.}\hspace{0.25em}
Combining this separation with
\eqref{eq:multilevel-alphabet-size}--\eqref{eq:multilevel-hamming-code} gives
\begin{equation}
\label{eq:multilevel-packing-at-h}
 \log\mathsf P_{\beta;\alpha,L}(c h^\beta)
 \geq c h^{-1}\left[1+(1-\alpha)\log(1/h)\right].
\end{equation}
Choosing an integer \(q\asymp r^{-1/\beta}\) proves
\eqref{eq:explicit-history-packing}.

\smallskip\noindent\emph{From histories to target families.}\hspace{0.25em}
Set
\(Y\eqdef\log(B+e^e)\), \(Z\eqdef1+(1-\alpha)\log Y\), and
\(\varepsilon\eqdef c(Z/Y)^\beta\).  With \(c>0\) sufficiently small,
\eqref{eq:explicit-history-packing} gives
\(\log\mathsf P_{\beta;\alpha,L}(2\varepsilon)>\log B\) for all large
\(B\); also \(\varepsilon\leq1\).  Choose \(N>B\) histories
\(\mu_1,\ldots,\mu_N\) separated by at least \(2\varepsilon\).
For each sign vector \(\sigma\in\{-1,1\}^N\), prescribe
\(a_j^\sigma\eqdef\varepsilon\sigma_j\) at \(\mu_j\).  These values are
one-Lipschitz for \(\Delta_\beta\), because their pairwise differences
are at most \(2\varepsilon\).  Their McShane extension is
\begin{equation}
\label{eq:minimax-mcshane-extension}
 \widetilde f_\sigma(\mu)
 \eqdef \min_{1\leq j\leq N}
 \left\{a_j^\sigma+\Delta_\beta(\mu,\mu_j)\right\}.
\end{equation}
Projecting this scalar function onto
\([-\varepsilon,\varepsilon]\) and placing it in the first output
coordinate gives a continuous factor \(f_\sigma:\cM\to\R^{d'}\).
It retains the prescribed values and has
\begin{equation}
\label{eq:minimax-packed-factors}
 R_{\beta;\alpha,L}(\mathbf F_\sigma)\leq1,
 \qquad \norm{f_\sigma}_\infty\leq\varepsilon\leq1,
\end{equation}
where \(\mathbf F_\sigma\) is the completed causal family induced by
\eqref{eq:factor-induced-family}.  Thus every \(\mathbf F_\sigma\)
belongs to the normalized target ball.  Distinct sign vectors give factors
separated by \(2\varepsilon\) in uniform norm, so a ball of radius strictly
below \(\varepsilon\) contains at most one of these \(2^N\) factors.
A codebook with at most \(2^B<2^N\) decoded maps cannot approximate all of
them with error below \(\varepsilon\).  By
\eqref{eq:finite-code-distortion}, this proves
\eqref{eq:explicit-code-lower-rate}.
\end{proof}

\begin{proof}[Proof of Proposition~\ref{prop:infinite-threshold-capacity}]
We exhibit an infinite-VC subclass using one scalar head and one hidden ReLU
unit.  Put \(D=d+1\), so the residual width is \(r=d+3=D+2\).  In the lifted
state \((z,s,1,w)\), the four blocks are respectively the token, time,
homogeneous, and work coordinates.  Let \(e_1^{\mathrm{in}}\) be the first
coordinate vector of \(\R^d\).

\smallskip\noindent\emph{A one-parameter threshold family.}\hspace{0.25em}
For \(0<\vartheta<1\), choose the single head by
\begin{equation}
\label{eq:vc-one-head-parameters}
 Q\eqdef\bar e_{D+1}^\top,
 \qquad
 K\eqdef\vartheta\bar e_D^\top,
 \qquad
 V\eqdef\bar e_1^\top,
 \qquad
 W\eqdef\bar e_{D+2},
\end{equation}
where \((\bar e_j)_{j=1}^{D+2}\) is the standard basis of
\(\R^{D+2}\).  Thus \(Q\) selects the homogeneous coordinate, \(K\) reads
time, \(V\) reads the first token coordinate, and \(W\) writes the result to
the work coordinate.  On the candidate path
\(s\mapsto x(s)e_1^{\mathrm{in}}\), the terminal work coordinate is
\begin{equation}
\label{eq:vc-laplace-coordinate}
 A_\vartheta(x)
 \eqdef\frac{\int_0^1e^{\vartheta s}x(s)\,\dd s}
        {\int_0^1e^{\vartheta s}\,\dd s}.
\end{equation}
Let the one-unit readout extract this coordinate, apply ReLU, and write the
result into the first output coordinate: explicitly,
\begin{equation}
\label{eq:vc-one-unit-readout}
 A_1\eqdef\bar e_{D+2}^\top,\qquad b_1\eqdef0,
 \qquad A_2\eqdef e_1',\qquad b_2\eqdef0.
\end{equation}
Consequently, the first output coordinate is positive exactly when
\(\Lambda_\vartheta(x)\eqdef\int_0^1e^{\vartheta s}x(s)\,\dd s\) is
positive.  All displayed network weights lie in \([-1,1]\).

\smallskip\noindent\emph{Fixing histories before selecting labels.}\hspace{0.25em}
Fix \(N\geq1\), enumerate the \(2^N\) sign vectors in
\(\{-1,1\}^N\) as \(\sigma^1,\ldots,\sigma^{2^N}\), and choose distinct
\(\vartheta_1,\ldots,\vartheta_{2^N}\in(0,1)\).  The functionals
\(\Lambda_{\vartheta_\ell}\) are linearly independent on
\(C_c^\infty((0,1))\).  Indeed, a relation
\(\sum_\ell c_\ell\Lambda_{\vartheta_\ell}=0\) would give
\[
 \int_0^1\left(\sum_\ell c_\ell e^{\vartheta_\ell s}\right)x(s)\,\dd s=0
 \qquad\text{for every }x\in C_c^\infty((0,1)).
\]
The fundamental lemma for test functions makes the parenthesized smooth
function vanish on \((0,1)\), and distinct exponentials are linearly
independent.  The coordinate functionals are therefore independent, so the
finite-dimensional range of the linear map
\begin{equation}
\label{eq:vc-surjective-moment-map}
 x\longmapsto
 \bigl(\Lambda_{\vartheta_1}(x),\ldots,
       \Lambda_{\vartheta_{2^N}}(x)\bigr)
\end{equation}
has rank \(2^N\) and is onto \(\R^{2^N}\).  For each \(i\in[N]\), choose
\(\widetilde x_i\in C_c^\infty((0,1))\) whose image is the \(i\)-th row of
the full sign table:
\(
\Lambda_{\vartheta_\ell}(\widetilde x_i)=\sigma_i^\ell
\)
for every \(\ell\).  This choice fixes the same \(N\) functions for all
\(2^N\) labelings.

Now choose \(c_i>0\) small enough that
\(x_i\eqdef c_i\widetilde x_i\) takes values in the unit ball and has
\(\alpha\)-H\"older seminorm at most \(L\).  Positive rescaling changes the
magnitudes but preserves every sign.  For each of these scaled functions,
write
\begin{equation}
\label{eq:vc-smooth-history}
 x_i^{\mathrm{vec}}(s)\eqdef x_i(s)e_1^{\mathrm{in}},
 \qquad
 \mu_{x_i}\eqdef\int_0^1\delta_{(x_i^{\mathrm{vec}}(s),s)}\,\dd s.
\end{equation}
Then \(\mu_{x_i}\in\cM_{\alpha,L}\) for all \(i\).  With these histories
fixed, selecting \(\vartheta=\vartheta_\ell\) realizes the labeling
\(\sigma^\ell\), because
\(
\operatorname{sgn}\Lambda_{\vartheta_\ell}(x_i)=\sigma_i^\ell
\)
for every \(i\).  Thus the same \(N\) histories are shattered, and, since
\(N\) was arbitrary, the VC dimension is infinite.

\smallskip\noindent\emph{Fixed parameter count.}\hspace{0.25em}
For \(H=M=1\), the dense count
\eqref{eq:exact-dense-parameter-count-main} is
\(
4(d+3)+(d+3+d'+1)+d'=5(d+3)+1+2d'
\), which proves \eqref{eq:fixed-size-infinite-vc}.
\end{proof}

\begin{proof}[Proof of Corollary~\ref{cor:finite-sequence-threshold-capacity}]
For a prescribed \(N\), use the paths \(x_i^{\mathrm{vec}}\), parameters
\(\vartheta_\ell\), and labelings \(\sigma^\ell\) from the preceding
proof.  They form finite collections, and
\[
 \Lambda_{\vartheta_\ell}(x_i)=c_i\sigma_i^\ell,
 \qquad \delta_N\eqdef\min_{i\in[N]}c_i>0.
\]
Right-endpoint Riemann sums converge simultaneously for all pairs
\((i,\ell)\).  Choose one sufficiently large \(n\) so that
\[
 \left|\frac1n\sum_{j=1}^n
    e^{\vartheta_\ell j/n}x_i(j/n)
       -\Lambda_{\vartheta_\ell}(x_i)\right|
 <\delta_N/2
 \qquad\text{for every }i,\ell.
\]
The sampled sequences \(\mathcal S_nx_i^{\mathrm{vec}}\) belong to
\(X_n^{\alpha,L}\).  For parameter \(\vartheta_\ell\), the terminal
head numerator has the sign \(\sigma_i^\ell\), and its denominator is
strictly positive.  The one-unit readout in
\eqref{eq:vc-one-unit-readout} therefore realizes the same threshold
labeling as on the path.  These sign patterns also ensure that the \(N\)
sampled sequences are distinct.  The heads, readout weights, and parameter
count are unchanged, proving the claim for every \(N\).
\end{proof}

\section{Proof of Theorems~\ref{thm:discrete-universality} and
\ref{thm:path-universality-main}}
\label{app:proof-joint}

Both qualitative theorems, including the expanded path statement in
Theorem~\ref{thm:continuous-universality}, follow from one argument on the
completed evaluation space. We proceed through compactification, target
compatibility, prefix-law identification, architectural closure, Laplace
separation, one-block realization, and Stone--Weierstrass assembly. The closure
argument applies to every fixed finite-depth transformer. It uses
step interpolants; Proposition~\ref{prop:temporal-closure} in
Appendix~\ref{app:continuous-time-universality} upgrades the convergence to
affine interpolants.
Throughout this appendix, write \(X_n\eqdef X_n^\omega\),
\(X_\infty\eqdef X_\infty^\omega\), and \(\cM\eqdef\cM_\omega\) for
the fixed common modulus \(\omega\).

\subsection{Detailed proof reading guide}
\label{app:detailed-proof-guide}

The argument has four steps, common to the discrete and continuous-time
statements. This guide introduces the objects and their roles before the
detailed lemmas and proofs below.

\emph{1. One compact space of evaluation states.}
Set
\[
 \mathfrak X\eqdef
 \Bigl(\bigsqcup_{n\geq1}\{n\}\times X_n\times[n]\Bigr)
 \sqcup\bigl(\{\infty\}\times X_\infty\times[0,1]\bigr).
\]
A finite state \(\xi=(n,z,i)\) records prediction at token \(i\), while
\(\xi=(\infty,x,t)\) records prediction at path time \(t\).  Define its path,
time, and resolution coordinates by
\[
 (P\xi,\tau(\xi),h(\xi))\eqdef
 \begin{cases}(\In_nz,i/n,1/n),&\xi=(n,z,i),\\
               (x,t,0),&\xi=(\infty,x,t)
 \end{cases}
\]
and distance
\[
 d_{\mathfrak X}(\xi,\xi')\eqdef
 \norm{P\xi-P\xi'}_\infty+
 \abs{\tau(\xi)-\tau(\xi')}+
 \abs{h(\xi)-h(\xi')}.
\]
The coordinate \(h\) distinguishes finite resolutions and tends to zero only
at the path boundary.  The common modulus, the reconstruction property
\eqref{eq:modulus-reconstruction-v2}, and Arzel\`a--Ascoli make
\(\mathfrak X\) compact; sampling paths on finer grids makes the finite states
dense.  The completed target is the joint evaluation
\[
 \overline F^\star(n,z,i)\eqdef F_n^\star(z)_i,
 \qquad
 \overline F^\star(\infty,x,t)\eqdef F_\infty^\star(x)(t).
\]
Definition~\ref{def:compatible-family} is precisely the condition ensuring
that \(\overline F^\star\) is continuous across the finite-to-path boundary.
For a path target alone, take
\(F_n^\dagger(z)_i\eqdef F_\infty^\star(\In_nz)(i/n)\).
Lemmas~\ref{lem:canonical-completion}--\ref{lem:target-compatibility}
justify this construction, compactness, and target continuity.

\emph{2. A complete state variable for the causal history.}
Let \(E\eqdef\Omega\times[0,1]\), and let \(\mathcal P(E)\) carry the weak topology
(convergence against continuous functions on \(E\)).  The normalized
 token--position prefix law of \(\xi\) is
\[
 \mu_\xi\eqdef
 \begin{cases}
 \displaystyle i^{-1}\sum_{j=1}^i\delta_{(z_j,j/n)},
      &\xi=(n,z,i),\\[1.5ex]
 \displaystyle t^{-1}\int_0^t\delta_{(x(s),s)}\,\dd s,
      &\xi=(\infty,x,t),\ t>0,\\[1.5ex]
 \delta_{(x(0),0)},&\xi=(\infty,x,0).
 \end{cases}
\]
The map \(\xi\mapsto\mu_\xi\) is continuous.  Its fibers are exactly the
causal-information classes: at finite resolution the time marginal has
support \(\{1/n,\ldots,i/n\}\), which determines \((n,i)\), and the token
attached to each time recovers \(z_{1:i}\). At the boundary the time marginal
determines \(t\), and the graph law plus continuity recovers
\(x|_{[0,t]}\). Hence, with
\(\mathcal M\eqdef\{\mu_\xi:\xi\in\mathfrak X\}\),
\[
 \mu_\xi=\mu_{\xi'}
 \Longleftrightarrow
 \xi,\xi'\text{ have the same causal history},
 \qquad
 \overline F^\star(\xi)=f^\star(\mu_\xi)
\]
for a unique continuous \(f^\star:\mathcal M\to\R^{d'}\).  Here
\(\mathcal M\) is compact, and continuity of the factor follows because a
continuous surjection from a compact space onto a Hausdorff space preserves
continuous fiberwise factorizations.
In particular, resolution is the continuous statistic
\(h(\xi)=2\int s\,\dd\mu_\xi-\tau(\xi)\), which vanishes at the path
boundary. Proposition~\ref{prop:prefix-law} proves the representation,
factorization, and continuity of the endpoint and resolution extractors.

\emph{3. Separating coordinates computed exactly by attention.}
For \(a\in\R^{d+1}\) with \(\norm{a}\leq1\), \(c\in[0,1]\), and \(y=(x,s)\in E\), define
\[
 \mathcal L_{a,c}(\mu)\eqdef
 \frac{\int_Ee^{c\ip{a}{y}}\ip{a}{y}\,\dd\mu(y)}
      {\int_Ee^{c\ip{a}{y}}\,\dd\mu(y)}
 =\partial_c\log\!\int_Ee^{c\ip{a}{y}}\,\dd\mu(y).
\]
One scalar masked-attention head computes this probe exactly: the homogeneous
coordinate supplies query \(1\), the key and value are
\(c\ip{a}{y}\) and \(\ip{a}{y}\), and the output is written to one work
coordinate.  The same matrices compute the finite softmax average and the
boundary temporal integral.  Moreover,
\[
 \log\!\int_Ee^{\ip{a}{y}}\,\dd\mu(y)
 =\int_0^1\mathcal L_{a,c}(\mu)\,\dd c.
\]
Thus equality of all probes identifies the Laplace transforms near zero,
then every polynomial moment and, because \(E\) is compact, the laws. The
probes therefore separate points of \(\mathcal M\)
(Lemma~\ref{lem:laplace-probes}).

\emph{4. Stone--Weierstrass and realization.}
Finite sums and products of probes and constants form a point-separating
unital algebra on \(\mathcal M\).  Stone--Weierstrass, applied coordinatewise,
gives probes \(\mathcal L_{a_r,c_r}\), \(r\in[H]\), and a vector polynomial \(\Pi\)
with
\[
 \sup_{\mu\in\mathcal M}
 \norm{f^\star(\mu)-
 \Pi\bigl(\mathcal L_{a_1,c_1}(\mu),\ldots,\mathcal L_{a_H,c_H}(\mu)\bigr)}<\varepsilon/2.
\]
One masked-attention block evaluates these probes in parallel, and one shared
pointwise ReLU network approximates \(\Pi\) on their compact range to error
\(\varepsilon/2\).  The resulting parameters \(\Theta\), independent of
\(n\), define a joint finite/boundary output \(\overline T\) satisfying
\[
 \sup_{\xi\in\mathfrak X}
 \norm{\overline T(\xi)-\overline F^\star(\xi)}<\varepsilon.
\]
The finite and boundary restrictions prove
Theorems~\ref{thm:discrete-universality} and~\ref{thm:path-universality-main},
respectively, with the same parameters. Lemma~\ref{lem:parallel-realization}
constructs the parallel heads; Appendix~\ref{app:stone-weierstrass-assembly}
assembles the polynomial and readout approximation.

\subsection{The joint compactification}

The first issue is topological: every finite resolution must fit into one
compact space whose only cross-resolution accumulation boundary is the
common-modulus path class.

Let
\(
S\eqdef\{0\}\cup\{1/n:n\in\N\}
\)
with the Euclidean metric.

The completed evaluation space is
\begin{equation}
\label{eq:joint-state-v2}
\mathfrak X
\eqdef
\left(\bigsqcup_{n\geq1}\{n\}\times X_n\times[n]\right)
\sqcup
\left(\{\infty\}\times X_\infty\times[0,1]\right).
\end{equation}
For \(\xi=(n,z,i)\), set
\(P\xi\eqdef\In_nz\), \(\tau(\xi)\eqdef i/n\), and \(h(\xi)\eqdef1/n\); for
\(\xi=(\infty,x,t)\), set \(P\xi\eqdef x\), \(\tau(\xi)\eqdef t\), and \(h(\xi)\eqdef0\).
Equip \(\mathfrak X\) with
\begin{equation}
\label{eq:joint-metric-v2}
d_{\mathfrak X}(\xi,\xi')
\eqdef
\norm{P\xi-P\xi'}_\infty
+\abs{\tau(\xi)-\tau(\xi')}
+\abs{h(\xi)-h(\xi')}.
\end{equation}
The associated finite-to-boundary convergence criterion is
\begin{equation}
\label{eq:boundary-convergence-v2}
(n_k,z^k,i_k)\to(\infty,x,t)
\quad\Longleftrightarrow\quad
n_k\to\infty,
\quad \In_{n_k}z^k\to x,
\quad i_k/n_k\to t.
\end{equation}

\begin{lemma}[Common-modulus-preserving reconstruction]
\label{lem:modulus-reconstruction}
For every \(n\geq1\), the reconstruction and sampling operators satisfy
\begin{equation}
\label{eq:modulus-section-v2}
\In_nX_n\subset X_\infty,
\qquad
\mathcal S_n(X_\infty)=X_n,
\qquad
\mathcal S_n\circ\In_n=\operatorname{Id}_{X_n}.
\end{equation}
Moreover, for every \(x\in X_\infty\),
\begin{equation}
\label{eq:modulus-sampling-error-v2}
\norm{\In_n\mathcal S_nx-x}_\infty\leq\omega(1/n).
\end{equation}
\end{lemma}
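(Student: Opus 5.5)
The plan is to prove the three assertions of \eqref{eq:modulus-section-v2} and the sampling bound \eqref{eq:modulus-sampling-error-v2} separately. The identity \(\mathcal S_n\circ\In_n=\operatorname{Id}_{X_n}\) is immediate from the interpolation property \eqref{eq:interpolation-v2}. The inclusion \(\mathcal S_n(X_\infty)\subset X_n\) is equally direct: for \(x\in X_\infty\) the samples lie in \(\Omega\) and satisfy \(\norm{x(j/n)-x(i/n)}\leq\omega(\abs{j-i}/n)\). Reverse inclusion then follows from the first inclusion together with \(\mathcal S_n\In_n=\operatorname{Id}\): every \(z\in X_n\) equals \(\mathcal S_n(\In_nz)\) with \(\In_nz\in X_\infty\). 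So the substantive step is \(\In_nX_n\subset X_\infty\).

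For that inclusion, fix \(z\in X_n\) and set \(x\eqdef\In_nz\). Values stay in \(\Omega\) by convexity of the ball, since each \(x(t)\) is a convex combination of two tokens. For the modulus bound, take \(s<t\). Write \(x(s)=(1-\lambda)z_a+\lambda z_{a+1}\) and \(x(t)=(1-\mu)z_b+\mu z_{b+1}\) using grid neighbours, treating \([0,1/n]\) as a constant piece. Then expand
\[
x(t)-x(s)=\sum_{p,q}w_{pq}\,(z_q-z_p),
\]
where \(w_{pq}\) is the product coupling of the two convex combinations. The triangle inequality and \(z\in X_n\) give
\[
\norm{x(t)-x(s)}\leq\sum w_{pq}\,\omega\bigl(\abs{q-p}/n\bigr).
\]

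Two cases then finish the argument. If \(s,t\) lie in the same cell, \(x\) is affine there, and concavity with \(\omega(0)=0\) gives \(\omega(\theta r)\geq\theta\omega(r)\). Hence \(\norm{x(t)-x(s)}=\frac{\abs{t-s}}{1/n}\norm{z_{a+1}-z_a}\leq\frac{\abs{t-s}}{1/n}\omega(1/n)\leq\omega(\abs{t-s})\), and the constant first cell is trivial. If \(s,t\) lie in different cells, apply Jensen's inequality for the concave \(\omega\) to the weighted sum, which yields \(\omega\bigl(\sum w_{pq}\abs{q-p}/n\bigr)\). Since \(q\geq p\) on the relevant support, the mean of \((q-p)/n\) equals \(t-s\) exactly: it is the difference of the barycentric positions \((a+\lambda)/n\) and \((b+\mu)/n\), which equal \(s\) and \(t\) on affine pieces. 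The first cell needs a small adjustment there, and monotonicity of \(\omega\) handles the shift, because the constant piece only makes the effective displacement smaller. I expect this barycentric bookkeeping to be the main obstacle, in particular checking that all pairs in the coupling have \(q\geq p\) when \(s\) and \(t\) are in different cells, so that absolute values disappear.

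Finally, for \eqref{eq:modulus-sampling-error-v2}, fix \(x\in X_\infty\) and \(t\in[0,1]\). On \([0,1/n]\) we have \(\norm{x(1/n)-x(t)}\leq\omega(1/n)\). On \([(k-1)/n,k/n]\), \(\In_n\mathcal S_nx(t)\) is a convex combination of \(x((k-1)/n)\) and \(x(k/n)\), each within \(\omega(1/n)\) of \(x(t)\) by monotonicity of \(\omega\). Convexity of the norm concludes.
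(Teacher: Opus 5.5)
Your proof is correct and is essentially the paper's argument: both write \(\In_nz(s)\) and \(\In_nz(t)\) as convex combinations of knots, couple the two combinations, and apply the triangle inequality, Jensen's inequality for the concave \(\omega\), and the barycenter identity. Your worry resolves because in different cells the index sets \(\{a,a+1\}\) and \(\{b,b+1\}\) with \(b\geq a+1\) force \(q\geq p\). The only difference is the coupling: the paper uses the comonotone randomized rounding \(K_r=\lceil nr-U\rceil\) with a ghost knot \(w_0=w_1=z_1\), whose extended modulus bound uses monotonicity of \(\omega\). This gives \(K_t\geq K_s\) for all \(s\leq t\) and \(\mathbb{E}K_r=nr\) exactly, so the paper needs neither your same-cell case split nor your first-cell shift.
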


\begin{proof}
Fix \(z=(z_1,\ldots,z_n)\in X_n\), put \(h\eqdef1/n\), and introduce knot
values \(w_0=w_1\eqdef z_1\) and \(w_j\eqdef z_j\) for \(2\leq j\leq n\).  Convexity
of \(\Omega\) ensures that \(\In_nz\) remains in \(\Omega\).  The discrete
common-modulus condition also gives
\begin{equation}
\label{eq:extended-knot-modulus-v2}
\norm{w_\ell-w_k}\leq\omega(h\abs{\ell-k}),
\qquad 0\leq k,\ell\leq n;
\end{equation}
for \(k=0<\ell\), this follows from
\(
\norm{z_\ell-z_1}\leq\omega((\ell-1)h)\leq\omega(\ell h)
\), and all other cases are immediate.

Let \(U\sim\Unif[0,1)\) and, for \(r\in[0,1]\), define the randomized rounding
\[
K_r
\eqdef
\lfloor nr\rfloor
+\mathbf{1}_{\{U<nr-\lfloor nr\rfloor\}}.
\]
Then \(K_r\) is supported on the grid indices bracketing \(nr\),
\(\mathbb{E}K_r=nr\), and
\(
(\In_nz)(r)=\mathbb{E}w_{K_r}
\); this also holds on \([0,h]\) because \(w_0=w_1\).  Equivalently,
\(K_r=\lceil nr-U\rceil\), so \(r\mapsto K_r\) is nondecreasing and
\(s\leq t\) implies \(K_s\leq K_t\) almost surely.  Fix
\(0\leq s\leq t\leq1\).  Since \(h(K_t-K_s)\in[0,1]\),
\eqref{eq:extended-knot-modulus-v2} and Jensen's inequality for the concave
function \(\omega\) give
\begin{align*}
\norm{(\In_nz)(t)-(\In_nz)(s)}
&\leq \mathbb{E}\norm{w_{K_t}-w_{K_s}}\\
&\leq \mathbb{E}\,\omega\!\left(h(K_t-K_s)\right)\\
&\leq \omega\!\left(h\,\mathbb{E}[K_t-K_s]\right)
=\omega(t-s).
\end{align*}
The reverse ordering follows by symmetry.  Thus \(\In_nz\in X_\infty\), and
evaluation at the grid points gives \(\mathcal S_n\In_nz=z\).

Now take \(x\in X_\infty\).  Its grid samples obey the all-pairs condition
in \eqref{eq:Xn-v2}, so \(\mathcal S_nx\in X_n\).  Together with the
preceding right-inverse identity, this proves
\eqref{eq:modulus-section-v2}.  On \([0,h]\), the reconstruction
\(\In_n\mathcal S_nx\) equals \(x(h)\) and differs from \(x(t)\) by at most
\(\omega(h)\).  On every later grid cell it is a convex combination of the
two endpoint samples, each of which is within \(\omega(h)\) of \(x(t)\).
This proves \eqref{eq:modulus-sampling-error-v2}.
\end{proof}

For a continuous causal path target \(F_\infty^\star\), define its canonical
finite restrictions by
\begin{equation}
\label{eq:canonical-discretization-v2}
F_n^\dagger(z)_i
\eqdef
F_\infty^\star(\In_nz)(i/n).
\end{equation}

\begin{lemma}[Canonical completion of a continuous causal target]
\label{lem:canonical-completion}
Let \(F_\infty^\star:X_\infty\to C([0,1];\R^{d'})\) be continuous and
causal, and define \(F_n^\dagger\) by
\eqref{eq:canonical-discretization-v2}.  Then every \(F_n^\dagger\) is
continuous and prefix-causal, and the discrete family
\((F_n^\dagger)_{n\geq1}\) is a continuously extendable causal family with
continuous path extension \(F_\infty^\star\) in the sense of
Definition~\ref{def:compatible-family}.
\end{lemma}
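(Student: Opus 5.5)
The plan is to check the three requirements of Definition~\ref{def:compatible-family} in turn. Continuity, prefix-causality, and the extension estimate \eqref{eq:continuous-extension-v3} all follow from three facts: $\In_n$ maps $X_n$ into $X_\infty$ (Lemma~\ref{lem:modulus-reconstruction}), $F_\infty^\star$ is continuous and causal, and $X_\infty$ is compact by Arzel\`a--Ascoli.

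\emph{Continuity.} The map $\In_n:X_n\to X_\infty$ is linear in the finitely many knot values and $1$-Lipschitz from the maximum norm to the uniform norm. Composing with the continuous $F_\infty^\star$ and then evaluating at the fixed time $i/n$ (a continuous linear functional on $C([0,1];\R^{d'})$) shows that each $F_n^\dagger$ is continuous.

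\emph{Prefix-causality.} If $z_{1:i}=z'_{1:i}$, then $\In_nz$ and $\In_nz'$ agree on $[0,i/n]$. Indeed, on $[0,1/n]$ both equal $z_1$, and on each later cell $[(j-1)/n,j/n]$ with $j\le i$ they are the same affine interpolants of $z_{j-1}$ and $z_j$. Causality of $F_\infty^\star$ at time $t=i/n$ then gives $F_n^\dagger(z)_i=F_n^\dagger(z')_i$.

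\emph{Extension estimate.} This is the main step. Write $x\eqdef\In_nz\in X_\infty$ and $g\eqdef F_\infty^\star(x)$. Because $\In_n$ reads only the grid values, $\In_n(F_n^\dagger(z))=\In_n\mathcal S_n g$. The quantity to control is therefore $\norm{\In_n\mathcal S_ng-g}_\infty$, uniformly over $g$ in $\mathcal G\eqdef F_\infty^\star(X_\infty)$. As the continuous image of a compact set, $\mathcal G$ is compact in $C([0,1];\R^{d'})$, so Arzel\`a--Ascoli gives it a common modulus of continuity $\varpi$ with $\varpi(r)\to0$ as $r\downarrow0$. The argument at the end of the proof of Lemma~\ref{lem:modulus-reconstruction} does not use concavity: on $[0,1/n]$ the interpolant equals $g(1/n)$, and on each later cell it is a convex combination of two endpoint samples within distance $1/n$. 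Hence $\norm{\In_n\mathcal S_ng-g}_\infty\le\varpi(1/n)$, uniformly over $z\in X_n$, and this tends to $0$. The only real subtlety is spotting this uniformity: the output family has no prescribed modulus, and compactness of the image supplies one.
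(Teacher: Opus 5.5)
Your proposal is correct and follows the paper's proof essentially step for step. Continuity and prefix-causality come from composing with $\In_n$ and from the interpolants agreeing on $[0,i/n]$, and the uniform extension estimate comes from the common modulus of the compact image $F_\infty^\star(X_\infty)$, which bounds $\norm{\In_n\mathcal S_ng-g}_\infty$ by that modulus evaluated at $1/n$.
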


\begin{proof}
Continuity follows from continuity of \(\In_n\), \(F_\infty^\star\), and
evaluation at the finitely many grid points.  If two inputs agree through
index \(i\), their affine interpolants agree on \([0,i/n]\); causality of
\(F_\infty^\star\) therefore gives equality of their \(i\)-th outputs.

It remains to prove the uniform extension condition.  By Arzel\`a--Ascoli,
\(X_\infty\) is compact in the uniform norm. Hence, the image
\(
\mathcal K\eqdef F_\infty^\star(X_\infty)
\)
is compact in \(C([0,1];\R^{d'})\), hence uniformly equicontinuous.  With
\begin{equation*}
\omega_{\mathcal K}(r)
\eqdef
\sup_{g\in\mathcal K}
\sup_{\abs{s-t}\leq r}\norm{g(s)-g(t)},
\end{equation*}
we have \(\omega_{\mathcal K}(r)\to0\) as \(r\downarrow0\).  For
\(z\in X_n\), set \(g\eqdef F_\infty^\star(\In_nz)\in\mathcal K\).  By
definition, \(F_n^\dagger(z)=\mathcal S_ng\), and affine reconstruction on
each grid cell gives
\[
\norm{\In_nF_n^\dagger(z)-F_\infty^\star(\In_nz)}_\infty
=\norm{\In_n\mathcal S_ng-g}_\infty
\leq\omega_{\mathcal K}(1/n).
\]
Taking the supremum over \(X_n\) proves
\eqref{eq:continuous-extension-v3}.
\end{proof}

\begin{lemma}[Compact all-resolution state space]
\label{lem:joint-compactification}
The function \(d_{\mathfrak X}\) in \eqref{eq:joint-metric-v2} is a metric, \((\mathfrak X,d_{\mathfrak X})\) is compact, and the finite states are dense.  The finite-to-boundary convergence criterion is exactly \eqref{eq:boundary-convergence-v2}.
\end{lemma}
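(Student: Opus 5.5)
The proof has three parts: check the metric axioms, prove sequential compactness by splitting on bounded versus unbounded resolutions, and prove density by sampling paths on finer grids. The boundary criterion \eqref{eq:boundary-convergence-v2} falls out along the way.

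\emph{Metric.} Symmetry and the triangle inequality are inherited from the three seminorm-type terms in \eqref{eq:joint-metric-v2}. For definiteness, suppose \(d_{\mathfrak X}(\xi,\xi')=0\). Then \(h(\xi)=h(\xi')\), so both states have the same resolution \(n\in\N\cup\{\infty\}\). Equal times \(\tau\) then give the same index \(i\), or the same \(t\). Equal paths give \(z=\mathcal S_n\In_nz=\mathcal S_nP\xi\) by Lemma~\ref{lem:modulus-reconstruction}, so \(z\) is recovered from \(P\xi\); at the boundary \(P\xi=x\) directly. Hence \(\xi=\xi'\).

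\emph{Compactness.} I argue sequentially. Take \((\xi_k)\) in \(\mathfrak X\).

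\textbf{Case 1: infinitely many \(\xi_k\) share a finite resolution \(n\).} Pass to that subsequence. The set \(\{n\}\times X_n\times[n]\) is compact, since \(X_n\) is closed and bounded in \(\Omega^n\). Moreover \(\In_n\) is continuous, so a convergent subsequence exists there.

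\textbf{Case 2: infinitely many boundary states.} Then \(X_\infty\times[0,1]\) is compact by Arzel\`a--Ascoli, using the common modulus \(\omega\) and the boundedness of \(\Omega\).

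\textbf{Case 3: resolutions \(n_k\to\infty\).} This is the main case. By Lemma~\ref{lem:modulus-reconstruction}, \(\In_{n_k}z^k\in X_\infty\), which is compact, so a subsequence converges uniformly to some \(x\in X_\infty\). Passing to a further subsequence, \(i_k/n_k\to t\in[0,1]\), and \(h=1/n_k\to0\). Hence \(\xi_k\to(\infty,x,t)\) in \(d_{\mathfrak X}\). The only point needing care is that the limit path satisfies the modulus; this follows because \(X_\infty\) is closed under uniform limits.

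\emph{Convergence criterion.} The criterion \eqref{eq:boundary-convergence-v2} is just the statement that each of the three terms of \(d_{\mathfrak X}\) tends to zero. Here \(\abs{1/n_k-0}\to0\) is equivalent to \(n_k\to\infty\).

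\emph{Density.} Given \((\infty,x,t)\), set \(z^n\eqdef\mathcal S_nx\in X_n\) and \(i_n\eqdef\max\{1,\lceil nt\rceil\}\). Then
\[
\norm{\In_nz^n-x}_\infty\leq\omega(1/n)
\]
by \eqref{eq:modulus-sampling-error-v2}, and \(\abs{i_n/n-t}\leq1/n\). So the total distance is at most \(\omega(1/n)+2/n\to0\).

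The only delicate step is identifying the finite state from the path term in the definiteness argument. It uses \(\mathcal S_n\circ\In_n=\operatorname{Id}\) from Lemma~\ref{lem:modulus-reconstruction}, together with the fact that \(h\) pins down \(n\).
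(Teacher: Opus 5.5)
Your proof is correct and is essentially the paper's argument. The paper proves compactness by showing that the injective map \(J(\xi)=(P\xi,\tau(\xi),h(\xi))\) has closed image in the compact product \(X_\infty\times[0,1]\times S\), with the same split according to whether the resolution code stabilizes at some \(1/n\) or tends to zero, and it proves density by sampling \(x\) on the grid with a clipped rounded index, just as you do with your ceiling index. Your sequential case analysis unpacks that closed-image argument; its exhaustiveness uses the implicit step that if no finite \(n\) recurs and only finitely many terms are boundary states, then \(n_k\to\infty\).
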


\begin{proof}
\smallskip\noindent\emph{Embedding and metric.}\hspace{0.25em}
The path space \(X_\infty\) is compact in the uniform norm by
Arzel\`a--Ascoli: its paths are uniformly bounded and share the modulus
\(\omega\), while the defining conditions are closed under uniform
convergence.  Define
\begin{equation}
\label{eq:compactification-embedding-v2}
J:\mathfrak X\longrightarrow X_\infty\times[0,1]\times S,
\qquad
J(\xi)\eqdef(P\xi,\tau(\xi),h(\xi)).
\end{equation}
Equip the product with the \(\ell^1\) metric
\[
d_{\mathrm{prod}}\bigl((x,t,h),(x',t',h')\bigr)
\eqdef
\norm{x-x'}_\infty+\abs{t-t'}+\abs{h-h'}.
\]
The map is injective.  Indeed, the last coordinate identifies either a finite
length \(n\) or the boundary.  At fixed finite \(n\), the affine interpolant
recovers every \(z_j\) at \(j/n\), and the normalized evaluation time recovers
the index.

\smallskip\noindent\emph{Closed image.}\hspace{0.25em}
The image of \(J\) is closed.  Consider a convergent sequence of image points.  If its resolution code converges to \(1/n>0\), that code is isolated in \(S\), so the lengths are eventually equal to \(n\).  The finite set of evaluation times is closed.  Moreover, \(X_n\) is a closed subset of the compact set \(\Omega^n\), so it is compact and its continuous image \(\In_nX_n\) is compact, hence closed, in \(X_\infty\).  The limit is therefore again a length-\(n\) image point.  If the resolution code converges to zero, the limiting path and time already define a boundary state.  Thus \(J(\mathfrak X)\) is a closed subset of a compact metric space and is compact.  The pullback of this \(\ell^1\) product metric is precisely \eqref{eq:joint-metric-v2}, proving both positive definiteness and compactness.  The same coordinates give \eqref{eq:boundary-convergence-v2}.

\smallskip\noindent\emph{Density of finite states.}\hspace{0.25em}
For density, fix \((\infty,x,t)\).  Set
\[
z_j^n\eqdef x(j/n),
\qquad
i_n\eqdef\max\{1,\min\{n,\operatorname{round}(nt)\}\}.
\]
Then \(z^n\in X_n\),
\(
\norm{\In_nz^n-x}_\infty\leq\omega(1/n)
\), and
\(
\abs{i_n/n-t}\leq1/n
\).
Hence, \((n,z^n,i_n)\to(\infty,x,t)\).
\end{proof}

\subsection{Compatible targets and inherited boundary causality}

We next translate continuity on the compactified state space into the uniform
output convergence needed by the theorem and show that finite causality
survives at the boundary.

For a discrete family and a candidate path extension, define the joint
evaluation by
\begin{equation}
\label{eq:joint-target-v2}
\overline F^\star(\xi)
\eqdef
\begin{cases}
F_n^\star(z)_i,&\xi=(n,z,i),\\
F_\infty^\star(x)(t),&\xi=(\infty,x,t).
\end{cases}
\end{equation}
The required boundary causality is
\begin{equation}
\label{eq:boundary-causality-v2}
x|_{[0,t]}=x'|_{[0,t]}
\quad\Longrightarrow\quad
F_\infty^\star(x)(t)=F_\infty^\star(x')(t).
\end{equation}

\begin{lemma}[Target compatibility and inherited boundary causality]
\label{lem:target-compatibility}
Assume that every \(F_n^\star\) is continuous and that
\(F_\infty^\star:X_\infty\to C([0,1];\R^{d'})\) is continuous.  Then the
following are equivalent:
\begin{enumerate}[label=(\alph*),leftmargin=*,itemsep=2pt,topsep=3pt]
    \item the joint evaluation map \(\overline F^\star\) in \eqref{eq:joint-target-v2} is continuous;
    \item for every \(n_k\to\infty\), \(z^k\in X_{n_k}\), and \(x\in X_\infty\) with \(\In_{n_k}z^k\to x\),
    \begin{equation}
    \label{eq:uniform-output-compatibility-v2}
    \In_{n_k}\bigl(F_{n_k}^\star(z^k)\bigr)
    \longrightarrow F_\infty^\star(x)
    \quad\text{uniformly on }[0,1].
    \end{equation}
    \item the uniform extension condition
    \eqref{eq:continuous-extension-v3} holds.
\end{enumerate}
If the uniform extension condition holds, its continuous path extension is
unique.
If, in addition, the finite family is prefix-causal, then any of these
conditions implies that \(F_\infty^\star\) is causal in the sense of
\eqref{eq:boundary-causality-v2}.
\end{lemma}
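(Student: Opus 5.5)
The plan is to treat the three conditions as different descriptions of continuity of \(\overline F^\star\) on the compact metric space \((\mathfrak X,d_{\mathfrak X})\) of Lemma~\ref{lem:joint-compactification}. We prove the equivalences in the order (a)\(\Rightarrow\)(b)\(\Rightarrow\)(c)\(\Rightarrow\)(a). Uniqueness and boundary causality are then read off from the resulting continuity.

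\emph{(a)\(\Rightarrow\)(b).} Since \(\mathfrak X\) is compact, \(\overline F^\star\) is uniformly continuous. Fix \(n_k\to\infty\) with \(\In_{n_k}z^k\to x\), and a time \(t\in[0,1]\). Pick an index \(i\) with \(\abs{i/n_k-t}\leq 1/n_k\). By \eqref{eq:boundary-convergence-v2}, the state \((n_k,z^k,i)\) lies within \(o(1)\) of \((\infty,x,t)\), uniformly in \(t\), because \(d_{\mathfrak X}\) does not depend on \(t\) except through \(\abs{i/n_k-t}\). Uniform continuity then gives \(\sup_i\norm{F_{n_k}^\star(z^k)_i-F_\infty^\star(x)(i/n_k)}\to0\). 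The path \(F_\infty^\star(x)\) is uniformly continuous, and \(\In_{n_k}\) of a sequence is a convex combination of neighbouring grid values. Hence \(\In_{n_k}F_{n_k}^\star(z^k)\) is uniformly close to \(F_\infty^\star(x)\), which is \eqref{eq:uniform-output-compatibility-v2}.

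\emph{(b)\(\Rightarrow\)(c).} Argue by contradiction. If (c) fails, there are \(\varepsilon>0\), \(n_k\to\infty\) and \(z^k\in X_{n_k}\) with \(\norm{\In_{n_k}F^\star_{n_k}(z^k)-F_\infty^\star(\In_{n_k}z^k)}_\infty\geq\varepsilon\). A fixed finite \(n\) cannot recur infinitely often, since the supremum in \eqref{eq:continuous-extension-v3} concerns only the tail, so \(n_k\to\infty\) may be assumed. By Lemma~\ref{lem:modulus-reconstruction}, \(\In_{n_k}z^k\in X_\infty\), which is compact, so a subsequence converges to some \(x\). Now (b) gives \(\In_{n_k}F^\star_{n_k}(z^k)\to F_\infty^\star(x)\), and continuity of \(F_\infty^\star\) gives \(F_\infty^\star(\In_{n_k}z^k)\to F_\infty^\star(x)\). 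This contradicts the \(\varepsilon\) gap.

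\emph{(c)\(\Rightarrow\)(a).} Continuity at finite states is automatic: each resolution code \(1/n\) is isolated, so a neighbourhood of a finite state contains only states of length \(n\), and there \(F_n^\star\) is continuous while the index is locally constant. Continuity of \(\overline F^\star\) restricted to the boundary follows from continuity of \(F_\infty^\star\) into \(C([0,1])\), combined with the equicontinuity of its compact image. This is the same modulus \(\omega_{\mathcal K}\) as in Lemma~\ref{lem:canonical-completion}, and it handles changes in \(t\).

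The main obstacle is a finite sequence \((n_k,z^k,i_k)\to(\infty,x,t)\). We split
\begin{equation*}
F^\star_{n_k}(z^k)_{i_k}-F_\infty^\star(x)(t)
=\Bigl[\In_{n_k}F^\star_{n_k}(z^k)-F^\star_\infty(\In_{n_k}z^k)\Bigr](i_k/n_k)
+\Bigl[F^\star_\infty(\In_{n_k}z^k)(i_k/n_k)-F^\star_\infty(x)(t)\Bigr].
\end{equation*}
The first bracket is controlled by (c), using the interpolation identity \eqref{eq:interpolation-v2}. The second tends to zero by joint continuity of the boundary evaluation.

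\emph{Uniqueness.} If \(G\) is a second extension, take \(x\in X_\infty\) and \(z^n=\mathcal S_nx\). Applying the triangle inequality to \eqref{eq:continuous-extension-v3} for both extensions gives \(\norm{F_\infty^\star(\In_nz^n)-G(\In_nz^n)}_\infty\to0\). Since \(\In_nz^n\to x\) by \eqref{eq:modulus-sampling-error-v2} and both maps are continuous, \(F_\infty^\star(x)=G(x)\).

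\emph{Boundary causality.} Suppose \(x|_{[0,t]}=x'|_{[0,t]}\). Set \(z^n=\mathcal S_nx\), choose the largest \(i_n\) with \(i_n/n\leq t\), and define \(z'^n\) by taking \(z^n\) on indices \(\leq i_n\) and \(\mathcal S_nx'\) afterwards. Then \(z'^n\in X_n\): for pairs \(j\leq i_n<k\), the bound \(\norm{x(j/n)-x'(k/n)}\leq\omega((k-j)/n)\) holds because \(x(j/n)=x'(j/n)\). Prefix-causality gives \(F_n^\star(z^n)_{i_n}=F_n^\star(z'^n)_{i_n}\). Moreover \(\In_n z'^n\to x'\), since \(\In_nz'^n\) agrees with \(\In_n\mathcal S_nx'\) off a single cell and both are within \(\omega(2/n)\)-type errors of \(x'\). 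Passing to the limit with (a) yields \(F_\infty^\star(x)(t)=F_\infty^\star(x')(t)\). When \(t<1/n\) eventually (that is, \(t=0\)), the same argument works with \(i_n=1\).
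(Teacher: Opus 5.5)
Your cycle (a)$\Rightarrow$(b)$\Rightarrow$(c)$\Rightarrow$(a) and your uniqueness argument are correct. The paper instead proves (a)$\Leftrightarrow$(b) and (b)$\Leftrightarrow$(c); your direct (c)$\Rightarrow$(a) split is an equally short route. You should add the one-line remark that a sequence converging to a boundary state splits into boundary and finite subsequences. Your causality argument for $t>0$ is also fine, but the splicing there is vacuous. For $j\leq i_n$ one has $j/n\leq t$, so $z^n_j=x(j/n)=x'(j/n)$, hence $(z')^n=\mathcal S_nx'$ exactly, which is the paper's argument.

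\textbf{The gap is the case $t=0$.} There the only shared information is $x(0)=x'(0)$, while the first token sits at time $1/n>0$. Your justification ``because $x(j/n)=x'(j/n)$'' fails at $j=1$. The spliced sequence $(x(1/n),x'(2/n),\ldots,x'(1))$ need not lie in $X_n$. For example, take $d=1$, $\omega(r)=r$, $x(s)=s$, $x'(s)=-s$: its first two entries differ by $3/n>\omega(1/n)$, so $F_n^\star$ is not even defined there. The unspliced samples $\mathcal S_nx,\mathcal S_nx'$ do not help either. Their first tokens $x(1/n)$ and $x'(1/n)$ differ, so prefix causality says nothing. Hence ``the same argument works with $i_n=1$'' is false.

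The missing idea is to shift the samples so that the first token is the shared value. Put $z^n_j\eqdef x((j-1)/n)$ and $(z')^n_j\eqdef x'((j-1)/n)$. These equal $\mathcal S_nq_n$ and $\mathcal S_nq'_n$ for $q_n(s)\eqdef x((s-1/n)_+)$ and $q'_n(s)\eqdef x'((s-1/n)_+)$. The shifted paths lie in $X_\infty$ because $s\mapsto(s-1/n)_+$ is $1$-Lipschitz and $\omega$ is nondecreasing, so $z^n,(z')^n\in X_n$ by Lemma~\ref{lem:modulus-reconstruction}. Moreover $\norm{\In_nz^n-x}_\infty\leq\norm{\In_n\mathcal S_nq_n-q_n}_\infty+\norm{q_n-x}_\infty\leq2\omega(1/n)$, and likewise for $x'$. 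Therefore $(n,z^n,1)\to(\infty,x,0)$ and $(n,(z')^n,1)\to(\infty,x',0)$. Both first tokens equal $x(0)=x'(0)$, so prefix causality and continuity of $\overline F^\star$ give $F_\infty^\star(x)(0)=F_\infty^\star(x')(0)$.
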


\begin{proof}
\smallskip\noindent\emph{From joint continuity to uniform path convergence.}\hspace{0.25em}
Suppose first that \(\overline F^\star\) is continuous.  Since \(\mathfrak X\) is compact, it is uniformly continuous.  If \(\In_{n_k}z^k\to x\), then uniformly in \(i\in[n_k]\),
\begin{equation}
\label{eq:grid-target-modulus-v2}
d_{\mathfrak X}
\bigl((n_k,z^k,i),(\infty,x,i/n_k)\bigr)
=\norm{\In_{n_k}z^k-x}_\infty+1/n_k
\longrightarrow0.
\end{equation}
Consequently,
\begin{equation}
\label{eq:grid-output-close-v2}
\max_{i\in[n_k]}
\norm{F_{n_k}^\star(z^k)_i-F_\infty^\star(x)(i/n_k)}
\longrightarrow0.
\end{equation}
To make the interpolation step explicit, put
\(g\eqdef F_\infty^\star(x)\) and let \(\omega_g\) be its modulus of continuity.
The affine interpolation operator is nonexpansive for the maximum norm, so
\[
 \norm{\In_{n_k}F_{n_k}^\star(z^k)-g}_\infty
 \leq
 \max_{i\in[n_k]}
 \norm{F_{n_k}^\star(z^k)_i-g(i/n_k)}
 +\omega_g(1/n_k).
\]
Together with \eqref{eq:grid-output-close-v2}, this proves
\eqref{eq:uniform-output-compatibility-v2}.

\smallskip\noindent\emph{From uniform path convergence to joint continuity.}\hspace{0.25em}
Conversely, consider a convergent sequence in \(\mathfrak X\).  At a finite
limit, the resolution code and normalized evaluation time force \(n\) and
\(i\) to be eventually constant, so continuity follows from continuity of
\(F_n^\star\).  A sequence converging to a boundary state may alternate
between boundary and finite terms.  On the boundary subsequence, convergence
follows from continuity into the uniform norm and continuity of evaluation.
On the finite subsequence, write
\(
(n_k,z^k,i_k)\to(\infty,x,t)
\).
Condition (b) and \(i_k/n_k\to t\) give
\[
F_{n_k}^\star(z^k)_{i_k}
=(\In_{n_k}F_{n_k}^\star(z^k))(i_k/n_k)
\longrightarrow
F_\infty^\star(x)(t).
\]
Both subsequences have the same limit, so the full sequence converges and \(\overline F^\star\) is continuous.

\smallskip\noindent\emph{Equivalent extension criterion.}\hspace{0.25em}
It remains to compare (b) and (c).  Condition (c) implies (b) because
\begin{align*}
&\norm{
\In_{n_k}F_{n_k}^\star(z^k)-F_\infty^\star(x)
}_\infty\\
&\qquad\leq
\sup_{w\in X_{n_k}}
\norm{
\In_{n_k}F_{n_k}^\star(w)
-F_\infty^\star(\In_{n_k}w)
}_\infty
+\norm{
F_\infty^\star(\In_{n_k}z^k)-F_\infty^\star(x)
}_\infty,
\end{align*}
and both terms tend to zero.  Conversely, if (c) failed, there would be
\(\varepsilon>0\) and lengths \(n_k\to\infty\) for which the supremum in
\eqref{eq:continuous-extension-v3} is at least \(\varepsilon\).  Choose
\(z^k\in X_{n_k}\) with discrepancy at least \(\varepsilon/2\).
Compactness of \(X_\infty\) gives, after extraction,
\(\In_{n_k}z^k\to x\).  Condition (b) makes the first path in that
discrepancy converge to \(F_\infty^\star(x)\), while continuity of
\(F_\infty^\star\) does the same for the second, a contradiction. Hence
(b) and (c) are equivalent.

\smallskip\noindent\emph{Uniqueness.}\hspace{0.25em}
Let \(G_\infty\) and \(\widetilde G_\infty\) be two
continuous path extensions of the same discrete family.  For
\(x\in X_\infty\), set \(z^n\eqdef\mathcal S_nx\) and
\(A_n\eqdef\In_nF_n^\star(z^n)\).  By
Lemma~\ref{lem:modulus-reconstruction}, \(\In_nz^n\to x\), while the two
extension conditions and the triangle inequality give
\[
 \norm{G_\infty(\In_nz^n)-\widetilde G_\infty(\In_nz^n)}_\infty
 \leq
 \norm{G_\infty(\In_nz^n)-A_n}_\infty
 +\norm{A_n-\widetilde G_\infty(\In_nz^n)}_\infty
 \longrightarrow0.
\]
Continuity of both extensions yields
\(G_\infty(x)=\widetilde G_\infty(x)\).

\smallskip\noindent\emph{Inherited boundary causality.}\hspace{0.25em}
Suppose
\(
x|_{[0,t]}=x'|_{[0,t]}
\).
If \(t>0\), sample
\(
z_j^n\eqdef x(j/n)
\)
and
\(
(z')_j^n\eqdef x'(j/n)
\), and set \(i_n\eqdef\lfloor nt\rfloor\) for all sufficiently large \(n\).  The two discrete prefixes agree through \(i_n\), and the corresponding states converge to \((\infty,x,t)\) and \((\infty,x',t)\).  Finite causality and continuity of \(\overline F^\star\) yield equality of the two boundary values.  At \(t=0\), instead use shifted samples
\(
z_j^n\eqdef x((j-1)/n)
\)
and
\(
(z')_j^n\eqdef x'((j-1)/n)
\), and compare the first outputs.  Define \(q_n(s)\eqdef x((s-1/n)_+)\) and
\(q_n'(s)\eqdef x'((s-1/n)_+)\).  These shifted paths lie in \(X_\infty\) because
the time shift is one-Lipschitz and \(\omega\) is nondecreasing.  The
displayed sequences are \(\mathcal S_nq_n\) and \(\mathcal S_nq_n'\), hence
belong to \(X_n\).  Lemma~\ref{lem:modulus-reconstruction} gives
\[
\norm{\In_nz^n-x}_\infty
\leq
\norm{\In_n\mathcal S_nq_n-q_n}_\infty
+\norm{q_n-x}_\infty
\leq2\omega(1/n),
\]
where \(\norm{q_n-x}_\infty\leq\omega(1/n)\).  The same estimate holds for
\(x'\).  Therefore \((n,z^n,1)\to(\infty,x,0)\) and
\((n,(z')^n,1)\to(\infty,x',0)\).  Their first tokens agree because
\(x(0)=x'(0)\); finite prefix causality and continuity of
\(\overline F^\star\) yield equality of the two boundary values at \(t=0\).
\end{proof}

\Needspace{12\baselineskip}

\subsection{Prefix laws identify causal histories}

The proof needs a concrete state variable that forgets the future without
losing any causal information; token--position prefix laws provide exactly
this quotient.

We equip \(\cP(E)\) with the weak topology.  It is compact and Hausdorff because \(E=\Omega\times[0,1]\) is compact metric.

A finite state and a boundary state determine, respectively, the prefix laws
\begin{equation}
\label{eq:finite-prefix-law-v2}
\mu_{n,z,i}
\eqdef
\frac1i\sum_{j=1}^{i}\delta_{(z_j,j/n)}
\end{equation}
and
\begin{equation}
\label{eq:continuous-prefix-law-v2}
\mu_{\infty,x,t}
\eqdef
\begin{cases}
\displaystyle
\frac1t\int_0^t\delta_{(x(s),s)}\,\dd s,&t>0,\\[1.2ex]
\delta_{(x(0),0)},&t=0.
\end{cases}
\end{equation}

\begin{proposition}[Continuous prefix-law representation]
\label{prop:prefix-law}
The map
\begin{equation}
\label{eq:prefix-law-map-v2}
\boldsymbol\mu:\mathfrak X\to\cP(E),
\qquad
\boldsymbol\mu(\xi)\eqdef\mu_\xi,
\end{equation}
defined in \eqref{eq:finite-prefix-law-v2}--\eqref{eq:continuous-prefix-law-v2} is continuous.  Moreover, \(\mu_\xi=\mu_{\xi'}\) if and only if:
\begin{enumerate}[label=(\roman*),leftmargin=*,itemsep=2pt,topsep=3pt]
    \item \(\xi=(n,z,i)\), \(\xi'=(n,z',i)\), and \(z_{1:i}=z'_{1:i}\); or
    \item \(\xi=(\infty,x,t)\), \(\xi'=(\infty,x',t)\), and \(x|_{[0,t]}=x'|_{[0,t]}\).
\end{enumerate}
Thus \(\cM=\boldsymbol\mu(\mathfrak X)\) is compact, and \(\boldsymbol\mu\) induces a canonical homeomorphism from \(\mathfrak X/\!\sim_{\mathrm{pref}}\), equipped with the quotient topology, onto \(\cM\).  The tagged endpoint also descends to a unique continuous map
\begin{equation}
\label{eq:prefix-endpoint-extractor-v2}
\mathfrak e:\cM\to E,
\qquad
\mathfrak e(\mu_\xi)
=\bigl(P\xi(\tau(\xi)),\tau(\xi)\bigr).
\end{equation}
Writing \(\tau(\mu)\) for the time coordinate of \(\mathfrak e(\mu)\), the
resolution code descends to the continuous map \(\mathfrak h:\cM\to S\)
given by
\begin{equation}
\label{eq:prefix-resolution-extractor-v2}
\mathfrak h(\mu)
\eqdef 2\int_E s\,\dd\mu(x,s)-\tau(\mu),
\qquad
\mathfrak h(\mu_\xi)=h(\xi).
\end{equation}
For every continuously extendable causal family, completed by its unique path
extension, there is a unique continuous
\begin{equation}
\label{eq:factored-target-v2}
f^\star:\cM\to\R^{d'}
\quad\text{such that}\quad
\overline F^\star=f^\star\circ\boldsymbol\mu.
\end{equation}
\end{proposition}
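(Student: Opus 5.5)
I will prove the four assertions in order: continuity of \(\boldsymbol\mu\), identification of the fibers, the quotient homeomorphism with descent of the extractors, and the factorization \(\overline F^\star=f^\star\circ\boldsymbol\mu\).

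\textbf{Continuity of \(\boldsymbol\mu\).} At a finite limit state, Lemma~\ref{lem:joint-compactification} shows that \(n\) and \(i\) are eventually constant. Hence \(\mu_{n,z,i}\) depends continuously on the finitely many tokens \(z_j=(\In_nz)(j/n)\).

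At a boundary state \((\infty,x,t)\), take a sequence \(\xi_k\to(\infty,x,t)\) and a test function \(\varphi\in C(E)\). The prefix law integrates \(\varphi\) to
\[
\frac1{i_k}\sum_{j\le i_k}\varphi\bigl(P\xi_k(j/n_k),j/n_k\bigr),
\]
since \(z_j=P\xi_k(j/n_k)\).
\begin{itemize}
\item If \(t>0\), this is a normalized right-endpoint Riemann sum of \(s\mapsto\varphi(P\xi_k(s),s)\) over \([0,i_k/n_k]\). The paths \(P\xi_k\) converge uniformly to \(x\), and \(\varphi\) is uniformly continuous on compact \(E\). So the sum converges to \(t^{-1}\int_0^t\varphi(x(s),s)\,\dd s\).
\item If \(t=0\), every atom \((z_j,j/n_k)\) with \(j\le i_k\) lies within \(\omega(i_k/n_k)+i_k/n_k\to0\) of \((x(0),0)\). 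So the law converges weakly to \(\delta_{(x(0),0)}\).
\end{itemize}
Boundary-to-boundary convergence is handled the same way: \(t^{-1}\int_0^t\varphi(x(s),s)\,\dd s\) is jointly continuous in \((x,t)\) for \(t>0\), and for \(t_k\to0\) the collapsing argument above applies.

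\textbf{Fibers.} I read off information from the time marginal of \(\mu_\xi\).
\begin{itemize}
\item For a finite state it is uniform on \(\{1/n,\dots,i/n\}\), which is atomic. For a boundary state with \(t>0\) it is Lebesgue measure normalized on \([0,t]\). For \(t=0\) it is \(\delta_0\). These never coincide with each other, except that \(\delta_0\) is atomic; but the smallest finite atom is \(1/n>0\), so \(\delta_0\) is still distinguished.
\item The atomic support \(\{1/n,\dots,i/n\}\) determines \(n\) and \(i\). Conditioning on the atom \(j/n\) gives \(\delta_{z_j}\), which recovers \(z_{1:i}\).
\item On the boundary, the support of the time marginal gives \(t\). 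For \(t>0\), the law is the pushforward of normalized Lebesgue measure under the graph map \(s\mapsto(x(s),s)\). Disintegrating in \(s\) determines \(x(s)\) for a.e.\ \(s\le t\), and continuity of \(x\) then determines \(x|_{[0,t]}\). At \(t=0\) the law determines \(x(0)\).
\end{itemize}
The converse implications are immediate from the definitions.

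\textbf{Compactness, quotient, and extractors.} \(\cM\) is compact as a continuous image of the compact space \(\mathfrak X\) (Lemma~\ref{lem:joint-compactification}). A continuous surjection from a compact space onto a Hausdorff space is a closed quotient map. Therefore any continuous function on \(\mathfrak X\) that is constant on fibers descends to a unique continuous function on \(\cM\). I apply this to three functions:
\begin{itemize}
\item The tagged endpoint \(\xi\mapsto(P\xi(\tau(\xi)),\tau(\xi))\). It is continuous on \(\mathfrak X\) and constant on fibers by the fiber description, so it descends to \(\mathfrak e\).
\item The function \(h\). It descends to \(\mathfrak h\), and the formula follows from computing \(\int s\,\dd\mu\): it equals \((i+1)/(2n)\) for a finite state and \(t/2\) for a boundary state.
\item The target \(\overline F^\star\). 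It is continuous by Lemma~\ref{lem:target-compatibility}. It is constant on fibers by finite prefix-causality and by the inherited boundary causality from the same lemma, so it descends to \(f^\star\).
\end{itemize}
Uniqueness of each descended map follows from surjectivity of \(\boldsymbol\mu\).

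\textbf{Main obstacle.} I expect continuity of \(\boldsymbol\mu\) at boundary states with \(t=0\) to be the hardest step, along with the mixed finite/boundary sequences near that point. The difficulty is that the normalization \(1/i\) or \(1/t\) degenerates there. I will handle it with the collapsing estimate above, which uses only the common modulus \(\omega\) and \(i_k/n_k\to0\), together with the uniform convergence \(P\xi_k\to x\).
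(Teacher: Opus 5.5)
Your proposal is correct and follows essentially the same route as the paper: continuity on each stratum and across resolutions, identification of the fibers through the time marginal, and descent of the endpoint, resolution, and target maps through the closed quotient map $\boldsymbol\mu:\mathfrak X\to\cM$. Your Riemann-sum and disintegration steps are harmless variants of the paper's step-path pushforward and its explicit test function $\varphi=x_\ell-x'_\ell$. The only slip is in the $t=0$ collapsing estimate: the atoms lie within $\omega(i_k/n_k)+i_k/n_k+\norm{P\xi_k-x}_\infty$ of $(x(0),0)$, not within $\omega(i_k/n_k)+i_k/n_k$, and you already invoke this extra term in your closing paragraph.
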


\begin{proof}
\smallskip\noindent\emph{Continuity on fixed strata.}\hspace{0.25em}
At a finite state, the resolution and evaluation index are locally constant
in \(\mathfrak X\), after which continuity follows directly from continuity
of the atoms in \eqref{eq:finite-prefix-law-v2}.  On the boundary, let
\((x_k,t_k)\to(x,t)\) uniformly and in time.  For every \(g\in C(E)\),
integration against \(\mu_{\infty,x_k,t_k}\) on the subsequence with
\(t_k=0\) is evaluation at \((x_k(0),0)\), which converges to
\(g(x(0),0)\).  On the subsequence with \(t_k>0\), uniform continuity of
\(g\) gives convergence of
\(
t_k^{-1}\int_0^{t_k}g(x_k(s),s)\,\dd s
\)
to the corresponding integral when \(t>0\).  If \(t=0\), the entire integration interval shrinks to zero and the paths converge uniformly, so the averages converge to \(g(x(0),0)\).  This proves weak continuity on the boundary.

\smallskip\noindent\emph{Continuity across resolutions.}\hspace{0.25em}
Now consider finite-to-boundary convergence
\(
(n_k,z^k,i_k)\to(\infty,x,t)
\).
Let \(t_k\eqdef i_k/n_k\), and define the step path
\begin{equation}
\label{eq:position-token-step-v2}
y_k(s)\eqdef\left(z_j^k,j/n_k\right)
\quad\text{for }s\in((j-1)/n_k,j/n_k],
\qquad
y_k(0)\eqdef(z_1^k,1/n_k).
\end{equation}
The token component of \(y_k\) differs from \(\In_{n_k}z^k\) by at most
\(\omega(1/n_k)\). Hence, \(\In_{n_k}z^k\to x\) implies
\begin{equation}
\label{eq:position-token-step-conv-v2}
\norm{y_k-[s\mapsto(x(s),s)]}_\infty
\leq
\norm{\In_{n_k}z^k-x}_\infty
+\omega(1/n_k)+n_k^{-1}
\longrightarrow0.
\end{equation}
For \(t_k>0\), the empirical law \(\mu_{n_k,z^k,i_k}\) is exactly the pushforward of \(\Unif[0,t_k]\) by \(y_k\).  If \(t>0\), \eqref{eq:position-token-step-conv-v2} and \(t_k\to t\) give weak convergence to \eqref{eq:continuous-prefix-law-v2}.  If \(t=0\), then
\[
\sup_{0\leq s\leq t_k}
\norm{y_k(s)-(x(0),0)}
\leq
\norm{y_k-[s\mapsto(x(s),s)]}_\infty
+\sup_{0\leq s\leq t_k}\bigl(\norm{x(s)-x(0)}+s\bigr)
\longrightarrow0.
\]
Hence, the laws converge to the required Dirac mass.  A general sequence approaching a boundary state splits into boundary and finite subsequences; the two arguments above give the same limiting law.  Thus \(\boldsymbol\mu\) is continuous everywhere.

\smallskip\noindent\emph{Identification of the fibers.}\hspace{0.25em}
Project a prefix law onto its time coordinate.  A finite state gives
\begin{equation}
\label{eq:finite-time-marginal-v2}
\frac1i\sum_{j=1}^{i}\delta_{j/n}.
\end{equation}
Its minimum support point is \(1/n\), which determines \(n\), and its number of atoms then determines \(i\).  At each distinct time \(j/n\), equality of the joint laws identifies the unique attached token \(z_j\).  Thus equality of two finite laws is equivalent to equality of the tagged finite prefixes.

A boundary state has time marginal \(\Unif[0,t]\) when \(t>0\), and \(\delta_0\) when \(t=0\).  These marginals determine \(t\) and cannot equal the finite marginal \eqref{eq:finite-time-marginal-v2}.  At \(t=0\), equality of the graph Dirac masses identifies \(x(0)\).  At \(t>0\), equality of the graph laws implies, after cancelling the common normalization factor \(1/t\), that for every continuous \(\varphi:[0,1]\to\R\) and every coordinate \(\ell\),
\[
\int_0^t\varphi(s)x_\ell(s)\,\dd s
=
\int_0^t\varphi(s)x'_\ell(s)\,\dd s.
\]
Use the explicit continuous extension
\[
\varphi(s)\eqdef
\begin{cases}
x_\ell(s)-x'_\ell(s),&0\leq s\leq t,\\
x_\ell(t)-x'_\ell(t),&t\leq s\leq1.
\end{cases}
\]
The displayed identity gives
\(
\int_0^t\abs{x_\ell(s)-x'_\ell(s)}^2\,\dd s=0
\).
Continuity yields equality at every time, so the paths agree on \([0,t]\).  This proves the fiber statement.

\smallskip\noindent\emph{Quotient, endpoint, and target factorization.}\hspace{0.25em}
Compactness of \(\cM\) follows because it is the continuous image of compact
\(\mathfrak X\).  Let \(q:\mathfrak X\to\mathfrak X/\!\sim_{\mathrm{pref}}\)
be the quotient map.  The fiber statement gives a unique bijection
\(\widetilde{\boldsymbol\mu}\) satisfying
\(\boldsymbol\mu=\widetilde{\boldsymbol\mu}\circ q\).  It is continuous by
the definition of the quotient topology.  Its domain is compact as a quotient
of compact \(\mathfrak X\), while \(\cM\) is Hausdorff; hence it is a
homeomorphism.

The tagged endpoint map
\[
\xi\longmapsto
\bigl(P\xi(\tau(\xi)),\tau(\xi)\bigr)
\]
is continuous: this follows from uniform convergence of the path coordinate,
convergence of the time coordinate, and continuity of the limiting path.  The
fiber characterization shows that it is constant on the fibers of
\(\boldsymbol\mu\).  Since \(\boldsymbol\mu\) is a quotient map, it therefore
descends uniquely to the continuous map \(\mathfrak e\) in
\eqref{eq:prefix-endpoint-extractor-v2}.

For a finite prefix, the mean time is \((i+1)/(2n)\), while its endpoint
time is \(i/n\).  Their combination in
\eqref{eq:prefix-resolution-extractor-v2} therefore equals \(1/n\).  At a
boundary prefix the mean time is \(t/2\), also at \(t=0\) under the Dirac
convention, so the same expression is zero.  Continuity follows from weak
continuity of the time moment and continuity of the endpoint extractor.
This formula is directly accessible to the architecture: one zero-score
scalar head with time as its value writes \(\int_E s\,\dd\mu\) into an
initially zero work coordinate, while the residual coordinates retain the
current time.  A pointwise affine combination then recovers \(1/n\) on
finite prefixes and zero at the path boundary, without a separate length
channel.

Finite prefix causality and the inherited boundary causality from Lemma~\ref{lem:target-compatibility} show that \(\overline F^\star\) is constant on every fiber of \(\boldsymbol\mu\), so \(f^\star\) in \eqref{eq:factored-target-v2} is well-defined and unique.  The continuous surjection \(\boldsymbol\mu\) from compact \(\mathfrak X\) to Hausdorff \(\cM\) is a quotient map; hence \(f^\star\) is continuous.
\end{proof}

\subsection{Discrete attention closes under fixed-depth compositions}

Architectural compatibility rests on one stability fact: masked sums converge
uniformly to temporal attention, and this convergence survives every fixed
composition.

For \(u=(u_1,\ldots,u_n)\in(\R^p)^n\), define its step interpolation
\begin{equation}
\label{eq:generic-step-v2}
(\Cn_nu)(s)\eqdef u_j
\quad\text{for }s\in((j-1)/n,j/n],
\qquad
(\Cn_nu)(0)\eqdef u_1.
\end{equation}

\begin{lemma}[Uniform stability of masked attention]
\label{lem:attention-stability}
Let \(n_k\to\infty\), let \(u^k\in(\R^p)^{n_k}\), and suppose
\begin{equation}
\label{eq:hidden-step-assumption-v2}
\norm{\Cn_{n_k}u^k-u}_\infty\longrightarrow0
\end{equation}
for some \(u\in C([0,1];\R^p)\).  For every fixed set of masked multi-head parameters \(\theta\),
\begin{equation}
\label{eq:attention-step-limit-v2}
\norm{
\Cn_{n_k}\bigl(\MAtt_\theta(u^k)\bigr)
-\MAtt_\theta^\infty(u)
}_\infty
\longrightarrow0.
\end{equation}
\end{lemma}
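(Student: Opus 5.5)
The plan is to compare the discrete prefix averages with the temporal integrals head by head, at every query index, with an error bound that is uniform in the index. The residual term needs no work, since \(\Cn_{n_k}u^k\to u\) by hypothesis. Summing finitely many heads and multiplying by the fixed \(W_r\) preserves uniform convergence. It therefore suffices to treat one head with fixed \(Q,K,V\). Write \(a(s,s')\eqdef e^{\ip{Qu(s)}{Ku(s')}}\). The step-path analogue is \(a_k(s,s')\) with \(\Cn_{n_k}u^k\) in place of \(u\).

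The \(u^k\) are eventually uniformly bounded, because their step interpolants converge uniformly to a bounded path. Hence all scores lie in a fixed compact interval, so the weights lie in \([e^{-S},e^{S}]\), and the map \((v,w)\mapsto e^{\ip{Qv}{Kw}}\) is uniformly continuous on the relevant ball. For a query at time \(s\in((i-1)/n_k,i/n_k]\), the discrete head output is
\[
\frac{\int_0^{i/n_k}a_k(s,s')V(\Cn_{n_k}u^k)(s')\,\dd s'}{\int_0^{i/n_k}a_k(s,s')\,\dd s'},
\]
since both the masked sum and the normalization are Riemann sums of the step path. This representation is exact: the \(1/n_k\) factors cancel between numerator and denominator.

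I would then compare this expression with the temporal expression at time \(s\) in two steps. First, replace \(a_k\) and \(\Cn_{n_k}u^k\) by \(a\) and \(u\). Numerator and denominator each change by at most \((i/n_k)\,\epsilon_k\), with \(\epsilon_k\to0\) uniformly, by uniform continuity. Second, change the upper limit from \(i/n_k\) to \(s\), which costs at most \(C/n_k\) in each integral. The ratio is controlled by
\[
\Bigl|\tfrac{N_1}{D_1}-\tfrac{N_2}{D_2}\Bigr|\le \frac{|N_1-N_2|}{D_1}+\frac{|N_2|}{D_2}\,\frac{|D_1-D_2|}{D_1},
\]
where \(|N_2|/D_2\le\sup\norm{Vu}\) is bounded.

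The main obstacle is that the denominators vanish like the prefix length. For the first step this is harmless: we have \(D_1\ge e^{-S}i/n_k\), and the errors are also proportional to \(i/n_k\), so the ratio error is \(O(\epsilon_k)\) uniformly in \(i\). The second step is handled differently for short and long prefixes. Near \(t=0\) the temporal output is a weighted average of \(Vu\) over \([0,s]\), which lies within \(\sup_{s'\le s}\norm{Vu(s')-Vu(0)}\) of \(Vu(0)\); this includes the convention at \(t=0\) in \eqref{eq:limattblk}. The discrete output is likewise a weighted average of values within a small distance of \(Vu(0)\). So I would fix \(\delta>0\) and split the query range. For \(s\le\delta\), both outputs are within \(o_\delta(1)+o_k(1)\) of \(Vu(0)\) by continuity of \(u\) at \(0\). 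For \(s>\delta\), the denominators are bounded below by \(e^{-S}\delta\), and the \(C/n_k\) endpoint error is negligible. Letting \(k\to\infty\) and then \(\delta\to0\) gives \eqref{eq:attention-step-limit-v2}.
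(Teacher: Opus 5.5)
Your argument is correct. It uses the same three ingredients as the paper's proof: the exact Riemann-sum identity for the step path, a positive lower bound on the denominators once the prefix length is bounded below, and the observation that near \(t=0\) both outputs are weighted averages of values close to \(Vu(0)\). What differs is how these are assembled.

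The paper proceeds in three steps.
\begin{itemize}
\item It proves a sequential joint-continuity claim, \(\MAtt_\theta^{\infty,\mathrm B}(v_k)(t_k)\to\MAtt_\theta^\infty(v)(t)\) for bounded Borel \(v_k\to v\) and \(t_k\to t\), splitting into the cases \(t>0\) and \(t=0\).
\item It gets uniformity at the grid points \(i/n_k\) by contradiction, extracting offending indices with \(i_k/n_k\to t\).
\item It passes from grid points to the whole step path using uniform continuity of \(\MAtt_\theta^\infty(u)\).
\end{itemize}

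You instead compare at every \(s\), using the actual query \((\Cn_{n_k}u^k)(s)=u^k_i\). This removes the grid-to-step upgrade. Your \(\delta\)-split then replaces both the case split and the compactness argument. The key simplification is your remark that the score and value perturbation errors scale with \(i/n_k\), exactly like the denominator, so only the endpoint shift needs the split.

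Your route gives an explicit uniform bound of order \(\norm{\Cn_{n_k}u^k-u}_\infty+\omega_u(2\delta)+(\delta n_k)^{-1}\), with \(\omega_u\) the modulus of continuity of \(u\). This uses that the exponential score map is Lipschitz on bounded sets, and the bound can be optimized in \(\delta\) to give a rate.

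The paper's route is shorter and yields a byproduct. Applying its claim to the constant sequence \(v_k=u\) gives continuity of \(t\mapsto\MAtt_\theta^\infty(u)(t)\), which is what lets the lemma be reapplied layer by layer in Corollary~\ref{cor:transformer-closure}. In your version that continuity must be supplied separately, though it is elementary.
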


\begin{proof}
\smallskip\noindent\emph{Exact step-path identity.}\hspace{0.25em}
Extend \eqref{eq:temporal-attention-v2} to bounded Borel paths using the same
Lebesgue-integral ratios and endpoint convention, and denote this extension
by \(\MAtt_\theta^{\infty,\mathrm{B}}\).  At the grid point \(i/n_k\),
each cell of the prefix has length \(1/n_k\).  Multiplying a discrete softmax
numerator and denominator by \(1/n_k\) gives the exact identity
\begin{equation}
\label{eq:exact-step-identity-v2}
\MAtt_\theta(u^k)_i
=
\MAtt_\theta^{\infty,\mathrm{B}}(\Cn_{n_k}u^k)(i/n_k).
\end{equation}

\smallskip\noindent\emph{Joint continuity of attention evaluation.}\hspace{0.25em}
Suppose \(v_k\to v\)
uniformly, where each \(v_k\) is a bounded Borel path, \(v\) is continuous,
and \(t_k\to t\).  We claim that
\begin{equation}
\label{eq:step-attention-evaluation-continuity-v2}
\MAtt_\theta^{\infty,\mathrm{B}}(v_k)(t_k)
\longrightarrow
\MAtt_\theta^\infty(v)(t).
\end{equation}
First,
\(
v_k(t_k)\to v(t)
\), and for each head the score and score--value integrands
\[
s\longmapsto
e^{\ip{Q_rv_k(t_k)}{K_rv_k(s)}},
\qquad
s\longmapsto
e^{\ip{Q_rv_k(t_k)}{K_rv_k(s)}}V_rv_k(s)
\]
converge uniformly on \([0,1]\) to their counterparts built from \(v(t)\)
and \(v(s)\).  The paths are uniformly bounded, so for some \(C<\infty\)
all scores lie in \([-C,C]\).  If \(t>0\), then eventually \(t_k\geq t/2\),
and every denominator is at least \((t/2)e^{-C}>0\).  For either the scalar
denominator integrand or the vector numerator integrand, uniform convergence
\(g_k\to g\) gives
\[
 \left\|\int_0^{t_k}g_k(s)\,\dd s-\int_0^t g(s)\,\dd s\right\|
 \leq \min\{t_k,t\}\norm{g_k-g}_\infty
 +\abs{t_k-t}\max\{\norm{g_k}_\infty,\norm{g}_\infty\}
 \longrightarrow0.
\]
Thus both integrals, and hence their ratios, converge.  If \(t=0\), then on
indices with \(t_k=0\), the endpoint convention gives
\(V_rv_k(0)\to V_rv(0)\).  On indices with \(t_k>0\), each normalized head
output is a positive weighted average of \(V_rv_k(s)\) over \([0,t_k]\),
whose distance from \(V_rv(0)\) is bounded by
\[
\norm{V_r}\left(
\norm{v_k-v}_\infty
+\sup_{0\leq s\leq t_k}\norm{v(s)-v(0)}
\right)\longrightarrow0.
\]
Including the residual term proves
\eqref{eq:step-attention-evaluation-continuity-v2} at every \(t\).

\smallskip\noindent\emph{Uniform convergence on the grid.}\hspace{0.25em}
Suppose the grid error did not tend to zero.
Choose offending indices \(i_k\) and pass to a subsequence such that
\(t_k=i_k/n_k\to t\).  With \(v_k=\Cn_{n_k}u^k\),
\eqref{eq:step-attention-evaluation-continuity-v2} and
\eqref{eq:exact-step-identity-v2} give
\[
\MAtt_\theta(u^k)_{i_k}
\longrightarrow \MAtt_\theta^\infty(u)(t).
\]
Applying the joint-continuity claim to the constant sequence \(v_k=u\)
shows that \(\MAtt_\theta^\infty(u)\) is continuous. Hence
\(
\MAtt_\theta^\infty(u)(t_k)
\to\MAtt_\theta^\infty(u)(t)
\), contradicting the offending grid error.  Hence that error tends to zero
uniformly.  Finally, \(\MAtt_\theta^\infty(u)\) is uniformly continuous;
replacing each time by the right endpoint of its grid cell changes it by
\(o(1)\).  This upgrades the grid estimate to the step-path estimate
\eqref{eq:attention-step-limit-v2}.
\end{proof}

\begin{corollary}[Closure of every fixed transformer]
\label{cor:transformer-closure}
Let \(T_\Theta\) be any fixed finite composition of masked-attention blocks,
shared tokenwise affine maps, and pointwise ReLU networks, and let
\(T_{\Theta,\infty}\) use the same parameters with temporal attention.  Let
\(n_k\to\infty\), let \(z^k\in X_{n_k}\), and suppose
\(
\In_{n_k}z^k\to x\in X_\infty
\).
Then
\begin{equation}
\label{eq:full-transformer-closure-v2}
\norm{
\Cn_{n_k}T_\Theta(\phi_{n_k}(z^k))
-T_{\Theta,\infty}(\phi_\infty(x))
}_\infty
\longrightarrow0.
\end{equation}
\end{corollary}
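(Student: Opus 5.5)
The plan is to induct on the number of layers in the composition, proving at each stage the invariant
\[
 \norm{\Cn_{n_k}u^k-u}_\infty\longrightarrow0,
 \qquad u\in C([0,1];\R^p),
\]
where \(u^k\) is the finite hidden sequence after the layers processed so far and \(u\) is the corresponding temporal hidden path. The statement \eqref{eq:full-transformer-closure-v2} is then exactly this invariant after the last layer.

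\textbf{Base case.} Here \(u^k=\phi_{n_k}(z^k)=((z^k_j,j/n_k))_j\) and \(u=\phi_\infty(x)=(x(\cdot),\cdot)\), which is continuous. On the cell \(((j-1)/n_k,j/n_k]\) the token component \(z^k_j=(\In_{n_k}z^k)(j/n_k)\) differs from \((\In_{n_k}z^k)(s)\) by at most \(\omega(1/n_k)\), by Lemma~\ref{lem:modulus-reconstruction}. The position component differs from \(s\) by at most \(1/n_k\), and at \(s=0\) the same bounds hold. This is the estimate \eqref{eq:position-token-step-conv-v2}, which gives
\[
 \norm{\Cn_{n_k}\phi_{n_k}(z^k)-\phi_\infty(x)}_\infty
 \leq\norm{\In_{n_k}z^k-x}_\infty+\omega(1/n_k)+1/n_k\longrightarrow0 .
\]
The lift \(\mathcal E_H\), and any shared affine map, commutes with \(\Cn_n\) because it acts tokenwise. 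The Lipschitz bound \(\norm{Av+b-(Aw+b)}\leq\norm{A}\norm{v-w}\) preserves the invariant, and the limit path stays continuous.

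\textbf{Pointwise ReLU layers.} These also commute with step interpolation, \(\Cn_n(\MLP_\eta(u))=\MLP_\eta\circ\Cn_nu\). \(\MLP_\eta\) is globally Lipschitz, so the invariant passes through and \(\MLP_\eta\circ u\) remains continuous.

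\textbf{Attention blocks.} This is the only layer that is not tokenwise. It is handled entirely by Lemma~\ref{lem:attention-stability}, whose hypothesis \eqref{eq:hidden-step-assumption-v2} is exactly the invariant with continuous limit \(u\). Its conclusion \eqref{eq:attention-step-limit-v2} returns the invariant with limit \(\MAtt_\theta^\infty(u)\). That limit is continuous, as shown inside the lemma's proof by applying the joint-continuity claim to the constant sequence.

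The induction therefore needs no new estimate. The only points to watch are that every limit hidden path is continuous, which the lemma requires, and that step interpolation, unlike affine interpolation, commutes exactly with pointwise maps. Both are routine.
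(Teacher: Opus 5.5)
Your proposal is correct and follows the paper's proof essentially step for step. The ingredients are the same base-case bound $\norm{\In_{n_k}z^k-x}_\infty+\omega(1/n_k)+1/n_k$ for the lifted input, exact commutation of step interpolation with globally Lipschitz tokenwise maps, and Lemma~\ref{lem:attention-stability} for each attention block, assembled by induction over the layers. Your explicit tracking of continuity of each limit hidden path, which the lemma's hypothesis requires, is a detail the paper leaves implicit.
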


\begin{proof}
The token step path differs from its affine reconstruction by at most
\(\omega(1/n_k)\).  Therefore the embedded input step paths satisfy
\[
\norm{\Cn_{n_k}\phi_{n_k}(z^k)-\phi_\infty(x)}_\infty
\leq
\norm{\In_{n_k}z^k-x}_\infty+\omega(1/n_k)+n_k^{-1}
\longrightarrow0.
\]
Lemma~\ref{lem:attention-stability} propagates uniform convergence through
every attention block.  If \(G:\R^p\to\R^q\) underlies either a shared
tokenwise affine map or a shared pointwise ReLU network, then
\(
\Cn_n(G(u_1),\ldots,G(u_n))=G\circ\Cn_nu
\).
Moreover, every such fixed \(G\) is globally Lipschitz, so
\[
\norm{G\circ v_k-G\circ v}_\infty
\leq \operatorname{Lip}(G)\norm{v_k-v}_\infty.
\]
Thus uniform convergence propagates through each pointwise block.  Induction over the finite list of layers proves \eqref{eq:full-transformer-closure-v2}.
\end{proof}

\begin{corollary}[A fixed transformer is an all-resolution causal family]
\label{cor:transformer-compatible-family}
Fix parameters \(\Theta\) for a finite causal transformer with output dimension \(m\), and define
\begin{equation}
\label{eq:transformer-induced-family-v2}
\mathcal T_{\Theta,n}(z)_i
\eqdef T_\Theta(\phi_n(z))_i,
\qquad
\mathcal T_{\Theta,\infty}(x)(t)
\eqdef T_{\Theta,\infty}(\phi_\infty(x))(t).
\end{equation}
Then the discrete family
\((\mathcal T_{\Theta,n})_{n\geq1}\) is a continuously extendable causal
family in the sense of Definition~\ref{def:compatible-family}, with continuous
path extension \(\mathcal T_{\Theta,\infty}\).
\end{corollary}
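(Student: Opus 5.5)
We must verify the three parts of Definition~\ref{def:compatible-family} for the family \(\mathcal T_{\Theta,n}\), with candidate extension \(\mathcal T_{\Theta,\infty}\). Finite continuity and prefix causality come from the structure of the layers. Continuity of \(\mathcal T_{\Theta,\infty}\) and the uniform extension condition come from the closure result.

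\emph{Finite continuity and causality.} Each \(\mathcal T_{\Theta,n}\) is a finite composition of continuous maps on \((\R^p)^n\): the lift \(\phi_n\), masked softmax ratios with positive denominators, and affine or ReLU pointwise maps. It is therefore continuous. Prefix causality follows by induction over layers. Pointwise maps preserve the property that output \(i\) depends only on inputs \(1{:}i\). In \eqref{eq:finite-attention-v2}, output \(i\) uses only \(u_1,\ldots,u_i\). The lift \(\phi_n\) attaches \(i/n\), which depends on \(n\) and \(i\) but on no later token.

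\emph{Continuity of the extension.} We work on the compact space \(X_\infty\). Take \(x_k\to x\) uniformly, so that \(\phi_\infty(x_k)\to\phi_\infty(x)\) uniformly. The joint-continuity claim \eqref{eq:step-attention-evaluation-continuity-v2} in the proof of Lemma~\ref{lem:attention-stability} applies with \(v_k\) continuous. It shows that each temporal attention block maps continuous paths to continuous paths. A subsequence argument over evaluation times then upgrades this to uniform convergence \(\MAtt^\infty_\theta(v_k)\to\MAtt^\infty_\theta(v)\). Lipschitz pointwise maps preserve uniform convergence, so induction over layers gives continuity of \(\mathcal T_{\Theta,\infty}:X_\infty\to C([0,1];\R^m)\).

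\emph{Uniform extension.} By Lemma~\ref{lem:target-compatibility}, it suffices to verify condition (b) there. Let \(n_k\to\infty\), \(z^k\in X_{n_k}\), and \(\In_{n_k}z^k\to x\). Corollary~\ref{cor:transformer-closure} gives \(\Cn_{n_k}T_\Theta(\phi_{n_k}(z^k))\to g\eqdef T_{\Theta,\infty}(\phi_\infty(x))\) uniformly. We must pass from step interpolants to affine interpolants.

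The affine interpolant \(\In_{n_k}w\) at time \(s\) is a convex combination of at most two grid values \(w_j\) with \(\abs{j/n_k-s}\le 1/n_k\); on \([0,1/n_k]\) it equals \(w_1\). The step value \((\Cn_{n_k}w)(j/n_k)=w_j\) is within \(\epsilon_k\eqdef\norm{\Cn_{n_k}w-g}_\infty\) of \(g(j/n_k)\). Hence
\[
\norm{\In_{n_k}w-g}_\infty\le \epsilon_k+\omega_g(1/n_k)\to0,
\]
where \(\omega_g\) is the modulus of continuity of the continuous path \(g\). This proves (b), and Lemma~\ref{lem:target-compatibility} yields \eqref{eq:continuous-extension-v3}.

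The main delicate point is continuity of \(\mathcal T_{\Theta,\infty}\) into the uniform norm, which needs uniformity in \(t\) including near \(t=0\). The \(t=0\) case is already handled inside the proof of Lemma~\ref{lem:attention-stability}, and we reuse that argument.
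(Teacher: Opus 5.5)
Your proof is correct and follows the paper's argument: layerwise continuity and prefix causality at each length, continuity of the temporal realization into the uniform norm, and then Corollary~\ref{cor:transformer-closure} combined with Lemma~\ref{lem:target-compatibility}. The differences are minor and both valid. First, you get continuity of each temporal block from the joint-continuity claim \eqref{eq:step-attention-evaluation-continuity-v2} plus a subsequence argument, whereas the paper rescales \(s=ta\) and bounds the denominator below. Second, you verify condition (b) directly through the step-to-affine estimate (the argument the paper later uses in Proposition~\ref{prop:temporal-closure}), whereas the paper evaluates the step-interpolant convergence at grid points to obtain joint continuity (a) on \(\mathfrak X\); both routes then use the same equivalence in Lemma~\ref{lem:target-compatibility}.
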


\begin{proof}
At every finite resolution, continuity follows from the finite composition of continuous operations, and prefix causality follows from the mask by induction over the layers.  For temporal attention, the change of variables \(s=ta\) writes each head ratio, for \(t>0\), as
\[
\frac{\int_0^1 e^{\ip{Q_ru(t)}{K_ru(ta)}}V_ru(ta)\,\dd a}
{\int_0^1 e^{\ip{Q_ru(t)}{K_ru(ta)}}\,\dd a}.
\]
At \(t=0\), the same expression equals \(V_ru(0)\), exactly the endpoint convention \eqref{eq:temporal-attention-zero-v2}.  If \(u_k\to u\) uniformly, choose \(B\) such that \(\norm{u_k}_\infty,\norm{u}_\infty\leq B\).  The numerator and denominator integrands converge uniformly on \([0,1]^2\), and the denominator of head \(r\) is bounded below explicitly by
\[
\int_0^1
e^{\ip{Q_ru_k(t)}{K_ru_k(ta)}}\,\dd a
\geq
\exp\!\left(-\norm{Q_r}\norm{K_r}B^2\right)>0.
\]
Thus temporal attention maps continuous paths to continuous paths and is
continuous in the uniform norm.  Shared tokenwise affine maps and pointwise
ReLU networks preserve these properties, so
\(\mathcal T_{\Theta,\infty}:X_\infty\to C([0,1];\R^m)\) is continuous.  The
temporal family is causal by the same layerwise induction, because its value
at time \(t\) uses hidden states only on \([0,t]\).

It remains to check cross-resolution compatibility.  If \(n_k\to\infty\), \(z^k\in X_{n_k}\), and \(\In_{n_k}z^k\to x\), Corollary~\ref{cor:transformer-closure} gives uniform convergence of the step-interpolated outputs to \(\mathcal T_{\Theta,\infty}(x)\).  Hence, whenever \(i_k/n_k\to t\),
\[
\mathcal T_{\Theta,n_k}(z^k)_{i_k}
\longrightarrow
\mathcal T_{\Theta,\infty}(x)(t).
\]
Together with the finite and boundary continuity already proved, this is
continuity of the joint evaluation map on \(\mathfrak X\).
Lemma~\ref{lem:target-compatibility} converts this joint continuity into the
uniform interpolated-output condition in
Definition~\ref{def:compatible-family}, so that definition applies.
\end{proof}

\subsection{Log-Laplace probes separate prefix laws}

Universality now reduces to finding continuous, attention-computable
coordinates that separate the compact prefix-law space.

For \(a\in\R^{d+1}\) and \(c\in[0,1]\), define
\begin{equation}
\label{eq:laplace-probe-v2}
\mathcal L_{a,c}(\mu)
\eqdef
\frac{\int_E e^{c\ip{a}{y}}\ip{a}{y}\,\dd\mu(y)}
{\int_E e^{c\ip{a}{y}}\,\dd\mu(y)}.
\end{equation}

\begin{lemma}[Log-Laplace probes determine a compactly supported law]
\label{lem:laplace-probes}
For every \(a\in\R^{d+1}\) and \(c\in[0,1]\), the function \(\mathcal L_{a,c}\) in \eqref{eq:laplace-probe-v2} is continuous on \(\cP(E)\).  If \(\mu,\nu\in\cP(E)\) satisfy
\begin{equation}
\label{eq:all-probes-equal-v2}
\mathcal L_{a,c}(\mu)=\mathcal L_{a,c}(\nu)
\qquad\text{for every }a\in\R^{d+1}\text{ with }\norm{a}\leq1,
\quad c\in[0,1],
\end{equation}
then \(\mu=\nu\).
\end{lemma}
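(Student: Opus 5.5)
Continuity comes first and is routine. Since \(E\) is compact, \(\abs{\ip{a}{y}}\leq C\norm{a}\) on \(E\). Hence both integrands \(y\mapsto e^{c\ip{a}{y}}\ip{a}{y}\) and \(y\mapsto e^{c\ip{a}{y}}\) are bounded continuous functions on \(E\), and their integrals are weakly continuous in \(\mu\). The denominator is bounded below by \(e^{-C\norm{a}}>0\) uniformly over \(\cP(E)\), so the ratio \(\mathcal L_{a,c}\) is continuous.

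For separation, I plan to integrate the probes in \(c\) to recover the Laplace transform, and then use polynomial density. Fix \(a\) with \(\norm{a}\leq1\) and put \(M_\mu(c)\eqdef\int_E e^{c\ip{a}{y}}\,\dd\mu(y)\). This function is smooth in \(c\), with \(M_\mu(0)=1\) and \(M_\mu'(c)=\int_E e^{c\ip{a}{y}}\ip{a}{y}\,\dd\mu\) by dominated differentiation on compact \(E\). Therefore \(\mathcal L_{a,c}(\mu)=\partial_c\log M_\mu(c)\), and integrating over \([0,c]\) gives
\[
\log M_\mu(c)=\int_0^c\mathcal L_{a,s}(\mu)\,\dd s,\qquad c\in[0,1].
\]
Under \eqref{eq:all-probes-equal-v2} this yields \(M_\mu(c)=M_\nu(c)\) for all \(c\in[0,1]\), and in particular \(\int e^{\ip{a}{y}}\,\dd\mu=\int e^{\ip{a}{y}}\,\dd\nu\) for every \(\norm{a}\leq1\).

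Next I would recover the moments. For each fixed direction \(b\) with \(\norm{b}\leq1\), the function \(c\mapsto M_\mu(c)\) (taken with \(a=b\)) is real-analytic on all of \(\R\), indeed entire, because the support is compact. Two such functions that agree on \([0,1]\) agree identically, so all their derivatives at \(0\) coincide: \(\int\ip{b}{y}^r\,\dd\mu=\int\ip{b}{y}^r\,\dd\nu\) for every \(r\geq0\). Moreover, \(b\mapsto\int\ip{b}{y}^r\,\dd\mu\) is a homogeneous polynomial of degree \(r\) in \(b\), so equality on the unit ball forces equality of its coefficients. By the multinomial expansion (as in \eqref{eq:quant-projected-polynomial}), those coefficients are the mixed moments \(\int y^\kappa\,\dd\mu\) with \(\abs{\kappa}=r\). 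Hence all mixed moments agree.

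To conclude, polynomials restricted to the compact set \(E\) are dense in \(C(E)\) by Stone--Weierstrass. Equality of all polynomial integrals therefore gives \(\int g\,\dd\mu=\int g\,\dd\nu\) for every \(g\in C(E)\), and the Riesz representation theorem gives \(\mu=\nu\). The only delicate point, and the main obstacle, is passing from agreement of \(M_\mu\) and \(M_\nu\) on \(c\in[0,1]\) to agreement of all derivatives at \(0\). Analyticity settles this, but one must note that one-sided information at \(c=0\) suffices: all right derivatives at \(0\) of two functions that agree on \([0,1]\) coincide. That observation alone already gives the moment identities without invoking analytic continuation.
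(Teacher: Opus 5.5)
Your proof is correct and follows the paper's argument. Continuity comes from weak continuity of the two integrals with a positive denominator; integrating the log-derivative in \(c\) recovers the Laplace transform on the unit ball; all moments are extracted from it; and Stone--Weierstrass density of polynomials on \(E\) identifies the laws. The only cosmetic difference is how you obtain the mixed moments: you use directional derivatives and compare coefficients of the homogeneous polynomials \(b\mapsto\int\ip{b}{y}^r\,\dd\mu\), whereas the paper differentiates the multivariate Laplace transform directly at the origin, using that the transforms agree on a neighborhood of zero.
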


\begin{proof}
The numerator and denominator in \eqref{eq:laplace-probe-v2} are weakly continuous because their integrands are continuous on compact \(E\), and the denominator is strictly positive.  Hence \(\mathcal L_{a,c}\) is continuous.

For every \(b\in\R^{d+1}\), the fundamental theorem of calculus gives the exact identity
\begin{equation}
\label{eq:log-derivative-v2}
\log\!\int_E e^{\ip{b}{y}}\,\dd\mu(y)
=\int_0^1 \mathcal L_{b,c}(\mu)\,\dd c,
\end{equation}
because the logarithmic moment-generating function vanishes at \(c=0\).

Equality of the probes with \(\norm{a}\leq1\) therefore gives
\[
M_\mu(b)=M_\nu(b)
\qquad
\text{for every }b\in\R^{d+1}\text{ with }\norm{b}\leq1,
\]
where
\[
M_\mu(b)\eqdef\int_E e^{\ip{b}{y}}\,\dd\mu(y)
\]
is the multivariate Laplace transform of \(\mu\).
Since \(E\) is compact, differentiation under the integral is justified to
every order.  The transforms agree on a neighborhood of the origin, so for
every multi-index
\(\alpha\in\N_0^{d+1}\),
\[
\partial^\alpha M_\mu(0)
=
\int_E y^\alpha\,\dd\mu(y)
=
\int_E y^\alpha\,\dd\nu(y)
=
\partial^\alpha M_\nu(0).
\]
Hence \(\mu\) and \(\nu\) agree on the integrals of all coordinate
polynomials. The restrictions of coordinate polynomials to \(E\) form a
unital point-separating algebra and are therefore uniformly dense in
\(C(E)\) by the real Stone--Weierstrass theorem. Consequently,
\[
\int_E g\,\dd\mu=\int_E g\,\dd\nu
\qquad
\text{for every }g\in C(E).
\]
It follows that \(\mu=\nu\).

\end{proof}

\subsection{Parallel realization by one masked attention layer}

Separation becomes useful only if all selected coordinates can be evaluated
by one shared shallow block at every resolution.

\begin{lemma}[Exact parallel probes with ReLU readout]
\label{lem:parallel-realization}
Fix an integer \(H\geq1\), probe parameters \((a_r,c_r)_{r=1}^H\), and a
continuous map \(\Pi:E\times\R^H\to\R^{d'}\).  For every \(\delta>0\), there
are an integer \(M\geq1\) and parameters
\(\Theta=(\theta,\eta)\), independent
of \(n\), of the exact shallow form
\eqref{eq:theorem-lift-v2}--\eqref{eq:shallow-transformer-v2}, such that
\begin{align}
\label{eq:parallel-finite-v2}
\sup_{n\geq1}\ \sup_{z\in X_n}\ \max_{i\in[n]}
\norm{
T_\Theta(\phi_n(z))_i
-\Pi\bigl((z_i,i/n),\mathcal L_{a_1,c_1}(\mu_{n,z,i}),\ldots,\mathcal L_{a_H,c_H}(\mu_{n,z,i})\bigr)
}
&<\delta,\\
\label{eq:parallel-continuous-v2}
\sup_{x\in X_\infty}\ \sup_{t\in[0,1]}
\norm{
T_{\Theta,\infty}(\phi_\infty(x))(t)
-\Pi\bigl((x(t),t),\mathcal L_{a_1,c_1}(\mu_{\infty,x,t}),\ldots,\mathcal L_{a_H,c_H}(\mu_{\infty,x,t})\bigr)
}
&<\delta.
\end{align}
If \(\norm{a_r}\leq1\) for every \(r\), the attention matrices can moreover
be chosen with the Euclidean operator-norm bounds
\begin{equation}
\label{eq:bounded-qualitative-heads}
\norm{Q_r}_{\mathrm{op}},\quad \norm{K_r}_{\mathrm{op}},\quad
\norm{V_r}_{\mathrm{op}},\quad \norm{W_r}_{\mathrm{op}}\leq1.
\end{equation}
\end{lemma}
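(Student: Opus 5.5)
The plan is to build the attention block explicitly, so that its work coordinates hold the probe values exactly, and then to fit a shared ReLU readout to \(\Pi\) on a fixed compact set. For head \(r\), write \(\bar e_j\) for the standard basis of \(\R^{p_H}\) with \(p_H=d+H+2\). The lifted state is \(u_i=(z_i,i/n,1,0_H)\), so \(\bar e_{D+1}\) (with \(D=d+1\)) is the homogeneous coordinate. Set
\[
Q_r\eqdef\bar e_{D+1}^\top,\qquad K_r\eqdef c_r(a_r^\top,0,0_H),\qquad V_r\eqdef(a_r^\top,0,0_H),\qquad W_r\eqdef\bar e_{D+1+r}.
\]
Then \(\ip{Q_ru_i}{K_ru_j}=c_r\ip{a_r}{y_j}\) with \(y_j=(z_j,j/n)\), and the value is \(\ip{a_r}{y_j}\). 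The \(r\)-th head is therefore exactly \(\mathcal L_{a_r,c_r}(\mu_{n,z,i})\), and it is written into work coordinate \(r\).

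The residual connection keeps the first \(D+1\) coordinates unchanged. Because the ranges of the \(W_r\) are disjoint and the work coordinates start at zero, the attention output is
\[
\bigl(y_i,1,\mathcal L_{a_1,c_1}(\mu_{n,z,i}),\dots,\mathcal L_{a_H,c_H}(\mu_{n,z,i})\bigr).
\]
The continuous-time case is identical. In \eqref{eq:limattblk}, the temporal ratio for \(t>0\) is \(\mathcal L_{a_r,c_r}(\mu_{\infty,x,t})\) by the definition \eqref{eq:continuous-prefix-law-v2}. At \(t=0\) the convention gives \(V_ru(0)=\ip{a_r}{(x(0),0)}\), which is the probe of the Dirac law \(\delta_{(x(0),0)}\).

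For the norm bounds, \(Q_r\) and \(W_r\) are unit coordinate vectors. We also have \(\norm{V_r}_{\mathrm{op}}=\norm{a_r}\leq1\) and \(\norm{K_r}_{\mathrm{op}}=c_r\norm{a_r}\leq1\), since \(c_r\in[0,1]\). For general \(a_r\), rescaling is unnecessary because no norm bound is asserted in that case.

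Next, all attention outputs lie in one compact set \(\mathcal K\subset\R^{p_H}\), uniformly in \(n\), \(z\), \(i\) and in the boundary states. The endpoint \(y\) lies in \(E\), the homogeneous coordinate equals \(1\), and each probe is a weighted average of \(\ip{a_r}{y}\) over \(y\in E\), so it lies in \([-\norm{a_r}\operatorname{diam}',\norm{a_r}\operatorname{diam}']\) with a bound independent of length. Take \(\mathcal K\) to be the product of \(E\), \(\{1\}\), and these intervals. Define \(\widehat\Pi(y,s,w)\eqdef\Pi(y,w)\), which ignores the homogeneous coordinate; it is continuous on \(\mathcal K\).

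By the classical universal approximation theorem for one-hidden-layer ReLU networks on compact sets (Leshno et al.), applied coordinatewise, there is a width \(M\) and \(\eta\) with \(\sup_{\mathcal K}\norm{\MLP_\eta-\widehat\Pi}<\delta\). The \(d'\) scalar networks can be stacked into one network. Since both the finite-state outputs and the boundary outputs of the attention block lie in \(\mathcal K\), the pointwise readout gives \eqref{eq:parallel-finite-v2} and \eqref{eq:parallel-continuous-v2} simultaneously, with parameters that do not depend on \(n\).

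The step that needs care is the exact identification of the head outputs with the probes. This requires checking three things: the score uses the homogeneous query, so the scores carry no dependence on the query token; the \(t=0\) convention matches the Dirac prefix law; and the work coordinates are zero before attention, so that different heads do not interfere. Everything else is a routine compactness argument.
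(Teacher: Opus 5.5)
Your proposal is correct and matches the paper's proof essentially step for step. You use the same head matrices, the same exact identification of each head with \(\mathcal L_{a_r,c_r}\) (including the \(t=0\) Dirac convention and the disjoint work ranges), and the same one-hidden-layer ReLU readout on a compact set. The only cosmetic difference is in the readout step: you approximate \(\widehat\Pi\) directly on a product compact subset of \(\R^{p_H}\) that includes the homogeneous coordinate, whereas the paper Tietze-extends \(\Pi\) to a box in \(\R^{p_H-1}\) and then inserts a zero column.
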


\begin{proof}
\smallskip\noindent\emph{Probe heads.}\hspace{0.25em}
Write \(y=(x,s)\in E\), use \(p_H=d+H+2\), and identify the first
\(d+1\) coordinates of \(\R^{p_H}\) with \(y\).  The fixed tokenwise lift
\(\mathcal E_H\) in \eqref{eq:theorem-lift-v2} appends exactly one
homogeneous coordinate and \(H\) zero work coordinates.
For head \(r\), use scalar query, key, and value dimensions and choose
\begin{equation}
\label{eq:probe-head-parameters-v2}
\begin{aligned}
Q_r&\eqdef(0_{d+1},1,0_H),
&K_r&\eqdef(c_ra_r^\top,0,0_H),\\
V_r&\eqdef(a_r^\top,0,0_H),
&W_r&\eqdef e_{d+2+r}.
\end{aligned}
\end{equation}
Thus \(Q_r,K_r,V_r\in\R^{1\times p_H}\),
\(W_r\in\R^{p_H\times1}\), and
\[
Q_r(y,1,0_H)=1,
\qquad
K_r(y,1,0_H)=c_r\ip{a_r}{y},
\qquad
V_r(y,1,0_H)=\ip{a_r}{y}.
\]
The column \(W_r\) writes only into work coordinate \(r\): its
content--position and homogeneous rows are zero, so the residual addition
leaves \((y,1)\) unchanged.
The parameter rows in \eqref{eq:probe-head-parameters-v2} also give
\(\norm{Q_r}_{\mathrm{op}}=\norm{W_r}_{\mathrm{op}}=1\),
\(\norm{K_r}_{\mathrm{op}}=c_r\norm{a_r}\), and
\(\norm{V_r}_{\mathrm{op}}=\norm{a_r}\), proving
\eqref{eq:bounded-qualitative-heads} when the directions lie in the unit ball.

At a finite state \((n,z,i)\), the query
\(Q_r(\phi_n(z)_i,1,0_H)\) is exactly one.  The \(r\)-th work residual
begins at zero, so the output of head \(r\) in that coordinate is exactly
\[
\frac{\sum_{j\leq i}e^{c_r\ip{a_r}{(z_j,j/n)}}
\ip{a_r}{(z_j,j/n)}}
{\sum_{j\leq i}e^{c_r\ip{a_r}{(z_j,j/n)}}}
=\mathcal L_{a_r,c_r}(\mu_{n,z,i}).
\]
The heads have disjoint work ranges.  Consequently, the residual block output at \((n,z,i)\) is exactly
\[
\Bigl((z_i,i/n),1,
\mathcal L_{a_1,c_1}(\mu_{n,z,i}),\ldots,
\mathcal L_{a_H,c_H}(\mu_{n,z,i})\Bigr).
\]
Thus all probes are computed in parallel while \((y,1)\) is preserved.  With the same matrices, temporal attention gives the integral expression \(\mathcal L_{a_r,c_r}(\mu_{\infty,x,t})\) for \(t>0\).  At \(t=0\), the endpoint rule gives \(\ip{a_r}{(x(0),0)}\), which is exactly the probe evaluated at the Dirac law in \eqref{eq:continuous-prefix-law-v2}.

\smallskip\noindent\emph{Shared readout.}\hspace{0.25em}
The joint vector consisting of the current lifted token and all probe values
ranges over the compact set
\[
 K
 \eqdef\left\{
 \bigl(\mathfrak e(\mu),
 \mathcal L_{a_1,c_1}(\mu),\ldots,
 \mathcal L_{a_H,c_H}(\mu)\bigr):\mu\in\cM
 \right\}
 \subset E\times\R^H,
\]
because \(\cM\) is compact and every displayed coordinate is continuous.
Choose a compact box \(B\subset\R^{p_H-1}\) containing \(K\).  Coordinatewise
Tietze extension gives a continuous map on \(B\) agreeing with \(\Pi\) on
\(K\).  Apply the classical one-hidden-layer ReLU universal approximation
theorem~\citep{leshno1993nonpolynomial} to that extension on \(B\).  It gives
a finite hidden width \(M\); after inserting a zero column for the homogeneous
input coordinate, its matrices have dimensions
\[
 A_1\in\R^{M\times p_H},\quad b_1\in\R^M,
 \qquad
 A_2\in\R^{d'\times M},\quad b_2\in\R^{d'}
\]
and \(\MLP_\eta(u)=A_2\rho(A_1u+b_1)+b_2\) uniformly approximates
\(\Pi(y,v)\) to Euclidean error \(\delta\) whenever
\(u=(y,1,v)\) and \((y,v)\in K\).  The biases \(b_1,b_2\) are already part
of \eqref{eq:pointwise-mlp-v2}.  Pointwise
application after the attention block proves both estimates.  This
qualitative invocation gives no target-independent or
\(\delta\)-independent bound on \(M\).
\end{proof}

\begin{lemma}[Sequential width--depth realization of probe polynomials]
\label{lem:serialized-realization}
Fix probes \((a_r,c_r)_{r=1}^H\) and a vector polynomial
\(\Pi:\R^H\to\R^{d'}\).  For every \(\delta>0\), there is a finite-depth
causal transformer \(S_\delta\), obtained by alternating masked-attention
blocks and shared pointwise one-hidden-layer ReLU maps between a shared
tokenwise affine initialization and a shared tokenwise affine readout, such
that its
intermediate residual width is
\begin{equation}
\label{eq:serialized-width}
 p_{\mathrm{seq}}\eqdef d+1+3d',
\end{equation}
every attention block has at most \(d'\) scalar heads, and
\begin{align}
\label{eq:serialized-finite}
\sup_{n\geq1}\sup_{z\in X_n}\max_{i\in[n]}
\left\|
S_\delta(\phi_n(z))_i
-\Pi\bigl(\mathcal L_{a_1,c_1}(\mu_{n,z,i}),\ldots,
\mathcal L_{a_H,c_H}(\mu_{n,z,i})\bigr)
\right\|&<\delta,\\
\label{eq:serialized-continuous}
\sup_{x\in X_\infty}\sup_{t\in[0,1]}
\left\|
S_{\delta,\infty}(\phi_\infty(x))(t)
-\Pi\bigl(\mathcal L_{a_1,c_1}(\mu_{\infty,x,t}),\ldots,
\mathcal L_{a_H,c_H}(\mu_{\infty,x,t})\bigr)
\right\|&<\delta.
\end{align}
The block count depends on the polynomial expansion, while the pointwise hidden
widths and parameters may depend on the fixed probes, \(\Pi\), and \(\delta\).
Neither the width in
\eqref{eq:serialized-width} nor the per-block head count depends on sequence
length.
If all directions satisfy \(\norm{a_r}\leq1\), every scalar head can also
satisfy the operator-norm bounds in \eqref{eq:bounded-qualitative-heads}.
\end{lemma}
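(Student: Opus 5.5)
We realize \(\Pi\) by serial register evaluation, in the spirit of \citet{furuya2025transformers}. Write each coordinate as \(\Pi_\ell=\sum_{m}\gamma_{\ell,m}\prod_{k=1}^{\deg_m}v_{r(\ell,m,k)}\), where \(v_r\eqdef\mathcal L_{a_r,c_r}\). The monomials of all \(d'\) coordinates are scheduled in lockstep: pad shorter schedules with dummy steps that leave the registers unchanged, so that there is one common number \(S\) of steps.

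The residual state is \((y,(p_\ell,q_\ell,s_\ell)_{\ell=1}^{d'})\in\R^{d+1+3d'}\). Here \(y=(z_i,i/n)\) is the lifted token. For each output coordinate \(\ell\), \(p_\ell\) is a probe register, \(q_\ell\) holds the running monomial product, and \(s_\ell\) holds the accumulated sum. The shared tokenwise affine initialization sets \(p_\ell=1\), \(q_\ell=1\) and \(s_\ell=0\).

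\emph{Step 1: one probe per coordinate per block.} In step \(k\), the attention block has \(d'\) scalar heads. Head \(\ell\) has query \(Q_\ell=e_{p_\ell}^\top\), which reads the constant \(1\) currently stored in \(p_\ell\). Its key is \(K_\ell=(c\,a^\top,0)\) and its value is \(V_\ell=(a^\top,0)\); both read only the untouched \(y\)-block, with \((a,c)\) the probe required at this step of coordinate \(\ell\). Its output column is \(W_\ell=e_{p_\ell}\). As in Lemma~\ref{lem:parallel-realization}, after the residual addition \(p_\ell=1+\mathcal L_{a,c}(\mu_\xi)\), exactly, on every finite prefix and on the temporal boundary. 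For a dummy step we take \(a=0\). These matrices have operator norm at most one whenever \(\norm{a}\leq1\).

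\emph{Step 2: the pointwise update.} The following shared pointwise ReLU map performs \(q_\ell\leftarrow q_\ell\,(p_\ell-1)\), and \(p_\ell\leftarrow1\) through its bias. At the last factor of a monomial it also performs \(s_\ell\leftarrow s_\ell+\gamma_{\ell,m}q_\ell\) followed by \(q_\ell\leftarrow1\). For dummy steps it is the identity on \(q_\ell\). The only nonlinear part is scalar multiplication on a compact box. We approximate it by a one-hidden-layer ReLU network to a prescribed accuracy, and compose with clipping to that box, which is again piecewise linear. Since \(\abs{\mathcal L_{a,c}}\leq\norm{a}\operatorname{diam}E\), all exact register values lie in a fixed compact box, and this box is the same for every \(n\) and on the boundary. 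The final affine readout outputs \((s_\ell)_\ell\). The width is \(d+1+3d'\) throughout, and each block has \(d'\) heads.

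\emph{Step 3: error propagation (main obstacle).} The multiplications are only approximate, and later attention blocks receive the perturbed states. Attention itself is not the difficulty: queries, keys and values read only \(y\) and the reset constant \(p_\ell=1\), and both are exact, so the probes are not affected by earlier errors. What remains is the chain of \(S\) pointwise maps acting on the \(q,s\) registers.

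Each exact update is Lipschitz on a slightly enlarged box, and clipping keeps the approximate states inside that box. We therefore choose the multiplication tolerances backwards from the last step. The output error is then at most \(\sum_k L_{k+1}\cdots L_S\,\epsilon_k<\delta\), where \(L_j\) is the Lipschitz constant of step \(j\) and \(\epsilon_k\) is the tolerance at step \(k\). This bound holds pointwise on every state in \(\mathfrak X\), which gives \eqref{eq:serialized-finite} and \eqref{eq:serialized-continuous} at once, with the same parameters, all independent of \(n\).
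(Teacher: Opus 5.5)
Your proposal is essentially the paper's proof. It uses the same three \(d'\)-dimensional registers initialized to \((1,1,0)\); only the names of the probe and product registers are swapped relative to the paper. The head is the same: its query reads the reset constant, while its key and value read only the untouched token--position block. The polynomial update uses exact resets, so every later probe is computed exactly. The error analysis is the same recursion \(e_k\leq L_ke_{k-1}+\eta_k\), with tolerances chosen backwards.

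\emph{Clipping.} A one-hidden-layer ReLU approximant followed by clipping is in general a two-hidden-layer map. It is therefore not one of the pointwise one-hidden-layer maps the statement allows. Clipping is unnecessary, however. As in the paper, fix compact neighborhoods with a positive buffer around the exact stage-wise states. Then choose the \(\eta_k\) small enough that the accumulated error stays below the buffer. By induction, the approximate trajectory never leaves the region where each \(L_k\) is valid. Alternatively, put the clipping in its own pointwise layer, after an extra attention block with zero output columns.

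\emph{Constant terms.} The degree-zero monomials of \(\Pi_\ell\) never pass through a multiplication step. Add them through the bias of the final affine readout, so that it returns \(s_\ell\) plus the constant term of \(\Pi_\ell\), not \(s_\ell\) alone.
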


\begin{proof}
\smallskip\noindent\emph{Registers.}\hspace{0.25em}
Write each output coordinate as a finite sum of monomials,
\begin{equation}
\label{eq:serialized-polynomial-expansion}
 \Pi_\ell(v)
 =b_\ell+\sum_{q=1}^{N_\ell}b_{\ell q}
 \prod_{s=1}^{m_{\ell q}}v_{r(\ell,q,s)},
 \qquad \ell\in[d'],
\end{equation}
where constant monomials have been absorbed into \(b_\ell\).  Append to the
\(d+1\) token--position coordinates three \(d'\)-dimensional registers
\((q,p,s)\), initialized tokenwise as
\begin{equation}
\label{eq:serialized-register-initialization}
 (y,q,p,s)\eqdef(y,\mathbf 1_{d'},\mathbf 1_{d'},0_{d'}).
\end{equation}
They hold, respectively, the current probes, running products, and completed
sums.

\smallskip\noindent\emph{Execution schedule.}\hspace{0.25em}
Schedule the finitely many factors in
\eqref{eq:serialized-polynomial-expansion}, processing the output coordinates
in parallel and leaving a coordinate idle when it has no factor at the
current stage.  For every active output coordinate \(\ell\), one scalar head
uses the \(\ell\)-th probe register as its query, the unchanged
token--position coordinates for its key and value, and writes back only to
that probe register.  If the scheduled factor is
\(v_r=\mathcal L_{a_r,c_r}\), choose rows so that
\[
 Q u=q_\ell=1,
 \qquad Ku'=c_r\ip{a_r}{y'},
 \qquad Vu'=\ip{a_r}{y'},
\]
and choose \(W\) to be the probe-register basis vector for coordinate
\(\ell\).  This is the calculation in
\eqref{eq:probe-head-parameters-v2}.  After the residual addition, that
register contains \(1+v_r\).  There is at most one active head for each
output coordinate, hence at most \(d'\) heads in the block.
The query row selects the probe register and the output column writes to
that register, both with operator norm one.  The key and value rows have
norms \(c_r\norm{a_r}\) and \(\norm{a_r}\).
Thus unit-ball probe directions give the asserted per-head bounds here too.

The following pointwise map resets \(q_\ell\) to one and performs the ideal
update
\[
 p_\ell^+\eqdef p_\ell(q_\ell-1)
\]
when the monomial continues.  At its final factor, it instead adds
\(b_{\ell q}p_\ell(q_\ell-1)\) to \(s_\ell\) and resets \(p_\ell\) to one.
Idle coordinates are unchanged.  These updates are polynomial maps on the
finite-dimensional register state.  On any compact box containing the exact
finite-stage trajectories, a one-hidden-layer ReLU map approximates their
nonlinear products uniformly.  The identity on the \(y\)-coordinates,
constant resets, and other affine coordinate updates can simultaneously be
represented exactly using ReLU pairs.  A final shared tokenwise affine readout returns
\((b_\ell+s_\ell)_{\ell=1}^{d'}\).

\smallskip\noindent\emph{Uniform error control.}\hspace{0.25em}
Only finitely many stages occur.  All exact intermediate states range over a
compact set because the probe vector ranges over the compact image of
\(\cM\).  The pointwise networks preserve \(y\) and reset \(q\) exactly, and
every later key and value ignores the approximate \((p,s)\) registers.
Consequently all later probes remain exact; only the finite sequence of
register multiplications and additions propagates approximation error.

Enumerate the finitely many pointwise update stages by
\(k=1,\ldots,N\).  The exact pre- and post-attention states at each stage have
compact ranges, so choose compact neighborhoods with positive buffer around
all of them.  Each ideal polynomial update \(G_k\) is Lipschitz on the relevant
neighborhood, say with constant \(L_k\).  If its pointwise ReLU realization
has uniform error \(\eta_k\) and the register error after stage \(k\) is
\(e_k\), then, as long as the approximate trajectory stays in these
neighborhoods,
\[
 e_k\leq L_ke_{k-1}+\eta_k,
 \qquad e_0=0.
\]
Iterating gives the explicit bound
\[
 e_N\leq
 \sum_{k=1}^N\eta_k\prod_{j=k+1}^N L_j,
\]
with an empty product equal to one.  The positive buffers and this finite sum
allow the \(\eta_k\) to be chosen small enough both to keep every approximate
state in its prescribed neighborhood and to make \(e_N<\delta\), uniformly
over \(\mathfrak X\).  The exact temporal
head calculation is identical, including at \(t=0\), so the same construction
and the same error propagation prove both
\eqref{eq:serialized-finite} and
\eqref{eq:serialized-continuous}.
\end{proof}

\subsection{Stone--Weierstrass assembly}
\label{app:stone-weierstrass-assembly}

The final assembly combines separation and polynomial density with exact
parallel probes and a ReLU-readout approximation.

\begin{proof}[Proof of Theorems~\ref{thm:discrete-universality} and
\ref{thm:continuous-universality}]
\smallskip\noindent\emph{Common causal factor.}\hspace{0.25em}
For Theorem~\ref{thm:discrete-universality}, complete the given discrete
family by its unique extension.  For
Theorem~\ref{thm:continuous-universality}, complete the given continuous
target by the canonical finite family from
Lemma~\ref{lem:canonical-completion}.  In either case,
Definition~\ref{def:compatible-family} and
Lemma~\ref{lem:target-compatibility} make the joint evaluation map
\(\overline F^\star\) continuous and give causality on the path boundary.
Consequently, Proposition~\ref{prop:prefix-law} gives a unique continuous
factor
\[
f^\star:\cM\to\R^{d'}
\]
such that
\begin{equation}
\label{eq:target-history-factor-v2}
\overline F^\star(\xi)
=
f^\star(\mu_\xi),
\qquad
\xi\in\mathfrak X.
\end{equation}

Let \(\cA\) be the unital real algebra generated on \(\cM\) by the
restrictions of \(\mathcal L_{a,c}\), with
\(a\in\R^{d+1}\), \(\norm{a}\leq1\), and \(c\in[0,1]\).
Lemma~\ref{lem:laplace-probes} says that \(\cA\) separates points of the
compact Hausdorff space \(\cM\).  The real Stone--Weierstrass theorem
therefore makes \(\cA\) uniformly dense in \(C(\cM;\R)\).

\smallskip\noindent\emph{Finite approximation and realization.}\hspace{0.25em}
Apply the scalar density result to every output coordinate with tolerance
\(\varepsilon/(2\sqrt{d'})\).  Only finitely many probes occur in the
resulting algebra elements. Hence, there exist
\((a_r,c_r)_{r=1}^H\), with
\(a_r\in\R^{d+1}\), \(\norm{a_r}\leq1\), and \(c_r\in[0,1]\), and a vector-valued polynomial
\(\Pi:\R^H\to\R^{d'}\) such that
\begin{equation}
\label{eq:SW-vector-v2}
\sup_{\mu\in\cM}
\norm{
f^\star(\mu)
-
\Pi\bigl(
\mathcal L_{a_1,c_1}(\mu),\ldots,
\mathcal L_{a_H,c_H}(\mu)
\bigr)
}
<
\frac{\varepsilon}{2}.
\end{equation}
If no probe occurs because the approximating polynomial is constant,
append the dummy probe
\[
(a_1,c_1)\eqdef(0,0)
\]
and let \(\Pi\) ignore its argument.  Thus we may assume \(H\geq1\).

Define
\[
\widetilde\Pi:E\times\R^H\to\R^{d'},
\qquad
\widetilde\Pi(y,v)\eqdef\Pi(v).
\]
Lemma~\ref{lem:parallel-realization}, applied to
\(\widetilde\Pi\) with \(\delta=\varepsilon/2\), supplies one attention
block and one pointwise ReLU network, with a shared parameter list
\(\Theta=(\theta,\eta)\), that approximate this polynomial
simultaneously on every finite state and every boundary state.

By Corollary~\ref{cor:transformer-compatible-family}, the resulting joint
transformer evaluation
\[
\overline T_\Theta:\mathfrak X\to\R^{d'}
\]
is continuous and causal.  Proposition~\ref{prop:prefix-law} therefore
gives a unique continuous history factor
\[
t_\Theta:\cM\to\R^{d'}
\]
such that
\begin{equation}
\label{eq:transformer-history-factor-v2}
\overline T_\Theta(\xi)
=
t_\Theta(\mu_\xi),
\qquad
\xi\in\mathfrak X.
\end{equation}
Equivalently,
\[
\overline T_\Theta
=
t_\Theta\circ\boldsymbol\mu.
\]

More explicitly, the construction in
Lemma~\ref{lem:parallel-realization} chooses the \(r\)-th masked-attention
head to compute \(\mathcal L_{a_r,c_r}(\mu)\) exactly and to store it in
the \(r\)-th work register, while the residual coordinates preserve the
endpoint \(\mathfrak e(\mu)\).  Thus the pointwise readout supplied by that
lemma satisfies
\begin{equation}
\label{eq:explicit-transformer-factor-v2}
t_\Theta(\mu)
=
\MLP_\eta\left(
\mathfrak e(\mu),1,
\mathcal L_{a_1,c_1}(\mu),\ldots,
\mathcal L_{a_H,c_H}(\mu)
\right),
\qquad
\mu\in\cM.
\end{equation}
This is precisely the history factor induced by the shared architecture
\[
T_\Theta
=
\MLP_\eta\circ\MAtt_\theta\circ\mathcal E_H.
\]
The map \((y,v)\mapsto\MLP_\eta(y,1,v)\) approximates
\(\widetilde\Pi(y,v)=\Pi(v)\); hence the residual coordinates retain the endpoint
variable, although the target polynomial itself ignores it.

Combining
\eqref{eq:parallel-finite-v2}--\eqref{eq:parallel-continuous-v2}
with \eqref{eq:explicit-transformer-factor-v2} gives
\begin{equation}
\label{eq:transformer-polynomial-factor-v2}
\sup_{\mu\in\cM}
\norm{
t_\Theta(\mu)
-
\Pi\bigl(
\mathcal L_{a_1,c_1}(\mu),\ldots,
\mathcal L_{a_H,c_H}(\mu)
\bigr)
}
<
\frac{\varepsilon}{2}.
\end{equation}
Consequently, the triangle inequality and
\eqref{eq:SW-vector-v2} yield
\[
\sup_{\mu\in\cM}
\norm{f^\star(\mu)-t_\Theta(\mu)}
<
\varepsilon.
\]
Using
\eqref{eq:target-history-factor-v2} and
\eqref{eq:transformer-history-factor-v2}, we obtain
\[
\begin{aligned}
\sup_{\xi\in\mathfrak X}
\norm{
\overline F^\star(\xi)-\overline T_\Theta(\xi)
}
&=
\sup_{\xi\in\mathfrak X}
\norm{
f^\star(\mu_\xi)-t_\Theta(\mu_\xi)
}\\
&=
\sup_{\mu\in\cM}
\norm{
f^\star(\mu)-t_\Theta(\mu)
}
<\varepsilon.
\end{aligned}
\]
This is the completed estimate.

In the first case, restricting this estimate to the finite states gives
\eqref{eq:discrete-universality-bound-v3}.  In the second case,
restricting it to the boundary states gives
\eqref{eq:continuous-universality-bound-v2}.  These are precisely the
two theorem claims.  Moreover, when the continuous target is the
extension of the discrete family in the first theorem, the single
construction above proves both estimates with the same parameter list
\(\Theta\).
\end{proof}

\begin{remark}[Bounded heads suffice for qualitative universality]
\label{rem:bounded-qualitative-heads}
The separating directions in the proof lie in the Euclidean unit ball.
Lemma~\ref{lem:parallel-realization} therefore gives both qualitative
theorems with all four matrices of every scalar attention head satisfying
\eqref{eq:bounded-qualitative-heads}.  The same bounds hold for the
serialized realization in Lemma~\ref{lem:serialized-realization}.
In both realizations, the query is exactly one and the key depends only on
\(y\in\Omega\times[0,1]\), so every score lies in
\([-\sqrt{2},\sqrt{2}]\).  Each attention weight on an \(i\)-token prefix
is therefore at most \(e^{2\sqrt{2}}/i\): universality does not require
increasingly concentrated attention.  The shallow head count, the
serialized depth, and all pointwise hidden widths and parameter magnitudes
remain unrestricted.  These are per-head bounds, not a bounded-weight
statement for the whole network.
\end{remark}

\subsection{The exact all-resolution approximation class}

We now prove the converse to Theorem~\ref{thm:discrete-universality},
characterizing exactly the families approximable uniformly across all
lengths. Every fixed transformer has a continuous completed realization,
and this property is preserved by uniform approximation over all finite
lengths. Continuous extendability is therefore necessary as well as sufficient.

\begin{corollary}[Uniform closure characterization]
\label{cor:uniform-closure-characterization}
Fix an admissible common modulus \(\omega\) and the dimensions \(d,d'\).
For a discrete family \(\mathbf F=(F_n)_{n\geq1}\), with
\(F_n:X_n^\omega\to(\R^{d'})^n\), define
\begin{equation}
\label{eq:all-resolution-family-norm}
\norm{\mathbf F}_{\mathrm{all}}
\eqdef
\sup_{n\geq1}\sup_{z\in X_n^\omega}\max_{i\in[n]}
\norm{F_n(z)_i}.
\end{equation}
In the space of families with finite \(\norm{\cdot}_{\mathrm{all}}\), let
\(\mathsf T_{\mathrm{sh}}\) denote the families induced by the shallow
architecture \eqref{eq:shallow-transformer-v2}, let
\(\mathsf T_{\mathrm{fin}}\) denote those induced by arbitrary fixed finite
compositions of masked attention, shared tokenwise affine maps, and
pointwise ReLU networks, and let \(\mathsf C_\omega\) denote the
continuously extendable causal families of
Definition~\ref{def:compatible-family}.  Both transformer classes use the
canonical positional lift \(\phi_n\) and parameters independent of \(n\).
Each approximant has one finite architecture and parameter list fixed across
all lengths.  Widths and parameters may vary with the requested accuracy,
as may depth in \(\mathsf T_{\mathrm{fin}}\).
Then
\begin{equation}
\label{eq:uniform-closure-characterization}
\overline{\mathsf T_{\mathrm{sh}}}^{\norm{\cdot}_{\mathrm{all}}}
=
\overline{\mathsf T_{\mathrm{fin}}}^{\norm{\cdot}_{\mathrm{all}}}
=\mathsf C_\omega.
\end{equation}
The first closure is unchanged if every scalar head is required to satisfy
\eqref{eq:bounded-qualitative-heads}.
Thus increasing depth does not enlarge the uniform approximation closure;
this is not a claim about approximation efficiency.
\end{corollary}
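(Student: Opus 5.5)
The plan is to prove the chain of inclusions
\[
 \mathsf T_{\mathrm{sh}}\subset\mathsf T_{\mathrm{fin}}\subset\mathsf C_\omega,
 \qquad
 \mathsf C_\omega\subset\overline{\mathsf T_{\mathrm{sh}}}^{\norm{\cdot}_{\mathrm{all}}},
 \qquad
 \mathsf C_\omega\ \text{is}\ \norm{\cdot}_{\mathrm{all}}\text{-closed},
\]
which together give \eqref{eq:uniform-closure-characterization}. The first inclusion is immediate, since the shallow architecture is a particular finite composition. For the second, Corollary~\ref{cor:transformer-compatible-family} says every fixed finite transformer induces a continuously extendable causal family. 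Each such family has finite $\norm{\cdot}_{\mathrm{all}}$ because its joint evaluation is continuous on the compact space $\mathfrak X$ (Lemmas~\ref{lem:joint-compactification} and~\ref{lem:target-compatibility}). The third inclusion is exactly Theorem~\ref{thm:discrete-universality}. Every $\mathbf F\in\mathsf C_\omega$ also has finite norm, again by continuity on compact $\mathfrak X$, so all three classes lie in the normed space under consideration. Since the approximating heads come from Lemma~\ref{lem:parallel-realization}, they satisfy \eqref{eq:bounded-qualitative-heads}, which handles the bounded-head remark. Once closedness of $\mathsf C_\omega$ is shown, taking closures gives
\[
 \mathsf C_\omega\subset\overline{\mathsf T_{\mathrm{sh}}}\subset\overline{\mathsf T_{\mathrm{fin}}}\subset\overline{\mathsf C_\omega}=\mathsf C_\omega .
\]

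The main work is closedness. Let $\mathbf F^{(k)}\in\mathsf C_\omega$ with $\norm{\mathbf F^{(k)}-\mathbf F}_{\mathrm{all}}\to0$, and let $F_\infty^{(k)}$ be the extensions. I work with the joint evaluations $\overline F^{(k)}$ on $\mathfrak X$, which are continuous by Lemma~\ref{lem:target-compatibility}.

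First, I show that $(\overline F^{(k)})$ is uniformly Cauchy on all of $\mathfrak X$, not only on the finite states. On the finite states this is the hypothesis. Finite states are dense in $\mathfrak X$ (Lemma~\ref{lem:joint-compactification}) and each $\overline F^{(k)}-\overline F^{(l)}$ is continuous, so the supremum over $\mathfrak X$ equals the supremum over finite states. This density step is the key point.

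Hence $\overline F^{(k)}$ converges uniformly on $\mathfrak X$ to a continuous limit $G$, and $G$ restricts to $F_n(z)_i$ on finite states. Each $F_n$ is then continuous as a uniform limit, and prefix-causal because the pointwise limit of prefix-causal maps is prefix-causal.

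Next, I define $F_\infty(x)(t)\eqdef G(\infty,x,t)$. It is jointly continuous on the compact set $X_\infty\times[0,1]$, hence uniformly continuous there. Therefore $F_\infty:X_\infty\to C([0,1];\R^{d'})$ is continuous in the uniform norm.

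Finally, the joint evaluation of $((F_n),F_\infty)$ is exactly $G$, which is continuous. Lemma~\ref{lem:target-compatibility}, direction (a)$\Rightarrow$(c), then yields the uniform extension condition \eqref{eq:continuous-extension-v3}. So $\mathbf F\in\mathsf C_\omega$.

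The only delicate point is the density argument that transfers the discrete sup-norm control to the boundary. Without it one might worry that the extensions $F_\infty^{(k)}$ fail to converge. Continuity of each $\overline F^{(k)}$ on the compact completion resolves this. Everything else reuses earlier lemmas.
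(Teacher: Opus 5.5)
Your proof is correct and follows essentially the same route as the paper. Sufficiency comes from Theorem~\ref{thm:discrete-universality} with unit-norm heads, and necessity uses density of finite states in the compact completion $\mathfrak X$ to turn all-resolution Cauchy control into uniform convergence of the completed evaluations, followed by Lemma~\ref{lem:target-compatibility}. The only cosmetic difference is that you prove closedness of $\mathsf C_\omega$ for arbitrary sequences in $\mathsf C_\omega$ and combine it with $\mathsf T_{\mathrm{fin}}\subset\mathsf C_\omega$, whereas the paper runs the identical argument directly on sequences of transformers.
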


\begin{proof}
\smallskip\noindent\emph{Sufficiency: shallow approximation of extendable families.}\hspace{0.25em}
By Lemma~\ref{lem:target-compatibility}, every family in
\(\mathsf C_\omega\) has a continuous evaluation map on compact
\(\mathfrak X\), so its all-resolution norm is finite.
Theorem~\ref{thm:discrete-universality} places it in the uniform closure of
\(\mathsf T_{\mathrm{sh}}\), even with the per-head bounds by
Remark~\ref{rem:bounded-qualitative-heads}.  Also,
\(\mathsf T_{\mathrm{sh}}\subset\mathsf T_{\mathrm{fin}}\).

\smallskip\noindent\emph{Necessity: uniform limits retain a continuous extension.}\hspace{0.25em}
Suppose \(\mathbf T^{(k)}\in\mathsf T_{\mathrm{fin}}\) converges to
\(\mathbf F\) in \(\norm{\cdot}_{\mathrm{all}}\).
Corollary~\ref{cor:transformer-compatible-family} gives continuous completed
evaluation maps \(\overline T^{(k)}:\mathfrak X\to\R^{d'}\).
Finite states are dense by Lemma~\ref{lem:joint-compactification}, so
continuity gives the same supremum for the difference norm on finite states
and on all of \(\mathfrak X\):
\[
\sup_{\xi\in\mathfrak X}
\norm{\overline T^{(k)}(\xi)-\overline T^{(\ell)}(\xi)}
=\norm{\mathbf T^{(k)}-\mathbf T^{(\ell)}}_{\mathrm{all}}.
\]
Indeed, each boundary state is a limit of finite states, so its output
difference is a limit of finite-state differences and cannot exceed their
supremum. Controlling all finite lengths therefore also controls the path limits.
Thus the completed maps are uniformly Cauchy and converge uniformly to a
continuous map \(\overline F:\mathfrak X\to\R^{d'}\), whose finite
restriction is the given family \(\mathbf F\).
Define
\[
F_\infty(x)(t)\eqdef\overline F(\infty,x,t).
\]
This is a continuous path for each \(x\).  Moreover, uniform continuity of
\(\overline F\) on compact \(\mathfrak X\), together with
\[
d_{\mathfrak X}\bigl((\infty,x,t),(\infty,x',t)\bigr)
=\norm{x-x'}_\infty,
\]
shows that \(F_\infty:X_\infty\to C([0,1];\R^{d'})\) is continuous in
the uniform norm: the same continuity estimate holds at every \(t\).
At each fixed length, continuity and prefix causality of \(F_n\) follow
from uniform convergence of the transformer maps.
Lemma~\ref{lem:target-compatibility} now applies to the continuous joint
evaluation \(\overline F\), and gives the uniform extension condition,
uniqueness of the extension, and boundary causality.  Therefore
\(\mathbf F\in\mathsf C_\omega\), proving all the asserted equalities.
\end{proof}

For example, when \(d'=1\), the family \(F_n(z)_i\eqdef(-1)^n\) is
continuous and prefix-causal at each fixed length, but lies outside this
closure.  On the zero input, any fixed transformer's terminal outputs satisfy
\[
b_n\eqdef T_\Theta\bigl(\phi_n(0,\ldots,0)\bigr)_n\longrightarrow a
\quad\text{for some }a\in\R
\]
under grid refinement.  Its uniform error against this family is therefore
at least
\(\max\{|a-1|,|a+1|\}\geq1\), by considering even and odd lengths.
The zero transformer attains error \(1\).
The obstruction is cross-resolution discontinuity, not an inability to
distinguish a particular finite pair of lengths.

\section{Controlled ODEs: continuous extension and target regularity}
\label{app:control-odes}

This appendix proves Examples~\ref{ex:control-ode-family-main}
and~\ref{ex:regular-control-integrator}. A controlled
ODE turns an input trajectory into a state trajectory; reconstructing the
input tokens and sampling the exact solution yields a discrete causal
family at every resolution. The key estimates are uniform over the input
class and include the initial interpolation cell. We then identify a
right-endpoint discretization of additive dynamics that is \(\beta\)-smooth
whenever the additive vector field is \(C^\beta\).

\subsection{Construction and uniform compatibility}
\label{app:control-ode-extension}

We first specify the dynamics and verify all parts of
Definition~\ref{def:compatible-family}, without assuming differentiability
of the input paths or Lipschitz dependence on the control.

\paragraph{Dynamics driven by a reconstructed control.}
Fix an admissible common modulus \(\omega\), the classes
\(X_n\eqdef X_n^\omega\), \(X_\infty\eqdef X_\infty^\omega\), and an initial state
\(y_0\in\R^{d'}\). Let \(b:\R^{d'}\times\Omega\to\R^{d'}\) be continuous
and satisfy, for some \(L_b\geq0\),
\begin{equation}
\label{eq:control-ode-uniform-lipschitz}
 \norm{b(y,u)-b(y',u)}\leq L_b\norm{y-y'}
 \qquad(y,y'\in\R^{d'},\ u\in\Omega).
\end{equation}
For \(x\in X_\infty\), define \(y_x\) by
\begin{equation}
\label{eq:continuous-control-ode}
 \dot y_x(t)=b(y_x(t),x(t)),\qquad y_x(0)=y_0,
 \qquad t\in[0,1].
\end{equation}
The global state-Lipschitz bound gives a unique
\(y_x\in C^1([0,1];\R^{d'})\). Using the reconstruction \(\In_n\) and
sampling \(\mathcal S_n\) from Section~\ref{sec:targets-v2}, set
\begin{equation}
\label{eq:discrete-control-solution-operator}
 F_\infty^\star(x)\eqdef y_x,\qquad
 F_n^\star(z)\eqdef\mathcal S_n y_{\In_nz},
 \quad\text{that is,}\quad
 F_n^\star(z)_i=y_{\In_nz}(i/n).
\end{equation}
Reconstruction stays in \(X_\infty\) by
\eqref{eq:modulus-reconstruction-v2}. These maps sample exact solutions;
no numerical time-stepping error is included in their definition.

\paragraph{Uniform bounds imply extendability.}
Compactness of the control range bounds the vector field at the origin,
and its state-Lipschitz constant then controls every solution and its speed.
Define
\begin{equation}
\label{eq:control-ode-bound-constants}
 C_0\eqdef\max_{u\in\Omega}\norm{b(0,u)},\qquad
 R\eqdef(\norm{y_0}+C_0)e^{L_b},\qquad M\eqdef C_0+L_bR.
\end{equation}

\begin{proposition}[Controlled ODE compatibility]
\label{prop:control-ode-compatible-family}
Under \eqref{eq:control-ode-uniform-lipschitz}, the maps
\eqref{eq:discrete-control-solution-operator} form a continuously extendable
causal family with continuous path extension \(F_\infty^\star\).
For every \(x\in X_\infty\),
\(\norm{y_x}_\infty\leq R\) and \(\norm{\dot y_x}_\infty\leq M\), and
for every \(n\geq1\),
\begin{equation}
\label{eq:control-ode-extension-rate}
 \sup_{z\in X_n}
 \norm{\In_n(F_n^\star(z))-F_\infty^\star(\In_nz)}_\infty
 \leq \frac{M}{n}.
\end{equation}
The constants \(R,M\) are independent of the input, resolution, and
admissible common modulus.
\end{proposition}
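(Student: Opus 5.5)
We first establish the a priori bounds by a Gr\"onwall argument. For \(x\in X_\infty\), \eqref{eq:control-ode-uniform-lipschitz} gives \(\norm{b(y,u)}\leq C_0+L_b\norm{y}\) for all \(u\in\Omega\). Integrating \eqref{eq:continuous-control-ode} yields \(\norm{y_x(t)}\leq\norm{y_0}+C_0t+L_b\int_0^t\norm{y_x(s)}\,\dd s\), so Gr\"onwall gives \(\norm{y_x}_\infty\leq(\norm{y_0}+C_0)e^{L_b}=R\). Substituting back into the vector field gives \(\norm{\dot y_x}_\infty\leq C_0+L_bR=M\). Global existence and uniqueness come from the uniform state-Lipschitz bound via Picard--Lindel\"of, since \(t\mapsto b(y,x(t))\) is continuous. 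None of these constants involves \(\omega\) or \(n\), which explains the stated independence.

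The extension rate \eqref{eq:control-ode-extension-rate} is then almost a tautology, because \(F_n^\star\) samples the exact solution. By Lemma~\ref{lem:modulus-reconstruction}, \(\In_nz\in X_\infty\) for every \(z\in X_n\). Put \(g\eqdef y_{\In_nz}=F_\infty^\star(\In_nz)\). Then \(\In_nF_n^\star(z)=\In_n\mathcal S_ng\), and we bound \(\norm{\In_n\mathcal S_ng-g}_\infty\) cell by cell. On cells \([(i-1)/n,i/n]\) with \(i\geq2\), the interpolant is a convex combination of \(g\) at the two endpoints, each within \(M/n\) of \(g(t)\) because \(g\) is \(M\)-Lipschitz. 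On the initial cell \([0,1/n]\), the interpolant is constant and equal to \(g(1/n)\), which is again within \(M/n\) of \(g(t)\). Taking the supremum over \(z\) gives the bound \(M/n\), so \eqref{eq:continuous-extension-v3} holds.

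Two properties of the family remain. \emph{Continuity of \(F_\infty^\star\)} follows from a Gr\"onwall stability estimate. With \(e=y_x-y_{x'}\), we have \(\norm{\dot e}\leq L_b\norm{e}+\norm{b(y_{x'},x)-b(y_{x'},x')}\). Here \(b\) is uniformly continuous on the compact set \(\overline{B(0,R)}\times\Omega\), and all trajectories stay in \(\overline{B(0,R)}\). So if \(x_k\to x\) uniformly, the forcing term tends to zero uniformly in time, and hence \(y_{x_k}\to y_x\) uniformly. This step is the only place where we need care, since \(b\) is merely continuous in the control; we expect it to be the main obstacle, and it is handled by the modulus of continuity of \(b\) on that compact set rather than by a Lipschitz constant. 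Continuity of \(F_n^\star\) follows by composing with the continuous map \(\In_n\) and with finitely many evaluations.

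\emph{Prefix causality} uses the structure of the interpolant. If \(z_{1:i}=z'_{1:i}\), then \(\In_nz\) and \(\In_nz'\) agree on \([0,i/n]\), because the interpolant on that interval depends only on \(z_1,\ldots,z_i\), including the constant initial cell. Uniqueness of solutions of \eqref{eq:continuous-control-ode} restricted to \([0,i/n]\) then gives \(y_{\In_nz}(i/n)=y_{\In_nz'}(i/n)\). Together with the bounds and the rate above, this verifies every requirement of Definition~\ref{def:compatible-family}.
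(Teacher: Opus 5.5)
Your proposal is correct and follows essentially the same route as the paper. It uses Gr\"onwall for the a priori bounds \(R\) and \(M\), and the cell-by-cell convex-combination estimate (with the constant initial cell treated separately) for the \(M/n\) rate. It then uses a Gr\"onwall stability estimate with the uniform modulus of continuity of \(b\) on \(\overline{B(0,R)}\times\Omega\) for continuity of \(F_\infty^\star\), and ODE uniqueness on \([0,i/n]\) for prefix causality.

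The paper additionally records the explicit modulus \(\norm{F_\infty^\star(x)-F_\infty^\star(x')}_\infty\leq e^{L_b}\rho_b(\norm{x-x'}_\infty)\) and the causality of \(F_\infty^\star\). Neither is needed for Definition~\ref{def:compatible-family} beyond what you already prove.
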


\begin{proof}
\smallskip\noindent\emph{Existence and uniform trajectory bounds.}\hspace{0.25em}
Continuity of \(b\) and compactness of \(\Omega\) give \(C_0<\infty\).
The state-Lipschitz bound implies
\(\norm{b(y,u)}\leq C_0+L_b\norm{y}\), so local solutions cannot blow up
on the fixed horizon. The integral equation and Gronwall's inequality give
\[
 \norm{y_x(t)}
 \leq(\norm{y_0}+C_0t)e^{L_bt}\leq R,
 \qquad
 \norm{\dot y_x(t)}\leq C_0+L_bR=M.
\]
Thus every solution exists uniquely on \([0,1]\) and is \(M\)-Lipschitz
in time.

\smallskip\noindent\emph{Continuity in the input control.}\hspace{0.25em}
On the compact set \(\overline B_{\R^{d'}}(0,R)\times\Omega\), define
the uniform control modulus
\begin{equation}
\label{eq:control-ode-control-modulus}
 \rho_b(r)\eqdef
 \sup_{\substack{\norm{y}\leq R,\ u,v\in\Omega\\\norm{u-v}\leq r}}
 \norm{b(y,u)-b(y,v)},\qquad r\geq0.
\end{equation}
Uniform continuity gives \(\rho_b(r)\to0\) as \(r\downarrow0\).
For \(x,x'\in X_\infty\), subtraction of their integral equations yields
\[
 \norm{y_x(t)-y_{x'}(t)}
 \leq L_b\int_0^t\norm{y_x(s)-y_{x'}(s)}\,\dd s
       +t\,\rho_b(\norm{x-x'}_\infty).
\]
Since \(t\leq1\), Gronwall gives
\begin{equation}
\label{eq:control-ode-continuous-dependence}
 \norm{F_\infty^\star(x)-F_\infty^\star(x')}_\infty
 \leq e^{L_b}\rho_b(\norm{x-x'}_\infty).
\end{equation}
Hence \(F_\infty^\star\) is continuous in the uniform norm. For each
fixed \(n\), \(\In_n\) is continuous (indeed, nonexpansive in the maximum
token norm), and sampling is continuous, so \(F_n^\star\) is continuous.

\smallskip\noindent\emph{Prefix causality.}\hspace{0.25em}
If \(z_{1:i}=z'_{1:i}\), then
\(\In_nz=\In_nz'\) on \([0,i/n]\): the initial cell uses only \(z_1\),
and each subsequent cell up to time \(i/n\) uses two tokens from this
prefix. Uniqueness for the ODE restricted to that interval gives
\(F_n^\star(z)_i=F_n^\star(z')_i\). Similarly, two continuous controls
agreeing on \([0,t]\) have identical state trajectories there, so
\(F_\infty^\star\) is causal. Although affine reconstruction between
grid points uses the next grid value, no token after index \(i\) enters the
output at time \(i/n\).

\smallskip\noindent\emph{Uniform extension at the initial and interior cells.}\hspace{0.25em}
For \(z\in X_n\), put \(y\eqdef y_{\In_nz}\). Then
\(\In_n(F_n^\star(z))=\In_n\mathcal S_ny\).
On an interior grid cell, its value is a convex combination of the two
neighboring sampled values of \(y\), each within \(M/n\) of \(y(t)\).
On the initial cell \([0,1/n]\), it equals \(y(1/n)\), and
\[
 \norm{y(1/n)-y(t)}\leq M(1/n-t)\leq M/n.
\]
In particular, the interpolated output at zero need not equal \(y_0\),
but its error has the same bound. Taking the supremum over time and input
proves \eqref{eq:control-ode-extension-rate} and verifies the extension
condition in Definition~\ref{def:compatible-family}.
\end{proof}

\subsection{Refinement, an explicit instance, and scope}

The estimates above also describe what happens when one continuous control
is sampled more finely, and clarify the distinction between this
qualitative example and the quantitative approximation theorem.

\paragraph{Refining samples of a fixed control.}
For \(x\in X_\infty\), combine
\eqref{eq:control-ode-extension-rate} and
\eqref{eq:control-ode-continuous-dependence} with
\(\norm{\In_n\mathcal S_nx-x}_\infty\leq\omega(1/n)\). The triangle
inequality gives
\begin{equation}
\label{eq:control-ode-sampled-refinement}
 \sup_{x\in X_\infty}
 \norm{\In_nF_n^\star(\mathcal S_nx)-F_\infty^\star(x)}_\infty
 \leq \frac{M}{n}+e^{L_b}\rho_b(\omega(1/n))
 \longrightarrow0.
\end{equation}
If \(b\) is also \(L_u\)-Lipschitz in the control on the bounded state
region, the second term is at most \(e^{L_b}L_u\omega(1/n)\).
No exact equality of targets at different finite resolutions is required:
the reconstruction and the normalized sampling times both depend on \(n\).

\paragraph{The controlled integrator.}
For \(d'=d\), \(y_0=0\), and \(b(y,u)=u\), the path operator is
integration. The initial constant cell followed by affine cells gives
\begin{equation}
\label{eq:controlled-integrator-exact}
 F_\infty^\star(x)(t)=\int_0^t x(s)\,\dd s,\qquad
 F_n^\star(z)_i=
 \frac1n\sum_{j=1}^i z_j+\frac{z_1-z_i}{2n}.
\end{equation}
Indeed, the integral over \([0,1/n]\) is \(z_1/n\); each later cell
contributes \((z_{j-1}+z_j)/(2n)\). Summing gives the formula, also for
\(i=1\), where the correction vanishes. Thus exact integration of the
reconstructed control differs from the plain right-endpoint cumulative sum,
but their grid outputs differ by at most \(1/n\), since
\(\norm{z_1-z_i}\leq2\). Nonexpansiveness of output interpolation then gives
the same continuous path limit for both families. Here \(M=1\), so
the compatibility defect is at most \(1/n\).

\paragraph{What the example does and does not imply.}
Theorems~\ref{thm:discrete-universality} and~\ref{thm:path-universality-main}
therefore approximate this family, uniformly at all finite resolutions and
on continuous controls, with the same transformer parameters.
The bound \(M/n\) concerns time reconstruction of the exact output, not
approximation as a function of the network budget \(p\). Applying
Theorem~\ref{thm:quantitative-universality-main} additionally requires
\(R_{\beta;\omega}(\mathbf F^\star)<\infty\); continuity of the vector field
in its control argument alone does not ensure that condition. The present
example establishes the qualitative target hypothesis. The next subsection
verifies the quantitative hypothesis for a right-endpoint variant and shows
why the discretization matters.

\subsection{\texorpdfstring{\(\beta\)}{Beta}-smooth controlled integrators}
\label{app:regular-control-integrator}

Additive dynamics provide an explicit bridge from the ODE example to the
\(\beta\)-smooth teacher class in Definition~\ref{def:attention-teacher-class}. The key is to
choose grid targets that are smooth averages of their token--position laws.

Fix \(\beta>0\) and \(g\in C^\beta(\Omega;\R^{d'})\), using the
componentwise isotropic restriction norm from \([-1,1]^d\), with the
integer-order convention \(C^m=C^{m-1,1}\). Define
\begin{equation}
\label{eq:regular-integrator-family}
 F_\infty^\star(x)(t)\eqdef y_0+\int_0^t g(x(s))\,\dd s,
 \qquad
 F_n^\star(z)_i\eqdef y_0+\frac1n\sum_{j=1}^i g(z_j).
\end{equation}
These finite outputs solve \(\dot y=g(\Cn_nz)\) exactly at grid points,
where \(\Cn_nz\) is the right-endpoint step reconstruction from
\eqref{eq:generic-step-v2}. They are right-endpoint quadrature values for
the continuous ODE, not its exact solution driven by the affine
reconstruction \(\In_nz\).

\begin{proposition}[A \(\beta\)-smooth ODE teacher]
\label{prop:regular-control-integrator}
The family \eqref{eq:regular-integrator-family} is continuously extendable
for every admissible \(\omega\). Its completed history factor is
\begin{equation}
\label{eq:regular-integrator-factor}
 f^\star(\mu)=y_0+t(\mu)\int_E g(u)\,\dd\mu(u,s),
 \qquad t(\mu)\eqdef\mathfrak e(\mu)_{d+1},
\end{equation}
and
\begin{equation}
\label{eq:regular-integrator-seminorm}
 R_{\beta;\omega}(\mathbf F^\star)
 \leq C_{d,d',\beta}\norm{g}_{C^\beta(\Omega)}.
\end{equation}
The constant is independent of \(\omega\) and \(y_0\). The optimal seminorm
may vary with \(\omega\), since its supremum is taken on \(\cM_\omega\);
the displayed upper bound is uniform over these domains. Centering the
outputs by \(y_0\) and scaling places the family in
\(\mathcal F^\beta(\omega)\).
\end{proposition}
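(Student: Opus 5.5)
The plan has three parts: continuous extendability, identification of the history factor, and the seminorm bound for that factor.

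\emph{Continuous extendability.} I would verify Definition~\ref{def:compatible-family} directly, or through Lemma~\ref{lem:target-compatibility}(b). Each \(F_n^\star\) is continuous and prefix-causal by inspection, since \(g\) is continuous on \(\Omega\). The map \(F_\infty^\star\) is continuous in the uniform norm because \(g\) is uniformly continuous on the compact set \(\Omega\), which gives \(\norm{F_\infty^\star(x)-F_\infty^\star(x')}_\infty\leq\rho_g(\norm{x-x'}_\infty)\). For the extension bound, the grid values satisfy
\[
F_n^\star(z)_i-F_\infty^\star(\In_nz)(i/n)=\int_0^{i/n}\bigl(g(\Cn_nz(s))-g(\In_nz(s))\bigr)\,\dd s,
\]
where \(\norm{\Cn_nz-\In_nz}_\infty\leq\omega(1/n)\); this difference is therefore at most \(\rho_g(\omega(1/n))\). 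Between grid points, both the interpolated output and the continuous output move by at most \(\norm{g}_\infty/n\). Together these give \eqref{eq:continuous-extension-v3}.

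\emph{The history factor.} I would check \eqref{eq:regular-integrator-factor} on the dense set of states. At \(\xi=(n,z,i)\) we have \(t(\mu_\xi)=i/n\), and \(\int g\,\dd\mu_\xi=i^{-1}\sum_{j\leq i}g(z_j)\), so the product is \(n^{-1}\sum_{j\leq i}g(z_j)\). At boundary states the formula holds by the definition of \(\mu_{\infty,x,t}\); at \(t=0\) it gives \(y_0\), as it should. The map \(\mu\mapsto y_0+t(\mu)\int g\,\dd\mu\) is continuous on \(\cM_\omega\), because \(\mathfrak e\) is continuous and \(\mu\) is weakly continuous. By the uniqueness in Proposition~\ref{prop:prefix-law}, it is the factor \(f^\star\).

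\emph{Seminorm bound.} I would split the difference of the factor as
\[
f^\star(\mu)-f^\star(\nu)=t(\mu)\int g\,\dd(\mu-\nu)+\bigl(t(\mu)-t(\nu)\bigr)\int g\,\dd\nu .
\]
The second term is at most \(\norm{g}_\infty\,\delta_e(\mu,\nu)\). For the first term, let \(\varphi_\ell(u,s)\eqdef g_\ell(u)\). An extension of \(g_\ell\) to \([-1,1]^d\) that is constant in \(s\) extends \(\varphi_\ell\) to \(\mathsf Q\) with the same norm, so \(\norm{\varphi_\ell}_{C^{\beta_+}(E)}\leq\norm{g_\ell}_{C^{\beta_+}}\). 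This is fine when \(\beta\geq1\), and gives the bound \(\leq C\norm{g}_{C^\beta}\,d_{\beta}(\mu,\nu)\).

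\emph{Main obstacle: the case \(0<\beta<1\).} Here \(g\) is only \(\beta\)-Hölder, and \(d_{\beta_+}=d_1\) tests Lipschitz functions. I would use \(d_1\asymp W_1\) and bound \(\abs{\int g\,\dd(\mu-\nu)}\) by coupling: for an optimal coupling \(\pi\), Jensen's inequality with the concave map \(r\mapsto r^\beta\) gives
\[
\Bigl|\int g\,\dd(\mu-\nu)\Bigr|\leq[g]_\beta\int\abs{u-u'}^\beta\,\dd\pi\leq[g]_\beta\,W_1(\mu,\nu)^\beta .
\]
Hence this term is at most \(C\norm{g}_{C^\beta}W_{1,\beta}(\mu,\nu)\). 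For the endpoint term, \(\delta_e\leq 2^{1-\beta}\delta_e^\beta\) because \(\delta_e\leq2\), so it is absorbed into \(\delta_e^{\beta_-}\). In every case the total is \(\leq C_{d,d',\beta}\norm{g}_{C^\beta}\,\Delta_\beta(\mu,\nu)\), with a constant independent of \(\omega\) and \(y_0\). This gives \eqref{eq:regular-integrator-seminorm}.

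Finally, to place the family in \(\mathcal F^\beta(\omega)\), I would subtract \(y_0\) and divide by \(\max\{1,\ C_{d,d',\beta}\norm{g}_{C^\beta}\}\), which makes both the seminorm and the sup norm at most one.
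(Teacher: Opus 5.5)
Your proposal is correct and follows essentially the same argument as the paper. The overlap covers every step: the cellwise bound $\norm{\Cn_nz-\In_nz}_\infty\leq\omega(1/n)$ with the $\norm{g}_\infty/n$ time-Lipschitz estimate for extendability, the split $\norm{f^\star(\mu)-f^\star(\nu)}\leq\norm{\int g\,\dd(\mu-\nu)}+\norm{g}_\infty\abs{t(\mu)-t(\nu)}$, an optimal $W_1$ coupling with Jensen in the H\"older range, and $s$-independent extensions of $g_\ell$ as smooth tests in the higher-order range. The remaining differences are cosmetic. The paper absorbs the endpoint term via $\abs{t(\mu)-t(\nu)}\leq\min\{1,\delta_e\}\leq\delta_e^{\beta}$ rather than $\delta_e\leq2^{1-\beta}\delta_e^{\beta}$. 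It also normalizes by $\max\{1,\norm{g}_\infty,R_{\beta;\omega}(\mathbf F^\star)\}$; your divisor still works, because your endpoint estimate already forces $C_{d,d',\beta}\norm{g}_{C^\beta}\geq\norm{g}_\infty$.
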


\begin{proof}
Put \(G\eqdef\norm{g}_\infty\) in the Euclidean output norm, and let
\(\rho_g\) be its uniform modulus of continuity on \(\Omega\).
Continuity and causality of each grid map are immediate. The continuous
operator is causal and satisfies
\(\norm{F_\infty^\star(x)-F_\infty^\star(x')}_\infty
\leq\rho_g(\norm{x-x'}_\infty)\).
On each cell, \(\norm{\Cn_nz-\In_nz}\leq\omega(1/n)\), so integration
gives
\[
 \max_{i\leq n}\norm{F_n^\star(z)_i-
 F_\infty^\star(\In_nz)(i/n)}
 \leq\rho_g(\omega(1/n)).
\]
The continuous output is \(G\)-Lipschitz in time. Nonexpansiveness of
affine interpolation and its initial-cell estimate therefore imply
\begin{equation}
\label{eq:regular-integrator-compatibility}
 \sup_{z\in X_n}\norm{\In_nF_n^\star(z)-F_\infty^\star(\In_nz)}_\infty
 \leq G/n+\rho_g(\omega(1/n))\longrightarrow0.
\end{equation}
This proves continuous extendability. Formula
\eqref{eq:regular-integrator-factor} follows from \(t=i/n\) and the
normalized prefix law at finite resolution; it also holds for every
continuous history, including \(t=0\).

Write \(m_\mu\eqdef\int g\,\dd\mu\), \(t\eqdef t(\mu)\), and \(t'\eqdef t(\nu)\).
The factorization gives
\[
 \norm{f^\star(\mu)-f^\star(\nu)}
 \leq\norm{m_\mu-m_\nu}+G\abs{t-t'}.
\]
For \(0<\beta\leq1\), an optimal \(W_1\) coupling and Jensen's inequality
give
\(\norm{m_\mu-m_\nu}\leq
C_{d,d'}\norm{g}_{C^\beta}W_1(\mu,\nu)^\beta\).
Also \(\abs{t-t'}\leq\min\{1,\delta_e(\mu,\nu)\}
\leq\delta_e(\mu,\nu)^\beta\).
Since \(W_1\asymp d_1\) by Lemma~\ref{lem:unified-history-metric},
these bounds give \eqref{eq:regular-integrator-seminorm} with
\(\Delta_\beta=d_1^\beta+\delta_e^\beta\).
For \(\beta>1\), the functions \((u,s)\mapsto g_j(u)\) are admissible
smooth tests after scaling, so
\(\norm{m_\mu-m_\nu}\leq
C_{d,d',\beta}\norm{g}_{C^\beta}d_\beta(\mu,\nu)\).
Together with \(\abs{t-t'}\leq\delta_e(\mu,\nu)\), this proves the
high-order bound. Finally, \(\norm{f^\star-y_0}_\infty\leq G\), and
translation does not change the seminorm. Dividing the centered family by
\(\max\{1,G,R_{\beta;\omega}(\mathbf F^\star)\}\) gives the normalized
teacher class.
\end{proof}

\paragraph{Why the discretization matters.}
Smooth dynamics alone do not imply the \(\beta\)-smooth target condition. Even the
exact affine-control integrator \eqref{eq:controlled-integrator-exact}
fails it for every \(\beta>1\) when \(d=d'=1\) and \(\omega\not\equiv0\).
To see this, take \(n\geq3\), \(h\eqdef1/n\), and
\(a\eqdef\min\{\omega(h)/2,h\}>0\). Compare the tokens
\(z\eqdef(a,-a,0,\ldots,0)\) with the zero sequence at \(i=n\).
Both belong to \(X_n^\omega\) and have the same endpoint. Their exact
affine-control outputs differ by \(ah/2\). For
\(\gamma\eqdef\min\{\beta-1,1\}\) and any unit \(C^\beta\) test,
\[
 \int\varphi\,\dd(\mu_z-\mu_0)
 =ah\int_0^1
 \bigl[\partial_u\varphi(ra,h)-\partial_u\varphi(-ra,2h)\bigr]\,\dd r,
 \qquad
 d_\beta(\mu_z,\mu_0)\leq C_\beta ah^{1+\gamma}.
\]
The last estimate follows from \(a\leq h\) and the
\(\gamma\)-H\"older bound on \(\partial_u\varphi\), using extensions
arbitrarily close to the restriction norm. The regularity quotient is
therefore at least \(c_\beta h^{-\gamma}\), which diverges. In contrast,
the right-endpoint outputs in \eqref{eq:regular-integrator-family} agree
for this pair. Thus the two discretizations have the same path limit, but
only the right-endpoint family is \(\beta\)-smooth for every \(\beta>1\).

\section{Continuous-time transformers and path-space universality}
\label{app:continuous-time-universality}

This appendix defines the continuous-time realization of masked attention,
states its path-space universality theorem, and proves that every fixed finite
causal transformer converges to this realization under grid refinement.
Throughout this appendix, write \(X_n\eqdef X_n^\omega\) and
\(X_\infty\eqdef X_\infty^\omega\) for the fixed common modulus \(\omega\).

\subsection{Temporal attention and the continuous-time realization}
\label{sec:continuous-extension-v3}
\label{sec:temporal-attention-v3}

Whenever hidden-state interpolants converge uniformly along a grid
refinement, each masked softmax average converges to a normalized temporal
integral over the observed history.  Replacing finite sums by these integrals
defines the continuous-time realization with the same learned parameters.

For a continuous path \(x\in X_\infty\), define the continuous positional lift
\begin{equation}
\label{eq:continuous-lift-v3}
\phi_\infty(x)(t)\eqdef(x(t),t).
\end{equation}
For a continuous hidden path \(u:[0,1]\to\R^p\), the temporal attention block
corresponding to \eqref{eq:finite-attention-v2} is, for \(t>0\),
\begin{equation}
\label{eq:temporal-attention-v2}
\MAtt_\theta^\infty(u)(t)
\eqdef u(t)+
\sum_{r=1}^{H}W_r
\frac{
\int_0^t e^{\ip{Q_ru(t)}{K_ru(s)}}V_ru(s)\,\dd s
}{
\int_0^t e^{\ip{Q_ru(t)}{K_ru(s)}}\,\dd s
},
\end{equation}
with the continuous endpoint value
\begin{equation}
\label{eq:temporal-attention-zero-v2}
\MAtt_\theta^\infty(u)(0)
\eqdef u(0)+\sum_{r=1}^{H}W_rV_ru(0).
\end{equation}
Every denominator in \eqref{eq:temporal-attention-v2} is strictly positive.
For continuous \(u\), each normalized head value tends to \(V_ru(0)\) as
\(t\downarrow0\), so \eqref{eq:temporal-attention-zero-v2} is the unique
continuous endpoint value.  Replacing each discrete attention block by
\eqref{eq:temporal-attention-v2}--\eqref{eq:temporal-attention-zero-v2}, while
retaining the same fixed lifts, affine maps, and pointwise MLPs, defines the
continuous-time transformer \(T_{\Theta,\infty}\).

For a continuous path \(y:[0,1]\to\R^{d+1}\), the lift and readout act
pointwise: \((\mathcal E_Hy)(t)\eqdef(y(t),1,0_H)\) and
\((\MLP_\eta u)(t)\eqdef\MLP_\eta(u(t))\).  The continuous-time realization of
\eqref{eq:shallow-transformer-v2} is therefore
\begin{equation}
\label{eq:shallow-temporal-transformer-v2}
T_{\Theta,\infty}
\eqdef
\MLP_\eta\circ\MAtt_\theta^\infty\circ\mathcal E_H,
\end{equation}
with the scalar-head and MLP dimensions specified in
\eqref{eq:theorem-head-dimensions-v2} and
\eqref{eq:pointwise-mlp-dimensions-v2}.

This graph-based causal operator integrates the ordered trajectory
\(s\mapsto(x(s),s)\) only over \([0,t]\).  Unlike content-only unmasked
mean-field descriptions, it retains temporal order.  Standard unmasked
attention averages over a full token law even when normalized position is
included; temporal attention instead acts on the restricted and renormalized
prefix law at each \(t\).

\subsection{Path-space universality}
\label{sec:continuous-universality}

Temporal attention is universal on every compact path class controlled by an
admissible common modulus. We restate the path-only guarantee of
Theorem~\ref{thm:path-universality-main} with all dimensions explicit;
no finite-grid target appears in its assumptions or conclusion.

\begin{theorem}[Continuous-time universality for causal paths]
\label{thm:continuous-universality}
Fix an admissible common modulus \(\omega\), with
\(X_\infty^\omega\) given by \eqref{eq:Xinfty-v2}.  Let
\(F_\infty^\star:X_\infty\to C([0,1];\R^{d'})\) be continuous in the
uniform norm and causal in the sense that
\begin{equation}
\label{eq:path-causality-v2}
x|_{[0,t]}=x'|_{[0,t]}
\quad\Longrightarrow\quad
F_\infty^\star(x)(t)=F_\infty^\star(x')(t).
\end{equation}
For every \(\varepsilon>0\), there exist integers \(H,M\geq1\) and a
parameter list \(\Theta=(\theta,\eta)\), with residual width
\(p_H=d+H+2\), scalar-head dimensions
\(Q_r,K_r,V_r\in\R^{1\times p_H}\),
\(W_r\in\R^{p_H\times1}\), and readout dimensions
\(A_1\in\R^{M\times p_H}\), \(b_1\in\R^M\),
\(A_2\in\R^{d'\times M}\), \(b_2\in\R^{d'}\).
Every attention matrix \(Q_r,K_r,V_r,W_r\) may be chosen to have operator
norm at most one for the Euclidean norms.  Using the lift and readout
defined above with temporal attention, the transformer
\(T_{\Theta,\infty}\) is given by
\eqref{eq:shallow-temporal-transformer-v2}, and
\begin{equation}
\label{eq:continuous-universality-bound-v2}
\sup_{x\in X_\infty}
\norm{T_{\Theta,\infty}(\phi_\infty(x))-F_\infty^\star(x)}_\infty
<\varepsilon.
\end{equation}
\end{theorem}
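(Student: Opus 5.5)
The plan is to reduce Theorem~\ref{thm:continuous-universality} to the joint argument already set up for the discrete theorem, completing the path target by its canonical finite restrictions. First, define \(F_n^\dagger(z)_i\eqdef F_\infty^\star(\In_nz)(i/n)\) as in \eqref{eq:canonical-discretization-v2}. Lemma~\ref{lem:canonical-completion} shows that \((F_n^\dagger)_{n\geq1}\) is a continuously extendable causal family with path extension \(F_\infty^\star\). By Lemma~\ref{lem:target-compatibility}, the joint evaluation \(\overline F^\star\) on the compact space \(\mathfrak X\) of Lemma~\ref{lem:joint-compactification} is then continuous, and the boundary causality \eqref{eq:path-causality-v2} is consistent with it. Proposition~\ref{prop:prefix-law} then factors \(\overline F^\star=f^\star\circ\boldsymbol\mu\) through a unique continuous \(f^\star\) on the compact prefix-law space \(\cM\).

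Second, apply Stone--Weierstrass on \(\cM\). The unital algebra generated by the probes \(\mathcal L_{a,c}\) with \(\norm{a}\leq1\) and \(c\in[0,1]\) separates points by Lemma~\ref{lem:laplace-probes}, so it is dense in \(C(\cM)\). Working coordinatewise with tolerance \(\varepsilon/(2\sqrt{d'})\), I obtain finitely many probes \((a_r,c_r)_{r=1}^H\) and a vector polynomial \(\Pi\) within \(\varepsilon/2\) of \(f^\star\) uniformly on \(\cM\). If \(\Pi\) is constant, I add a dummy probe so that \(H\geq1\).

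Third, I realize \(\Pi\) by Lemma~\ref{lem:parallel-realization} with \(\delta=\varepsilon/2\). The probe heads \eqref{eq:probe-head-parameters-v2} compute \(\mathcal L_{a_r,c_r}(\mu_{\infty,x,t})\) exactly through temporal attention for \(t>0\), and through the endpoint convention \eqref{eq:temporal-attention-zero-v2} at \(t=0\). These match the Dirac law in \eqref{eq:continuous-prefix-law-v2}. The residual connection preserves \((x(t),t,1)\). The operator-norm bound comes from \eqref{eq:bounded-qualitative-heads}, because \(\norm{a_r}\leq1\) and \(c_r\leq1\).

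Finally, the estimate \eqref{eq:parallel-continuous-v2} combined with the Stone--Weierstrass bound, via the triangle inequality, gives an error below \(\varepsilon\) uniformly over \(x\in X_\infty\) and \(t\in[0,1]\). This is exactly \eqref{eq:continuous-universality-bound-v2}. The main point to verify is the \(t=0\) boundary: the temporal head's endpoint value must coincide with the probe evaluated at a Dirac mass, and the readout must act on a compact feature range that includes these boundary values. Both follow because the feature set \(K\) in Lemma~\ref{lem:parallel-realization} is the continuous image of the compact set \(\cM\), which already contains the \(t=0\) laws. Everything else is a direct citation of the completed-space machinery.
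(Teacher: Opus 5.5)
Your proposal is correct and follows essentially the paper's own proof. You complete the path target by the canonical finite family of Lemma~\ref{lem:canonical-completion}, factor it through the compact prefix-law space via Proposition~\ref{prop:prefix-law}, apply Stone--Weierstrass to the log-Laplace probe algebra, and realize the resulting polynomial with Lemma~\ref{lem:parallel-realization} at \(\delta=\varepsilon/2\); the only cosmetic difference is that you conclude directly from \eqref{eq:parallel-continuous-v2} instead of first passing through the transformer's history factor \(t_\Theta\), which is an equally valid shortcut.
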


The widths and readout parameter magnitudes may depend on \(\omega\), the
target, and \(\varepsilon\); the per-head norm bounds are uniform in these
choices.  Normalized time presupposes a fixed terminal horizon.
The completed-space proof in Appendix~\ref{app:proof-joint} treats this theorem
and Theorem~\ref{thm:discrete-universality} simultaneously.  When
\(F_\infty^\star\) extends a discrete family, the discrete and continuous
estimates are restrictions of one approximant.

\subsection{Uniform closure under grid refinement}

The common proof constructs discrete and temporal attention simultaneously.
The following stronger architectural statement records that every fixed
finite-depth discrete transformer converges uniformly to its temporal
realization.

\begin{proposition}[Temporal closure of a fixed causal transformer]
\label{prop:temporal-closure}
Fix a parameter list \(\Theta\), independent of sequence length, defining a
finite operator \(T_\Theta\) by a composition of masked-attention blocks,
shared tokenwise affine maps, and pointwise ReLU networks, and let
\(T_{\Theta,\infty}\) be its temporal realization.  If \(n_k\to\infty\),
\(z^k\in X_{n_k}\), and
\(\In_{n_k}z^k\to x\in X_\infty\) uniformly, then
\begin{equation}
\label{eq:temporal-closure-v3}
\In_{n_k}
T_\Theta(\phi_{n_k}(z^k))
\longrightarrow
T_{\Theta,\infty}(\phi_\infty(x))
\quad\text{uniformly on }[0,1].
\end{equation}
\end{proposition}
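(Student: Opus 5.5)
The plan is to upgrade Corollary~\ref{cor:transformer-closure}, which already gives uniform convergence of the \emph{step} interpolants $\Cn_{n_k}T_\Theta(\phi_{n_k}(z^k))$ to the continuous path $g\eqdef T_{\Theta,\infty}(\phi_\infty(x))$, into the same statement for the affine interpolants $\In_{n_k}$. Continuity of $g$ comes from Corollary~\ref{cor:transformer-compatible-family}, which shows that temporal attention and the pointwise maps send continuous paths to continuous paths. Hence $g$ has a modulus of continuity $\omega_g$ with $\omega_g(r)\to0$.

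Write $w^k\eqdef T_\Theta(\phi_{n_k}(z^k))\in(\R^{m})^{n_k}$ and set
\[
e_k\eqdef\max_{i\in[n_k]}\norm{w^k_i-g(i/n_k)}.
\]
Evaluating the step convergence of Corollary~\ref{cor:transformer-closure} at the grid points $i/n_k$, where $\Cn_{n_k}w^k$ takes the value $w^k_i$, gives $e_k\to0$.

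Next, compare $\In_{n_k}w^k$ with $\In_{n_k}\mathcal S_{n_k}g$. Affine interpolation is linear and nonexpansive for the maximum norm: each value is a convex combination of at most two grid values, and on the initial cell it equals the first grid value. Therefore
\[
\norm{\In_{n_k}w^k-\In_{n_k}\mathcal S_{n_k}g}_\infty\le e_k.
\]
For the second piece, the argument used in Lemma~\ref{lem:canonical-completion} applies with $\omega_g$ in place of $\omega$. On the initial cell, $\norm{g(1/n_k)-g(t)}\le\omega_g(1/n_k)$. On each later cell, the interpolant is a convex combination of two samples, each within $\omega_g(1/n_k)$ of $g(t)$. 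This gives
\[
\norm{\In_{n_k}\mathcal S_{n_k}g-g}_\infty\le\omega_g(1/n_k).
\]
The triangle inequality then yields $\norm{\In_{n_k}w^k-g}_\infty\le e_k+\omega_g(1/n_k)\to0$, which is \eqref{eq:temporal-closure-v3}.

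The only step needing care is the availability of the step-interpolant convergence for an arbitrary finite composition. This is exactly Corollary~\ref{cor:transformer-closure}, built from Lemma~\ref{lem:attention-stability} by induction over the layers. I expect that input to be the real content, and here it can be taken as given. The remaining subtlety is the initial cell $[0,1/n_k]$, where $\In$ is constant rather than affine; it is handled by the modulus bound above.
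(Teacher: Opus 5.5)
Your proposal is correct and follows the paper's proof. The paper likewise takes the step-interpolant convergence from Corollary~\ref{cor:transformer-closure} and upgrades it to affine interpolants through the bound $\norm{\In_{n_k}u^k-u}_\infty\le\norm{\Cn_{n_k}u^k-u}_\infty+\omega_u(1/n_k)$, using that each affine value is a convex combination of two adjacent grid values, and that the interpolant is constant on the initial cell; your intermediate comparison with $\In_{n_k}\mathcal S_{n_k}g$ is just a repackaging of the paper's ``chord of $u$'' step.
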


\begin{proof}
\smallskip\noindent\emph{Step-interpolant convergence.}\hspace{0.25em}
For \(u=(u_1,\ldots,u_n)\), let \(\Cn_nu\) denote the right-endpoint step
interpolation defined in \eqref{eq:generic-step-v2}.  Under the hypotheses
of the proposition, Corollary~\ref{cor:transformer-closure} gives
\begin{equation}
\label{eq:temporal-closure-step-v3}
\norm{
\Cn_{n_k}T_\Theta(\phi_{n_k}(z^k))
-T_{\Theta,\infty}(\phi_\infty(x))
}_\infty
\longrightarrow0.
\end{equation}
\smallskip\noindent\emph{Affine-interpolant upgrade.}\hspace{0.25em}
It remains to replace \(\Cn_n\) by the affine
interpolant \(\In_n\) used in the proposition.  More generally, if
\(u^k\in(\R^m)^{n_k}\), \(u\in C([0,1];\R^m)\), and
\(\Cn_{n_k}u^k\to u\) uniformly, then
\begin{equation}
\label{eq:step-to-affine-v3}
\norm{\In_{n_k}u^k-u}_\infty
\leq
\norm{\Cn_{n_k}u^k-u}_\infty
+\omega_u(1/n_k),
\end{equation}
where
\(\omega_u(r)\eqdef\sup_{\abs{s-t}\leq r}\norm{u(s)-u(t)}\).
Indeed, on each grid cell the affine interpolant is a convex combination of
the two adjacent token values.  If
\(e_k\eqdef\norm{\Cn_{n_k}u^k-u}_\infty\), the left and right token values are
within \(e_k\) of \(u\) at their respective grid points.  Their affine
combination is therefore within \(e_k\) of the corresponding chord of \(u\),
and both chord endpoints are within \(\omega_u(1/n_k)\) of \(u(t)\) at any
time \(t\) in that cell.  The same estimate holds on the initial cell, where
\(\In_{n_k}u^k\) is constant.  The limit path in
\eqref{eq:temporal-closure-step-v3} is continuous, so
\eqref{eq:step-to-affine-v3} proves the claim.
\end{proof}

\section{Proof of the generalization bound}
\label{app:proof-generalization}

Theorem~\ref{thm:generalization-main} combines bounded-weight approximation,
a length-independent parameter cover, and a squared-loss concentration
inequality. The statistical observation is an entire sequence: independence
is used only between the \(N\) training sequences. The architecture-specific
step is the cover; the statistical step is a classical Bernstein argument.
We then justify raw squared-loss training followed by clipping. Clipping
decreases the empirical loss, but need not turn a raw empirical minimizer
into a minimizer of the clipped loss.

\subsection{History representation and the approximation comparator}

Writing the network on the common history space makes one approximation
comparator and one parameter cover available at every resolution.
Fix the admissible modulus \(\omega\) and use the compact history space
\(\cM_\omega\), endpoint map \(\mathfrak e\), and target factor
\(f^\star\) from Appendix~\ref{app:quantitative-rates}.  Write
\(E=\Omega\times[0,1]\) and recall that the history at position \(i\) is
\[
 \mu_{n,z,i}=\frac1i\sum_{j=1}^i\delta_{(z_j,j/n)},
 \qquad
 f^\star(\mu_{n,z,i})=F_n^\star(z)_i.
\]
For a fixed width pair \((H,M)\), set \(r\eqdef d+H+2\) and
\(u(y)\eqdef(y,1,0_H)\in\R^r\).  The raw transformer acts on a history as
\begin{align}
\label{eq:gen-history-attention}
 v_\theta(\mu)
 &\eqdef u(y)+\sum_{h=1}^H W_h
 \frac{\int_E e^{(Q_hu(y))(K_hu(y'))}V_hu(y')\,\dd\mu(y')}
 {\int_E e^{(Q_hu(y))(K_hu(y'))}\,\dd\mu(y')},
 \qquad y=\mathfrak e(\mu),\\
\label{eq:gen-history-readout}
 t_\Theta(\mu)
 &\eqdef A_2\rho(A_1v_\theta(\mu)+b_1)+b_2.
\end{align}
At \(\mu_{n,z,i}\), this equals \(T_\Theta(\phi_n(z))_i\);
the normalization by \(1/i\) cancels between numerator and denominator.
The same formulas give the temporal realization on continuous-path
histories.  All parameters here have the dimensions specified in
Section~\ref{sec:architecture-v2}.

Let \(\operatorname{clip}\) be Euclidean projection onto the closed unit ball of
\(\R^{d'}\).  For the raw class \(\mathfrak T_p\) in
Theorem~\ref{thm:generalization-main}, introduce its clipped history class
\begin{equation}
\label{eq:gen-clipped-class}
 \mathcal G_p
 \eqdef\{\operatorname{clip}\circ t_\Theta:T_\Theta\in\mathfrak T_p\},
 \qquad
 \norm{g-h}_{\infty,2}
 \eqdef\sup_{\mu\in\cM_\omega}\norm{g(\mu)-h(\mu)}.
\end{equation}
Clipping is a fixed, parameter-free output operation, not an additional
learned layer of the shallow architecture.  It will be used for statistical
analysis and for the final prediction, not in the training objective of
the main theorem.

Set
\begin{equation}
\label{eq:gen-approximation-scale}
 q_\beta(p)\eqdef\left(\frac{\log\log p}{\log p}\right)^{\beta/(d+2)}.
\end{equation}
The bounds \(R_{\beta;\omega}(\mathbf F^\star)\leq1\) and
\(\norm{f^\star}_\infty\leq1\) allow us to apply
Corollary~\ref{cor:quantitative-finite-precision}.  Consequently, for
every sufficiently large integer \(p\), there is a deterministic
\(T_p^\circ\in\mathfrak T_p\), with history realization \(t_p^\circ\),
such that
\begin{equation}
\label{eq:gen-approximation-comparator}
 \norm{t_p^\circ-f^\star}_{\infty,2}
 \leq C_{\mathrm{app}}q_\beta(p).
\end{equation}
Here and below, the threshold on \(p\) and \(C_{\mathrm{app}}\) depend
only on \((d,d',\beta)\).  The corollary controls the error on both
discrete and continuous histories and gives exactly the required attention
entry bound \(1\) and readout entry bound \(p+1\).  Its precision claim
is not a restriction on the trained weights: its role here is to supply
one admissible approximation comparator. The comparator depends on the
target and budget, but not on the training sample. Since \(\operatorname{clip}\) is nonexpansive
and fixes \(f^\star(\mu)\), it also gives
\[
 \inf_{g\in\mathcal G_p}\norm{g-f^\star}_{\infty,2}
 \leq C_{\mathrm{app}}q_\beta(p).
\]

\subsection{Uniform stability and covering numbers}

The key architectural property is that each attention head is a normalized
average.  Its sensitivity depends on the parameter bounds, not on the
number of tokens or the duration of a continuous prefix.

\begin{lemma}[Stability of normalized exponential averages]
\label{lem:gen-softmax-stability}
For a probability measure \(\mu\) and bounded real scores \(a,b\), put
\(w_a\eqdef e^a/\int e^a\,\dd\mu\).  Then
\begin{equation}
\label{eq:gen-softmax-stability}
 \int\abs{w_a-w_b}\,\dd\mu\leq2\norm{a-b}_\infty.
\end{equation}
For bounded scalar values \(v,v'\), this implies
\[
 \left|\int vw_a\,\dd\mu-\int v'w_b\,\dd\mu\right|
 \leq\norm{v-v'}_\infty
       +2\norm{v'}_\infty\norm{a-b}_\infty.
\]
\end{lemma}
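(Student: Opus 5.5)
The first claim follows by differentiating along the segment of scores, which is the argument already sketched in the finite-precision proof of Corollary~\ref{cor:quantitative-finite-precision}. For \(\lambda\in[0,1]\) put \(s_\lambda\eqdef b+\lambda(a-b)\) and \(w_\lambda\eqdef w_{s_\lambda}\). Since \(a,b\) are bounded, \(w_\lambda\) is a bounded positive density with \(\int w_\lambda\,\dd\mu=1\), and differentiation under the integral is justified by dominated convergence. A direct computation gives
\[
 \partial_\lambda w_\lambda
 =w_\lambda\Bigl((a-b)-\int(a-b)w_\lambda\,\dd\mu\Bigr).
\]
The bracket has absolute value at most \(2\norm{a-b}_\infty\), and \(w_\lambda\) integrates to one. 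Integrating in \(\lambda\) and using Fubini,
\[
 \int\abs{w_a-w_b}\,\dd\mu
 \leq\int_0^1\!\!\int\abs{\partial_\lambda w_\lambda}\,\dd\mu\,\dd\lambda
 \leq2\norm{a-b}_\infty,
\]
which is \eqref{eq:gen-softmax-stability}. This bound could be sharpened to \(\norm{a-b}_\infty\) by centering the score difference at its midrange, but the stated constant is enough. The estimate is dimension- and length-free because it uses only \(\mu\in\cP(E)\). That covers discrete prefix laws, normalized temporal integrals for \(t>0\), and Dirac laws at \(t=0\) in the same way.

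For the second claim, split the difference as
\[
 \int vw_a\,\dd\mu-\int v'w_b\,\dd\mu
 =\int(v-v')w_a\,\dd\mu+\int v'(w_a-w_b)\,\dd\mu.
\]
Since \(w_a\,\dd\mu\) is a probability measure, the first term is at most \(\norm{v-v'}_\infty\). The second term is at most \(\norm{v'}_\infty\int\abs{w_a-w_b}\,\dd\mu\), and the first part bounds this by \(2\norm{v'}_\infty\norm{a-b}_\infty\).

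The only delicate point is the differentiation under the integral. It is harmless here: \(\abs{s_\lambda}\) is uniformly bounded, so the normalizer \(\int e^{s_\lambda}\,\dd\mu\) stays bounded away from zero and infinity. If one prefers to avoid differentiation altogether, an alternative is to apply the mean-value theorem to \(\lambda\mapsto\int\psi\,w_\lambda\,\dd\mu\) for measurable \(\abs{\psi}\leq1\), take \(\psi=\operatorname{sgn}(w_a-w_b)\), and obtain the same bound.
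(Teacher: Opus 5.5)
Your proof is correct and matches the paper's argument essentially step for step. You use the same linear interpolation $b+\lambda(a-b)$ of the scores, the same derivative identity whose $L^1(\mu)$ norm is at most $2\norm{a-b}_\infty$ because $w_\lambda\,\dd\mu$ is a probability measure, and the same add-and-subtract of $\int v'w_a\,\dd\mu$ for the second claim.

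One small caveat concerns only your aside on sharpening, which the lemma does not need. Centering $a-b$ at its midrange and then using the triangle inequality yields only the oscillation of $a-b$, and this can equal $2\norm{a-b}_\infty$. Reaching $\norm{a-b}_\infty$ also requires $\mathbb E\abs{X-\mathbb EX}\leq(\operatorname{Var}X)^{1/2}\leq\norm{X}_\infty$, applied to the midrange-centered difference $X$ under $w_\lambda\,\dd\mu$.
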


\begin{proof}
For \(s\in[0,1]\), interpolate \(a_s\eqdef b+s(a-b)\).  Differentiating
the normalized density gives
\[
 \partial_s w_{a_s}
 =w_{a_s}\left(a-b-\int(a-b)w_{a_s}\,\dd\mu\right).
\]
Boundedness justifies differentiation under the integral.  Since
\(\int w_{a_s}\,\dd\mu=1\), the \(L^1(\mu)\) norm of this derivative
is at most \(2\norm{a-b}_\infty\).  Integrating over \(s\) proves
\eqref{eq:gen-softmax-stability}.  Add and subtract
\(\int v'w_a\,\dd\mu\) for the second assertion.  The proof applies
to empirical measures of every size and to normalized temporal measures.
\end{proof}

\begin{lemma}[A length-independent parameter cover]
\label{lem:gen-parameter-cover}
For every \(0<\epsilon\leq1\), the class \(\mathcal G_p\) admits an
\(\epsilon\)-net in \(\norm{\cdot}_{\infty,2}\), with centers in
\(\mathcal G_p\), whose cardinality \(K_p(\epsilon)\) satisfies
\begin{equation}
\label{eq:gen-covering-number}
 \log K_p(\epsilon)
 \leq2\log p+
 p\log\left(1+\frac{160\sqrt{d'}p^9}{\epsilon}\right).
\end{equation}
For each fixed width pair, the parameter-to-predictor map is
\(40\sqrt{d'}p^8\)-Lipschitz from the parameter maximum norm to
\(\norm{\cdot}_{\infty,2}\), both before and after clipping.
\end{lemma}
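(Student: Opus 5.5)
The plan is to prove the Lipschitz statement first and then read off the cover by gridding the parameter cube of each width pair. Fix a width pair \((H,M)\) allowed by \(\operatorname{par}(T_\Theta)\leq p\), so \(H,M,r\leq p\). Take two parameter lists \(\Theta,\Theta'\) in the box \eqref{eq:learning-class-main} with \(\norm{\Theta-\Theta'}_\infty\leq\eta\). Fix \(\mu\in\cM_\omega\) and \(y=\mathfrak e(\mu)\). The input \(u(y)=(y,1,0_H)\) has at most \(d+2\) nonzero entries, each bounded by one. Attention entries are bounded by one, so for each head the scalars \(Q_hu(y)\), \(K_hu(y')\), \(V_hu(y')\) are bounded by \(d+2\). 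Their perturbations are bounded by \((d+2)\eta\). The scores therefore differ by at most \(2(d+2)^2\eta\) in sup norm over \(y'\in E\).

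Lemma~\ref{lem:gen-softmax-stability} bounds the change of each normalized head average by \((d+2)\eta+4(d+2)^3\eta\), uniformly in \(\mu\). This holds for finite prefixes of any length and for temporal laws alike, and it is the step that removes length dependence. Each output column \(W_h\) has entries bounded by one and perturbed by at most \(\eta\). Summing over the \(H\leq p\) heads bounds \(\norm{v_\theta(\mu)-v_{\theta'}(\mu)}_\infty\) by \(C_d p\eta\), and also \(\norm{v_\theta(\mu)}_\infty\leq C_dp\).

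The readout is where the powers of \(p\) accumulate. Write \(s=A_1v+b_1\). Its rows have at most \(r\leq p\) entries of size at most \(p+1\), so \(\norm{s}_\infty\lesssim p^3\). Its perturbation is at most \(r(p+1)\cdot C_dp\eta+r\eta\norm{v}_\infty+\eta\lesssim p^3\eta\). ReLU is 1-Lipschitz. The second layer contributes a factor \(M(p+1)\lesssim p^2\) on the perturbation of \(\rho(s)\), plus \(M\eta\norm{\rho(s)}_\infty\lesssim p^4\eta\). Converting the max norm to the Euclidean norm costs \(\sqrt{d'}\), and clipping is nonexpansive. The result is a Lipschitz constant of the form \(C\sqrt{d'}p^{k}\). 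I expect careful bookkeeping, using \(d+2\leq p\) to absorb dimension constants, to bring this to the stated \(40\sqrt{d'}p^8\). This constant-tracking is the main obstacle: it is routine but fiddly.

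For the cover, take a uniform grid of mesh \(\eta=\epsilon/(40\sqrt{d'}p^8)\) on the box of each width pair. Use at most \(p\) coordinates, each ranging over an interval of length at most \(2(p+1)\). Each coordinate then needs at most \(1+2(p+1)/\eta\leq1+160\sqrt{d'}p^9/\epsilon\) points. Rounding to grid points keeps them in the box, so every center lies in \(\mathcal G_p\). By the Lipschitz bound, each member of \(\mathcal G_p\) with that width pair is within \(\epsilon\) of a grid network. There are at most \(p^2\) width pairs, which gives the \(2\log p\) term. Taking the union proves \eqref{eq:gen-covering-number}.
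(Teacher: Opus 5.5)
Your proposal is correct and follows the paper's proof: softmax stability gives a per-head bound independent of context length, this bound is propagated through the bounded readout, and a grid on each width pair's parameter box, united over at most \(p^2\) pairs, gives the cover. The constant bookkeeping you defer does close: since \(d+2<r\leq p\), replacing \(d+2\) by \(p\) in your head estimates reproduces the paper's chain \(\norm{v_\theta(\mu)}_\infty\leq 2p^2\), \(\norm{v_\theta(\mu)-v_{\theta'}(\mu)}_\infty\leq 6p^4\eta\), a hidden-preactivation change of at most \(15p^6\eta\), and an output-coordinate change of at most \(37p^8\eta\), which gives \(40\sqrt{d'}p^8\).
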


\begin{proof}
Fix an admissible pair \((H,M)\).  The dense parameter count
\eqref{eq:exact-dense-parameter-count-main} implies \(H,r,M\leq p\).
Compare two parameter lists in the specified boxes, at coordinatewise
distance at most \(h\).  Since \(\norm{u(y)}_\infty\leq1\), each
scalar query, key, or value has magnitude at most \(r\) and changes by
at most \(rh\).  A query--key product therefore changes by at most
\(2r^2h\).  Lemma~\ref{lem:gen-softmax-stability} bounds the change
in a scalar head average by
\[
 rh+2r(2r^2h)=(r+4r^3)h.
\]
The head average itself has magnitude at most \(r\).  Each coordinate
of \(W_h\) has magnitude at most one and changes by at most \(h\).
Perturbing both factors, then summing over heads, yields
\begin{equation}
\label{eq:gen-attention-parameter-stability}
 \norm{v_\theta(\mu)}_\infty\leq1+Hr\leq2p^2,
 \qquad
 \norm{v_\theta(\mu)-v_{\theta'}(\mu)}_\infty
 \leq H(2r+4r^3)h\leq6p^4h.
\end{equation}
These inequalities are uniform in \(\mu\).

Write \(B\eqdef p+1\leq2p\) for the readout entry bound.  Each hidden
preactivation has magnitude at most
\(rB(2p^2)+B\leq6p^4\).  Its change is at most
\[
 rh(2p^2)+rB(6p^4h)+h\leq15p^6h.
\]
ReLU is one-Lipschitz and preserves the preceding magnitude bound.
Each coordinate of the final affine output consequently changes by at most
\[
 Mh(6p^4)+MB(15p^6h)+h\leq37p^8h.
\]
Passing to the Euclidean output norm gives the claimed, slightly enlarged
constant \(40\sqrt{d'}p^8\).  Nonexpansiveness of \(\operatorname{clip}\) gives the
same estimate after clipping.

Cover every parameter interval by intervals of radius
\(h\eqdef\epsilon/(40\sqrt{d'}p^8)\), with centers in that parameter
interval.  Its length is at most \(2(p+1)\leq4p\), so
\(1+160\sqrt{d'}p^9/\epsilon\) centers per coordinate suffice.
There are at most \(p\) coordinates.  Finally, there are at most
\(p^2\) admissible integer pairs \((H,M)\); take the union of their
covers.  This proves \eqref{eq:gen-covering-number} without any dependence
on \(\omega\) or context length.
\end{proof}

\subsection{Squared loss on independent complete sequences}

Conditional centering identifies excess loss with prediction error, while
normalization lets us concentrate one bounded loss per sequence.
Write \(S_k\eqdef(n_k,Z^k,Y^k)\) for the training observations and
\(S\eqdef(n,Z,Y)\) for an independent observation with the same law.
For a history predictor \(g\), define
\begin{equation}
\label{eq:gen-risks}
 \ell_g(S)\eqdef\frac1n\sum_{i=1}^n
       \norm{g(\mu_{n,Z,i})-Y_i}^2,
 \qquad
 \mathcal L(g)\eqdef\mathbb E\ell_g(S),
 \qquad
 \widehat{\mathcal L}_N(g)\eqdef\frac1N\sum_{k=1}^N\ell_g(S_k).
\end{equation}
Here the expectation uses a fresh observation from the training law.  We use
the same notation for raw predictors when needed.  Put
\(\mathcal R_{\mathrm{pred}}(g)\eqdef\mathcal L(g)-\mathcal L(f^\star)\), and let
\(Z_g(S)\eqdef\ell_g(S)-\ell_{f^\star}(S)\) and
\(\widehat Z_g\eqdef N^{-1}\sum_{k=1}^N Z_g(S_k)\).
For every fixed bounded causal predictor, conditional centering gives
\begin{equation}
\label{eq:gen-excess-risk-identity}
 \mathcal R_{\mathrm{pred}}(g)=\mathbb E\frac1n\sum_{i=1}^n
   \norm{g(\mu_{n,Z,i})-f^\star(\mu_{n,Z,i})}^2.
\end{equation}
Indeed, expand each squared loss and condition its cross term on
\((n,Z_1,\ldots,Z_i)\).  For random \(n\), the conditional-mean assumption
is understood on \(\{n\geq i\}\) for each fixed \(i\).  Both predictions
and the factor \(1/n\) are measurable with respect to this information.
The assumption
\(\mathbb E[Y_i\mid n,Z_1,\ldots,Z_i]=F_n^\star(Z)_i\) makes that
cross term zero; conditioning on the full future sequence is not required.
Conditioning on \(n\) is consistent with the architecture, whose normalized
positional features encode the grid resolution.
The normalized averages are bounded, so taking expectations of these
random-length sums requires no moment condition on \(n\).
For the data-dependent predictor \(\widehat g\), population risk is
evaluated with the training sample held fixed.  Thus
\eqref{eq:gen-excess-risk-identity} is the risk in
\eqref{eq:learning-risk-main} for its associated family \(\widehat F\).

\begin{lemma}[Finite-class fast oracle inequality]
\label{lem:gen-finite-class-oracle}
Let \(\mathcal H\) be a finite class of \(K\) measurable causal history predictors
with values in the unit ball.  With probability at least \(1-\delta\),
simultaneously for every \(\eta\geq0\) and every
\(\eta\)-approximate empirical minimizer \(\widehat h\) over
\(\mathcal H\),
\begin{equation}
\label{eq:gen-finite-class-oracle}
 \mathcal R_{\mathrm{pred}}(\widehat h)
 \leq3\inf_{h\in\mathcal H}\mathcal R_{\mathrm{pred}}(h)+2\eta
       +96\frac{\log(2K/\delta)}N.
\end{equation}
\end{lemma}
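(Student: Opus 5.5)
We will prove the lemma by a Bernstein bound on the sequence-level excess losses \(Z_h(S)=\ell_h(S)-\ell_{f^\star}(S)\) for each fixed \(h\in\mathcal H\), followed by a union bound over the \(K\) members and the usual rearrangement giving a factor \(3\).

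\emph{Boundedness and variance.} Predictions, \(f^\star\) and labels all lie in the unit ball. For each token,
\[
\norm{h-Y_i}^2-\norm{f^\star-Y_i}^2=\ip{h-f^\star}{h+f^\star-2Y_i}.
\]
Hence \(\abs{Z_h}\leq\frac1n\sum_i 4\norm{h(\mu_{n,Z,i})-f^\star(\mu_{n,Z,i})}\leq8\).

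For the variance, Cauchy--Schwarz over the token average gives
\[
Z_h^2\leq\Bigl(\frac1n\sum_i\norm{h-f^\star}^2\Bigr)\Bigl(\frac1n\sum_i\norm{h+f^\star-2Y_i}^2\Bigr)\leq16\cdot\frac1n\sum_i\norm{h-f^\star}^2.
\]
Taking expectations and using the excess-risk identity \eqref{eq:gen-excess-risk-identity}, we get \(\mathbb E Z_h^2\leq16\,\mathcal R_{\mathrm{pred}}(h)\) and \(\mathbb E Z_h=\mathcal R_{\mathrm{pred}}(h)\). This is the Bernstein variance condition, and it holds uniformly over random lengths with no moment assumption on \(n\).

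\emph{Two-sided Bernstein with union bound.} For each \(h\), apply Bernstein's inequality to the iid variables \(Z_h(S_k)\) in both directions, each tail at level \(\delta/(2K)\). With probability at least \(1-\delta\), simultaneously for all \(h\in\mathcal H\), and with \(L=\log(2K/\delta)\),
\[
\abs{\widehat Z_h-\mathcal R(h)}\leq\sqrt{\frac{32\mathcal R(h)L}{N}}+\frac{16L}{3N}.
\]
By AM--GM, \(\sqrt{32\mathcal R L/N}\leq\frac12\mathcal R+16L/N\). This yields two one-sided bounds:
\[
\mathcal R(h)\leq2\widehat Z_h+c_1L/N,\qquad \widehat Z_h\leq\tfrac32\mathcal R(h)+c_2L/N.
\]

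\emph{Combining.} Let \(h^\circ\) attain \(\inf_{h\in\mathcal H}\mathcal R(h)\); this is possible because \(\mathcal H\) is finite. Subtracting \(\ell_{f^\star}\) from both sides of the approximate-minimizer property gives \(\widehat Z_{\widehat h}\leq\widehat Z_{h^\circ}+\eta\). The event above is data-independent and covers every \(h\), so it also applies to the random \(\widehat h\), for every \(\eta\) simultaneously. Chaining the two one-sided bounds gives
\[
\mathcal R(\widehat h)\leq2\bigl(\tfrac32\mathcal R(h^\circ)+c_2L/N+\eta\bigr)+c_1L/N=3\mathcal R(h^\circ)+2\eta+(2c_2+c_1)L/N.
\]

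\emph{Main obstacle.} The main difficulty is bookkeeping the constants so that \(2c_2+c_1\leq96\). Tracking the \(8/3\) range term and the AM--GM splits, this gives roughly \(c_1\approx 2(16+16/3)\) and \(c_2\approx 16+16/3\), for a total of about \(85\). If this falls short, I would instead use the sharper one-sided Bernstein form \(\mathcal R-\widehat Z\leq\sqrt{2\sigma^2L/N}+bL/(3N)\) with \(b=8\) in each direction.

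Measurability is not an issue, since the lemma is stated for any \(\eta\)-approximate minimizer and the high-probability event does not depend on the selection rule.
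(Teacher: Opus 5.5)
Your proposal is correct and follows essentially the same route as the paper: the variance bound \(\mathbb E Z_h^2\le16\,\mathcal R_{\mathrm{pred}}(h)\) (your Cauchy--Schwarz step plays the role of the paper's Jensen step), two-sided Bernstein with a union bound at level \(\log(2K/\delta)/N\), the split \(\sqrt{32at}\le a/2+16t\), and chaining the two one-sided bounds through the approximate-minimizer inequality. Your bookkeeping already closes with \(2c_2+c_1=256/3<96\) (the paper rounds \(64/3\) up to \(22\) to get \(88\), then enlarges to \(96\)), so the fallback is unnecessary; note also that \(\abs{Z_h}\leq4\), not merely \(8\), because both sequence losses lie in \([0,4]\).
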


\begin{proof}
For a unit-ball prediction \(h\), target \(f^\star\), and label \(Y\),
\[
 \left|\norm{h-Y}^2-\norm{f^\star-Y}^2\right|
 =\left|\ip{h-f^\star}{h+f^\star-2Y}\right|
 \leq4\norm{h-f^\star}.
\]
For a whole sequence, Jensen's inequality for the normalized average gives
\begin{equation}
\label{eq:gen-bernstein-condition}
 \abs{Z_h}\leq4,
 \qquad
 \mathbb E Z_h^2\leq16\mathcal R_{\mathrm{pred}}(h),
 \qquad
 \abs{Z_h-\mathbb E Z_h}\leq8.
\end{equation}
The first bound also follows directly because both sequence losses belong
to \([0,4]\).  No within-sequence independence is used for the second
bound: square the sequence average, apply Jensen, and then use
\eqref{eq:gen-excess-risk-identity}.

Bernstein's inequality for the \(N\) independent sequences and a union
bound over the \(K\) predictors give, with
\(t\eqdef\log(2K/\delta)/N\), the simultaneous event
\begin{equation}
\label{eq:gen-bernstein-event}
 \abs{\widehat Z_h-\mathcal R_{\mathrm{pred}}(h)}
 \leq\sqrt{32\mathcal R_{\mathrm{pred}}(h)t}+\frac{16}3t
 \leq\frac12\mathcal R_{\mathrm{pred}}(h)+22t,
 \qquad h\in\mathcal H.
\end{equation}
We have enlarged the usual linear Bernstein constant; the last inequality
uses \(\sqrt{32at}\leq a/2+16t\).  On this event,
\[
 \mathcal R_{\mathrm{pred}}(h)\leq2\widehat Z_h+44t,
 \qquad
 \widehat Z_h\leq\tfrac32\mathcal R_{\mathrm{pred}}(h)+22t.
\]
For any comparator \(h\in\mathcal H\), approximate empirical
minimization therefore gives
\[
 \mathcal R_{\mathrm{pred}}(\widehat h)
 \leq2\widehat Z_h+2\eta+44t
 \leq3\mathcal R_{\mathrm{pred}}(h)+2\eta+88t.
\]
Take the infimum and enlarge \(88\) to \(96\).  The event did not
depend on the chosen minimizer or tolerance.
\end{proof}

\paragraph{Transfer from a net to the real-parameter class.}
If \(g,h\) take values in the unit ball and
\(\norm{g-h}_{\infty,2}\leq\epsilon\), then
\begin{equation}
\label{eq:gen-loss-continuity}
 \abs{\ell_g(S)-\ell_h(S)}\leq4\epsilon,
 \qquad
 \abs{\mathcal R_{\mathrm{pred}}(g)-\mathcal R_{\mathrm{pred}}(h)}\leq4\epsilon.
\end{equation}
Let \(\mathcal H\subset\mathcal G_p\) be the net of
Lemma~\ref{lem:gen-parameter-cover}.  The lower half of
\eqref{eq:gen-bernstein-event}, together with
\eqref{eq:gen-loss-continuity}, yields simultaneously for every
\(g\in\mathcal G_p\), with probability at least \(1-\delta\),
\begin{equation}
\label{eq:gen-uniform-lower-concentration}
 \mathcal R_{\mathrm{pred}}(g)
 \leq2\widehat Z_g+12\epsilon
       +44\frac{\log(2K_p(\epsilon)/\delta)}N.
\end{equation}
Indeed, choose a net point \(h\) within \(\epsilon\) of \(g\),
then use \(\mathcal R_{\mathrm{pred}}(g)\leq\mathcal R_{\mathrm{pred}}(h)+4\epsilon\) and
\(\widehat Z_h\leq\widehat Z_g+4\epsilon\).

For comparison, if the empirical objective itself uses clipped outputs,
a \(\tau_{\mathrm{opt}}\)-approximate empirical minimizer \(\widetilde g\) over
\(\mathcal G_p\) has a nearest net point which is a
\((\tau_{\mathrm{opt}}+4\epsilon)\)-approximate minimizer over \(\mathcal H\).
Lemma~\ref{lem:gen-finite-class-oracle}, transfer back to
\(\widetilde g\), and approximation of each comparator by a net point give
\begin{equation}
\label{eq:gen-clipped-erm-oracle}
 \mathcal R_{\mathrm{pred}}(\widetilde g)
 \leq3\inf_{g\in\mathcal G_p}\mathcal R_{\mathrm{pred}}(g)+2\tau_{\mathrm{opt}}+24\epsilon
       +96\frac{\log(2K_p(\epsilon)/\delta)}N.
\end{equation}
Taking \(\epsilon=1/N\) and using
\eqref{eq:gen-covering-number} makes the last two terms at most
\[
 C_{d'}\frac{p\log(C_{d'}pN)+\log(2/\delta)}N.
\]
This establishes the usual fast oracle inequality for clipped-loss
training.  The raw-loss theorem requires the additional argument below.

\subsection{Raw empirical minimization followed by clipping}

The clipped-class concentration estimate and a single bounded raw
comparator suffice to analyze the main theorem's ordinary training loss.

\begin{proof}[Proof of Theorem~\ref{thm:generalization-main}]
Increase \(p_0\), depending only on \((d,d',\beta)\), so that the
comparator in \eqref{eq:gen-approximation-comparator} satisfies
\[
 \norm{t_p^\circ-f^\star}_{\infty,2}
 \leq C_{\mathrm{app}}q_\beta(p)\leq1,
 \qquad
 \norm{t_p^\circ}_{\infty,2}\leq2.
\]
It follows from \eqref{eq:gen-excess-risk-identity} that
\begin{equation}
\label{eq:gen-comparator-risk}
 \mathcal R_{\mathrm{pred}}(t_p^\circ)\leq C_{\mathrm{app}}^2q_\beta(p)^2.
\end{equation}
For this one raw comparator, the squared-loss difference at a position
satisfies
\[
 \left|\norm{t_p^\circ-Y}^2-\norm{f^\star-Y}^2\right|
 \leq5\norm{t_p^\circ-f^\star}\leq5.
\]
The same Jensen argument as before therefore proves
\(\mathbb E Z_{t_p^\circ}^2\leq25\mathcal R_{\mathrm{pred}}(t_p^\circ)\) and
\(\abs{Z_{t_p^\circ}-\mathbb E Z_{t_p^\circ}}\leq10\).
Bernstein's inequality, with \(s\eqdef\log(4/\delta)/N\), gives an event
of probability at least \(1-\delta/2\) on which
\begin{equation}
\label{eq:gen-raw-comparator-concentration}
\begin{aligned}
 \widehat Z_{t_p^\circ}
 &\leq\mathcal R_{\mathrm{pred}}(t_p^\circ)
       +\sqrt{50\mathcal R_{\mathrm{pred}}(t_p^\circ)s}+\frac{20}3s\\
 &\leq\tfrac32\mathcal R_{\mathrm{pred}}(t_p^\circ)+32s.
\end{aligned}
\end{equation}
Here \(\sqrt{50as}\leq a/2+25s\).

On another event of probability at least \(1-\delta/2\),
\eqref{eq:gen-uniform-lower-concentration} holds with
\(\epsilon=1/N\) and \(\delta\) there replaced by \(\delta/2\).
Let \(\widehat t\eqdef t_{\widehat\Theta_N}\) be the history realization of
the raw approximate empirical minimizer \(T_{\widehat\Theta_N}\), and put
\(\widehat g\eqdef\operatorname{clip}\circ\widehat t\).
Since all labels belong to the unit ball, projection decreases their
squared distance.  The raw optimization condition
\eqref{eq:learning-erm-main} thus implies
\begin{equation}
\label{eq:gen-raw-clipped-bridge}
 \widehat Z_{\widehat g}
 \leq\widehat Z_{\widehat t}
 \leq\widehat Z_{t_p^\circ}+\tau_{\mathrm{opt}}.
\end{equation}
Combining the two concentration events, whose intersection has probability
at least \(1-\delta\), gives
\begin{align*}
 \mathcal R_{\mathrm{pred}}(\widehat F)
 &=\mathcal R_{\mathrm{pred}}(\widehat g)\\
 &\leq3\mathcal R_{\mathrm{pred}}(t_p^\circ)+2\tau_{\mathrm{opt}}+\frac{12}N
   +44\frac{\log(4K_p(1/N)/\delta)}N\\
 &\qquad
   +64\frac{\log(4/\delta)}N\\
 &\leq C_{d,d',\beta}
 \left\{
 \left(\frac{\log\log p}{\log p}\right)^{2\beta/(d+2)}
 +\frac{p\log(C_{d'}pN)+\log(2/\delta)}N
 \right\}+2\tau_{\mathrm{opt}}.
\end{align*}
The last step uses \eqref{eq:gen-covering-number} and
\eqref{eq:gen-comparator-risk}.  This proves the stated bound after
enlarging its constants.  In particular, no uniform concentration bound
for the potentially large raw losses of all of \(\mathfrak T_p\) was
needed.  Only the clipped class and one bounded raw comparator were used.

For \(p=\lfloor\sqrt N\rfloor\) and sufficiently large \(N\),
\(q_\beta(p)\leq C_{d,\beta}q_\beta(N)\), while
\[
 \frac{p\log(C_{d'}pN)}N
 =O_{d'}\!\left(\frac{\log N}{\sqrt N}\right)
 =o\bigl(q_\beta(N)^2\bigr).
\]
Consequently,
\begin{equation}
\label{eq:gen-root-sample-rate}
 \mathcal R_{\mathrm{pred}}(\widehat F)^{1/2}
 \leq C_{d,d',\beta}
 \left(\frac{\log\log N}{\log N}\right)^{\beta/(d+2)}
 +C_{d'}\sqrt{\frac{\log(2/\delta)}N}+\sqrt{2\tau_{\mathrm{opt}}},
\end{equation}
using \(\sqrt{a+b+c}\leq\sqrt a+\sqrt b+\sqrt c\).
This is a conditional test-risk bound on the same event of probability
at least \(1-\delta\) over training samples. In particular, the logarithmic
root rate is retained whenever
\(\tau_{\mathrm{opt}}=O(q_\beta(N)^2)\) and confidence is fixed.
Every estimate is independent of \(\omega\), the law of sequence
lengths, and any maximum length.
\end{proof}

\paragraph{Existence and optimization.}
At fixed \(p\), the class is a finite union of compact parameter boxes,
and the raw empirical loss is continuous on each box.  Thus an empirical
minimum exists.  The theorem assumes a measurable choice; measurable
positive-tolerance choices can also be obtained by searching a countable
dense parameter subset. More concretely, uniform continuity on each box
ensures that exhaustive minimization on sufficiently fine finite parameter
grids attains any prescribed tolerance \(\tau_{\mathrm{opt}}>0\). This is an
existence argument, not an efficient training method. For a practical
optimizer, \(\tau_{\mathrm{opt}}\) must bound the gap to the global empirical
minimum; a small gradient or a decrease in training loss does not by itself
certify this gap. The proof does not give such a guarantee for SGD. To retain the
logarithmic statistical rate, it suffices that
\(\tau_{\mathrm{opt}}=O(q_\beta(N)^2)\) when
\(p=\lfloor\sqrt N\rfloor\), as shown in
\eqref{eq:gen-root-sample-rate}.

\subsection{Sampling variants and scope}
\label{app:learning-scope}

The proof extends to other independently sampled causal observations,
but its loss normalization and same-distribution interpretation remain
essential.

\paragraph{Why continuity alone does not give uniform learning rate.}
This obstruction already occurs for length-one sequences. Given \(N\),
choose \(2N\) distinct tokens in \(\Omega\), give them independent random
labels in \(\{-1,1\}\), and sample uniformly from these tokens. Any such
label assignment extends to a continuous map
\(h:\Omega\to[-1,1]\); the family
\(F_n^\star(z)_i\eqdef h(z_i)e_1\) is continuously extendable and causal, with
path extension \(h\circ x\) in its first output coordinate. After \(N\)
noiseless training observations, the sign at an unseen test token remains
independent and uniform. Its conditional expected squared error is at
least one, and the probability of an unseen test token is
\((1-1/(2N))^N\geq1/2\). Averaging over label assignments shows that,
for every learner, some bounded continuous teacher has expected prediction
risk at least \(1/2\) for this design. Thus no rate tending to zero can hold
uniformly over all continuous teachers and sampling laws. Some quantitative
class restriction is necessary; the \(\beta\)-smooth target condition is one
sufficient choice, not a necessary characterization of learnability.

\paragraph{Independent histories and fully observed paths.}
The proof is unchanged for iid observations \((\mu,Y)\), where
\(\mu\in\cM_\omega\), \(\norm{Y}\leq1\), and
\(\mathbb E[Y\mid\mu]=f^\star(\mu)\), using loss
\(\norm{g(\mu)-Y}^2\).  Such a design may mix discrete and continuous
histories.  It also applies to iid, fully observed labeled paths
\((x,Y)\), with \(x\in X_\infty^\omega\) and loss
\[
 \ell_g(x,Y)\eqdef\int_0^1\norm{g(\mu_{x,t})-Y(t)}^2\,\dd t,
 \qquad
 \mu_{x,t}\eqdef\frac1t\int_0^t\delta_{(x(s),s)}\,\dd s\quad(t>0),
\]
and \(\mu_{x,0}\eqdef\delta_{(x(0),0)}\).  Assume joint measurability,
\(\norm{Y(t)}\leq1\) almost surely for almost every \(t\), and
\(\mathbb E[Y(t)\mid x|_{[0,t]}]=f^\star(\mu_{x,t})\) for almost
every \(t\).  Conditional centering and Jensen's inequality under the
time integral replace their finite-average versions.  Sparse observation
or approximate quadrature of this loss requires additional error terms.

\paragraph{What the sample size counts.}
Each complete sequence has total weight one.  The bounds need no moment
assumption on the random length, because the normalized sequence loss is
bounded after clipping.  Arbitrary dependence of tokens and labels within
an observation is allowed; overlapping windows from one series are not
thereby independent observations.  Nor does the theorem silently cover
the different objective obtained by dividing the dataset's total token
loss by its total number of tokens.

\paragraph{Prediction risk versus length extrapolation.}
For sequence sampling, define the induced history law
\(\overline P\eqdef\mathbb E[n^{-1}\sum_{i=1}^n\delta_{\mu_{n,Z,i}}]\).
The conclusion controls \(\norm{\widehat g-f^\star}_{L^2(\overline P)}\),
not uniform error on all histories.  Distribution-independent constants
do not turn training on short sequences into a guarantee under an
arbitrary new long-sequence distribution.  If another history law \(Q\)
satisfies \(\dd Q/\dd\overline P\leq\kappa\), then one can infer
\(\norm{\widehat g-f^\star}_{L^2(Q)}^2\leq
\kappa\mathcal R_{\mathrm{pred}}(\widehat F)\); without coverage assumptions, no such
distribution-shift conclusion follows.

\paragraph{The role of the hypotheses.}
Weight control is essential to the parameter cover proved here.  Finite
covers at a positive prediction accuracy do not imply finite
zero-threshold VC dimension: thresholding is discontinuous, and arbitrarily
small classification margins are not controlled by this argument.  The
fast estimation rate uses bounded squared regression loss and conditional
centering; unrestricted cross-entropy is not covered.  The target
condition is precisely boundedness of the \(\beta\)-smooth target seminorm in
Definition~\ref{def:causal-smoothness}, not generic Fr\'echet smoothness
of a path operator.  Finally, the theorem is an upper bound only: neither
its statistical optimality nor a statistical lower bound follows from
the finite-description approximation lower bounds.

\section{Empirical protocol and full results}
\label{app:empirical-details}

This appendix records the sampling, pretrained maps, scale sensitivity, and
refinement checks behind Section~\ref{sec:numerics}.  Raw physical-signal
baselines complement the continuous-content and layer-wise text experiments.
The main comparison contrasts continuous patch embeddings with BigBird text
input embeddings. Together these measurements characterize finite-scale,
representation-dependent regularity; they do not verify a common asymptotic
H\"older class.

\subsection{Estimator interpretation and sensitivity}

The maximum in \eqref{eq:numeric-uniform-increment} ranges over every starting
position at each tested dyadic lag, making it closer to the all-pairs H\"older
geometry used in the optimality benchmark than an \(L^p\) structure function.
The subscript \(\infty\) denotes this maximum, not a continuum-limit estimate.
The factor \(d^{-1/2}\) reports root-mean-square coordinate displacement and
removes a trivial \(\sqrt d\) factor when coordinates have comparable scales;
it neither makes arbitrary representations metrically comparable nor
normalizes them into the theorem's token ball.  The raw physical baselines and
BigBird input comparison use the endpoint-inclusive grid \(t_k\eqdef k/(N-1)\)
for each observed sequence.  The continuous-content comparison preserves the raw
window's time axis: a lag of \(r\) feature positions at stride \(s\) has
\(h_r\eqdef sr/(N_{\rm raw}-1)\).  Changing a fixed time denominator changes the
intercept, but matching physical bands is essential when comparing cadences.

Equation~\eqref{eq:numeric-uniform-fit} is fit separately for every sequence or
window. Thus \(C\) is a finite-band prefactor in the chosen encoding and RMS
convention, not a common theorem-level H\"older constant.  A fixed positive
rescaling preserves \(\widehat\alpha_\infty\) and \(R^2\) while rescaling
\(C\).  The raw-baseline table below reports the mean and sample standard
deviation of \(\widehat\alpha_\infty\) and the mean \(R^2\).
As sensitivity checks, we use Theil--Sen slopes and replace the maximum by the
99th percentile of increments.  On the BigBird input rows, Theil--Sen and
ordinary least squares agree within a few \(10^{-3}\).  The percentile slope is not a
uniform H\"older exponent.  More generally, a finite-band fit does not certify
regularity: the exact RMS-normalized grid H\"older seminorm takes the maximum of
\(M_\infty(r)/h_r^\alpha\) over every lag, and a low \(R^2\) argues against a
single power law even on the fitted band.  We do not use regression standard
errors from the correlated lag points as uncertainty across recordings.

In Figure~\ref{fig:continuous-holder-main}, every window \(w\) is normalized
separately: \(R_w(r)\eqdef M_{\infty,w}(r)/M_{\infty,w}(1)\). The displayed mean
and sample SD are computed across these ratios, so the first-lag point is
exactly one with zero SD. This compares curve shapes with equal weight per
window and leaves each window's fitted slope unchanged.

\subsection{Data and sampling}

The benchmark uses a fixed sampling rule in every domain.  For the six scalar
series, we predeclare the dyadic-plus-one lengths reported in
Table~\ref{tab:uniform-holder-results} and retain up to eight time-spread,
mutually disjoint complete windows, yielding between three and eight windows
per series.  The series are Jena air temperature~\citep{jenaweather}, Beijing
PM2.5 concentration~\citep{chen2015beijing}, building appliance
consumption~\citep{candanedo2017appliances}, Metro Interstate traffic
volume~\citep{hogue2019metro}, ETT transformer-oil temperature, and ETT
high-useful-load measurements~\citep{zhou2021informer}.

ECG comprises centered five-second Lead-I windows from 24 CPSC/PhysioNet
records~\citep{alday2020challenge,alday2022physionetdataset,pollard2026physionet}, converted to
millivolts using each record's gain and baseline and antialias-resampled from
500 Hz to 100 Hz.  Ground motion
comprises 12 earthquake and 12 quiet 30-second, three-component windows from
station \texttt{CI.PASC.00}, sampled at 100 Hz; waveforms come from the IRIS
FDSN dataselect service and event origins from the USGS event service.  The
text controls are WikiText-103~\citep{merity2017wikitext}, AG
News~\citep{zhang2015character}, IMDb~\citep{maas2011imdb}, and this
manuscript's mathematical \LaTeX{} source.  For the first three, we concatenate
documents with the tokenizer separator and extract four nonoverlapping
\(N=16{,}384\) windows.  For the \LaTeX{} control, we join the frozen manuscript
snapshot with blank lines and periodically repeat streams shorter than 65,536
tokens.  Its four windows are offsets of a synthetic periodic stream, not
independent documents; rerunning after manuscript edits changes this control.

The scalar replicates are disjoint windows from individual recorded series,
not independent stations or buildings; the two ETT channels share one source.
The reported SD is therefore descriptive of the retained collection.  Complete
window selection can favor well-observed periods, and missing-data fractions
in source metadata describe the entire source rather than each selected
window.  The seismic cache records waveforms, generic event/noise identifiers,
channels, and cadence, but lacks the acquisition timestamps, event identifiers,
and request manifest needed for exact re-acquisition.  Its event/quiet labels
and source coverage are consequently less reproducible than the scalar data;
the new continuous-content comparison uses only the six scalar series.

\subsection{Pretrained representations}

Table~\ref{tab:uniform-holder-encoders} summarizes the two pretrained input
maps. Chronos-Bolt applies a continuous residual ReLU network to standardized
measurement patches, as detailed below. BigBird combines 768-dimensional
learned word and token-type embeddings with its learned
absolute-position table and applies the pretrained input
LayerNorm~\citep{zaheer2020bigbird}. At \(N=16{,}384\), we linearly interpolate
the learned position vectors over the normalized interval. This is our
input-only adaptation, not the RoPE index-rescaling and fine-tuning procedure
of \citet{chen2023position}; no attention or fine-tuning is applied.
The separate depth experiment
in Appendix~\ref{app:text-depth} uses complete models at native context lengths.

The window-dependent normalization used by Bolt is part of the chosen
empirical encoder. It does not certify that one fixed representation map has
a common theorem-level H\"older prefactor across sequences and resolutions.
The saved metadata pins the Bolt and BigBird snapshots and the three streamed
dataset revisions by commit hash, and records the Bolt tensor hashes,
BigBird LayerNorm epsilon, and software versions.

\begin{table}[!htbp]
\centering
\caption{\textbf{Pretrained front ends.} Every row
uses Equations~\eqref{eq:numeric-uniform-increment}--
\eqref{eq:numeric-uniform-fit}; \(d\) is the increment dimension.}
\label{tab:uniform-holder-encoders}
\small
\setlength{\tabcolsep}{4pt}
\begin{tabular}{@{}p{18mm}p{25mm}cp{75mm}@{}}
\toprule
data type & representation & \(d\) & embedding map \\
\midrule
Scalar series & Chronos-Bolt-tiny & 256 & Standardized 16-value patch plus
16 observation-mask entries, pretrained residual ReLU MLP; content only,
evaluated at stride 16 and diagnostic stride one. \\
NLP & BigBird-RoBERTa & 768 & Learned word and token-type embeddings, linearly
interpolated learned absolute position, pretrained input LayerNorm. \\
\bottomrule
\end{tabular}
\end{table}
\FloatBarrier

\subsection{The continuous-content comparison}
\label{app:continuous-holder-protocol}

We extract the input patch embedding of Chronos-Bolt-tiny~\citep{ansari2024chronos}, revision
\href{https://huggingface.co/amazon/chronos-bolt-tiny/tree/93a81296e5ea438ce4d2d2dd87decec2fd16099b}{\texttt{93a8129}},
following the pinned official implementation~\citep{chronosbolt2024}.\footnote{\href{https://github.com/amazon-science/chronos-forecasting/blob/v2.0.0/src/chronos/chronos_bolt.py}{Official source: \texttt{chronos-forecasting v2.0.0}, \path{src/chronos/chronos_bolt.py}.}}
For each retained scalar
window, mean and population standard deviation are computed once and frozen
for all subsequent comparisons; a zero standard deviation is replaced by
\(10^{-5}\).  Let \(p_k\in\R^{16}\) contain 16 consecutive standardized
measurements.  The observed-value mask is \(\mathbf1\in\R^{16}\), since all
retained windows are complete.  With \(u_k\eqdef[p_k;\mathbf1]\in\R^{32}\), the
frozen content map is
\begin{equation}
 E(p_k)\eqdef W_o\operatorname{ReLU}(W_hu_k+b_h)+b_o+W_ru_k+b_r,
 \label{eq:bolt-content-map}
\end{equation}
where \(W_h\in\R^{1024\times32}\),
\(W_o\in\R^{256\times1024}\), and \(W_r\in\R^{256\times32}\).
All six tensors are pretrained and dropout is inactive.  No positional vector
or attention output is added.  Bolt is an encoder--decoder model, so these
measurements concern a local content map, not causal processing by its encoder.

Native complete patches are end-aligned at stride 16, after excluding a
possible incomplete left patch.  Diagnostic dense evaluation applies exactly
the same map to every valid 16-sample patch at stride one.  Native outputs
match the corresponding dense outputs with zero numerical discrepancy in all
29 retained windows.  We omit the non-temporal REG token and do not apply the
forecasting pipeline's 2,048-sample truncation.  Thus ``native'' specifies the
patch cadence and alignment under the shared full-window normalization, not
the full forecasting pipeline.  Raw endpoints are restricted to the valid
dense patch endpoints, excluding the first 15 raw samples; this accounts for
small differences from the full-window raw-baseline table below.

All dyadic feature lags with \(h_r\le1/4\) are retained.  The primary band is
\(h_r\le1/32\), with at least four positive, finite measurements required for
a fit.  We also report the first four dyads, the coarse band
\(1/32\le h_r\le1/4\), and the matched physical fine band
\(16\le sr\le(N_{\rm raw}-1)/32\).  No band is chosen according to its slope
or \(R^2\).  The dense and native primary fits in
Table~\ref{tab:continuous-holder-main} include different finest scales;
Table~\ref{tab:holder-matched-bands} removes that difference.  Remaining
differences arise because the maximum uses more starting positions at dense
cadence.  Beijing and traffic have only two native fine-band lags, so we leave
their estimates unavailable.

\begin{table}[!htbp]
\centering\small
\caption{\textbf{Matched physical fine band for the same Bolt map.}
Mean \(\widehat\alpha_\infty\pm\) sample SD; the last column counts dyadic
lags.  Native cadence uses a subset of the dense starting positions.}
\label{tab:holder-matched-bands}
\begin{tabular}{@{}lrrr@{}}
\toprule
domain & dense stride 1 & native stride 16 & lags \\
\midrule
Jena weather & $.157\pm.039$ & $.167\pm.040$ & 7 \\
Beijing PM2.5 & -- & -- & 2 \\
Building appliances & $.020\pm.022$ & $.020\pm.017$ & 4 \\
Road traffic & -- & -- & 2 \\
Transformer oil temp. & $.133\pm.044$ & $.175\pm.041$ & 4 \\
Transformer load & $.025\pm.027$ & $.043\pm.055$ & 4 \\
\bottomrule
\end{tabular}
\end{table}

Two permutation controls distinguish temporal organization from the patching
operation: one seeded permutation of the standardized scalar input before
patch extraction, and one of the resulting dense vectors.  Input-shuffled
mean slopes are \(0.005\)--\(0.035\), whereas output-shuffled means are within
\(0.003\) of zero.  These are controls with one permutation per window, not a
permutation significance test.  Unprojected delay patches \(p_k\in\R^{16}\)
are also measured: their positive slopes show that the learned map does not
create all the observed regularity.

The connection to the H\"older model is an upper-bound preservation property.
If \(x\) is a scalar \(L\)-H\"older-\(\alpha\) path, the patch
\(p(t)\eqdef(x(t-\delta_j))_{j=1}^{P}\) uses fixed physical offsets on its valid
time interval, and \(E\) is \(K\)-Lipschitz in Euclidean norm, then
\begin{equation}
 \frac{\|E(p(t))-E(p(s))\|_2}{\sqrt d}
 \le K\sqrt{P/d}\,L|t-s|^\alpha.
 \label{eq:content-holder-transfer}
\end{equation}
The fixed ReLU map in \eqref{eq:bolt-content-map} is Lipschitz.  This
observation does not imply equality of exponents, and a common constant also
requires controlled normalization.  Across resolutions, fixed sample-count
patches change their physical support; the native frontend is not thereby
identified with one fixed delayed-path map.

\subsection{Raw physical-signal baselines}

Table~\ref{tab:uniform-holder-results} reports the full-window physical
baselines before learned encoding. The scalar curves use each window divided
by its mean absolute value, which does not change its fitted slope. ECG uses
the resampled waveform in millivolts. Ground motion uses three waveform
components, each centered and divided by its window SD plus \(10^{-6}\).
Thus ``raw'' denotes measurements with continuous preprocessing, not a common
cross-domain amplitude convention. Mean maximum-increment slopes range from
\(0.041\) to \(0.389\); their variability and fit quality qualify any
regularity interpretation.

\begin{table}[!htbp]
\centering
\caption{\textbf{Raw physical-signal worst-increment regressions.}
\(N\) is waveform length, \(d\) the number of measured coordinates, and
\(m\) the number of windows. Slopes are mean \(\pm\) sample SD on
\(h\le1/32\); \(R^2\) is the mean fit quality, and
\(\alpha_{.99}\) the mean 99th-percentile sensitivity slope.}
\label{tab:uniform-holder-results}
\small
\setlength{\tabcolsep}{4pt}
\begin{tabular}{@{}lrrrrrr@{}}
\toprule
domain & \(N\) & \(d\) & \(m\) & \(\widehat\alpha_\infty\) & \(R^2\) & \(\alpha_{.99}\) \\
\midrule
Jena weather & 32,769 & 1 & 8 & $.253\pm.040$ & .89 & .421 \\
Beijing PM2.5 & 1,025 & 1 & 6 & $.139\pm.061$ & .73 & .365 \\
Building appliances & 4,097 & 1 & 4 & $.041\pm.020$ & .61 & .101 \\
Road traffic & 1,025 & 1 & 3 & $.172\pm.019$ & .60 & .178 \\
Transformer oil temp. & 4,097 & 1 & 4 & $.134\pm.027$ & .85 & .291 \\
Transformer load & 4,097 & 1 & 4 & $.129\pm.025$ & .77 & .176 \\
Cardiac ECG & 500 & 1 & 24 & $.185\pm.174$ & .53 & .261 \\
Ground motion & 3,000 & 3 & 24 & $.389\pm.281$ & .65 & .439 \\
\bottomrule
\end{tabular}
\end{table}
\FloatBarrier

A common normalized cutoff spans different physical durations and contains
four fine lags for ECG versus eleven for Jena. Percentile slopes can differ
substantially from maximum-increment slopes: for example, Jena gives
\(0.421\) versus \(0.253\). RMS and percentile slopes describe weaker
increment statistics and cannot replace the uniform maximum in the theory.

\subsection{Calibration and direct refinement checks}
\label{app:holder-calibration}

We calibrate the same fine-band estimator on 12 independent realizations per
synthetic scenario, each of length \(4,097\): a smooth sinusoid with random
phase; fractional Brownian motion (fBm) with \(H=0.25,0.50,0.75\), generated
by circulant embedding of fractional Gaussian noise; independent Gaussian
noise; and a sinusoid with one height-two jump at a random location in
\([0.3,0.7]\).  The raw and delay-patch baselines use the same standardization
and valid endpoints as Bolt.  For fBm, \(H\) is the supremal pathwise exponent:
paths are H\"older for every \(\alpha<H\), not generally at \(H\) itself.

\begin{figure}[!htbp]
\centering
\includegraphics[width=\textwidth]{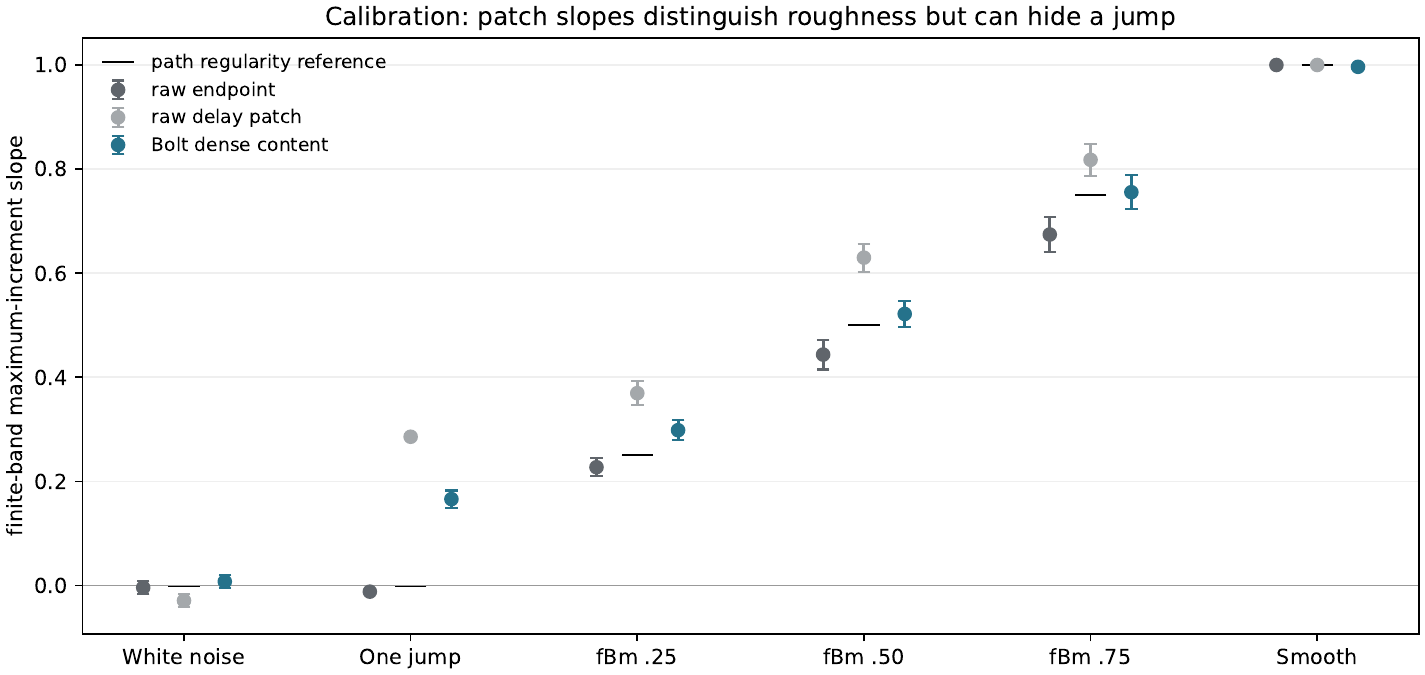}
\caption{\textbf{Calibration of finite-band slopes.} Twelve realizations per
scenario, with mean \(\pm\) sample SD.  The ordering of fractional-Brownian
roughness is visible, but the positive slope of the jump control after patch
encoding prevents interpreting the regression as an asymptotic certificate.}
\label{fig:holder-calibration}
\end{figure}

In Figure~\ref{fig:holder-calibration}, the raw fBm slopes are
\(0.227,0.444,0.674\), and dense Bolt gives
\(0.298,0.522,0.755\); smooth and white-noise Bolt slopes are \(0.996\) and
\(0.008\).  These values support sensitivity to roughness while exposing
finite-scale and representation bias.  Crucially, the jump gives
\(0.166\pm0.017\) after dense Bolt, compared with \(-0.012\pm0.005\) before
encoding: distributing a transition across patch coordinates changes its
observed scaling.  On the matched band of raw lags at least 16, this same
dense jump representation gives \(-0.013\pm0.006\), exposing the scale
crossover.  Positive fitted slopes therefore need refinement checks,
not a universal bias correction.

For a complementary check, restrict each fixed dense feature path, without
interpolation or re-encoding, to nested grids of \(n=65,129,257,513\) points.
On each grid compute the exact all-pair seminorm
\begin{equation}
 A_n(\alpha)\eqdef\max_{i<j}
 \frac{d^{-1/2}\|z_j-z_i\|_2}{|t_j-t_i|^\alpha},
 \qquad \alpha\in\{0.1,0.25,0.5\}.
 \label{eq:numeric-exact-seminorm}
\end{equation}
The grid endpoints span the largest power-of-two number of sample intervals
inside the valid dense path, centered in that path.  For the retained lengths
this is one half of the raw window; times remain normalized by
\(N_{\rm raw}-1\).  Every positive lag and every starting position on each
grid enter the maximum.  The grids are nested, so \(A_n\) cannot decrease.
The 513-point cap reaches raw spacing one for Beijing and traffic, four for
appliances and the two ETT channels, and 32 for Jena.  This check therefore
does not reach the finest measurement cadence in every domain, nor cover the
whole observation interval.

\begin{figure}[!htbp]
\centering
\includegraphics[width=\textwidth]{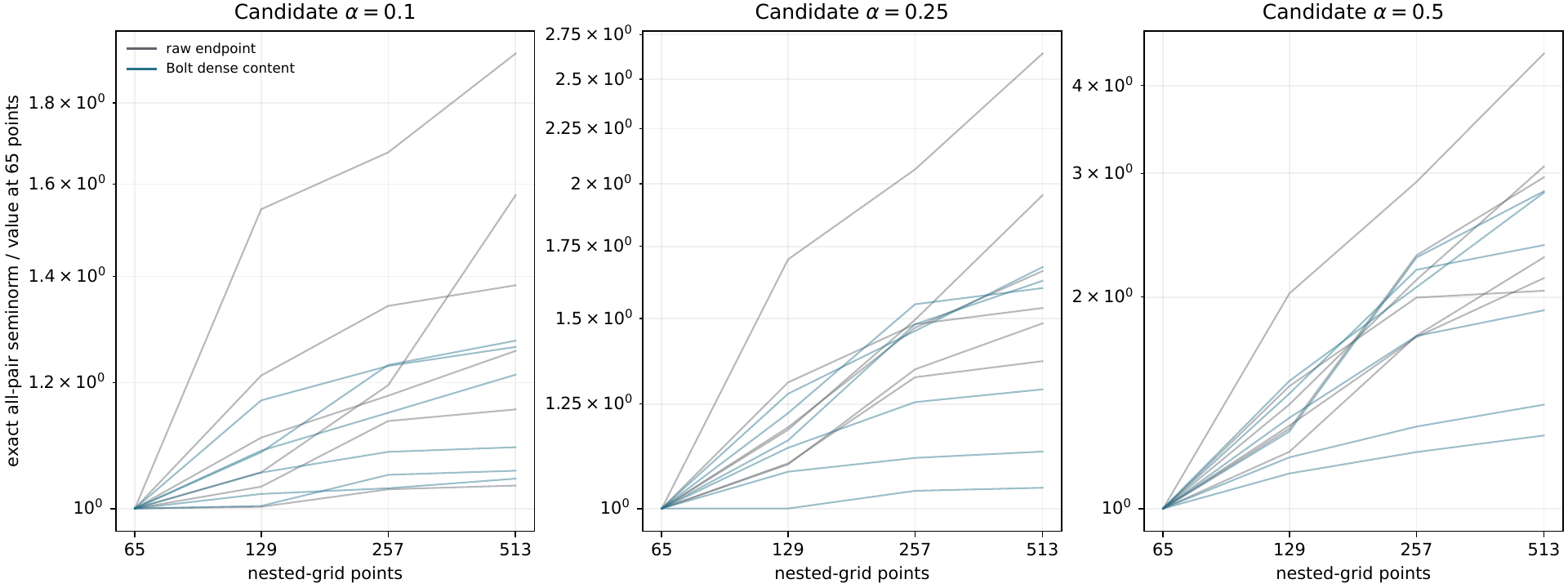}
\caption{\textbf{Exact candidate H\"older prefactors on nested restrictions.}
Each thin curve is a domain's mean ratio \(A_n(\alpha)/A_{65}(\alpha)\),
for raw endpoints or dense Bolt content. Domains are not pooled as independent
samples. The finite cadence and central subinterval limit the scope of this
stability check.}
\label{fig:holder-seminorm}
\end{figure}

In Figure~\ref{fig:holder-seminorm}, at \(\alpha=0.25\), dense traffic and Beijing have mean
\(A_{513}/A_{65}\) of \(1.045\) and \(1.129\). Dense weather and appliances still grow
by factors \(1.626\) and \(1.674\).  Thus some continuous representations
have encouraging prefactor stability over these grids, but the observations
do not establish a shared \(0.25\)-H\"older class.  We extract no pass/fail
exponent.
\FloatBarrier

\subsection{A structured-text corpus screen}
\label{app:text-corpus-screen}

We extend the original four-source input baseline with three candidates
chosen before measuring their increment curves: Shakespeare drama and verse,
the King James Bible as formulaic historical prose, and a Python-code
control. The motivation is to test whether more structured textual sources
exhibit stronger geometric regularity. All three outcomes are retained;
neither sources, windows, nor fitting bands are selected for favorable slopes.

\paragraph{Data and unchanged embedding.}
The public texts are the \emph{Tiny Shakespeare} corpus and Project
Gutenberg's King James Bible.\footnote{Sources:
\href{https://github.com/karpathy/char-rnn/tree/master/data/tinyshakespeare}{\texttt{char-rnn/data/tinyshakespeare}} and
\href{https://www.gutenberg.org/ebooks/10}{Project Gutenberg, ebook 10}.
Download-prefix hashes and exact retained token IDs are saved with the results.}
We read at most one million bytes of each source, remove the Gutenberg
header, and retain the first four disjoint \(N=16{,}384\)-token windows.
The Python source is a sorted snapshot of Git-tracked files under
\path{code/}, including vendored files, concatenated once. The vendored files
were included because top-level code alone was too short; this adjustment
preceded measurement of any candidate's increments. Neither the new public
texts nor the Python stream is periodically repeated. Only token windows,
source hashes, and small statistics are stored, not full corpora or embeddings.

Every vector uses the same frozen BigBird \(d=768\) input map as the
original baseline: word and token-type embeddings plus the learned position
table linearly interpolated to \(N\) positions, followed by the pretrained
LayerNorm. No attention, training, or smoothing is applied. A paired control
permutes each new window's content IDs before this map, keeping the same
positional grid. Fits use the same nine dyadic lags \(r=1,\ldots,256\)
satisfying \(r/(N-1)\le1/32\). In addition to the maximum, we fit the RMS
increment \(M_2(r)\eqdef[(N-r)^{-1}\sum_k d^{-1}\|z_{k+r}-z_k\|_2^2]^{1/2}\).
Table~\ref{tab:text-corpus-screen} reports all seven sources; SD describes
four windows per source, not independent documents or corpus-level uncertainty.

\begin{table}[!htbp]
\centering\small
\caption{\textbf{The text screen does not reveal a more regular input regime.}
Mean \(\pm\) sample SD on the fixed fine band, using identical full-input
BigBird embeddings. Shuffled controls are measured for all three new sources;
dashes denote unmeasured controls at this long context. The original
\LaTeX{} stream is a repeated technical control, as described above.}
\label{tab:text-corpus-screen}
\setlength{\tabcolsep}{4pt}
\begin{tabular}{@{}lccc@{}}
\toprule
Source & $\widehat\alpha_\infty$ & $\widehat\alpha_2$ & Shuffled $\widehat\alpha_\infty$ \\
\midrule
WikiText-103 & $0.0122\pm0.0012$ & $0.0188\pm0.0006$ & -- \\
AG News & $0.0126\pm0.0018$ & $0.0191\pm0.0003$ & -- \\
IMDb & $0.0130\pm0.0008$ & $0.0180\pm0.0001$ & -- \\
\LaTeX{} control & $0.0119\pm0.0012$ & $0.0148\pm0.0004$ & -- \\
Shakespeare drama & $0.0106\pm0.0018$ & $0.0173\pm0.0002$ & $0.0109\pm0.0017$ \\
KJV Bible & $0.0115\pm0.0005$ & $0.0178\pm0.0001$ & $0.0117\pm0.0024$ \\
Python control & $0.0124\pm0.0007$ & $0.0150\pm0.0003$ & $0.0103\pm0.0027$ \\
\bottomrule
\end{tabular}

\end{table}

\paragraph{Outcome and scope.}
The new maximum-slope means are \(0.0106\), \(0.0115\), and \(0.0124\):
none exceeds the original IMDb mean \(0.0130\). Shuffling produces similarly
small values, and mean RMS slopes remain between \(0.0148\) and \(0.0191\)
across all seven sources. Thus textual structure or formulaic syntax does
not by itself make these subword embeddings geometrically regular.
The first-four-dyad and expanded \(h\le1/8\) fits are also retained; their
largest admissible dyads are \(8\) and \(1024\), respectively, since
\(2048/(16384-1)>1/8\). No alternative band supplies a strong positive
regime in these additional sources.
This remains a finite-scale negative result for one input pipeline, not a
statement about every possible text representation. The separate layer-wise
study below investigates whether processing changes this geometry.

The experiment is reproduced by \path{code/text_corpus_holder_screen.py}:
\texttt{--prepare} freezes small token inputs, \texttt{--run} evaluates the
cached input map, and \texttt{--report} reanalyzes saved increments. The
executed \path{code/holder_increment_comparison.ipynb} regenerates the
publication panels and the full screen report without new inference.
\FloatBarrier

\subsection{Text representations across transformer depth}
\label{app:text-depth}

Nearly flat input-embedding curves need not describe the geometry of internal
representations. We therefore repeat the same multiscale diagnostic after
every transformer block, asking whether depth improves the effective
H\"older scaling of a fixed text window. The main observation is a modest
but consistent intermediate-layer gain in BigBird, rather than a monotone
increase through the network.

\paragraph{Matched windows and native inference.}
We evaluate the frozen 12-block BigBird encoder~\citep{zaheer2020bigbird}
at its native 4096-token context and the six-block causal DistilGPT2 decoder
from the GPT-2 family~\citep{radford2019language} at 1024 tokens.\footnote{\raggedright Pinned
snapshots: \href{https://huggingface.co/google/bigbird-roberta-base/tree/5a145f7852cba9bd431386a58137bf8a29903b90}{BigBird \texttt{5a145f7}} and
\href{https://huggingface.co/distilbert/distilgpt2/tree/2290a62682d06624634c1f46a6ad5be0f47f38aa}{DistilGPT2 \texttt{2290a62}}.
Inference uses \texttt{transformers==4.57.6}, CPU float32, and evaluation mode.
BigBird's unmodified block-sparse evaluation helper returns block-zero
indices rather than sampled random links; these are links to block zero,
not zero attention weights. We impose no custom attention pattern.}
For each model and each of WikiText-103, AG News, IMDb, and a frozen
\LaTeX{} source-tree control, we retain the first eight nonoverlapping full
content-token windows. The technical control includes retained source variants and is
concatenated once, without periodic repetition. Public documents are joined
with the tokenizer's separator. A paired, seeded control permutes content IDs
before inference, including internal separators but preserving BigBird's
outer markers. The two models use different
tokenizers and window lengths; only within-model comparisons are paired.

All depths use exactly the same windows and lags, with no training, smoothing,
or positional interpolation. BigBird's two added outer markers are excluded
from measurement, leaving \(N=4094\) content vectors; DistilGPT2 has
\(N=1024\). Both have increment dimension \(d=768\). Depth zero denotes the
embedding representation. We record every block output and, for DistilGPT2,
its final LayerNorm separately from block six. Primary fits use
\eqref{eq:numeric-uniform-fit}, with \(h_r\eqdef r/(N-1)\): seven dyadic lags
through \(r=64\) for BigBird and five through \(r=16\) for DistilGPT2.
These native-length results are distinct from the earlier 16,384-token
input-only baseline.

\paragraph{Comparing multiscale shape across depth.}
Let \(M_{\infty,\ell}^{(w)}\) denote
\eqref{eq:numeric-uniform-increment} in window \(w\) at depth \(\ell\).
Figure~\ref{fig:text-depth-curves} overlays the normalized mean curves
\begin{equation}
 Q_\ell(r)\eqdef
 \frac{\overline M_{\infty,\ell}(r)}{\overline M_{\infty,\ell}(1)},
 \qquad
 \overline M_{\infty,\ell}(r)\eqdef
 \frac18\sum_{w=1}^8 M_{\infty,\ell}^{(w)}(r).
 \label{eq:text-depth-normalization}
\end{equation}
Dividing by a positive lag-independent constant does not change a log--log
slope. This normalization separates changes in curve shape from changes in
amplitude; the tables still summarize fits made separately in each window,
not fits to the displayed mean curves. The magnified vertical range is
important: visibly different curves here often differ by only 10--20\%
across hundreds of token lags.

\begin{figure}[!htbp]
 \centering
 \includegraphics[width=\textwidth]{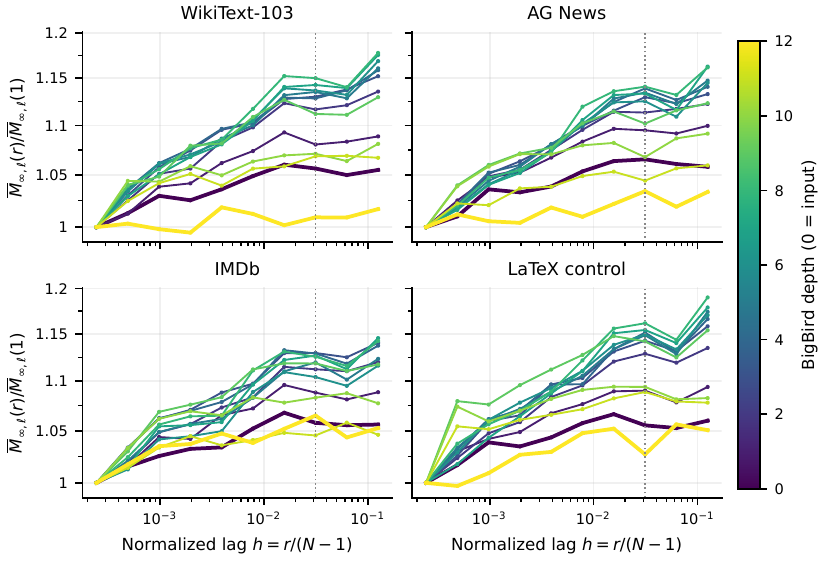}
 \caption{\textbf{BigBird's multiscale geometry varies with depth.}
 All depths are overlaid on the same log--log axes, using the same eight
 windows per source. Curves are ratios of mean worst increments as in
 \eqref{eq:text-depth-normalization}; purple and yellow mark the input and
 final block. The dotted line is the fixed fine-band cutoff \(h=1/32\).
 Intermediate blocks have steeper finite-band curves, but the gain does
 not persist to the final block.}
 \label{fig:text-depth-curves}
\end{figure}

\needspace{5\baselineskip}
\paragraph{An intermediate-layer gain in BigBird.}
The mean primary slope rises from \(0.013\)--\(0.014\) at the input to
\(0.028\)--\(0.033\) around blocks seven and eight
(Table~\ref{tab:text-depth-slopes}). Block eight improves on the input in
all eight retained windows of each source on the primary band. The mean
gain remains positive on the first-four-dyad and expanded \(h\le1/8\)
bands, although not every short-band paired difference is positive.
Thus the improvement is not merely a visual normalization effect.
It is non-monotone: final-block means return to \(0.002\)--\(0.014\).
The mean depth profiles peak at block eight for the three natural-language
sources and block seven for \LaTeX{}; these peak locations are descriptive,
not choices used to tune the fitting band.

\begin{table}[!htbp]
 \centering\footnotesize
 \caption{\textbf{Effective maximum-increment slopes across depth.}
 Mean \(\pm\) sample SD across eight matched natural-order windows per
 source, on \(h\le1/32\). BigBird's final state is block 12; DistilGPT2's
 is the output after the separate final LayerNorm. SD is descriptive, not
 a confidence interval: windows may share documents or repeated content.}
 \label{tab:text-depth-slopes}
 \setlength{\tabcolsep}{4.5pt}
 \begin{tabular}{@{}lccc@{}}
 \toprule
 \multicolumn{4}{@{}l}{BigBird (12 bidirectional blocks)}\\
 Source & Input & Block 8 & Final\\
 \midrule
 WikiText-103 & $.0130\pm.0016$ & $.0308\pm.0027$ & $.0023\pm.0060$\\
 AG News & $.0141\pm.0015$ & $.0306\pm.0042$ & $.0039\pm.0082$\\
 IMDb & $.0143\pm.0014$ & $.0279\pm.0037$ & $.0106\pm.0082$\\
 \LaTeX{} control & $.0143\pm.0019$ & $.0320\pm.0043$ & $.0140\pm.0086$\\
 \midrule
 \multicolumn{4}{@{}l}{DistilGPT2 (6 causal blocks)}\\
 Source & Input & Block 6 & Final\\
 \midrule
 WikiText-103 & $.0165\pm.0105$ & $.0108\pm.0399$ & $.0161\pm.0152$\\
 AG News & $.0176\pm.0072$ & $.0171\pm.0328$ & $.0129\pm.0204$\\
 IMDb & $.0174\pm.0093$ & $-.0105\pm.0293$ & $-.0002\pm.0187$\\
 \LaTeX{} control & $.0227\pm.0096$ & $.0150\pm.0361$ & $.0206\pm.0256$\\
 \bottomrule
 \end{tabular}
\end{table}

The gain is not specific to intact linguistic order. Shuffled-input BigBird
slopes at block eight are \(0.0363,0.0382,0.0354,0.0291\), respectively
(Figure~\ref{fig:text-depth-comparison}). This does not isolate the effects
of learned weights, positions, and sparse-attention locality, but it
prevents attributing the intermediate gain solely to meaningful word order.
The final layer also contracts amplitudes: its centered token RMS is
\(0.55\)--\(0.70\) times the input value, whereas the adjacent maximum is
\(0.82\)--\(0.90\) times its input value. Smaller increments therefore
need not imply a larger scaling exponent.

\paragraph{Causal representations and average-increment scaling.}
DistilGPT2 does not show a positive mean final-minus-input change in the
primary maximum slope for any source. For comparison, we fit the same slopes
to the root-mean-square increment
\[
 M_{2,\ell}(r)\eqdef\left(
 \frac{1}{d(N-r)}\sum_{k=0}^{N-r-1}
 \|z_{k+r}^{(\ell)}-z_k^{(\ell)}\|_2^2\right)^{1/2}.
\]
This statistic exhibits a stronger depth effect: input means are
\(0.0016\)--\(0.0062\), while final
means are \(0.0216,0.0393,0.0569,0.0661\). The corresponding shuffled means
are \(0.0231,0.0218,0.0293,0.0347\). Thus natural order improves this
average statistic except on WikiText, but an RMS gain does not establish
the uniform increment control used in the theory. Figure~\ref{fig:text-depth-comparison}
keeps the two statistics distinct and shows the final normalization
separately; it is not a seventh transformer block.

\begin{figure}[!t]
 \centering
 \includegraphics[width=\textwidth]{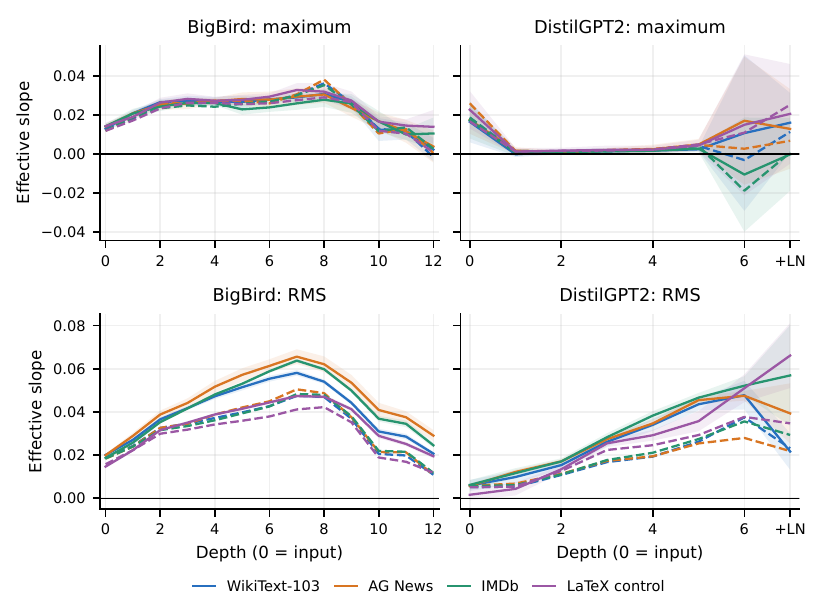}
 \caption{\textbf{Depth, token order, and the choice of increment statistic.}
 Mean fine-band slopes at every block for BigBird and causal DistilGPT2.
 Solid lines use natural order, dashed lines use shuffled inputs; shading
 is one sample SD for natural-order windows. Top: maximum increments.
 Bottom: RMS increments, a weaker statistic. DistilGPT2's \(+\mathrm{LN}\)
 point applies the final normalization to block six. Architectures,
 tokenizers, and context lengths differ, so cross-model differences do
 not isolate an effect of masking.}
 \label{fig:text-depth-comparison}
\end{figure}

\paragraph{Sensitivity and scope.}
Removing the first and last 64 content positions after inference, without
changing the context or grid denominator, leaves final maximum-slope means
between \(0.002\) and \(0.020\) for BigBird and between \(-0.001\) and
\(0.017\) for DistilGPT2. Alternative bands and percentile/RMS statistics
are retained in the saved results. Exact-lag maxima need not be monotone;
negative fits to nearly flat curves are not negative H\"older exponents.
A post-run cumulative maximum over tested dyads is also available, but it
is not the supremum over all integer lags. These fixed-context experiments
support depth-dependent \emph{effective} regularity, including a genuine
intermediate gain, not asymptotic H\"older improvement under refinement or
a common all-resolution bound. No prediction-quality claim is made.

\FloatBarrier

\end{document}